\newcommand{\env}{\boldsymbol{\nu}}
\newcommand{\reg}{\mathrm{Reg}}
\newcommand{\mech}{\mathcal{M}}
\newcommand{\real}{\mathbb{R}}
\newcommand{\CX}{\mathcal{X}}
\newcommand{\horizon}{T}
\newcommand{\step}{t}
\newcommand{\episode}{\ell}
\newcommand{\arms}{K}
\newcommand{\pulls}{N}
\newcommand \pol {\ensuremath{\pi}}
\newcommand \model {\ensuremath{\nu}}
\newcommand \Hist {\ensuremath{\mathcal{H}}}
\newcommand \defn {\mathrel{\triangleq}}
\newcommand \dd {\,\mathrm{d}}
\newcommand \expect {\mathop{\mbox{\ensuremath{\mathbb{E}}}}\nolimits}
\newcommand \Var {\mathop{\mbox{\ensuremath{\mathbb{V}}}}\nolimits}
\newcommand \setof[1] {\left\{#1\right\}}
\newcommand \ind[1] {\mathds{1}\left\{#1\right\}}
\newcommand \KL[2] {\mathrm{KL}\left( #1 ~\middle\|~ #2\right)}
\newcommand \TV[2] {\mathrm{TV}\left( #1 ~\middle\|~ #2\right)}
\newcommand{\dham}{d_{\text{Ham}}}
\newcommand{\View}{\mathrm{View}}
\newtheorem{theorem}{Theorem}
\newtheorem{corollary}{Corollary}
\newtheorem{lemma}{Lemma}
\newtheorem{proposition}{Proposition}
\newtheorem{definition}{Definition}
\newtheorem{reduction}{Reduction}
\newtheorem{remark}{Remark}
\newtheorem{example}{Example}
\newtheorem{assumption}{Assumption}
\newtheorem*{rep@theorem}{\rep@title}
\newcommand{\newreptheorem}[2]{%
	\newenvironment{rep#1}[1]{%
		\def\rep@title{\textbf{#2} \ref{##1}}%
		\begin{rep@theorem}}%
		{\end{rep@theorem}}}
\DeclareRobustCommand{\bigO}{\text{\usefont{OMS}{cmsy}{m}{n}O}}
\newif\ifdoublecol
\tikzset{
   container/.style = {rectangle, rounded corners, draw=yellow, dashed,
fit=#1, inner sep=6mm, node contents={}},
circle-label/.style = {circle, draw}
        }
\tikzset{box/.style={draw, diamond, thick, text centered, minimum height=0.5cm, minimum width=1cm, text width=0.9cm}}
\tikzset{line/.style={draw, thick, -latex'}}
\newcommand{\adapucb}{{\ensuremath{\mathsf{AdaP\text{-}UCB}}}}
\newcommand{\adarpucb}{{\ensuremath{\mathsf{AdaC\text{-}UCB}}}}
\newcommand{\adargope}{{\ensuremath{\mathsf{AdaC\text{-}GOPE}}}}
\newcommand{\adaroful}{{\ensuremath{\mathsf{AdaC\text{-}OFUL}}}}
\newcommand{\cA}{\mathcal{A}}
\newcommand{\cC}{\mathcal{C}}
\newcommand{\cF}{\mathcal{F}}
\newcommand{\cM}{\mathcal{M}}
\newcommand{\cP}{\mathcal{P}}
\newcommand{\cV}{\mathcal{V}}
\newcommand{\cX}{\mathcal{X}}
\newcommand{\R}{\mathbb{R}}
\def\BibTeX{{\rm B\kern-.05em{\sc i\kern-.025em b}\kern-.08em
    T\kern-.1667em\lower.7ex\hbox{E}\kern-.125emX}}
\begin{document}

\title{Concentrated Differential Privacy for Bandits}


\author{\IEEEauthorblockN{Achraf Azize}
\IEEEauthorblockA{\textit{\'Equipe Scool} \\
\textit{Univ. Lille, Inria, CNRS, Centrale Lille, CRIStAL}\\
Lille, France \\
achraf.azize@inria.fr}
\and
\IEEEauthorblockN{Debabrota Basu}
\IEEEauthorblockA{\textit{\'Equipe Scool} \\
\textit{Univ. Lille, Inria, CNRS, Centrale Lille, CRIStAL}\\
Lille, France \\
debabrota.basu@inria.fr}
}

\maketitle

\begin{abstract}
Bandits serve as the theoretical foundation of sequential learning and an algorithmic foundation of modern recommender systems. However, recommender systems often rely on user-sensitive data, making privacy a critical concern. This paper contributes to the understanding of Differential Privacy (DP) in bandits with a trusted centralised decision-maker, and especially the implications of ensuring \textit{zero Concentrated Differential Privacy} (zCDP). First, we formalise and compare different adaptations of DP to bandits, depending on the considered input and the interaction protocol. Then, we propose three private algorithms, namely AdaC-UCB, AdaC-GOPE and AdaC-OFUL, for three bandit settings, namely finite-armed bandits, linear bandits, and linear contextual bandits. The three algorithms share a generic algorithmic blueprint, i.e. the Gaussian mechanism and adaptive episodes, to ensure a good privacy-utility trade-off. We analyse and upper bound the \emph{regret} of these three algorithms. Our analysis shows that in all of these settings, the prices of imposing zCDP are (asymptotically) negligible in comparison with the regrets incurred oblivious to privacy. Next, we complement our regret upper bounds with the first \emph{minimax lower bounds} on the regret of bandits with zCDP. To prove the lower bounds, we elaborate a new proof technique based on couplings and optimal transport. We conclude by experimentally validating our theoretical results for the three different settings of bandits.
\end{abstract}

\begin{IEEEkeywords}
Differential Privacy, Multi-armed Bandits, Regret Analysis, Lower bounds
\end{IEEEkeywords}

\section{Introduction}\label{sec:intro}
For almost a century, Multi-armed bandits (in brief, \textit{bandits})~  are studied to understand the cost of partial information and feedback in reinforcement learning, and sequential decision making\cite{thompson1933likelihood,lattimore2018bandit}.
In a bandit problem, an agent aims to maximise its accumulated utility by choosing a sequence of actions (or decisions), while the utility of each action is unknown and can be estimated only by choosing it. A Bandit consists of $K$ actions corresponding to $K$ unknown reward distributions $\{\nu_a\}_{a\in[K]}$. We call $ \env \triangleq \{\nu_a\}_{a\in[K]}$ an \textit{environment} or a bandit instance. For $\horizon$ time steps, a bandit algorithm (or policy) $\pi$ chooses an action (or arm) $a_t \in [K]$ and receives a reward $r_t$ from the reward distribution $\nu_{a_t}$. The goal of the policy is to maximise the cumulative reward $\sum_{t = 1}^\horizon r_t $ or equivalently minimise the regret, i.e. the cumulative reward that $\pi$ cannot achieve since it does not know the optimal reward distribution a \textit{priori}.

Bandits constitute the theoretical basis of modern Reinforcement Learning (RL) theory~\cite{lattimore2018bandit}.
They are also increasingly used in a wide range of sequential decision-making tasks under uncertainty, such as recommender systems~\cite{silva2022multi}, strategic pricing~\cite{bergemann1996learning}, clinical trials~\cite{thompson1933likelihood} to name a few. These applications often involve individuals' sensitive data, such as personal preferences, financial situation, and health conditions, and thus, naturally, invoke data privacy concerns in bandits.

\begin{example}[DoctorBandit]\label{ex:doctorbandit}
Let us consider a bandit algorithm recommending one of $K$ medicines with distributions of outcomes $\{\nu_a\}_{a\in[K]}$. Specifically, on the $t$-th day, a new patient $u_t$ arrives, and medicine $a_t \in [K]$ is recommended to her by a policy $\pi$. To recommend a medicine $a_t$, the policy might either consider the specific medical conditions (or context) of patient $u_t$, or ignore it. Then, the patient's reaction to the medicine is observed. If the medicine cures the patient, the observed reward $r_t = 1$, otherwise $r_t = 0$. 
This observed reward can reveal sensitive information about the health condition of patient $u_t$. Thus, \emph{the goal of a privacy-preserving bandit algorithm is to recommend a sequence of medicines (actions) that cures the maximum number of patients while protecting the privacy of these patients}. We present this interactive process in Algorithm~\ref{prot:bandit}.
\end{example} 
\begin{algorithm}
\caption{Sequential interaction between a policy and users }\label{prot:bandit}
\begin{algorithmic}[1]
\STATE {\bfseries Input:} A policy $\pol = \{ \pol_t \}_{t = 1}^\horizon$ and Users $\{ u_t \}_{t=1}^\horizon$ 
\STATE {\bfseries Output:} A sequence of actions $a_1, \dots, a_\horizon$
\FOR{$t = 1,\dots, \horizon$} 
\STATE $\pi$ recommends $a_t \sim \pol_t(. \mid a_1, r_1, \dots, a_{t - 1}, r_{t - 1})$
\STATE $u_t$ sends the \textbf{sensitive} reward $r_t $ to $\pi$
\ENDFOR
\end{algorithmic}
\end{algorithm}

Motivated by such data-sensitive scenarios, privacy issues are widely studied for bandits in different settings, such as finite-armed bandits~\cite{Mishra2015NearlyOD,tossou2016algorithms, dpseOrSheffet, azize2022privacy, hu2022near}, adversarial bandits~\cite{tossou2017achieving}, linear contextual bandits~\cite{shariff2018differentially,neel2018mitigating,hanna2022differentially}, and best-arm identification~\cite{azize2023complexity}. All these works adhere to Differential Privacy (DP)~\cite{dwork2014algorithmic} as the framework to ensure the data privacy of users, which is presently the gold standard of privacy-preserving data analysis. 
DP dictates that an algorithm's output has a limited dependency on the presence of any single user.
Also, multiple formulations of DP, namely \textit{local} and \textit{global}, are extended to bandits~\cite{basu2019differential}. Here, \textit{we focus on the global DP formulation}, where \textit{users trust the centralised decision-maker}, i.e. the policy, and provide it access to the raw sensitive rewards. The goal of the policy is to reveal the sequence of actions while protecting the privacy of the users and achieving minimal regret. 
\begin{table*}
\caption{Regret bounds for bandits with $\rho$-Interactive zCDP. Terms in \textcolor{blue}{blue} correspond to the cost of $\rho$-Interactive zCDP.}\label{tab:reg}
\centering
\resizebox{\textwidth}{!}{
\begin{threeparttable}
\begin{tabular}{  c  c  c } 
\hline
  \textbf{Bandit Setting} &  \textbf{Regret Upper Bound} &  \textbf{Regret Lower Bound} \\
  \hline\\
  Finite-armed bandits & $ \bigO\left(\sqrt{\arms \horizon \log(\horizon)}\right) + \textcolor{blue}{\bigO \left( \frac{\arms}{\sqrt{\rho}} \sqrt{ \log(\horizon)} \right)}$ (Thm~\ref{thm:stoch_band_upper}) & $\Omega\left(\max \left ( \sqrt{KT}, \sqrt{\frac{\arms}{\rho}} \right) \right)$ (Thm~\ref{thm:finite_lb}) \\~\\
  Linear bandits & $\bigO \left ( \sqrt{d \horizon \log(\arms\horizon)} \right) + \textcolor{blue}{ \bigO \left (\frac{d}{\sqrt{\rho}}\log^{\frac{3}{2}}(\arms\horizon) \right)}$ (Thm~\ref{thm:lin_band_upper})   & \multirow{2}{*}{$ \Omega\left(\max \left ( d \sqrt{T}, \frac{d}{\sqrt{\rho}} \right) \right)$ (Thm~\ref{thm:lin_band_lb})~\tnote{a} } \\~\\
  Linear Contextual bandits & $\bigO\left( d \log(\horizon) \sqrt{\horizon} \right) + \textcolor{blue}{\bigO \left ( \frac{d^2}{\sqrt{\rho}}  \log(\horizon)^2 \right)}$ (Thm~\ref{thm:cont_band_upper})    &  \\~\\
  \hline
\end{tabular}
\begin{tablenotes}
\item[a] \footnotesize{The non-private lower bound of $\Omega(d \sqrt{T})$ does not contradict the $\bigO \left( \sqrt{d \horizon \log(\arms\horizon)} \right)$ of linear bandits with $K$ arms. As explained in Sec 24.1. of~\cite{lattimore2018bandit}, the size of the action set in the proof of the lower bound corresponds to $\arms = 2^d$, and thus, the dependence on $d$ is tight.}
\end{tablenotes}
\end{threeparttable}}\vspace*{-1em}
\end{table*}

The complexity of pure global DP is widely studied for different settings of bandits. In the literature, the lower bound on the regret achievable by any reasonable policy is used to quantify the hardness of imposing privacy in the corresponding bandit setting.
In tandem, the goal of the algorithm design is to construct an algorithm whose upper bound on the achievable regret matches the lower bound as much as possible.
Recently, lower bounds on regret for finite-armed and linear bandits preserving pure global DP, and algorithm design techniques to match the lower bounds are proposed~\cite{azize2022privacy}. This leaves open the question of what will be the minimal cost of preserving the relaxations of pure DP in bandits, as stated in~\cite{shariff2018differentially,azize2022privacy}.
\textit{Our goal is to provide a complete picture of regret's lower and upper bounds for a relaxation of pure DP}. 

In private bandits, proving regret lower bounds often rely on coupling arguments where group privacy is a central property~\cite{azize2022privacy}. Since zCDP scales well under group privacy, we adopt zCDP as the relaxation of pure DP. In this work, we investigate zCDP in three settings of bandits: \textit{finite-armed bandits}, \textit{stochastic linear bandits with (fixed) finitely many arms}, and \textit{contextual linear bandits}. To our knowledge, we are the first to study the complexity of zCDP for bandits with global DP. 

\noindent\textit{Contributions.} Specifically, our contributions are as follows:

\begin{enumerate}
    \item \textbf{Privacy Definitions for Bandits.} We compare different ways of adopting relaxations of DP for bandits. We observe that, though for pure DP some of these definitions are equivalent, more care is needed for approximate and zero Concentrated DP. We explicate two main distinctions in the definitions. The first is dealing with the bandit feedback when defining the private input dataset. The second is whether to consider or not the interactive nature of the policy as a mechanism. Formalising and linking these definitions is a crucial step that was missing in the private bandits literature. Our first contribution is to fill this gap.
    \item  \textbf{Algorithm Design.} Following the study of privacy definitions for bandits, we adhere to $\rho$-Interactive zCDP as the main privacy definition. We propose three algorithms, namely \adarpucb{}, \adargope{}, and \adaroful{}, that achieve $\rho$-Interactive zCDP, \textit{almost for free}, for three bandit settings, namely finite-armed bandits, stochastic linear bandits with (fixed) finitely many actions and linear contextual bandits with context-dependent feasible actions. \textit{These three algorithms share the same blueprint.} First, they add a calibrated \textit{Gaussian noise} to reward statistics. Second, they run in \textit{adaptive episodes}, with the number of episodes logarithmic in $\horizon$. This means that the algorithm only accesses the private reward dataset in $\log(T)$ time steps, rather than accessing it at each step. A lower number of interactions leads to a less sensitive estimate of reward statistics, and thus, less injection of Gaussian noise. 
    \item \textbf{Regret Analysis.} We analyse the regrets of the proposed algorithms and show that $\rho$-Interactive zCDP can be preserved almost for free in terms of the regrets. Specifically, for a fixed privacy budget $\rho$, and asymptotically in the horizon $\horizon$, the cost of $\rho$-Interactive zCDP in the regret of these algorithms exhibits an additional $\color{blue}{\tilde{\bigO}(\rho^{-1/2} \log(\horizon))}$, which is significantly lower than the privacy oblivious regret, i.e. $\tilde{\bigO}(\sqrt{T})$. In Table~\ref{tab:reg}, we summarise the regret upper bounds corresponding to the three proposed algorithms. We also numerically validate the performance of the three algorithms and the corresponding theoretical results in different settings.
    \item \textbf{Hardness of Preserving Privacy in Bandits as Lower Bounds.} Addressing the open problem of~\cite{shariff2018differentially,azize2022privacy}, we prove minimax lower bounds for finite-armed bandits and linear bandits with $\rho$-Interactive zCDP, that quantify the cost to ensure $\rho$-Interactive zCDP in these settings. To prove the lower bound, we develop a new proof technique that relates minimax lower bounds to a transport problem. The minimax lower bounds show the existence of two privacy regimes depending on the privacy budget $\rho$ and the horizon $\horizon$. Specifically, for $\rho = \Omega({T}^{-1})$, \textit{an optimal algorithm does not have to pay any cost to ensure privacy} in both settings. The regret lower bounds show that \adarpucb{}, \adargope{}, and \adaroful{} are optimal, up to poly-logarithmic factors. In Table~\ref{tab:reg}, we summarise the corresponding regret lower bounds.
\end{enumerate}
\noindent\textit{Outline.} The outline of the paper is as follows. First, we discuss privacy definitions for bandits in Section~\ref{sec:priv_def}. In Section~\ref{sec:alg_design}, we propose \adargope{} and \adaroful{}, for linear and contextual bandits. We provide a privacy and regret analysis of these two algorithms in Section~\ref{sec:priv_reg}. We discuss lower bounds for zCDP in Section~\ref{sec:lower_bound}. The analysis of the complexity of zCDP in finite-armed bandits is deferred to Appendix~\ref{app:proof_stoc}. Finally, we experimentally validate the theoretical insights in Section~\ref{sec:exp} before concluding. Before diving into the technical details, we discuss the relevant literature of differentially private bandits in Section~\ref{sec:related}.

\section{Related Works}\label{sec:related}

In this section, we discuss the relevant literature of differentially private bandits, and posit our contributions in the light of them.

\paragraph{Privacy Definitions for Bandits} In this paper, we first aim to clarify different definitions of Differential Privacy (DP) considered in the context of bandits. In the presence of a trusted centralised decision-maker, the two formulations of DP considered for bandits are Table DP and View DP.
Interestingly, existing DP bandit literature has considered as a ``folklore" result that View DP and Table DP are equivalent, e.g. footnote 1 in~\cite{guha2013nearly} and Section 3 of~\cite{basu2019differential}. 
To the best of our knowledge, \textit{we provide the first formal proof of the equivalence between View DP and Table DP in the case of pure $\epsilon$-DP and falsify the equivalence for the relaxations of DP, such as $(\epsilon, \delta)$-DP}. This difference is not clear if we look into an atomic sequence of actions (e.g. probability of $\{a_1, \ldots, a_T\}$) but they differ while considering composite events (e.g. probability of $\{(a_1, \ldots, a_T),(a'_1, \ldots, a'_T)\}$). Control of such composite events becomes important under the relaxations of DP. We discuss this in detail in Section~\ref{sec:priv_def} and Appendix~\ref{app:priv_def}.

On the other hand, we discuss why considering an interactive adversary is important in a sequential setting like bandits. We develop an Interactive DP definition for bandits (Definition~\ref{def:int_band_def}) based on the framework of~\cite{vadhan2021concurrent,vadhan2022concurrent}. Recently, a similar definition of Interactive DP has been proposed by~\cite{jain2023price} for the continual observation setting under adaptively chosen queries (Section 5.1,~\cite{jain2023price}). Our Interactive DP definition can be perceived as an adaptation of the Interactive DP definition of~\cite{jain2023price} to the ``partial information setting" of bandits. Detailed discussion is deferred to Remark~\ref{rem:idp}.

\paragraph{Algorithm Design} \cite{azize2022privacy} proposes a generic framework to make any index-based algorithms achieve $\epsilon$-pure global DP, in the stochastic finite-armed bandit setting. This framework has three main ingredients: \textit{per-arm doubling}, \textit{forgetting}, and \textit{adding Laplace noise}. \adarpucb{} is an extension of this framework to zCDP. On the other hand, the design choices for \adargope{} and \adaroful{} are quite different from the framework in~\cite{azize2022privacy}. \adargope{} runs in phases. However, these phases are \textit{not} arm-dependent and \textit{not} necessarily doubling. On the other hand, one can perceive that \adaroful{} deploys a \textit{generalisation of per-arm doubling} to contextual linear bandits, using the \textit{doubling of the determinant of the design matrix} trick. However, \adaroful{} does not forget the samples from the previous phases (Line 8, Algorithm~\ref{alg:rare_lin_ucb}). For linear bandits with a finite number of arms, \cite{hanna2022differentially,li2022differentially} also propose two private variants of GOPE algorithm~\cite{lattimore2018bandit}. In Section~\ref{sec:alg_design}, we show that \adargope{} achieves lower regret than both~\cite{hanna2022differentially,li2022differentially}. \cite{shariff2018differentially,neel2018mitigating} also propose two differentially private variants of OFUL~\cite{abbasi2011improved} for linear contextual bandits. In Section~\ref{sec:lin_cont_bandits}, we propose a differentially private variant of OFUL, namely \adaroful{}, that achieves lower regret than the existing algorithms (Theorem~\ref{thm:cont_band_upper}). In this work, we consider rewards to be the private information and contexts to be public~\cite{neel2018mitigating}, whereas one can consider both of them to be jointly private~\cite{shariff2018differentially}, which we do not consider in this paper.

\paragraph{Comparison with Regret Bounds under Pure DP} Every $\epsilon$-DP algorithm is $\rho$-zCDP with $\rho=\frac{1}{2} \epsilon^2$ (Proposition 1.4,~\cite{ZeroDP}). Due to this observation, it is possible to provide zCDP regret upper bounds from the $\epsilon$-DP bandit literature, by replacing $\epsilon$ with $\sqrt{2 \rho}$ in those results. Our zCDP upper bounds improve on these ``converted" upper bounds on logarithmic terms in $T$, $K$, and $d$. This improvement is due to the use of the Gaussian Mechanism rather than the Laplace mechanism. Table~\ref{tab:reg_comp} summarises the comparison.

\paragraph{Hardness of Preserving Privacy in Bandits as Lower Bounds} To prove regret lower bounds in bandits, we leverage the generic proof ideas in~\cite{lattimore2018bandit}. The main technical challenge in these proofs is to quantify the extra cost of ``indistinguishability" due to DP. This cost is expressed in terms of an upper bound on KL-divergence of observations induced by two `confusing' bandit environments. For pure DP~\cite{azize2022privacy}, the upper bound on the KL-divergence (Theorem 10 in~\cite{azize2022privacy}) is proved by adapting the Karwa-Vadhan lemma~\cite{KarwaVadhan} to the bandit sequential setting. To our knowledge, there is no zCDP version of the Karwa-Vadhan lemma. Thus, we first provide a general result in Theorem 6, which could be seen as a generalisation of the Karwa-Vadhan lemma to zCDP.  To prove this result, we derive a new maximal coupling argument relating the KL upper bound to an optimal transport problem, which can be of parallel interest. Then, we adapt it to the bandit setting in Theorem 7. The regret lower bounds are retrieved by plugging in these upper bounds on the KL-divergence in the generic lower bound proof of bandits.

\section{Privacy definitions for bandits}\label{sec:priv_def}
We first recall the definition of Differential Privacy (DP) and the bandit canonical model. Then, we compare different adaptations of DP to bandits under the centralised model. These adaptations differ in the nature of the input considered and the nature of the interaction protocol.
\setlength{\textfloatsep}{6pt}%
\begin{figure*}[t!]
    \centering
    \begin{minipage}{0.49\textwidth}
    \centering
        \includegraphics[width=0.88\linewidth]{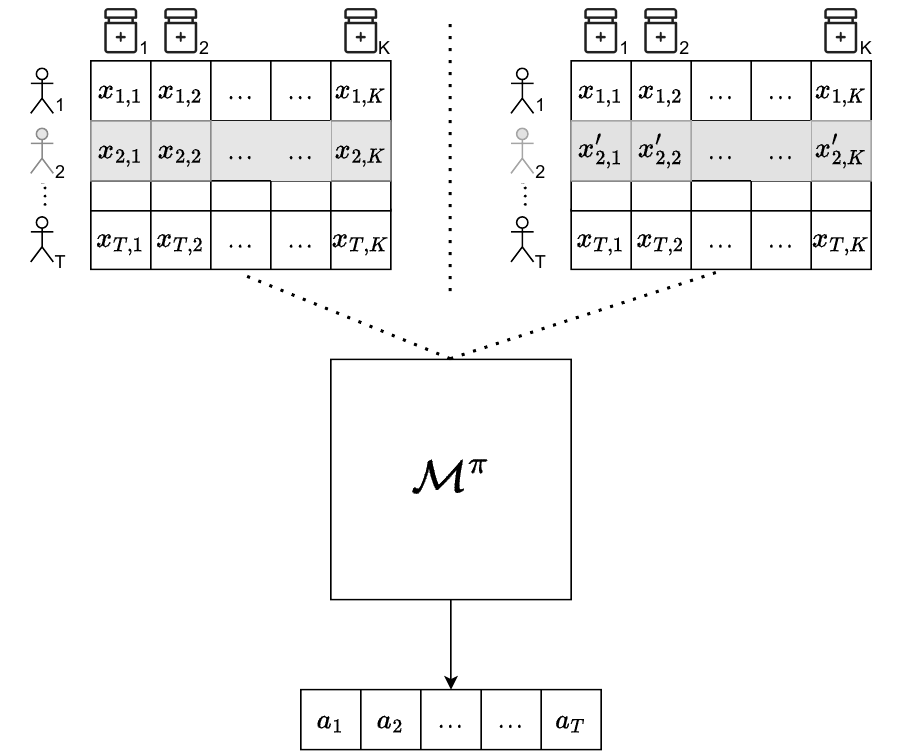}
    \caption{Table DP}\label{fig:table_dp}
    \end{minipage}\hfill
    \begin{minipage}{0.49\textwidth}
        \centering
         \includegraphics[width=0.9\linewidth]{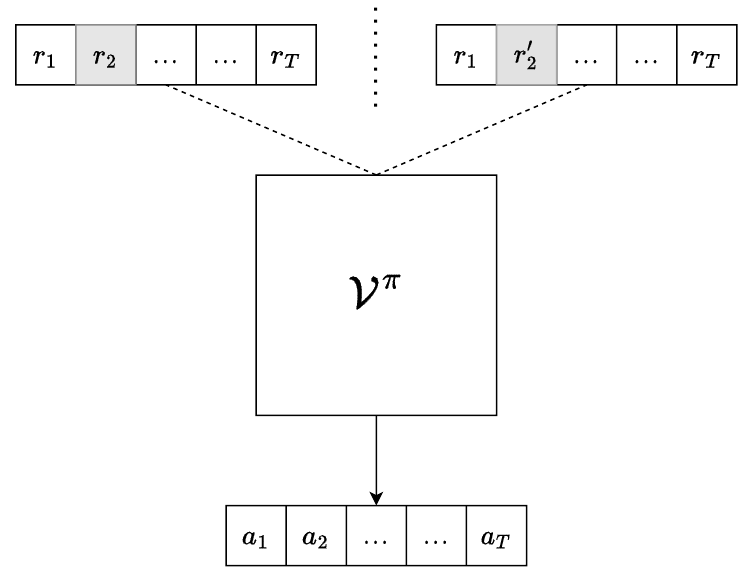}
    \caption{View DP}\label{fig:view_dp}
    \end{minipage}\vspace*{-1em}
\end{figure*}

\subsection{Background: Differential Privacy and Bandits}

\textbf{Differential Privacy (DP)} renders an individual corresponding to a data point indistinguishable by constraining the output of an algorithm to remain almost the same under a change in one input data point.

\begin{definition}[$(\epsilon, \delta)$-DP~\cite{dwork2014algorithmic} and $\rho$-zCDP~\cite{ZeroDP}]\label{Def_DP}
A mechanism $\mathcal{M}$, which assigns to each dataset $d$ a probability distribution $\mathcal{M}_d$ on some measurable space $(\mathbb{X}, \mathcal{F})$, satisfies
\begin{itemize}
     \item $(\epsilon, \delta)$-DP for a given $\delta\in [0,1)$ , 
     if 
    \begin{align}\label{eq:DP}
        \sup\limits_{A\in \mathcal{F}, d\sim d'}\mathcal{M}_{d}(A)-e^\epsilon\mathcal{M}_{d'}(A)\leq \delta.
    \end{align}
    \item $\rho$-zCDP if, for all $\alpha>1$, 
    \begin{align}\label{eq:zCDP}
        \sup_{d\sim d'}D_\alpha(\mathcal{M}_{d}\|\mathcal{M}_{d'})\leq \rho \alpha.
    \end{align}
\end{itemize}
Here, two datasets $d$ and $d'$ are said to be neighbouring, and are denoted by $d\sim d'$, if their Hamming distance is one.  $D_\alpha(P\|Q)\defn \frac{1}{\alpha-1}\log\mathbb E_Q\left[\left(\frac{\text{d}P}{\text{d}Q}\right)^\alpha\right]$ denotes the R\'enyi divergence of order $\alpha$ between $P$ and $Q$.
\end{definition}

Now, we recall the \textbf{canonical model of bandits} (Sec 4.6.,~\cite{lattimore2018bandit}).
\begin{definition}
A bandit algorithm (or \textbf{policy}) $\pol$ is a sequence of rules $(\pol_t)_{t=1}^{\horizon}$, where $\pol_t: \Hist_{\step-1} \rightarrow \Delta_{\arms}$ is a probability kernel that assigns to a history $\Hist_{\step-1}$ a distribution over arms, and $\Delta_{\arms}$ is the simplex over $[\arms]$.
\end{definition}
A bandit algorithm (or policy) $\pol$ interacts with an environment $\model$ consisting of $\arms$ arms (or actions) with reward distributions $\setof{\nu_{a}}_{a=1}^{\arms}$ for a given horizon $\horizon$, and produces a history $\Hist_\horizon \defn \lbrace (A_t, R_t)\rbrace_{t=1}^\horizon$. At each step $\step$, the choice of the arm $A_t$ depends on the previous history $\Hist_{t-1}$, i.e. $A_t \sim \pi_t(. ~| \Hist_{t-1})$. The reward $R_t$ is sampled from the reward distribution $\nu_{A_t}$ and is conditionally independent of the previous history $\Hist_{\step-1}$. 

In order to rigorously adapt DP to bandits, it is important to specify: (a) \textit{the mechanism} in question, (b) its \textit{input dataset}, (c) the \textit{neighbouring relationship between the input datasets} and (d) the \textit{output} of the mechanism.

\subsection{Challenges in Adapting DP for Bandits}

In the DoctorBandit (Example~\ref{ex:doctorbandit}), privacy concerns emerge from the sensitivity of the reward information, i.e. the reaction of a patient to a medicine could disclose private information about their health condition. The published output is the sequence of recommended medicines, i.e. $(a_1, \dots, a_\horizon)$. Thus, the mechanism to be made private is induced by the policy $\pi$. 

As privacy is a worst-case constraint, any definition of privacy in bandits should only depend on the policy $\pi$, and be independent of any (stochastic) environment considerations. 
Rather, a privacy definition should be perceived as a constraint on the class of policies to be considered.

The \textit{first challenge} in defining DP for bandits is \textit{to determine the private input dataset, due to the bandit feedback}. 
Specifically, each patient $u_t$ can be represented by the vector of their potential reactions $x_t \defn (x_{t,1}, \dots, x_{t, \arms}) \in \{0,1\}^\arms$. 
If the policy $\pi$ recommends an action $a_t$ for user $u_t$, only the reward $r_t \defn x_{t, a_t}$ is observed. 
There are two possible ways to deal with the partial information in adapting DP. 
\begin{enumerate}
    \item[i.] Consider that the private input is the table of all potential rewards $d \defn (x_1, \dots, x_T) \in (\{0,1\}^{\arms})^\horizon$, which we call \textbf{Table DP}.
    \item[ii.] Consider the input as a list of ``fixed in advance" observed rewards $\mathbf{r} \defn \{r_1, \dots, r_\horizon\}$, which we call \textbf{View DP}.
\end{enumerate}

The \textit{second challenge} in defining DP is \textit{to determine the composition protocol}. 
The sequence of the published actions can be seen as the answer to $\horizon$ adaptively chosen queries, on adaptively gathered data. A policy $\pi$ can be seen as a mechanism that interactively produces a sequence of actions, answering $T$ adaptively chosen queries, by a potentially adversarial analyst. 
It is thus natural to induce an interactive mechanism from the policy $\pi$ and adapt to it the \textbf{Interactive DP} definition as studied in~\cite{vadhan2022concurrent, lyucomposition}.

\subsection{Table DP vs. View DP}
We denote the mechanism induced by the interaction of a policy $\pi$ and a table of rewards $d \triangleq \lbrace(x_{t, i})_{i \in [\arms]} \rbrace_{t \in [\horizon]}  \in (\real^\arms)^\horizon$ as the mechanism $\mathcal{M}^\pol$, such that
    \begin{align*}
        \mech^\pol: ~~ (\real^\arms)^\horizon &\rightarrow \mathcal{P}([K]^\horizon)\\
        d~ &\rightarrow \mech^\pol_d\, .
\end{align*}
Here, $\mech^\pol_d$ is a distribution over the sequence of actions, and \\
$\mech^\pol_d(a_1, \dots, a_\horizon) = \prod_{t = 1}^\horizon \pol_t\left(a_t | a_1, x_{1, a_1}, \dots a_{t-1}, x_{t -1, a_{t-1}}\right)$. The hamming distance between two table of rewards $d, d' \in (\real^\arms)^\horizon$ is the number of different \textit{rows} in $d$ and $d'$, i.e. $\dham(d, d') \defn \sum_{t = 1}^\horizon \ind{x_t \neq x'_t} = \sum_{t = 1}^\horizon \ind{\exists i \in [\arms],~ x_{t, i} \neq x'_{t, i}}$.


The mechanism induced by the interaction of $\pi$ and a list of rewards $\mathbf{r} \defn (r_{t})_{t \in [\horizon]} \in \real^\horizon$ is denoted by $\mathcal{V}^\pol$, such that
\begin{align*}
    \mathcal{V}^\pol: ~~ \real^\horizon &\rightarrow \mathcal{P}([K]^\horizon)\\
    \mathbf{r}~ &\rightarrow \mathcal{V}^\pol_{\mathbf{r}}\, .
\end{align*}
Here, $\mathcal{V}^\pol_d$ is a distribution over the sequence of actions, and $\mathcal{V}^\pol_{\mathbf{r}}(a_1, \dots, a_\horizon) = \prod_{t = 1}^\horizon \pol_t(a_t | a_1, r_1, \dots a_{t-1},r_{t-1})$. The Hamming distance between two lists of rewards $r, r' \in \real^\horizon$ is the number of different elements in $r$ and $r'$, i.e. $\dham(r, r') \defn \sum_{t = 1}^\horizon \ind{r_t \neq r'_t} $

\begin{remark}\label{rem:1}
    The expressions of $\mathcal{V}^\pol_{\mathbf{r}}(a_1, \dots, a_\horizon)$ and $\mathcal{M}^\pol_r(a_1, \dots, a_\horizon)$ as products capture the sequential nature of producing the sequence of actions $(a_1, \dots, a_\horizon)$. 
    At first glance, the two expressions look very similar. However, the differences arise when $\mathcal{V}^\pol_{\mathbf{r}}$ and $\mathcal{M}^\pol_d$ are applied to non-atomic event $E \in \mathcal{P}([K]^\horizon)$. 
    For example, if we define an event $E \defn \{(a_1, \dots, a_\horizon), (b_1, \dots, b_\horizon)\}$, then $\mathcal{V}^\pol_{\mathbf{r}}(E) = \prod_{t = 1}^\horizon \pol_t(a_t | a_1, r_1, \dots a_{t-1},r_{t-1}) + \prod_{t = 1}^\horizon \pol_t(b_t | b_1, r_1, \dots b_{t-1},r_{t-1})$, while $\mech^\pol_d(E) = \prod_{t = 1}^\horizon \pol_t(a_t | a_1, x_{1, a_1}, \dots a_{t-1}, x_{t -1, a_{t-1}}) + \prod_{t = 1}^\horizon \pol_t(b_t | a_1, x_{1, b_1}, \dots b_{t-1}, x_{t -1, b_{t-1}})$. 
    In the expression of $\mathcal{V}^\pol_{\mathbf{r}}(E)$, the same rewards appear in the elements of the sum.
    In contrast, in the expression of $\mech^\pol_d(E)$, each sequence of actions generates different trajectories of reward in the table. 
    As we show later, this subtle difference is the source of the difference between Table DP and View DP.
\end{remark}


Now that the mechanisms are explicit, the corresponding definitions of DP follow naturally.
\begin{definition}[Table DP and View DP]
    A policy $\pol$ ensures
\begin{itemize}
    \item $(\epsilon, \delta)$-Table DP if and only if $\mech^\pol$ is $(\epsilon, \delta)$-DP,
    \item $(\epsilon, \delta)$-View DP if and only if $\mathcal{V}^\pol$ is $(\epsilon, \delta)$-DP,
    \item $\rho$-Table zCDP if and only if $\mech^\pol$ is $\rho$-zCDP,
    \item $\rho$-View zCDP if and only if $\mathcal{V}^\pol$ is $\rho$-zCDP.
\end{itemize}
\end{definition}

Table DP is a formalisation of the privacy definition adopted in~\cite{Mishra2015NearlyOD, neel2018mitigating}, while View DP is a formalisation of the definition adopted in~\cite{dpseOrSheffet, lazyUcb, hanna2022differentially, azize2022privacy}. 

We summarise the relations between Table DP and View DP in the following proposition. For brevity, the proofs are deferred to Appendix~\ref{app:priv_def}.
\begin{proposition}[Relation between Table DP and View DP]\label{prop:tab_view}
    For any policy $\pi$, we have that
    \begin{itemize}
        \item[(a)] $\mech^\pol$ is $\epsilon$-DP $\Leftrightarrow$ $\mathcal{V}^\pol$ is $\epsilon$-DP.
        \item[(b)] $\mech^\pol$ is $(\epsilon, \delta)$-DP $\Rightarrow$ $\mathcal{V}^\pol$ is $(\epsilon, \delta)$-DP.
        \item[(c)] $\mech^\pol$ is $\rho$-zCDP $\Rightarrow$ $\mathcal{V}^\pol$ $\rho$-zCDP.
        \item[(d)] $\mathcal{V}^\pol$ is $(\epsilon, \delta)$-DP $\Rightarrow$ $\mech^\pol$ is $(\epsilon, K^\horizon \delta)$-DP.
        \item[(e)] $\Pi_{\text{Table}}^{(\epsilon, \delta)} \subsetneq \Pi_{\text{View}}^{(\epsilon, \delta)}$,
    \end{itemize}
where $\Pi_{\text{Table}}^{(\epsilon, \delta)}$ and $\Pi_{\text{View}}^{(\epsilon, \delta)}$ are the class of all policies verifying $(\epsilon, \delta)$-Table DP and $(\epsilon, \delta)$-View  DP, respectively.
\end{proposition}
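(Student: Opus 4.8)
The plan is to deduce all five claims from two elementary structural facts relating the table mechanism $\mech^\pol$ to the list mechanism $\mathcal{V}^\pol$. First, the \emph{constant-row embedding} $\phi\colon \real^\horizon \to (\real^\arms)^\horizon$ given by $\phi(\mathbf r)_{t,i} = r_t$ for all $t,i$ has the property that the rewards read off $\phi(\mathbf r)$ along \emph{any} trajectory $(a_1,\dots,a_\horizon)$ are exactly $(r_1,\dots,r_\horizon)$; hence $\mech^\pol_{\phi(\mathbf r)} = \mathcal{V}^\pol_{\mathbf r}$ as distributions on $[\arms]^\horizon$ (on all events, not just atoms), and $\dham(\phi(\mathbf r),\phi(\mathbf r')) = \dham(\mathbf r,\mathbf r')$, so $\phi$ preserves the neighbouring relation. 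Thus $\mathcal{V}^\pol$ is exactly $\mech^\pol$ with its input domain restricted to the neighbour-isometric image of $\phi$, and restricting the input domain can only make a (R\'enyi-)divergence-based privacy constraint \emph{easier} to satisfy. Second, the \emph{trajectory projection} $\mathbf r(d,a) \defn (x_{1,a_1},\dots,x_{\horizon,a_\horizon})$ satisfies $\mech^\pol_d(\{a\}) = \mathcal{V}^\pol_{\mathbf r(d,a)}(\{a\})$ for every atom $a$, and if $d\sim d'$ differ only in row $t_0$ then $\mathbf r(d,a)$ and $\mathbf r(d',a)$ agree outside coordinate $t_0$, i.e.\ they are neighbours or equal.

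Parts (b) and (c) are then immediate from the first fact: for neighbours $\mathbf r\sim\mathbf r'$, $\mathcal{V}^\pol_{\mathbf r}(A)-e^\epsilon\mathcal{V}^\pol_{\mathbf r'}(A) = \mech^\pol_{\phi(\mathbf r)}(A)-e^\epsilon\mech^\pol_{\phi(\mathbf r')}(A)\le\delta$, and $D_\alpha(\mathcal{V}^\pol_{\mathbf r}\|\mathcal{V}^\pol_{\mathbf r'}) = D_\alpha(\mech^\pol_{\phi(\mathbf r)}\|\mech^\pol_{\phi(\mathbf r')})\le\rho\alpha$. Part (a): since $[\arms]^\horizon$ is finite, pure $\epsilon$-DP of a mechanism is equivalent to the atom-wise bound $\mathcal M_d(\{a\})\le e^\epsilon\mathcal M_{d'}(\{a\})$ (sum over the atoms of $A$ for general events); the ``$\Rightarrow$'' direction is the $\delta=0$ case above, and for ``$\Leftarrow$'' I would apply View $\epsilon$-DP to the neighbouring-or-equal pair $\mathbf r(d,a),\mathbf r(d',a)$ and then sum over atoms. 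Part (d): by the second fact together with View $(\epsilon,\delta)$-DP applied atom by atom, $\mech^\pol_d(\{a\})\le e^\epsilon\mech^\pol_{d'}(\{a\})+\delta$ for each of the at most $\arms^\horizon$ atoms $a$; summing over $a\in A$ yields $\mech^\pol_d(A)\le e^\epsilon\mech^\pol_{d'}(A)+\arms^\horizon\delta$.

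For part (e), the inclusion $\Pi_{\mathrm{Table}}^{(\epsilon,\delta)}\subseteq\Pi_{\mathrm{View}}^{(\epsilon,\delta)}$ is part (b). For strictness (with $\horizon\ge2$; when $\horizon=1$ both mechanisms are input-independent and the two classes coincide) I would exhibit a separating policy: $\arms=2$, $\pi_1\equiv(\tfrac12,\tfrac12)$, $\pi_2(1\mid 1,y)=2\delta\,\ind{y\ge 1/2}$, $\pi_2(1\mid 2,y)=2\delta\,\ind{y<1/2}$ (complements for action $2$), and $\pi_t$ constant for $t\ge 3$. For the composite event $E=\{(1,1),(2,1)\}$ and the tables $d,d'$ whose first rows are $(1,0)$ and $(0,1)$ (neighbours, differing only in row $1$), a direct computation gives $\mech^\pol_d(E)=\tfrac12\cdot2\delta+\tfrac12\cdot2\delta=2\delta$ and $\mech^\pol_{d'}(E)=0$, so $\mech^\pol_d(E)-e^\epsilon\mech^\pol_{d'}(E)=2\delta>\delta$ and the policy is not $(\epsilon,\delta)$-Table DP. Conversely, $\mathcal{V}^\pol_{\mathbf r}$ depends on $\mathbf r$ only through $r_1$, and checking the four atoms shows that for any neighbouring lists the largest value of $\mathcal{V}^\pol_{\mathbf r}(A)-e^\epsilon\mathcal{V}^\pol_{\mathbf r'}(A)$ over events $A$ is exactly $\delta$, attained at $A=\{(1,1)\}$, provided $\delta\le\tfrac12(1-e^{-\epsilon})$ (true for any fixed $\epsilon>0$ and $\delta$ small enough); hence the policy is $(\epsilon,\delta)$-View DP, proving $\Pi_{\mathrm{Table}}^{(\epsilon,\delta)}\subsetneq\Pi_{\mathrm{View}}^{(\epsilon,\delta)}$.

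The only step that is not pure bookkeeping is part (e): the separating policy has to be tuned so that, in the View world, the single shared reward makes the two opposing per-atom losses cancel on composite events --- so the worst-case View event is a singleton with loss exactly $\delta$ --- while in the Table world the two entries of one row feed opposite rewards into the two trajectories of $E$, doubling the loss to $2\delta$. Forcing both sides to land at $\delta$ versus $2\delta$ is what introduces the mild regime condition $\delta\le\tfrac12(1-e^{-\epsilon})$, and this verification is the main obstacle; everything else follows mechanically from the maps $\phi$ and $\mathbf r(\cdot,\cdot)$.
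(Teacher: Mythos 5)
Your parts (a)--(d) follow exactly the paper's route: your constant-row embedding $\phi$ and trajectory projection $\mathbf r(d,a)$ are precisely the paper's Reductions (list-to-table and table-to-list), and the ensuing arguments — the event-wise identity $\mathcal{V}^\pol_{\mathbf r}=\mech^\pol_{\phi(\mathbf r)}$ for (b)--(c), the atom-wise reduction for the converse of (a), and summation over at most $\arms^\horizon$ atoms for (d) — coincide with the paper's proof. The genuine difference is in (e). The paper separates the two classes by writing down a specific three-step, two-arm policy with numerical decision-rule matrices and verifying by exhaustive enumeration over all events and neighbouring inputs that it is View DP but not Table DP at the single pair $(\epsilon_1,\delta_1)=(0.95,0.17)$; its argument certifies strictness only at that instance. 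Your construction is analytic: the step-2 rule $\pi_2(1\mid 1,y)=2\delta\,\ind{y\ge 1/2}$, $\pi_2(1\mid 2,y)=2\delta\,\ind{y<1/2}$ makes the composite event $E=\{(1,1),(2,1)\}$ collect probability $2\delta$ under the table with first row $(1,0)$ and $0$ under its neighbour with first row $(0,1)$, which rules out $(\epsilon,\delta)$-Table DP for every $\epsilon$ once $\delta>0$, while in the View world the single shared reward puts the two offending atoms on opposite sides, so the per-atom positive parts sum to exactly $\delta$ (the only delicate atom being $(2,2)$, whose contribution is nonpositive precisely when $\delta\le\tfrac12(1-e^{-\epsilon})$); I checked this computation and it is correct, including the reduction of cylinder events over steps $t\ge 3$ to the first two coordinates. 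This buys a closed-form counterexample valid for the whole regime $0<\delta\le\tfrac12(1-e^{-\epsilon})$ and makes the mechanism behind the non-equivalence transparent, whereas the paper's numerically verified example avoids case analysis but establishes only one instance. Your proof is complete as written; just make explicit that $\delta>0$ is needed (for $\delta=0$ strictness fails by (a)) and that $2\delta\le 1$, which your regime condition already guarantees.
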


\textit{The Consequences of Proposition~\ref{prop:tab_view}.} Proposition~\ref{prop:tab_view} establishes that Table DP is a ``stronger" notion of privacy than View DP. 
Table DP protects all the \textbf{potential} responses of an individual rather than just the \textbf{observed} one. 

Specifically, Proposition~\ref{prop:tab_view}(a) shows that Table DP and View DP are equivalent for pure DP, i.e. $(\epsilon, 0)$-DP. 
For relaxations of pure DP, i.e. for $(\epsilon, \delta)$-DP and $\rho$-zCDP, Proposition~\ref{prop:tab_view}(b) and~\ref{prop:tab_view}(c) show that Table DP always implies View DP with the same privacy budget. 

However, the converse from View DP to Table DP happens with a loss in the privacy budget. 
Proposition~\ref{prop:tab_view}(e) states that the class of policies verifying $(\epsilon, \delta)$-Table DP is \textit{strictly} included in the class of policies verifying $(\epsilon, \delta)$-View DP. To prove this, we build a policy that verifies some $(\epsilon_1, \delta_1)$-View DP but is shown to be never $(\epsilon_1, \delta_1)$-Table DP. This validates that going from View DP to Table DP must happen with a \textit{loss} in the privacy budget. 
Proposition~\ref{prop:tab_view}(d) yields a simple quantification of the loss. We leave it as an open problem to quantify the best privacy loss conversion from View DP to Table DP. It would be an interesting question to investigate if the equivalence between View DP and Table DP is still valid for $(\epsilon, \delta)$-DP, for some very small $\delta$ regime.

\textit{An Intuition.} We observe that under bandit feedback, pure DP and relaxations of DP behave differently for Table DP and View DP. To provide an intuition behind this phenomenon, we would like to revise Remark~\ref{rem:1}. For pure DP, it is enough tp bound the change in the probability for ``atomic" sequences of actions $(a_1, \dots, a_\horizon)$. For such ``atomic" event, it is easy to go from $\mathcal{V}^{\pi}$ to $\mathcal{M}^{\pi}$ (Reduction~\ref{red:list_to_table}) and back (Reduction~\ref{red:table_to_list}).
For relaxations of DP, this does not hold true anymore. The details of the proof of Proposition~\ref{prop:tab_view} are available in Appendix~\ref{app:priv_def}.



\subsection{Interactive DP}\label{sec:int_dp}
Bandits inherently operate through an interactive process (Algorithm~\ref{prot:bandit}). 
It is possible to induce an interactive mechanism from a policy $\pi$, viewed as a party in an interactive protocol, interacting with a possibly adversarial analyst. 

The interaction protocol has three elements: (i) the policy $\pi = \{\pi_t\}_{t=1}^{\horizon}$, (ii) a private input dataset which we consider to be the table of potential rewards\footnote{It is also possible to consider a ``View" definition of Interactive DP.} $d \defn (x_1, \dots, x_T) \in (\real^\arms)^\horizon$, and (iii) an adversary $B \defn \{B_t\}_{t = 1}^{\horizon}$. 

The interaction protocol is the following:

\fbox{%
\begin{minipage}{0.9\columnwidth}
For $t=1,\ldots, \horizon$
\begin{enumerate}
    \item The bandit algorithm selects an action $$o_t \sim \pol_t(\cdot \mid q_1, x_{1, q_1}, \dots, q_{t - 1}, x_{t-1, q_{t-1}}),$$
    \item  The adversary returns a query action $$q_t = B_t(o_1, o_2, \ldots, o_{t}).$$
    \item The bandit algorithm observes the reward corresponding to $q_t$ for user $u_t$, i.e. $x_{t, q_t}$.
\end{enumerate}
\end{minipage}%
}\\~\\
We represent this interaction by $\pi \leftrightarrow^{d} B$, and illustrate it in Figure~\ref{fig:interactive_dp}.

\begin{figure}[t!]
    \centering
    \includegraphics[width=0.98\linewidth]{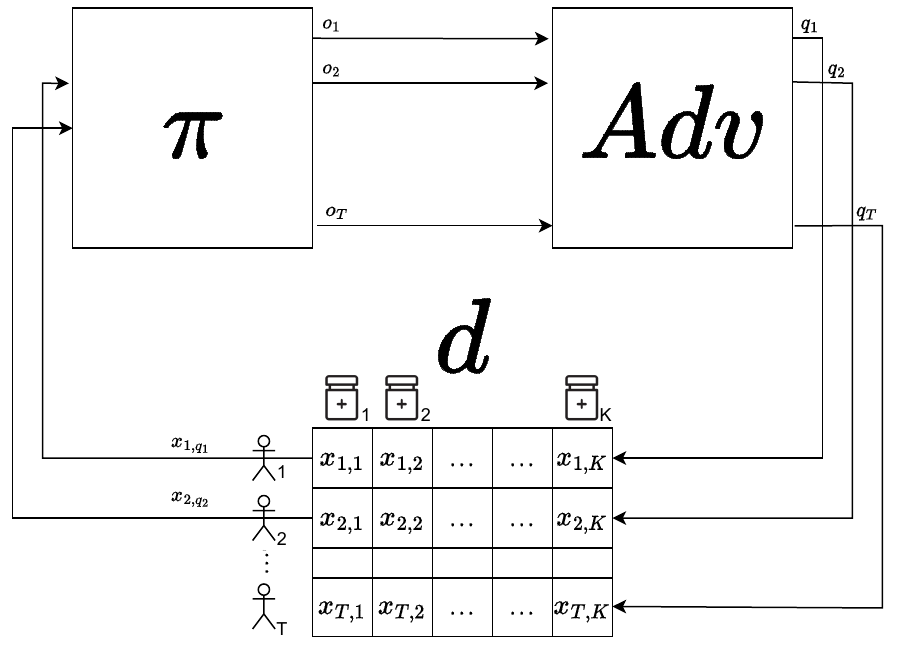}
    \caption{Sequential interaction between the policy, an adversary, and a table of rewards.}
    \label{fig:interactive_dp}
\end{figure}

The main difference between this interaction protocol and Algorithm~\ref{prot:bandit} is that the reward revealed to the bandit algorithm, at step $t$, is not the reward corresponding to the action recommended by the policy, i.e. $o_t$, but from a query action chosen by the adversary, i.e. $q_t$. 
The query action $q_t$ is chosen by the adversary depending on its current view, i.e. the sequence of recommended actions $(o_1, \dots, o_t)$.  

Following the Interactive DP framework~\cite{vadhan2022concurrent}, the policy $\pol$ is a differentially private interactive mechanism if the view of adversary $B$, i.e. 
\[\View_{B, \pi, d} \defn \View_B ( \pi \leftrightarrow^{d} B ) \defn (o_1, \dots, o_\horizon),\] is indistinguishable when the interaction is run on two neighbouring tables of rewards $d$ and $d'$. 

\begin{definition}[Interactive DP]\label{def:int_band_def}
A policy $\pol $ satisfies

\begin{itemize}
        \item  $(\epsilon, \delta)$-Interactive DP for a given $\epsilon\geq 0$ and $\delta\in [0,1)$, if for all adversaries $B$ and all subset of views $\mathcal{S} \subseteq [\arms]^\horizon$,
    \begin{align*}
        \sup_{d \sim d'} \Pr[ \View_{B, \pi, d} \in \mathcal{S}] -  e^\epsilon \Pr[ \View_{B, \pi, d'}\in \mathcal{S}] \leq \delta.
    \end{align*}

    \item $\rho$-Interactive zCDP policy for a given $\rho \geq 0$, if for every $\alpha > 1$, and every adversary $B$,   
    \begin{align*}
          \sup_{d \sim d'}D_\alpha(\View_{B, \pi, d}  \| \View_{B, \pi, d'})\leq \rho \alpha.
    \end{align*}
\end{itemize}\vspace*{-0.5em}
\end{definition}

\begin{remark}\label{rem:idp}
    Interactive DP in Definition~\ref{def:int_band_def} can be perceived as an adaptation of the ``adaptive" privacy definition in Section 5.1 of~\cite{jain2023price} to the bandit setting. At each step $t$, the adversary of~\cite{jain2023price} chooses a query $r_t$ to send to the policy $\pol$, depending on the history of the interaction between the policy and the adversary. The adversary in Definition~\ref{def:int_band_def} also adaptively chooses a query $r_t = x_{t,q_t}$ to send to the policy, but by shuffling over a ``fixed-in-advance" table of rewards $d$. Thus, the adversary in~\cite{jain2023price} might be stronger than the one in Definition~\ref{def:int_band_def}. However, it is an interesting question to see if the two definitions are equivalent, i.e. can a ``fully" adaptive adversary for bandits be simulated as a shuffling adversary over a fixed table of rewards?
\end{remark}

We elaborate on three interesting implications of the interactive definition of privacy in bandits.

(a) Interactive DP defends against a more realistic sequential adversary, who can ``manipulate" the rewards observed by the policy at every and any step.

(b) Interactive DP protects the privacy of the users even if the users are non-compliant~\cite{kallus2018instrument,stirn2018thompson}, i.e. the users decide to ignore the recommendations of the policy and choose a different arm.

(c) Interactive DP inherently provides robustness against online reward poisoning attacks~\cite{liu2019data}. 

We relate Interactive DP and Table DP in the following proposition. The proofs are detailed in Appendix~\ref{app:priv_def}.

\begin{proposition}\label{prop:int_table}
    For any policy $\pi$, we have that
    \begin{itemize}
        \item[(a)] $\pi$ is $\rho$-Interactive zCDP $\Rightarrow$ $\pi$ is $\rho$-Table zCDP
        \item[(b)] $\pi$ is $\rho$-Interactive zCDP if and only if, for every deterministic adversary $B = \{B_t\}_{t = 1}^\horizon$, $\pol^B$ is $\rho$-Table zCDP. Here, $\pol^B \defn \{ \pol^B_t \}_{t= 1}^\horizon$ is a post-processing of the policy $\pi$ induced by the adversary $B$ such that
    \end{itemize}
    \begin{align*}
        &\pol^B_t(a \mid a_1, r_1, .. , a_{t -1}, r_{t -1} ) \defn\\
        &\pol_t\Bigl(a \mid B_1(a_1), r_1, B_2(a_1, a_2), r_2, .. , B_{t - 1}(a_1, .. , a_{t -1}) , r_{t -1} \Bigl).
    \end{align*}
\end{proposition}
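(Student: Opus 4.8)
The plan is to view $\rho$-Table zCDP as the special case of $\rho$-Interactive zCDP against a ``non-interfering'' adversary, and, for a fixed \emph{deterministic} adversary $B$, to identify the interactive view with the table mechanism of the post-processed policy $\pi^B$; arbitrary randomised adversaries are then reduced to deterministic ones by a convexity argument, and the only non-trivial external facts used are the data-processing inequality and the joint quasi-convexity of the R\'enyi divergence $D_\alpha$. For part~(a), let $B^{\mathrm{id}}$ be the deterministic adversary with $B^{\mathrm{id}}_t(o_1,\dots,o_t)=o_t$, which never alters the recommended action. In the run $\pi\leftrightarrow^{d} B^{\mathrm{id}}$ the reward observed at step $t$ is then $x_{t,o_t}$, and unrolling the protocol shows that the law of $\View_{B^{\mathrm{id}},\pi,d}=(o_1,\dots,o_\horizon)$ equals $\mech^{\pi}_d$. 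Since $\pi$ is $\rho$-Interactive zCDP, Definition~\ref{def:int_band_def} applied to $B=B^{\mathrm{id}}$ gives $\sup_{d\sim d'}D_\alpha(\mech^{\pi}_d\|\mech^{\pi}_{d'})\le\rho\alpha$ for all $\alpha>1$, i.e.\ $\mech^{\pi}$ is $\rho$-zCDP, which is exactly $\rho$-Table zCDP. (This also follows from part~(b) specialised to $B^{\mathrm{id}}$, since $\pi^{B^{\mathrm{id}}}=\pi$.)

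For part~(b), the central lemma I would establish is: \emph{for every deterministic adversary $B$ and every table $d$, $\View_{B,\pi,d}=\mech^{\pi^B}_d$ as distributions on $[\arms]^\horizon$.} I would prove it by induction on $t$, checking that the history presented to the kernel $\pi_t$ is the same in both processes --- namely the sequence of $B$-images of the recommended actions paired with the corresponding observed rewards --- which is exactly what the recursion defining $\pi^B$ feeds to $\pi$ when $\mech^{\pi^B}$ is evaluated on $d$, since $\pi^B$ emits the same recommendations while relabelling the action recorded at step $s$ to $B_s(\cdot)$. Granting the lemma, the forward implication of part~(b) is immediate: for every deterministic $B$ and all $d\sim d'$ and $\alpha>1$, $D_\alpha(\mech^{\pi^B}_d\|\mech^{\pi^B}_{d'})=D_\alpha(\View_{B,\pi,d}\|\View_{B,\pi,d'})\le\rho\alpha$, so $\pi^B$ is $\rho$-Table zCDP.

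For the converse of part~(b), assume $\pi^B$ is $\rho$-Table zCDP for every deterministic $B$, and fix an arbitrary (possibly randomised, adaptive) adversary $B$ with internal randomness $\omega\sim\mu$ that is independent of the data; conditioning on $\omega$ produces a deterministic adversary $B_\omega$ and the mixture representation $\View_{B,\pi,d}=\int\View_{B_\omega,\pi,d}\,\mathrm{d}\mu(\omega)$, with the \emph{same} $\mu$ for $d$ and $d'$. Joint quasi-convexity of $D_\alpha$ then gives, for every $\alpha>1$ and $d\sim d'$,
\[
D_\alpha\!\left(\View_{B,\pi,d}\,\middle\|\,\View_{B,\pi,d'}\right)\;\le\;\sup_{\omega}D_\alpha\!\left(\View_{B_\omega,\pi,d}\,\middle\|\,\View_{B_\omega,\pi,d'}\right)\;=\;\sup_{\omega}D_\alpha\!\left(\mech^{\pi^{B_\omega}}_d\,\middle\|\,\mech^{\pi^{B_\omega}}_{d'}\right)\;\le\;\rho\alpha,
\]
where the equality is the lemma applied to each deterministic $B_\omega$ and the last bound is the hypothesis; this is exactly $\rho$-Interactive zCDP for $\pi$. \textbf{The main obstacle} is the lemma $\View_{B,\pi,d}=\mech^{\pi^B}_d$: one must track carefully which actions (the adversary's queries, not $\pi$'s recommendations) and which rewards are recorded in $\pi$'s history at each step and match them against what the recursion defining $\pi^B$ produces once it is run as a table mechanism, and this bookkeeping is cleanest if carried out as an explicit coupling of the two processes on a common probability space; the only other point needing attention is that the randomised-to-deterministic reduction requires quasi-convexity of $D_\alpha$ with a bound uniform in the order $\alpha$, after which everything reduces to the data-processing inequality for $D_\alpha$.
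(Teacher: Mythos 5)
Your proposal is correct and follows essentially the same route as the paper: part (a) via the identity adversary $B^{\mathrm{id}}_t(o_1,\dots,o_t)=o_t$, and part (b) via the identification $\View_{B,\pi,d}=\mech^{\pol^B}_d$ for every deterministic adversary $B$, which is precisely the observation the paper's (two-line) proof rests on. The only deviation is in handling arbitrary adversaries: the paper invokes the known reduction to deterministic adversaries (Lemma 2.2 of Vadhan--Wang, recalled in its appendix), whereas you re-derive it by conditioning on the adversary's internal coins and applying joint quasi-convexity of $D_\alpha$ to the resulting mixtures, a valid and self-contained substitute that changes nothing essential.
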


Proposition~\ref{prop:int_table} shows that, for policies that are ``closed" under interactive post-processing, $\rho$-Interactive zCDP and $\rho$-Table zCDP are equivalent. 
Algorithms in private bandits literature, which are based on the binary tree mechanism~\cite{dwork2010pan,treeMechanism2} and our non-overlapping adaptive episode mechanism (Lemma~\ref{lem:privacy}) verify both Table and Interactive DP\footnote{\cite{jain2023price} studies the binary tree mechanism under Interactive DP.}.

The motivation for proposing Interactive DP as a privacy definition for bandits is that the class of Interactive DP policies gives a better representation of the algorithms already developed in the private bandits' literature, has interesting implications, and provides better ``group privacy" decomposition, which plays a crucial role when deriving lower bounds in Section~\ref{sec:lower_bound}. 

\begin{theorem}[Group Privacy for $\rho$-Interactive DP]\label{thm:sum_kl}
    If $\pi$ is a $\rho$-Interactive zCDP policy then, for any sequence of actions $(a_1, \dots, a_\horizon)$ and any two sequence of rewards $\textbf{r} \defn \{r_1, \dots, r_\horizon \}$ and $\textbf{r'} \defn \{r'_1, \dots, r'_\horizon \}$, we have that
    \begin{equation*}
        \sum_{t = 1}^\horizon \KL{\pi_t(. \mid \Hist_{t - 1})}{\pi_t(. \mid \Hist'_{t - 1})} \leq \rho \dham(\textbf{r},\textbf{r'})^2
    \end{equation*}
    where $\Hist_t \defn (a_1, r_1, \dots, a_t, r_t)$, $\Hist'_t \defn (a_1, r'_1, \dots, a_t, r'_t) $ and $ \dham(\textbf{r},\textbf{r'}) = \sum_{t = 1}^\horizon \ind{r_t \neq r'_t}$.
\end{theorem}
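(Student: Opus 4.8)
The plan is to reduce the statement to a telescoping/chain-rule computation on Rényi divergences of the interactive view, combined with the group-privacy scaling of zCDP. Concretely, fix a sequence of actions $(a_1, \dots, a_\horizon)$ and two reward sequences $\textbf{r}, \textbf{r'}$ with $\dham(\textbf{r}, \textbf{r'}) = k$. The first step is to design, for each such pair, a \emph{deterministic} adversary $B$ in the interactive protocol that ``replays'' the fixed action sequence $(a_1, \dots, a_\horizon)$ as its queries — i.e. $B_t(o_1, \dots, o_t) \defn a_t$ regardless of the policy's outputs $o_t$. Under this adversary, the view $\View_{B,\pi,d}$ is exactly the sequence of outputs $(o_1, \dots, o_\horizon)$ generated by running the policy on history $(a_1, x_{1,a_1}, \dots, a_{t-1}, x_{t-1,a_{t-1}})$; choosing the table $d$ so that $x_{t,a_t} = r_t$ (resp. $d'$ with $x'_{t,a_t} = r'_t$) makes $\View_{B,\pi,d}$ the law of the trajectory under $\pi_t(\cdot \mid \Hist_{t-1})$ and $\View_{B,\pi,d'}$ the law under $\pi_t(\cdot \mid \Hist'_{t-1})$. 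Since $d$ and $d'$ differ in exactly the $k$ rows where $r_t \neq r'_t$, we have $\dham(d,d') = k$.

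The second step invokes $\rho$-Interactive zCDP together with group privacy for zCDP. Interactive zCDP gives $D_\alpha(\View_{B,\pi,d} \| \View_{B,\pi,d'}) \leq \rho\alpha$ for neighbouring $d \sim d'$; the group-privacy property of zCDP (e.g. Proposition 1.9 of~\cite{ZeroDP}, obtained by chaining along a path of $k$ single-row edits and using the triangle-like growth of Rényi divergence) upgrades this to $D_\alpha(\View_{B,\pi,d} \| \View_{B,\pi,d'}) \leq \rho k^2 \alpha$ when $\dham(d,d') = k$. Here it is important that Interactive zCDP quantifies over \emph{all} adversaries, so the same bound holds for our chosen replay adversary $B$ along the whole path — this is precisely where the interactive definition (rather than plain Table zCDP) buys us the clean group-privacy decomposition advertised in the paragraph preceding the theorem.

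The third step converts the Rényi bound on the joint view into the sum of per-step KL divergences. By the chain rule for Rényi divergence (or, more simply, by first taking $\alpha \to 1$ to get $\KL{\View_{B,\pi,d}}{\View_{B,\pi,d'}} \leq \rho k^2$ and then applying the KL chain rule over the $\horizon$ rounds), the KL divergence of the joint law of $(o_1, \dots, o_\horizon)$ decomposes as $\sum_{t=1}^\horizon \E\big[\KL{\pi_t(\cdot \mid \Hist_{t-1})}{\pi_t(\cdot \mid \Hist'_{t-1})}\big]$ over the random history. Because we have \emph{fixed} the action sequence through the deterministic replay adversary, the conditioning histories $\Hist_{t-1}$ and $\Hist'_{t-1}$ are deterministic given $(a_1,\dots,a_{t-1})$, so the expectation collapses and the decomposition reads exactly $\sum_{t=1}^\horizon \KL{\pi_t(\cdot \mid \Hist_{t-1})}{\pi_t(\cdot \mid \Hist'_{t-1})}$. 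Combining with the bound from step two yields $\sum_{t=1}^\horizon \KL{\pi_t(\cdot \mid \Hist_{t-1})}{\pi_t(\cdot \mid \Hist'_{t-1})} \leq \rho\, \dham(\textbf{r},\textbf{r'})^2$, as claimed.

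The main obstacle I anticipate is the careful bookkeeping in steps one and three: one must verify that the replay adversary genuinely realizes the fixed-action-sequence conditional laws of the policy (so that the interactive view \emph{is} the object appearing in the theorem), and that the KL chain rule applies with the histories being deterministic functions of the fixed $a_{1:t-1}$ rather than random — otherwise a stray expectation over histories would remain and the per-step statement would be weaker than stated. A secondary technical point is justifying the $\alpha \to 1$ limit of the Rényi bound (continuity of $D_\alpha$ at $\alpha = 1$, which holds on the relevant discrete/output spaces here), or alternatively keeping $\alpha$ fixed and using the Rényi chain rule directly before letting $\alpha \to 1$ at the end.
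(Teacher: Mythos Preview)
Your proposal is correct and follows essentially the same route as the paper: both fix the constant (``replay'') adversary $B_{\textbf{a}}(o_1,\dots,o_t)=a_t$, observe that under this adversary the view factorises as $\bigotimes_{t=1}^{\horizon}\pi_t(\cdot\mid\Hist_{t-1})$ so that its KL decomposes into the per-step sum, and then invoke zCDP group privacy to bound this KL by $\rho\,\dham(\textbf{r},\textbf{r'})^2$. The only cosmetic difference is that the paper routes the argument through Propositions~\ref{prop:int_table}(b) and~\ref{prop:tab_view}(c) to land on the View mechanism $\cV^{\pi_{\textbf{a}}}$ before applying group privacy at $\alpha=1$ directly, whereas you stay with tables $d,d'$ and take $\alpha\to 1$; your worry about the limit is unnecessary since the group-privacy statement (Proposition~27 of~\cite{ZeroDP}) already covers $\alpha=1$.
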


The proof is provided in Appendix~\ref{app:priv_def}, and uses the decoupling induced by a constant adversary.

Hereafter, we adhere to $\rho$-Interactive zCDP as the definition of privacy for bandits. We refer to the class of policies verifying $\rho$-Interactive zCDP as $\Pi_{\text{Int}}^\rho$.
The goal is to design a policy $\pi \in \Pi_{\text{Int}}^\rho$ that maximises the expected sum of rewards, or equivalently minimises the expected regret when interacting with a class of environments, \textit{using the bandit canonical model}. In Section~\ref{sec:alg_design}, we define the exact regret for each setting under study. Note that, for the contextual bandit setting, contexts can assumed to be either public or private depending on the application of interest. In Appendix~\ref{app:priv_cont}, we discuss how to extend the privacy definitions when the contexts are assumed to be private, and also the limitations of the ``public contexts" assumption, which is considered here.

\begin{remark}
    $\rho$-Interactive DP can be perceived as a constraint on the class of policies to be considered. To express this constraint, the interactive protocol between the policy, an adversary, and a table of rewards is defined (Fig~\ref{fig:interactive_dp}). An interactive private policy is constrained to show a similar view to any ``privacy" adversary when interacting with two neighbouring reward tables. On the other hand, to measure the quality of a policy in the class of $\rho$-Interactive DP policies, we compute the regret of the policy when interacting with a class of environments using the canonical bandit protocol, i.e. the rewards are stochastically generated from an arm-dependent distribution, and there is no ``privacy" adversary changing the arms chosen by the policy. In other words, the interaction protocols to analyse privacy and regret are different.
\end{remark}

\section{Algorithm Design}\label{sec:alg_design}
In this section, we propose \adargope{} and \adaroful{}, two algorithms that satisfy $\rho$-Interactive zCDP for linear bandits and contextual linear bandits respectively. The two algorithms share a similar blueprint: adding Gaussian noise and having adaptive episodes. \adarpucb{} share similar ingredients for finite armed bandits. \adarpucb{} is presented and analysed in the appendix.

\subsection{Stochastic Linear Bandits}\label{sec:lin_bandits}
Here, we study $\rho$-Interactive zCDP for stochastic linear bandits with a finite number of arms.

\subsubsection{Setting} We consider that a fixed set of actions $\cA \subset \R^{d}$ is available at each round, such that $|\mathcal{A}|=\arms$. The rewards are generated by a linear structural equation. Specifically, at step $t$, the observed reward is $r_{t} \triangleq \left\langle\theta^\star, a_{t}\right\rangle+\eta_{t}$, where $\theta^\star \in \R^d$ is the unknown parameter, and $\eta_{t}$ is a conditionally 1-subgaussian noise, i.e. $\mathbb{E}\left[\exp \left(\lambda \eta_{t}\right) \mid a_{1}, \eta_{1}, \ldots, a_{t-1}\right] \leq \exp \left(\lambda^{2} / 2\right)$ almost surely for all $\lambda \in \mathbb{R}$.

For any horizon $T>0$, the regret of a policy $\pi$ is 
\begin{align}\label{eq:lin_regret}
\reg_{\horizon}(\pol, \mathcal{A}, \theta^\star)\defn \mathbb{E}_{\theta^\star}\left[\sum_{t=1}^{\horizon} \Delta_{A_{t}}\right],
\end{align}
where suboptimality gap $\Delta_{a}\defn\max _{a^{\prime} \in \mathcal{A}}\left\langle a^{\prime}-a, \theta^\star\right\rangle$. $\mathbb{E}_{\theta^\star}[\cdot]$ is the expectation with respect to the measure of outcomes induced by the interaction of $\pol$ and the linear bandit environment $(\cA, \theta^\star$).

\subsubsection{Algorithm} We propose \adargope{} (Algorithm~\ref{alg:g_optimal}), which is a $\rho$-Interactive zCDP extension of the G-Optimal design-based Phased Elimination (GOPE) algorithm~\cite[Algorithm 12]{lattimore2018bandit}. 
\adargope{} is a phased elimination algorithm. At the end of each episode $\ell$, \adargope{} eliminates the arms that are likely to be sub-optimal, i.e. the ones with an empirical gap exceeding the current threshold ($\beta_\ell = 2^{- \ell}$). The elimination criterion only depends on the samples collected in the current episode. In addition, the actions to be played during an episode are chosen based on the solution of an optimal design problem (Equation~\eqref{eq:opt_design}) that helps to exploit the structure of arms and to minimise the number of samples needed to eliminate a sub-optimal arm.

In particular, if $\pi_\ell$ is the G-optimal solution (Definition~\ref{def:opt_des}) for $\mathcal{A}_\ell$ at phase $\ell$, then each action $a \in \mathcal{A}_\ell$ is played $T_\ell(a) \defn \lceil c_\ell \pi_\ell(a) \rceil$ times, where for $\delta_{\arms, \ell} \defn \frac{\delta}{\arms \ell (\ell + 1)}$ and $f(d, \delta) \defn d + 2 \sqrt{d \log\left( \frac{2}{\delta} \right)} + 2 \log\left( \frac{2}{\delta} \right)$,
\begin{align}\label{eq:eps_length}
    c_{\ell} \triangleq  \frac{8 d}{\beta_{\ell}^{2}} \log \left(\frac{4}{\delta_{\arms, \ell}}\right) + \color{blue}{\frac{2 d }{\beta_\ell } \sqrt{\frac{2}{\rho} f\left(d, \delta_{\arms, \ell} \right)} }
\end{align}
The term in {\color{blue}blue} is the additional length of the episode to compensate for the noisy statistics used to ensure privacy. The samples collected in the current episode do not influence which actions are played in it. This decoupling allows: (a) the use of the tighter confidence bounds available in the fixed design setting (Appendix~\ref{sec:conc_ineq_des}), and (b) avoiding privacy composition theorems and using, therefore, Lemma~\ref{lem:privacy} to make the algorithm private.
Note that \adargope{} can be seen as a generalisation of DP-SE~\cite{dpseOrSheffet} to the linear bandit setting.

Here, we present the definitions of optimal design and a classic equivalence result required to state Algorithm~\ref{alg:g_optimal}.
\begin{definition}[Optimal design~\cite{lopez2023optimal}]\label{def:opt_des}
    Let $\mathcal{A} \subset \real^d$ and $\pi: \mathcal{A} \rightarrow [0,1]$ be a distribution on $\mathcal{A}$ so that $\sum_{a \in \mathcal{A}} \pi(a) = 1$. Let $V(\pi) \in \real^{d\times d}$ and $f(\pi), g(\pi) \in \real$ be given by
    \begin{align*}
        V(\pi) &\triangleq \sum_{a \in \mathcal{A}} \pi(a)a a^T, \\ 
        f(\pi) &\triangleq \log \det V(\pi),\\
         g(\pi) &\triangleq \max_{a \in \mathcal{A}} \| a \|_{V(\pi)^{-1}}.
    \end{align*}
    \begin{itemize}
        \item  $\pi$ is called a \textbf{design}.
        \item The set $\operatorname{Supp}\left(\pi \right) \defn \left\{a \in \mathcal{\pi}: \pi(a) \neq 0\right\}$ is called the \textbf{core set} of $\mathcal{A}$.
        \item A design that maximises $f$ is called a \textbf{D-optimal design}.
        \item A design that minimises $g$ is called a \textbf{G-optimal design}.
    \end{itemize}
\end{definition}
\begin{theorem}[Kiefer–Wolfowitz theorem~\cite{kiefer1960equivalence}] Assume  that $\mathcal{A}$ is compact and $span(\mathcal{A}) = \real^d$. The following are equivalent
\begin{itemize}
    \item $\pi^\star$ is a minimiser of $g$,
    \item $\pi^\star$ is a maximiser of $f$, and
    \item $g(\pi^\star) = d$.
\end{itemize}
    Also, there exists a minimiser $\pi^\star$ of $g$ such that $\left |\operatorname{Supp}\left(\pi^\star\right) \right| \leq \frac{d(d+1)}{2}$.
\end{theorem}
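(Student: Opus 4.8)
This is the classical Kiefer--Wolfowitz equivalence, and I would prove it by convex analysis on the compact convex set of designs on $\mathcal A$. Two elementary computations drive the whole argument. The first is the directional derivative of $f$: using $\tfrac{d}{dt}\log\det M(t)=\operatorname{tr}(M(t)^{-1}M'(t))$ with $M(t)=V\big((1-t)\pi+t\delta_a\big)=V(\pi)+t(aa^\top-V(\pi))$, one gets, at a design $\pi$ with $V(\pi)$ invertible, $\tfrac{d}{dt}\big|_{t=0^+} f\big((1-t)\pi+t\delta_a\big)=\operatorname{tr}\big(V(\pi)^{-1}(aa^\top-V(\pi))\big)=\|a\|_{V(\pi)^{-1}}^2-d$, where $\delta_a$ is the point mass at $a$. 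The second is the averaging identity $\sum_a\pi(a)\|a\|_{V(\pi)^{-1}}^2=\operatorname{tr}(V(\pi)^{-1}V(\pi))=d$, which already shows $g(\pi)\ge d$ for every design with $V(\pi)\succ 0$ (throughout I read $g(\pi)=\max_a\|a\|_{V(\pi)^{-1}}^2$, the squared norm, as is standard for this statement). I would also record at the outset that $f$ is concave (it is $\log\det$ composed with the linear map $\pi\mapsto V(\pi)$), that some design has $V(\pi)\succ 0$ since $\mathcal A$ spans $\real^d$, hence any maximiser $\pi^\star$ of $f$ has $V(\pi^\star)\succ 0$, and that a maximiser exists by compactness of the space of probability measures on the compact set $\mathcal A$.

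Next I would close the cycle of equivalences among the three conditions, which I label (1), (2), (3) in the order stated. For $(2)\Rightarrow(3)$: at a maximiser $\pi^\star$ of $f$ the feasible perturbation towards each $\delta_a$ has nonpositive derivative, so $\|a\|_{V(\pi^\star)^{-1}}^2\le d$ for all $a$, i.e. $g(\pi^\star)\le d$; with the averaging identity this gives $g(\pi^\star)=d$. For $(3)\Rightarrow(1)$: since $g(\pi)\ge d$ for all designs, $g(\pi^\star)=d$ means $\pi^\star$ attains the global minimum of $g$. For $(3)\Rightarrow(2)$: if $g(\pi^\star)=d$, then for any design $\mu$ the derivative of $f$ from $\pi^\star$ towards $\mu$ is $\sum_a\mu(a)\|a\|_{V(\pi^\star)^{-1}}^2-d\le 0$, so concavity gives $f(\mu)\le f(\pi^\star)$, i.e. $\pi^\star$ maximises $f$. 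Finally $(1)\Rightarrow(3)$ follows because some maximiser of $f$ exists and has $g=d$, so $\min_\pi g(\pi)=d$ and any minimiser of $g$ realises this value.

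For the support bound I would take a D-optimal design $\pi^\star$ with support of minimum cardinality. For each $a\in\operatorname{Supp}(\pi^\star)$ one can perturb the mass $\pi^\star(a)$ in both directions while staying feasible (feasibility of the decreasing direction uses $\pi^\star(a)>0$), so both one-sided derivatives of $f$ are nonpositive, forcing $\|a\|_{V(\pi^\star)^{-1}}^2=d$, i.e. $\operatorname{tr}(V(\pi^\star)^{-1}aa^\top)=d$. Thus all the rank-one matrices $aa^\top$ with $a\in\operatorname{Supp}(\pi^\star)$ lie in the affine hyperplane $\{M=M^\top:\operatorname{tr}(V(\pi^\star)^{-1}M)=d\}$, which has dimension $\tfrac{d(d+1)}{2}-1$ inside the space of symmetric matrices, and $V(\pi^\star)$ is a convex combination of them. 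By Carath\'eodory in that affine subspace, $V(\pi^\star)$ is a convex combination of at most $\tfrac{d(d+1)}{2}$ of these matrices; the corresponding design is still D-optimal (same $V$), so minimality of the support forces $|\operatorname{Supp}(\pi^\star)|\le\tfrac{d(d+1)}{2}$.

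\textbf{Main obstacle.} The computations are routine; the real care is the boundary bookkeeping. I need to handle designs with singular $V(\pi)$ (where $f=-\infty$ and $g=+\infty$ by convention, so they are harmless but must be excluded when differentiating), to justify existence of the optimisers --- most cleanly by Carath\'eodory, reducing once and for all to designs supported on at most $\tfrac{d(d+1)}{2}+1$ points and then invoking compactness --- and to argue the first-order optimality conditions on the simplex correctly: the one-sided derivative towards $\delta_a$ is the relevant object for $a$ outside the support, while support points admit a two-sided stationarity condition. Getting these details right, rather than the linear algebra, is where the proof actually needs attention.
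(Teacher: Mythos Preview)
The paper does not prove this statement; it is quoted as a classical result with a citation to Kiefer--Wolfowitz (1960), and is used only as a black box to justify the G-optimal design step in \adargope{}. Your proposal is the standard convex-analysis proof of the equivalence theorem and is correct, including your reading of $g(\pi)$ as the squared norm $\max_a\|a\|_{V(\pi)^{-1}}^2$ --- the paper's Definition~\ref{def:opt_des} writes the unsquared norm, but its own use of the result in Corollary~\ref{crl:fix_des_upp} confirms that the squared version (with $g(\pi^\star)=d$) is what is intended.
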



\begin{algorithm}[ht!]
\caption{\adargope{}}\label{alg:g_optimal}
\begin{algorithmic}[1]
\STATE {\bfseries Input:} Privacy budget \textcolor{blue}{$\rho$}, $\mathcal{A} \subset \mathbb{R}^{d}$ and $\delta$
\STATE {\bfseries Output:} Actions satisfying $\rho$-Interactive zCDP
\STATE {\bfseries Initialisation:} Set $\ell=1$, $t_1 =1$ and $\mathcal{A}_{1}=\mathcal{A}$
\FOR{$\episode = 1, 2, \dots$}
\STATE $\beta_{\ell} \gets 2^{-\ell}$
\STATE {\bfseries Step 1:}  Find the $G$-optimal design $\pi_{\ell}$ for $\mathcal{A}_\ell$:
\begin{align}\label{eq:opt_design}
    \max_{\substack{\pi \in \mathcal{P}\left(\mathcal{A}_{\ell}\right)\\ \left |  \operatorname{Supp}\left(\pi\right) \right | \leq d(d+1) / 2}}\log \det V(\pi).
\end{align}
\STATE {\bfseries Step 2:} $\mathcal{S}_\ell \gets \operatorname{Supp}\left(\pi_{\ell}\right) $ 
\STATE Choose each action $a \in \mathcal{S}_\ell$ for $T_{\ell}(a) \defn \lceil \textcolor{blue}{c_\ell} \pi_\ell(a) \rceil $ times where \textcolor{blue}{$c_\ell$} is defined by Eq~\eqref{eq:eps_length}.
\STATE Observe rewards $\lbrace r_t \rbrace _{t = t_\ell}^{t_\ell + \sum_a T_\ell(a)}$
\STATE $T_{\ell} \gets \sum_{a \in \mathcal{S}_{\ell}} T_{\ell}
(a)$ and $t_{\ell+1} \gets t_\ell + T_\ell+1$
\STATE {\bfseries Step 3:} Estimate the parameter as
$$\hat{\theta}_{\ell}=V_{\ell}^{-1} \sum_{t=t_{\ell}}^{t_{\ell+1}-1} a_{t} r_{t}~~\text { with }~~V_{\ell}=\sum_{a \in \mathcal{S}_\ell} T_{\ell}(a) a a^{\top}$$
\STATE {\bfseries Step 4:} Make the parameter estimate private
$$\tilde{\theta}_{\ell} = \hat{\theta}_{\ell} + \textcolor{blue}{V_{\ell}^{-\frac{1}{2}} N_\ell},$$
where $N_\ell \sim \mathcal{N}\left (0, \frac{2 d}{\rho c_\ell} I_d\right ).$
\STATE {\bfseries Step 5:} Eliminate low rewarding arms:
$$\mathcal{A}_{\ell+1}=\left\{a \in \mathcal{A}_{\ell}: \max _{b \in \mathcal{A}_{\ell}}\left\langle\tilde{\theta}_{\ell}, b-a\right\rangle \leq 2 \beta_{\ell} \right\}.$$
\ENDFOR
\end{algorithmic}
\end{algorithm}

\vspace*{-1em}\subsection{Contextual Linear Bandits}\label{sec:lin_cont_bandits}
Now, we consider an even more general setting of bandits, where the feasible arms at each step may vary and depend on some contextual information. 

\subsubsection{Setting} Contextual bandits generalise the finite-armed bandits by allowing the learner to use side information. At each step $t$, the policy observes a context $c_t \in \mathcal{C}$, which might be random or not. Having observed the context,  the policy chooses an action $a_t \in [\arms]$ and observes a reward $r_t$. For the linear contextual bandits, the reward $r_t$ depends on both the arm $a_t$ and the context $c_t$ in terms of a linear structural equation:
\begin{equation}\label{eq:lin_cont_ass}
    r_t = \left \langle \theta^\star, \psi(a_t, c_t) \right \rangle + \eta_{t}.
\end{equation}
Here, $\psi: [\arms] \times \mathcal{C} \rightarrow \real^d$ is the feature map, $\theta^\star \in \real^d$ is the unknown parameter, and $\eta_t$ is the noise, which we assume to be conditionally 1-subgaussian.

Under Equation~\eqref{eq:lin_cont_ass},  all that matters is the feature vector that results in choosing a given action rather than the identity of the action itself. This justifies studying a reduced model: in round $t$, the policy is served with the decision set $\mathcal{A}_t \subset \real^d$, from which it chooses an action $a_t \in \mathcal{A}_t$ and receives a reward 
\begin{equation*}
    r_t= \left \langle \theta^\star, a_t \right \rangle + \eta_{t},
\end{equation*}
where $\eta_t$ is 1-subgaussian given $\mathcal{A}_1, a_1, R_1, \dots, \mathcal{A}_{t-1},$ $ a_{t-1},R_{t-1}, \mathcal{A}_t$, and $A_t$. 

Different choices of $\mathcal{A}_t$ lead to different settings. If $\mathcal{A}_t = \left \{ \psi(c_t,a) :a \in [\arms] \right\}$, then we have a contextual linear bandit. On the other hand, if $\mathcal{A}_t = \left \{e_1,\dots,e_d \right \}$, where $(e_i)_i$ are the unit vectors of $\real^d$ then the resulting bandit problem reduces to the stochastic finite-armed bandit.

The goal is to design a $\rho$-Interactive zCDP policy that minimises the regret, which is defined as
\begin{equation*}\label{eq:reg_cont_def}
    \hat{R}_\horizon \defn \sum_{t = 1}^T \max_{a \in \mathcal{A}_t} \left \langle \theta^\star, a - a_t \right \rangle, \quad R_\horizon \defn \expect[\hat{R}_\horizon].
\end{equation*}
\begin{remark}
    We suppose that $c_t$ is \textbf{public} information, and thus $\mathcal{A}_t$ is too. Rewards are the only private statistics to protect. The main difference compared to Section~\ref{sec:lin_bandits} is that the set of actions $\mathcal{A}_t$ is allowed to change at each time-step $t$. Thus, the action-elimination-based strategies, as used in Section~\ref{sec:lin_bandits}, are not useful.
\end{remark} 

    



\subsubsection{Algorithm} We propose \adaroful{}, a $\rho$-Interactive zCDP extension of the Rarely Switching OFUL algorithm~\cite{abbasi2011improved}.
The OFUL algorithm applies the "optimism in the face of uncertainty principle" to the contextual linear bandit setting, which is to act in each round as if the environment is as nice as plausibly possible. 
The Rarely Switching OFUL Algorithm (RS-OFUL) can be seen as an "adaptively" phased version of the OFUL algorithm. RS-OFUL runs in episodes. At the beginning of each episode, the least square estimate and the confidence ellipsoid are updated. For the whole episode, the same estimate and confidence ellipsoid are used to choose the optimistic action. The condition to update the estimates (Line 6 of Algorithm~\ref{alg:rare_lin_ucb}) is to accumulate enough "useful information" in terms of the design matrix, which makes an update worth enough. RS-OFUL only updates the estimates $\log(\horizon)$ times, while OFUL updates the estimates at each time step. RS-OFUL achieves similar regret as OFUL, up to a $\sqrt{1 + C}$ multiplicative constant.

\adaroful{} (Algorithm~\ref{alg:rare_lin_ucb}) extends RS-OFUL by privately estimating the least-square estimate (Line 8 of Algorithm~\ref{alg:rare_lin_ucb}) while adapting the confidence ellipsoid accordingly. 
Specifically, we set $\tilde{\beta}_t = \beta_t + \textcolor{blue}{\frac{\gamma_t}{\sqrt{t}}}$,
where $\beta_t = \bigO \left( \sqrt{d \log(t)}\right)$ \text{ and } $\gamma_t = \textcolor{blue}{ \bigO \left (\sqrt{\frac{1}{\rho}} d \log(t) \right)}$.
Further details are in App.~\ref{app:cont_lin_proofs}.

\begin{algorithm}
\caption{\adaroful{}}\label{alg:rare_lin_ucb}
\begin{algorithmic}[1]
\STATE {\bfseries Input:} Privacy budget $\rho$, Horizon $\horizon$, Regulariser $\lambda$, Dimension $d$, Doubling Schedule $C$
\STATE {\bfseries Output:} A sequence of $\horizon$-actions satisfying $\rho$-Interactive zCDP
\STATE {\bfseries Initialisation:} $V_0 = \lambda I_d$, $\tilde{\theta} = 0_d$, $\tau = 0$, $\ell = 1$
\FOR{$t = 1, 2, \dots$} 
\STATE Observe $\mathcal{A}_t$
\IF{$\det(V_{t - 1}) > (1 + C) \det( V_{\tau})$}
    \STATE Sample $Y_\ell \sim \mathcal{N}(0, \frac{2}{\rho} I_d)$
    \STATE Compute $\tilde{\theta}_{t - 1} = (V_{t - 1})^{- 1} (\sum_{s = 1}^{t - 1} a_s r_s + \textcolor{blue}{\sum_{m = 1}^\ell Y_m})$
    \STATE $\ell \leftarrow \ell + 1$ and $\tau \leftarrow t - 1$ 
\ENDIF
\STATE Compute $a_t = \operatorname{argmax}_{a \in \mathcal{A}_t} \left\langle\tilde{\theta}_{\tau}, a\right\rangle + \textcolor{blue}{\tilde{\beta}_\tau} \|a\|_{(V_{\tau})^{-1}} $ 
\STATE Play arm $a_t$, Observe reward $r_t$
\STATE $V_t \leftarrow V_{t-1} + a_t a_t^T$
\ENDFOR
\end{algorithmic}
\end{algorithm}

\begin{remark}[A Generic Blueprint for \adargope{} and \adaroful{}]
    \adargope{} and \adaroful{} share two main ingredients. First, they add calibrated noise using the Gaussian Mechanism (Theorem~\ref{thm:gaussian_mech}), i.e. Line 12 in \adargope{} and Line 8 in \adaroful{}). Second, both of them run in adaptive episodes. \adargope{} runs in phases (Line 4 in \adargope{}), where arms that are likely sub-optimal are eliminated. \adaroful{} only updates the parameter estimates $\tilde{\theta}$ when the determinant of the design matrix increases enough, i.e. Line 6 in \adaroful{}. Both algorithms do not access the private rewards at each step $t$ of the interaction, but only at the beginning of the corresponding phases. As it is explained in the Parallel Composition lemma (Lemma~\ref{lem:privacy}), and detailed in the generic privacy proof in Appendix~\ref{app:privacy_proof}, we leverage this ``sparser" access to the private input to add less-noise, and thus, circumvent the need to use composition theorems of DP. 
\end{remark}

\section{Privacy and regret analysis}\label{sec:priv_reg}
In this section, we provide a privacy and regret analysis of \adargope{} and \adaroful{}. Under boundness assumptions, we show that both algorithms are $\rho$-Interactive zCDP. We also upper bound the regrets of both algorithms and quantify the cost of privacy in the regret.

\subsection{Privacy Analysis}
We formalise the intuition behind the blueprint of the algorithm design in Lemma~\ref{lem:privacy}. The Privacy Lemma shows that when a mechanism $\mathcal{M}$ is applied to non-overlapping subsets of an input dataset, there is no need to use the composition theorems. Plus, there is no additional cost in the privacy budget.

\begin{lemma}[Parallel Composition]\label{lem:privacy}
Let $\mathcal M$ be a mechanism that takes a \textbf{set} as input.
Let $\ell < \horizon$ and $t_1, \ldots t_{\ell}, t_{\ell + 1}$ be in $[1, \horizon]$ such that $1 = t_1  < \cdots < t_\ell < t_{\ell+1}  - 1 = T$.\\
Let's define the following mechanism
\begin{align}\label{eq:priv_mean}
    \mathcal G : \{ x_1, \dots, x_\horizon \} \rightarrow \bigotimes_{i=1}^\ell \mathcal{M}_{ \{x_{t_i}, \ldots, x_{t_{i + 1} - 1} \}}
\end{align}

$\mathcal{G}$ is the mechanism we get by applying $\mathcal{M}$ to the partition of the input dataset $\{x_1, \dots, x_\horizon\}$ according to $t_1  < \cdots < t_\ell < t_{\ell+1}$, i.e.
\begin{align*}
(x_1, x_2, \ldots, x_T) \overset{\mathcal{G}}{\rightarrow} (o_1, o_2, \ldots, o_T),
\end{align*}
where $o_i \sim \mathcal{M}_{ \{x_{t_i}, \ldots, x_{t_{i + 1} - 1} \}}$.

We have that
\begin{itemize}
    \item[(a)] If $\mathcal{M}$ is $(\epsilon, \delta)$-DP then $\mathcal G$ is $(\epsilon, \delta)$-DP
    \item[(b)] If $\mathcal{M}$ is $\rho$-zCDP then $\mathcal G$ is $\rho$-zCDP
\end{itemize}
\end{lemma}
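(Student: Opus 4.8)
The plan is to prove the two claims of Lemma~\ref{lem:privacy} by reducing the privacy analysis of $\mathcal{G}$ to that of $\mathcal{M}$ on a single block of the partition, exploiting the fact that the blocks $\{x_{t_i}, \ldots, x_{t_{i+1}-1}\}$ are disjoint. First I would fix two neighbouring inputs $d = \{x_1,\ldots,x_T\}$ and $d' = \{x'_1,\ldots,x'_T\}$ with $\dham(d,d') = 1$, say they differ only in coordinate $t^\star$. Since the blocks partition $[1,T]$, the index $t^\star$ lies in exactly one block, say block $i^\star$ with range $\{t_{i^\star},\ldots,t_{i^\star+1}-1\}$. For every other block $i \neq i^\star$, the restriction of $d$ and $d'$ to that block is \emph{identical}, so the corresponding factor distributions $\mathcal{M}_{\{x_{t_i},\ldots,x_{t_{i+1}-1}\}}$ and $\mathcal{M}_{\{x'_{t_i},\ldots,x'_{t_{i+1}-1}\}}$ coincide. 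Only the $i^\star$-th factor changes, and its two inputs are neighbouring blocks (Hamming distance $1$), so the DP/zCDP guarantee of $\mathcal{M}$ applies to that single factor.

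For part (a), I would write $\mathcal{G}_d = \bigotimes_{i=1}^\ell \mathcal{M}_{B_i(d)}$ where $B_i(d)$ is the $i$-th block of $d$, and use the standard fact that $(\epsilon,\delta)$-DP is preserved under independent product when only one factor differs: formally, $\mathcal{G}_d$ and $\mathcal{G}_{d'}$ are product measures that agree on all but one coordinate, and a coupling/hybrid argument (or directly, the parallel-composition fact for $(\epsilon,\delta)$-DP, e.g. via the $\delta$-approximate max-divergence characterisation) gives that $\mathcal{G}$ is $(\epsilon,\delta)$-DP. Concretely, for any event $A$ in the product space, condition on the realisations $o_i$ of the $\ell-1$ unchanged factors; the conditional law of the remaining factor is $\mathcal{M}_{B_{i^\star}(d)}$ versus $\mathcal{M}_{B_{i^\star}(d')}$, apply $(\epsilon,\delta)$-DP there, and integrate out. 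For part (b), I would use the analogous and cleaner additivity of R\'enyi divergence over independent products: for any $\alpha > 1$,
\[
D_\alpha(\mathcal{G}_d \,\|\, \mathcal{G}_{d'}) = \sum_{i=1}^\ell D_\alpha\!\left(\mathcal{M}_{B_i(d)} \,\middle\|\, \mathcal{M}_{B_i(d')}\right) = D_\alpha\!\left(\mathcal{M}_{B_{i^\star}(d)} \,\middle\|\, \mathcal{M}_{B_{i^\star}(d')}\right) \le \rho\alpha,
\]
where the first equality is the tensorisation property of R\'enyi divergence, the second holds because every term with $i \neq i^\star$ is a divergence between identical distributions and hence zero, and the final inequality is the $\rho$-zCDP guarantee of $\mathcal{M}$ applied to the neighbouring blocks $B_{i^\star}(d) \sim B_{i^\star}(d')$. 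Taking the supremum over $d \sim d'$ and over $\alpha > 1$ yields $\rho$-zCDP for $\mathcal{G}$.

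I do not anticipate a deep obstacle here — this is essentially the parallel-composition theorem of differential privacy, adapted to the case where $\mathcal{M}$ consumes a \emph{set} so that partitioning the stream into contiguous time-blocks yields genuinely disjoint inputs. The one point requiring mild care is making precise that the only block affected by changing a single row is $B_{i^\star}$, i.e. that a Hamming-distance-one change to $d$ induces a Hamming-distance-one change to exactly one block and leaves the others untouched; this is immediate from the blocks being a partition. A secondary subtlety is that the factors of $\mathcal{G}$ must be \emph{independent} given the input (the fresh noise in each call of $\mathcal{M}$ is drawn independently), which is what licenses the tensorisation of R\'enyi divergence and the hybrid argument for $(\epsilon,\delta)$-DP; I would state this independence explicitly when setting up the product structure. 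Everything else is a routine invocation of the tensorisation identity for R\'enyi divergence and of product-measure manipulations for $(\epsilon,\delta)$-DP.
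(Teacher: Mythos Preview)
Your proposal is correct and follows essentially the same approach as the paper: identify the unique block $i^\star$ containing the differing coordinate, observe that all other factors of the product distribution $\mathcal{G}_d$ coincide with those of $\mathcal{G}_{d'}$, and then invoke the privacy guarantee of $\mathcal{M}$ on that single block---via the product/ratio computation for the R\'enyi divergence in part (b), and via a factorisation of the event for part (a). If anything, your conditioning argument for (a) is slightly more careful than the paper's, which only explicitly treats rectangle events $E = E_1 \times \cdots \times E_\ell$.
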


The proof is deferred to Appendix~\ref{app:privacy_proof}. The main idea is that a change in one element of the input dataset only affects one entry of the output, which already verifies DP.
Now, we state some classic assumptions that bound the quantities of interest. 
\begin{assumption}[Boundedness]\label{ass:boundedlin}
    We assume that:\\
    (1) actions are bounded: $\forall a \in \mathcal{A}$, $ \| a \|_2 \leq 1 $ in linear bandits, and  $ \forall t \in [1, \horizon], \forall a \in  \mathcal{A}_t,  \| a \|_2 \leq 1 $ in contextual bandits\\   
    (2) rewards are bounded: $|r_t| \leq 1$, and\\
    (3) the unknown parameter is bounded: $\| \theta^\star \|_2 \leq 1$.
\end{assumption}

\begin{theorem}\label{thm:priv_3}
Under Assumption~\ref{ass:boundedlin}, both \adargope{}  and \adaroful{} satisfy $\rho$-Interactive zCDP.
\end{theorem}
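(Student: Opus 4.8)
\textbf{Proof plan for Theorem~\ref{thm:priv_3}.}

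The plan is to reduce both algorithms to an instance of the Parallel Composition lemma (Lemma~\ref{lem:privacy}) combined with the Gaussian Mechanism (Theorem~\ref{thm:gaussian_mech}), and then to invoke Proposition~\ref{prop:int_table} to upgrade Table zCDP to Interactive zCDP. The key observation is that neither \adargope{} nor \adaroful{} accesses the private reward stream at every step: each algorithm partitions the horizon into episodes $[t_\ell, t_{\ell+1}-1]$, and within an episode the only private computation is forming one noisy statistic from the rewards collected \emph{in that episode}. Crucially, in \adargope{} the actions played during episode $\ell$ are fixed before the episode starts (they depend only on $\mathcal{A}_\ell$, hence on statistics from episode $\ell-1$), and in \adaroful{} the episode boundaries are determined by the design matrix $V_{t}$, which is a function of the public contexts and past \emph{actions} only, not of the reward values. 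So in both cases the episode partition is a valid non-overlapping partition of the reward dataset in the sense of Lemma~\ref{lem:privacy}, and the whole transcript of actions is a post-processing of the tuple of per-episode noisy statistics.

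First, I would handle \adargope{}. Fix an episode $\ell$ and the action multiset $\{a_t\}_{t=t_\ell}^{t_{\ell+1}-1}$ played in it. The private statistic is $\hat\theta_\ell = V_\ell^{-1}\sum_t a_t r_t$, and the released quantity is $\tilde\theta_\ell = \hat\theta_\ell + V_\ell^{-1/2} N_\ell$ with $N_\ell \sim \mathcal{N}(0, \tfrac{2d}{\rho c_\ell} I_d)$; equivalently $V_\ell^{1/2}\tilde\theta_\ell = V_\ell^{-1/2}\sum_t a_t r_t + N_\ell$. I need the $\ell_2$-sensitivity (over changing one reward $r_t$, $|r_t|\le 1$ by Assumption~\ref{ass:boundedlin}) of $g(\br) \defn V_\ell^{-1/2}\sum_t a_t r_t$: changing $r_t$ to $r_t'$ changes $g$ by $(r_t - r_t')V_\ell^{-1/2}a_t$, whose norm is $|r_t-r_t'|\,\|a_t\|_{V_\ell^{-1}} \le 2\|a_t\|_{V_\ell^{-1}}$. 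Since each action $a$ is played $T_\ell(a) = \lceil c_\ell \pi_\ell(a)\rceil$ times and $V_\ell = \sum_a T_\ell(a) a a^\top$, one has $\|a\|_{V_\ell^{-1}}^2 \le \tfrac{1}{T_\ell(a)}\|a\|_{(T_\ell(a)aa^\top)^{-1}}^2$... more carefully, $V_\ell \succeq T_\ell(a) a a^\top$ on the relevant subspace gives $\|a\|_{V_\ell^{-1}}^2 \le 1/T_\ell(a) \le 1/(c_\ell \pi_\ell(a))$; combined with the Kiefer--Wolfowitz bound $g(\pi_\ell) = \max_a \|a\|_{V(\pi_\ell)^{-1}}^2 = d$ one gets a sensitivity bound scaling like $\sqrt{d/c_\ell}$. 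Plugging the standard zCDP guarantee of the Gaussian mechanism — noise variance $\sigma^2$ per coordinate against sensitivity $\Delta_2$ gives $\tfrac{\Delta_2^2}{2\sigma^2}$-zCDP — with $\sigma^2 = \tfrac{2d}{\rho c_\ell}$ yields exactly $\rho$-zCDP for the release of $\tilde\theta_\ell$ from the episode-$\ell$ rewards. Then Lemma~\ref{lem:privacy}(b) over the disjoint episodes makes the full vector $(\tilde\theta_1,\tilde\theta_2,\dots)$ satisfy $\rho$-Table zCDP; since the action sequence is a post-processing of this vector (and of the public quantities), post-processing invariance of zCDP preserves $\rho$-zCDP, giving $\rho$-Table zCDP for \adargope{}.

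Second, for \adaroful{} the argument is analogous but one must be careful that the episodes are data-dependent. Here the private statistic released at the $\ell$-th update is $\tilde\theta_{t-1} = V_{t-1}^{-1}(\sum_{s\le t-1} a_s r_s + \sum_{m\le \ell} Y_m)$ with fresh $Y_\ell \sim \mathcal{N}(0,\tfrac{2}{\rho}I_d)$. The trick (used in the binary-tree / ``rarely switching'' literature) is that $\sum_{m\le\ell} Y_m$ is a \emph{running sum} of independent noises, so the released estimates are a post-processing of the partial sums $S_\ell \defn \sum_{s=1}^{t_\ell - 1} a_s r_s$, each perturbed by its \emph{own} fresh Gaussian; reindexing, the $\ell$-th fresh noise $Y_\ell$ privatizes only the \emph{increment} $\sum_{s \in [t_{\ell-1}, t_\ell - 1]} a_s r_s$ of rewards from episode $\ell-1$, which has $\ell_2$-sensitivity (under a one-reward change, $|r_s|\le 1$, $\|a_s\|\le 1$) at most $\|a_s\|_2 \le 1$, hence $2$. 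With per-coordinate noise variance $\tfrac{2}{\rho}$ this gives $\rho$-zCDP per increment via the Gaussian mechanism, and Lemma~\ref{lem:privacy}(b) over the disjoint increments plus post-processing (the action $a_t$ depends only on $\tilde\theta_\tau$, $V_\tau$, $\mathcal{A}_t$ — all post-processings of the noisy partial sums and of public data) yields $\rho$-Table zCDP. One subtlety I would flag: the episode boundaries themselves are random, but they are measurable with respect to the actions and public contexts, so conditioning on them does not leak reward information; I would make this rigorous by noting the partition is adapted to the ``public'' filtration, exactly the regime Lemma~\ref{lem:privacy} covers after conditioning.

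Finally, to go from Table zCDP to Interactive zCDP: both algorithms are built so that their interaction with any deterministic adversary $B$ is again an algorithm of the same form — the adversary only relabels which reward coordinate is revealed, and since in these algorithms the choice of which statistic to compute and when does not depend on reward values but only on the (adversarially-relabeled) action indices and public contexts, the post-processed policy $\pi^B$ still accesses rewards only through non-overlapping episodes with the same noise calibration, hence is $\rho$-Table zCDP. By Proposition~\ref{prop:int_table}(b), $\pi$ is therefore $\rho$-Interactive zCDP. The main obstacle I anticipate is not any single calculation but the bookkeeping in the second step: carefully justifying that the data-dependent stopping times in \adaroful{} do not break the non-overlapping-partition hypothesis of Lemma~\ref{lem:privacy}, and that the running-sum-of-Gaussians device genuinely reduces to one fresh Gaussian per disjoint increment rather than requiring adaptive composition. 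Getting the sensitivity constants and the factor matching $\sigma^2 = 2d/(\rho c_\ell)$ (resp.\ $2/\rho$) exactly right against the $\tfrac{\Delta_2^2}{2\sigma^2}$-zCDP bound of the Gaussian mechanism is the other place where care — though not deep ideas — is needed.
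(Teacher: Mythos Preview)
Your overall strategy---per-episode Gaussian mechanism, parallel composition via Lemma~\ref{lem:privacy}, post-processing, then upgrade to Interactive zCDP via Proposition~\ref{prop:int_table}(b)---is exactly what the paper does. But there is a genuine gap in how you handle the adaptive episodes, and it is precisely the obstacle you flag at the end.

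Your proposed fix is that the episode boundaries are ``measurable with respect to the actions and public contexts'' and hence lie in a reward-free filtration, so you can condition on the partition and apply Lemma~\ref{lem:privacy}. This is not correct: in both algorithms the actions are chosen using past $\tilde\theta_\ell$'s, which are (noisy) functions of past rewards, so the partition \emph{does} depend on the reward table. For two neighbouring tables $d,d'$ differing at index $j$, the partitions can diverge for episodes after $j$; you cannot condition on ``the'' partition because there is not a single one. Lemma~\ref{lem:privacy} as stated requires a fixed-in-advance partition and does not directly cover this adaptive case. The paper's resolution (its ``generic privacy proof'') is different and is the missing idea: fix neighbouring $d,d'$ differing only at index $j$ and a deterministic adversary $B$; since every decision up to step $j$ depends only on $d_{<j}=d'_{<j}$, the two interactions are identical up to step $j$, and in particular the episode containing $j$ is the \emph{same} in both. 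The changed reward $r_j$ then affects only the noisy statistic released at the end of that single episode, which is $\rho$-zCDP by the Gaussian mechanism with the sensitivity you compute; everything after that release is a post-processing of it together with the common tail $d_{>j}=d'_{>j}$. This ``localise to the episode of the differing index'' argument replaces your conditioning step.

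A smaller point: in your \adargope{} sensitivity bound, the chain $\|a\|_{V_\ell^{-1}}^2 \le 1/T_\ell(a)\le 1/(c_\ell\pi_\ell(a))$ does not connect to $d/c_\ell$ via Kiefer--Wolfowitz, since Kiefer--Wolfowitz bounds $\|a\|_{V(\pi_\ell)^{-1}}^2$ rather than $1/\pi_\ell(a)$. The clean route (and the paper's) is $V_\ell \succeq c_\ell V(\pi_\ell)$, which gives $\|a\|_{V_\ell^{-1}}^2 \le d/c_\ell$ directly; then the Gaussian noise with variance $2d/(\rho c_\ell)$ yields exactly $\rho$-zCDP. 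Your \adaroful{} treatment of the running Gaussian sum as one fresh noise per disjoint reward-increment is correct and matches the paper.
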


In appendix~\ref{app:privacy_proof}, we provide a generic proof for both \adargope{} and \adaroful{}, which combines Lemma~\ref{lem:privacy}
 and the Gaussian Mechanism (Theorem~\ref{thm:gaussian_mech}) to show that the sequence of private parameter estimates  $\{\tilde{\theta}_\ell\}_\ell$ are $\rho$-zCDP.  
 We note that since the episodes are adaptive, i.e. the steps corresponding to the start and end of an episode depend on the private input dataset, more care is needed to adapt Lemma~\ref{lem:privacy}. Finally, since the actions only depend on the estimates $\{\tilde{\theta}_\ell\}_\ell$, the algorithms are $\rho$-Interactive zCDP by the post-processing lemma (Lemma~\ref{lem:post_proc}).

\subsection{Regret Analysis}

\subsubsection{Stochastic Linear Bandits with Finite Number of Arms\vspace*{-1.5em}} 
\begin{theorem}[Regret Analysis of \adargope{}]\label{thm:lin_band_upper}
Under Assumption~\ref{ass:boundedlin} and for $\delta \in (0,1)$, with probability at least $1 - \delta$, the regret $R_\horizon$ of \adargope{}  is upper-bounded by
$$A \sqrt{  d \horizon \log\left(\frac{\arms \log(\horizon)}{\delta}\right)}  + \textcolor{blue}{\frac{B d}{\sqrt{\rho}}  \sqrt{\log\left(\frac{\arms \log(\horizon)}{\delta}\right)} \log(T)},$$
where $A$ and $B$ are universal constants. If $\delta =  \frac{1}{\horizon}$, then $$\expect (R_\horizon) \leq \bigO \left ( \sqrt{d \horizon \log(\arms\horizon)} \right) +  \textcolor{blue}{\bigO \left ( \frac{d}{\sqrt{\rho}} (\log(\arms\horizon))^{\frac{3}{2}} \right)}.$$\vspace*{-1em}
\end{theorem}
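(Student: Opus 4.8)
The plan is to follow the standard high‑probability analysis of $G$‑optimal‑design phased elimination (as for GOPE in~\cite{lattimore2018bandit}), augmented by a control of the added Gaussian noise, and then to integrate the tail bound to obtain the statement in expectation. The first and central step is to define a clean event $\mathcal{E}$ on which, for every phase $\ell$ and every active arm $a \in \mathcal{A}_\ell$, the private estimate is accurate: $\lvert\langle \tilde{\theta}_\ell - \theta^\star, a\rangle\rvert \le \beta_\ell$. Writing $\tilde\theta_\ell - \theta^\star = (\hat\theta_\ell - \theta^\star) + V_\ell^{-1/2}N_\ell$, I would bound the two terms separately. For the estimation term, since the actions played inside phase $\ell$ are fixed before the phase starts, $\langle\hat\theta_\ell - \theta^\star, a\rangle = \sum_{t}\langle V_\ell^{-1}a, a_t\rangle\eta_t$ is, conditionally on the history at the start of the phase, sub‑Gaussian with variance proxy $\|a\|_{V_\ell^{-1}}^2$; because $V_\ell \succeq c_\ell V(\pi_\ell)$ and Kiefer--Wolfowitz bounds $\|a\|_{V(\pi_\ell)^{-1}}^2 \le d$, we get $\|a\|_{V_\ell^{-1}}^2 \le d/c_\ell$, so a sub‑Gaussian tail together with the non‑private term $\tfrac{8d}{\beta_\ell^2}\log(4/\delta_{\arms,\ell})$ of $c_\ell$ forces this below $\beta_\ell/2$. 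For the noise term, Cauchy--Schwarz gives $\lvert\langle V_\ell^{-1/2}N_\ell, a\rangle\rvert \le \|a\|_{V_\ell^{-1}}\|N_\ell\| \le \sqrt{d/c_\ell}\,\|N_\ell\|$, and a $\chi^2$ tail bound gives $\|N_\ell\|^2 \le \tfrac{2d}{\rho c_\ell}f(d,\delta_{\arms,\ell})$ with probability at least $1-\delta_{\arms,\ell}$; the blue term $\tfrac{2d}{\beta_\ell}\sqrt{(2/\rho)f(d,\delta_{\arms,\ell})}$ of $c_\ell$ is exactly calibrated so that this is also below $\beta_\ell/2$. A union bound over all active arms and all phases, with $\delta_{\arms,\ell} = \delta/(\arms\ell(\ell+1))$ and $\sum_{\ell\ge 1}\tfrac{1}{\ell(\ell+1)} = 1$, then yields $\Pr[\mathcal{E}]\ge 1-\delta$.

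On $\mathcal{E}$, the usual elimination argument applies: the optimal arm is never eliminated and every arm active in phase $\ell$ satisfies $\Delta_a \le 8\beta_\ell$ (trivially for $\ell=1$). Hence phase $\ell$ contributes at most $8\beta_\ell T_\ell$ to the cumulative regret, and $T_\ell = \sum_{a\in\mathcal{S}_\ell}\lceil c_\ell\pi_\ell(a)\rceil \le c_\ell + \lvert\mathcal{S}_\ell\rvert \le c_\ell + d(d+1)/2$, so the phase‑$\ell$ regret is $\bigO(\beta_\ell c_\ell + \beta_\ell d^2)$.

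Let $\ell^\star$ be the last phase started before the horizon is exhausted; I would sum $\bigO(\beta_\ell c_\ell + \beta_\ell d^2)$ over $\ell\le\ell^\star$, splitting $\beta_\ell c_\ell$ into a non‑private part $\tfrac{8d}{\beta_\ell}\log(4/\delta_{\arms,\ell})$ and a private part $2d\sqrt{(2/\rho)f(d,\delta_{\arms,\ell})}$. For the non‑private part, $\sum_{\ell\le\ell^\star}2^\ell = \bigO(2^{\ell^\star})$, and the horizon constraint --- the completed phase $\ell^\star-1$ has length $T_{\ell^\star-1}\ge c_{\ell^\star-1}\gtrsim d\,4^{\ell^\star}$, which is at most $\horizon$ --- gives $\ell^\star = \bigO(\log\horizon)$ and $2^{\ell^\star} = \bigO(\sqrt{\horizon/(d\cdot\mathrm{polylog})})$; after collapsing the per‑phase log arguments into $\log(\arms\log(\horizon)/\delta)$, this yields $A\sqrt{d\horizon\log(\arms\log(\horizon)/\delta)}$. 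For the private part, monotonicity of $f$ in $\delta_{\arms,\ell}$ gives $f(d,\delta_{\arms,\ell})\le f(d,\delta_{\arms,\ell^\star}) = \bigO(d + \log(\arms\log(\horizon)/\delta))$, so $\sqrt{f} = \bigO(\sqrt{d}+\sqrt{\log(\arms\log(\horizon)/\delta)})$, and multiplying by $\ell^\star = \bigO(\log\horizon)$ gives $\bigO\big(\tfrac{d}{\sqrt\rho}\sqrt{\log(\arms\log(\horizon)/\delta)}\log\horizon\big)$ once the subdominant $\tfrac{d^{3/2}}{\sqrt\rho}\log\horizon$ term is absorbed; $\sum_\ell\beta_\ell d^2 = \bigO(d^2)$ is lower order. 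This is the high‑probability bound. For the bound in expectation, take $\delta = 1/\horizon$: on $\mathcal{E}$ (probability $\ge 1-1/\horizon$) the cumulative regret obeys the high‑probability bound with $\log(\arms\log(\horizon)/\delta) = \log(\arms\horizon\log\horizon) = \bigO(\log(\arms\horizon))$; on $\mathcal{E}^c$ (probability $\le 1/\horizon$), Assumption~\ref{ass:boundedlin} gives $\Delta_{A_t}\le 2$, so the cumulative regret is $\le 2\horizon$, contributing $\bigO(1)$ in expectation; combining and using $\sqrt{\log(\arms\horizon)}\log\horizon \le (\log(\arms\horizon))^{3/2}$ gives $\expect[R_\horizon] = \bigO(\sqrt{d\horizon\log(\arms\horizon)}) + \bigO(\tfrac{d}{\sqrt\rho}(\log(\arms\horizon))^{3/2})$.

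The main obstacle I anticipate is the joint calibration in the first step: the extra ``blue'' episode length $c_\ell$ must be large enough that, after the $V_\ell^{-1/2}$ whitening and the bound $\|a\|_{V_\ell^{-1}}^2 \le d/c_\ell$, the $\chi^2$ fluctuation of $N_\ell$ contributes only $\bigO(\beta_\ell)$ to the estimation error, yet small enough that the inflated phase lengths do not dominate the regret --- and keeping the dependencies on $d$, $\rho$, and the confidence level $\delta$ disentangled through the phase‑by‑phase summation and the doubling bound on $\ell^\star$ is where the bookkeeping is most delicate.
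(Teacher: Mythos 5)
Your proposal is correct and follows essentially the same route as the paper's proof: the same good event built from the fixed-design sub-Gaussian bound plus a Cauchy--Schwarz/$\chi^2$ control of the whitened Gaussian noise with the $\delta/(K\ell(\ell+1))$ union bound, the same elimination properties (the optimal arm survives and active arms in phase $\ell$ have gap $O(\beta_\ell)$), the same $O(\log T)$ bound on the number of phases, and the same $\delta = 1/T$ step to pass to expectation. The only difference is bookkeeping in the final summation --- you bound $2^{\ell^\star}$ through the horizon constraint on the last completed phase, whereas the paper introduces a gap threshold $\Delta$ and optimizes $\Delta = \Theta\bigl(\sqrt{d\,\alpha_T/T}\bigr)$ --- and both yield the same leading terms, including the same implicit absorption of the $\tfrac{d^{3/2}}{\sqrt{\rho}}\log T$ contribution that the paper also glosses over when it replaces $f(d,\delta_{K,\ell})$ by a multiple of $\log(K\log(T)/\delta)$.
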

\noindent\textit{Proof Sketch.} Under the ``good event" that all the private parameters $\tilde{\theta}_\ell$ are well estimated, we show that the optimal action never gets eliminated. But the sub-optimal arms get eliminated as soon as the elimination threshold $\beta_\ell$ is smaller than their sub-optimality gaps. The regret upper bound follows directly. We refer to Appendix~\ref{app:proof_lin} for complete proof.

We discuss the implications of our regret upper bound:

\textit{1. Achieving $\rho$-Interactive zCDP `almost for free':} Theorem~\ref{thm:lin_band_upper} shows that the price of $\rho$-Interactive zCDP is the additive term $\textcolor{blue}{\tilde{\bigO}\left(\frac{d}{\sqrt{\rho}} \right)}$\footnote{$\tilde{\bigO}$ hides poly-logarithmic factors in the horizon $\horizon$.}. For a fixed RDP budget $\rho$ and as $\horizon\rightarrow \infty$, the regret due to privacy becomes negligible in comparison with the privacy-oblivious term in regret, i.e. $\tilde{\bigO}\left(\sqrt{d \horizon}\right)$.

\textit{2. Optimality of \adargope{}.}  In Section~\ref{sec:lower_bound}, we prove a $\textcolor{blue}{\Omega(\frac{d}{\sqrt{\rho}})}$ minimax private regret lower bound that matches the regret upper bound of \adargope{} up to an extra $(\log KT)^{\frac{3}{2}}$ factor. 
If $K$ is exponential in $d$, then there is a mismatch between the regret upper and lower bounds, in their dependence on the dimension $d$. This gap could be improved with a better mechanism to make $\hat{\theta}$ private (Step 4 in Algorithm 2). In Appendix~\ref{sec:add_noise_gope}, we discuss in detail how different ways of adding noise at Step 4 impact the dependence of the regret upper bound on $d$.

\textit{Related Algorithms and Bounds.} Concurrently to our work, both \cite{hanna2022differentially} and \cite{li2022differentially} study private variants of the GOPE algorithm for pure $\epsilon$-global DP and $(\epsilon, \delta)$-global DP, respectively. However, both algorithms differ in how they make private the estimated parameter $\hat{\theta}$ compared to \adargope{}. Both~\cite{hanna2022differentially} and \cite{li2022differentially} add noise to each sum of rewards $\sum_{t=t_{\ell}}^{t_{\ell+1}-1} r_{t}$ (Line 11, Alg.~\ref{alg:g_optimal}), whereas \adargope{} add noise in $\hat{\theta}_l$ (Line 12, Alg.~\ref{alg:g_optimal}). As a result, though \adargope{} achieves linear dependence on the dimension $d$ as suggested by the lower bound, others do not ($d^2$ for~\cite{hanna2022differentially} and $d^{3/2}$ for~\cite{li2022differentially}). 

In Appendix~\ref{sec:add_noise_gope}, we analyse in detail the impact of adding noise at different steps of GOPE, both theoretically and experimentally.

\subsubsection{Contextual Linear Bandits}~\label{sec:cont_lin_bd}

To analyse the regret of \adaroful{}, we impose a stochastic assumption on the context generation. Specifically, we adopt the same assumption that is often used in on-policy~\cite{gentile2014online,li2022instance} and off-policy~\cite{zanette2021design,jorke2022simple} linear contextual bandits.

\begin{assumption}[Stochastic Contexts]\label{ass:cont_stoch}
    At each step $t$, the context set $\mathcal{A}_t \triangleq \{a_1^t, \dots, a_{k_t}^t \}$ is generated conditionally i.i.d (conditioned on $k_t$ and the history $H_t \triangleq \lbrace\mathcal{A}_1, a_1, X_1, \dots, \mathcal{A}_{t-1}, a_{t-1},X_{t-1}, \mathcal{A}_t, a_t\rbrace$) from a random process $A$ such that
    
1. $\| A \|_2 = 1$

2. $\expect[AA^T]$ is full rank, with minimum eigenvalue $\lambda_0 > 0$

3. $\forall z \in \real^d$, $\| z \|_2 = 1$, the random variable $(z^T A)^2$ is conditionally subgaussian, with variance
        \begin{equation*}
            \nu_t^2 \triangleq \mathbb{V}\left[(z^T A)^2 ~ | ~ k_t, H_t \right]   \leq \frac{\lambda_0^2}{8 \log(4 k_t)}
        \end{equation*}
\end{assumption} 

This additional assumption helps control the minimum eigenvalue of the design matrix $V_t \triangleq \sum_{s=1}^t a_{s}a_{s}^T$. Using Lemma~\ref{lem:lambda_min} on the minimum eigenvalue, we quantify more precisely the effect of the added noise due to $\rho$-Interactive zCDP and derive tighter confidence bounds.

\begin{theorem}\label{thm:cont_band_upper}
Under Assumptions~\ref{ass:boundedlin} and~\ref{ass:cont_stoch}, and for $\delta\in(0,1]$, with probability at least $1 - \delta$, the regret $R_\horizon$ of \adaroful{} is upper bounded by
$$
 R_\horizon \leq \bigO\left( d \log(\horizon) \sqrt{\horizon} \right) + \textcolor{blue}{\bigO \left ( \frac{d^2}{\sqrt{\rho}}  \log(\horizon)^2 \right)}
$$

\end{theorem}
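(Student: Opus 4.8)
\emph{Proof plan.} The plan is to adapt the regret analysis of OFUL~\cite{abbasi2011improved,lattimore2018bandit} to \adaroful{}, accounting for the two structural modifications: the rarely-switching (adaptive episode) schedule, and the Gaussian perturbation injected at every update. We work on a high-probability ``good event'' $\mathcal{E}$ and peel off a burn-in phase of length $t_0 = \bigO(\mathrm{poly}\log(T/\delta))$ during which the regret is trivially $\bigO(t_0)$ (this is needed because Lemma~\ref{lem:lambda_min} only guarantees eigenvalue growth after $t_0$). First I would count episodes: since $\det V_t \le (\lambda + t/d)^d$ holds deterministically (trace bound, using boundedness from Assumption~\ref{ass:boundedlin}) and each update multiplies the determinant of the design matrix by at least $1+C$, the total number of episodes is $M \le \bigO\!\left(d\log(1 + T/(\lambda d))\right) = \bigO(d\log T)$; this controls both the union-bound cardinality and how many noise vectors $\sum_{m\le\ell}Y_m$ are ever summed. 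Next I establish a valid \emph{private} confidence ellipsoid: decomposing $\tilde\theta_\tau - \theta^\star = (\hat\theta_\tau - \theta^\star) + V_\tau^{-1}\sum_{m\le\ell}Y_m$, the first term is controlled by the self-normalised martingale tail bound~\cite{abbasi2011improved} so that $\|\hat\theta_\tau - \theta^\star\|_{V_\tau} \le \beta_\tau = \bigO(\sqrt{d\log(\tau/\delta)})$, while for the second term $\|V_\tau^{-1}\sum_{m\le\ell}Y_m\|_{V_\tau} \le \lambda_{\min}(V_\tau)^{-1/2}\,\|\sum_{m\le\ell}Y_m\|_2$, and a chi-square tail bound gives $\|\sum_{m\le\ell}Y_m\|_2^2 \le (2\ell/\rho)f(d,\delta')$ with $f$ as in Eq.~\eqref{eq:eps_length}; combined with $\lambda_{\min}(V_\tau)\ge c\lambda_0\tau$ from Lemma~\ref{lem:lambda_min} (Assumption~\ref{ass:cont_stoch}) this is $\le \gamma_\tau/\sqrt\tau$ with $\gamma_\tau = \bigO(\sqrt{1/\rho}\,d\log\tau)$, so $\theta^\star$ lies in $\{\theta:\|\theta-\tilde\theta_\tau\|_{V_\tau}\le\tilde\beta_\tau\}$ for every update time $\tau$ on $\mathcal{E}$.

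On $\mathcal{E}$, optimism of the index played in Line~11 gives the usual instantaneous-regret bound $\langle\theta^\star, a^\star_t - a_t\rangle \le 2\tilde\beta_{\tau_t}\|a_t\|_{V_{\tau_t}^{-1}}$, where $\tau_t$ is the last update before $t$. The rarely-switching trigger ensures $V_{\tau_t}\preceq V_{t-1}$ and $\det V_{t-1}\le (1+C)\det V_{\tau_t}$, hence the standard norm-comparison lemma yields $\|a_t\|_{V_{\tau_t}^{-1}}\le\sqrt{1+C}\,\|a_t\|_{V_{t-1}^{-1}}$, and therefore
\[ \reg_\horizon \;\le\; \bigO(t_0) + 2\sqrt{1+C}\sum_{t>t_0}\Bigl(\beta_{\tau_t} + \tfrac{\gamma_{\tau_t}}{\sqrt{\tau_t}}\Bigr)\|a_t\|_{V_{t-1}^{-1}}. \]

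For the non-private sum, bound $\beta_{\tau_t}\le\beta_\horizon = \bigO(\sqrt{d\log\horizon})$, apply Cauchy--Schwarz and the elliptical potential lemma $\sum_t\|a_t\|_{V_{t-1}^{-1}}^2 \le 2d\log(1+\horizon/(\lambda d))$, giving $\bigO(\beta_\horizon\sqrt{d\horizon\log\horizon}) = \bigO(d\log\horizon\,\sqrt\horizon)$. For the private sum, bound $\gamma_{\tau_t}\le\gamma_\horizon = \bigO(\sqrt{1/\rho}\,d\log\horizon)$ and use Cauchy--Schwarz as $\sum_t\tfrac{1}{\sqrt{\tau_t}}\|a_t\|_{V_{t-1}^{-1}} \le \sqrt{\sum_t 1/\tau_t}\cdot\sqrt{\sum_t\|a_t\|_{V_{t-1}^{-1}}^2}$; the second factor is again $\bigO(\sqrt{d\log\horizon})$. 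For the first factor, the key point is that $\lambda_{\min}(V_t)\ge c\lambda_0 t$ together with $\lambda_{\max}(V_\tau)\le\lambda+\tau$ forces consecutive update times to grow at a bounded ratio: from $\det V_{\tau^{(m+1)}-1}\le(1+C)\det V_{\tau^{(m)}}$ one gets $(c\lambda_0(\tau^{(m+1)}-1))^d\le(1+C)(\lambda+\tau^{(m)})^d$, hence $\tau^{(m+1)}\le\kappa\,\tau^{(m)}$ for a constant $\kappa=\kappa(C,\lambda_0,d,\lambda)$ once past the burn-in, so $\sum_t 1/\tau_t = \sum_m(\tau^{(m+1)}-\tau^{(m)})/\tau^{(m)} \le (\kappa-1)M = \bigO(d\log\horizon)$. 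Multiplying the pieces, the private contribution is $\bigO(\gamma_\horizon\cdot d\log\horizon) = \bigO(\tfrac{d^2}{\sqrt\rho}\log^2\horizon)$, which added to the non-private $\bigO(d\log\horizon\,\sqrt\horizon)$ gives the stated bound (absorbing $\bigO(t_0)$).

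The main obstacle I expect is the control of $\sum_t 1/\tau_t$ in the last step: the episode boundaries are governed by a determinant-doubling rule rather than a time-doubling rule, and moreover these boundaries are themselves random, since the injected Gaussian noise influences the chosen actions and hence $V_t$. Converting the eigenvalue growth of Assumption~\ref{ass:cont_stoch} into a \emph{bounded-ratio} growth of the update times, uniformly over that randomness on the event $\mathcal{E}$, is precisely what prevents the privacy cost from scaling with $\sqrt\horizon$ and instead lands it at $\widetilde{\bigO}(d^2/\sqrt\rho)$. A secondary subtlety is that the eigenvalue lemma and the norm-comparison/potential arguments only apply after the burn-in, so the short initial segment must be handled separately and shown to contribute only lower-order terms.
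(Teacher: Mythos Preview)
Your proposal is correct and mirrors the paper's proof in overall architecture: good-event construction (self-normalised martingale bound for $\|S_t\|_{V_t^{-1}}$, chi-square tail for $\|\sum_{m\le\ell}Y_m\|_2$, and the eigenvalue bound of Lemma~\ref{lem:lambda_min}), optimism giving the instantaneous bound, determinant-ratio norm comparison to pass from $V_{\tau_t}$ to $V_{t-1}$, Cauchy--Schwarz splitting of the $\beta$- and $\gamma/\sqrt\tau$-sums, the elliptical potential lemma, and the episode count $M=\bigO(d\log T)$.

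The one substantive difference is exactly the step you single out as the main obstacle, namely $\sum_t 1/\tau_t$. You invoke Assumption~\ref{ass:cont_stoch} a second time, lower-bounding $\det V_{\tau^{(m+1)}-1}\ge(c\lambda_0(\tau^{(m+1)}-1))^d$ and comparing with $\det V_{\tau^{(m)}}\le(\lambda+\tau^{(m)})^d$ to force a bounded time ratio after burn-in. The paper instead uses a purely \emph{deterministic} telescoping argument that does not touch $\lambda_{\min}$: from $V_{t-1}\preceq(\lambda+t-1)I_d$ (trace bound, Assumption~\ref{ass:boundedlin} only) one has $\|a_t\|_{V_{t-1}^{-1}}^2\ge 1/(\lambda+t-1)$, so the rarely-switching constraint $1+C\ge\det V_{t_{\ell+1}-1}/\det V_{t_\ell}=\prod_{t=t_\ell+1}^{t_{\ell+1}-1}(1+\|a_t\|_{V_{t-1}^{-1}}^2)\ge(\lambda+t_{\ell+1}-1)/(\lambda+t_\ell)$ gives $t_{\ell+1}/t_\ell\le(1+C)(1+\lambda)$ directly. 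Your route is valid, but the paper's telescoping sidesteps the burn-in bookkeeping and the $\lambda_0$-dependence for this particular step; Assumption~\ref{ass:cont_stoch} is then used only once, to make the privacy width $\gamma_\tau/\sqrt\tau$ decay.
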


\noindent\textit{Proof Sketch.} The main challenge in the regret analysis is to design tight ellipsoid confidence sets around the private estimate $\tilde{\theta}_t$, since the regret can be shown to be the sum of the confidence widths. To design the non-private part of the ellipsoid confidence sets, we rely on the self-normalised bound for vector-valued martingales theorem of~\cite{abbasi2011improved}. For the private part, we rely on the assumption of stochastic contexts controlling $\lambda_{\text{min}}(G_t)$ and the concentration of $\chi^2$ distribution to control the introduced Gaussian noise. The rest of the proof is adapted from the analysis of RS-OFUL~\cite{abbasi2011improved}. We also show that the number of episodes, i.e. updates of the estimated parameters, is in $\bigO(\log(\horizon))$. We refer to Appendix~\ref{app:cont_lin_proofs} for the complete proof.

We discuss the implications of our regret upper bound:

\textit{1. Achieving $\rho$-Interactive zCDP `almost for free':} The upper bound of Theorem~\ref{thm:cont_band_upper} shows that the price of $\rho$-Interactive zCDP for linear contextual bandits is the additive term $\textcolor{blue}{\tilde{\bigO}\left(\frac{d^2}{\sqrt{\rho}}\right)}$. For a fixed budget $\rho$ and as $\horizon \rightarrow \infty$, the regret due to zCDP turns negligible in comparison with the privacy-oblivious regret term of $\tilde{\bigO}\left(d \sqrt{\horizon}\right)$.

\textit{2. Adapting \adaroful{} for private contexts:} To make AdaC-OFUL achieve Joint-DP~\cite{shariff2018differentially}, the estimate $\tilde{\theta}$ at line 8 should be made private with respect to both rewards and context. A straightforward way to do so is by estimating the design matrix $V_t$ privately, e.g. as it is done in~\cite{shariff2018differentially}. A first regret analysis of this adaptation shows that the price of privacy in the regret will become not negligible, i.e. the regret is $O\left(\sqrt{T} + \sqrt{T/\rho}\right)$. This shows that the bottleneck in the problem is the private estimation of the design matrix.





\textit{3. Connecting Related Settings.} \cite{neel2018mitigating} proposes LinPriv, which is an $\epsilon$-global DP  extension of OFUL. The context is assumed to be public but \textit{adversely chosen}. Theorem 5 in~\cite{neel2018mitigating} states that the regret of LinPriv is $\tilde{\bigO}\left ( d\sqrt{\horizon} + \frac{1}{\epsilon} K d \log \horizon \right )$. We revisit their regret analysis and show that the bound should be $\tilde{\bigO}\left ( d\sqrt{\horizon} + \frac{1}{\epsilon} K d \sqrt{\horizon} \right )$ instead (Appendix~\ref{sec:neil_roth}). 
Also, ~\cite{shariff2018differentially} proposes an $(\epsilon,\delta)$-Joint DP algorithm for \textit{private and adversarial contexts}. The algorithm is based on OFUL and privately estimates $\hat{\theta}_t$ at each step using the tree-based mechanism~\cite{dpContinualObs, treeMechanism2}. However, this algorithm has an additional regret of $\frac{1}{\epsilon}\sqrt{\horizon}$ due to privacy.

\textit{Open Problem.} It is still an open problem \textit{whether it is possible} to design a private algorithm for linear contextual bandits with \textit{private and/or adversarially chosen contexts}, such that the additional regret due to privacy in $\bigO(\log(\horizon))$.
\section{Lower Bounds on Regret}\label{sec:lower_bound}
In this section, we quantify the cost of $\rho$-Interactive zCDP for bandits by providing regret lower bounds for any $\rho$-Interactive zCDP policy. These lower bounds on regret provide valuable insight into the inherent hardness of the problem and establish a target for optimal algorithm design. We first derive a $\rho$-Interactive zCDP version of the KL decomposition Lemma using a sequential coupling argument. The regret lower bounds are then retrieved by plugging the KL upper bound in classic regret lower bound proofs. A summary of the lower bounds is in Table~\ref{tab:reg}, while the proof details are deferred to Appendix~\ref{app:lower_bounds}.

\subsection{KL Decomposition Lemma under $\rho$-zCDP}  
In order to proceed with the lower bounds, first, we are interested in controlling the Kullback-Leibler (KL) divergence between marginal distributions induced by a $\rho$-zCDP mechanism $\mech$ when the datasets are generated using two different distributions. This type of information-theoretic bounds is generally the main step for many standard methods for obtaining minimax lower bounds.

In particular, if $\cP_1$ and $\cP_2$ are two data-generating distributions over $\CX^n$, we define the marginals $M_1$ and $M_2$ over the output of mechanism $\mech$ as
\begin{equation}\label{eq:marginal}
    M_\nu(A) \defn \int_{d \in \CX^n } \mech_d\left(A \right) \dd\cP_\nu\left(d\right),
\end{equation}
when the inputs are generated from $\cP_1$ and $\cP_2$ respectively, i.e. for $\nu \in \{1, 2\}$ and $ A \in \cF$.

Define $\cC$ as a coupling of $(\cP_1, \cP_2)$, i.e. the marginals of $\cC$ are $\cP_1$ and $\cP_2$. We denote by $\Pi(\cP_1, \cP_2)$ the set of all the couplings between  $\cP_1$ and $\cP_2$.

\begin{theorem}[KL Upper Bound as a Transport Problem]\label{crl:kl_bound}
    If $\mech$ is $\rho$-zCDP, then
    \begin{equation*}
        \KL{M_1}{M_2} \leq \rho \inf_{\cC \in \Pi(\cP_1, \cP_2)} \expect_{(d,d') \sim \cC} [\dham(d, d')^2].
    \end{equation*}
\end{theorem}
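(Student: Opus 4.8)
The plan is to prove this by a coupling argument combined with the convexity of Rényi divergence and the fact that zCDP gives a bound on Rényi divergences under Hamming-distance-one neighbours, scaling quadratically under group privacy. First I would fix an arbitrary coupling $\cC \in \Pi(\cP_1,\cP_2)$ and rewrite both marginals $M_1$ and $M_2$ as mixtures over the coupled pair: writing $(d,d') \sim \cC$, we have $M_1(\cdot) = \expect_{(d,d')\sim\cC}[\mech_d(\cdot)]$ and $M_2(\cdot) = \expect_{(d,d')\sim\cC}[\mech_{d'}(\cdot)]$, since the marginals of $\cC$ are $\cP_1$ and $\cP_2$ respectively. So $M_1$ and $M_2$ are the two ``sides'' of a joint distribution on pairs of output distributions, coupled through $\cC$.

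Next I would invoke joint convexity of the KL divergence (or, more naturally here, of the Rényi divergence $D_\alpha$, then pass to the limit $\alpha \to 1$, or simply use $\KL{\cdot}{\cdot} = D_1$ directly via convexity of $D_1$ in the pair of arguments):
$$
\KL{M_1}{M_2} \;=\; \KL{\textstyle\expect_{\cC}[\mech_d]}{\textstyle\expect_{\cC}[\mech_{d'}]} \;\leq\; \expect_{(d,d')\sim\cC}\bigl[\KL{\mech_d}{\mech_{d'}}\bigr].
$$
Then for each fixed pair $(d,d')$ with $\dham(d,d') = k$, I would bound $\KL{\mech_d}{\mech_{d'}}$ using the group-privacy property of zCDP: a $\rho$-zCDP mechanism satisfies $D_\alpha(\mech_d \| \mech_{d'}) \leq \rho k^2 \alpha$ for datasets at Hamming distance $k$ (this is the standard $k^2$ group-privacy scaling of zCDP, e.g. Proposition 1.9 of \cite{ZeroDP}), and sending $\alpha \to 1$ gives $\KL{\mech_d}{\mech_{d'}} \leq \rho\, \dham(d,d')^2$. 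Substituting into the convexity bound yields $\KL{M_1}{M_2} \leq \rho\, \expect_{(d,d')\sim\cC}[\dham(d,d')^2]$, and since $\cC$ was an arbitrary coupling, taking the infimum over $\cC \in \Pi(\cP_1,\cP_2)$ completes the proof.

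The main obstacle is making the limiting argument $\alpha \to 1$ rigorous in both places: first that $\KL{P}{Q} = \lim_{\alpha\downarrow 1} D_\alpha(P\|Q)$ and that $D_\alpha$ is jointly convex and lower semicontinuous so the convexity bound survives the limit, and second that the $k^2$ group-privacy scaling of zCDP holds uniformly in $\alpha$ so that the limit of the right-hand side behaves correctly. A clean alternative that sidesteps the interchange-of-limits issue is to carry the whole argument at fixed $\alpha > 1$ — bounding $D_\alpha(M_1\|M_2) \leq \rho\alpha\, \expect_\cC[\dham^2]$ using joint convexity of $D_\alpha$ (Rényi divergence is jointly convex for $\alpha \le 1$; for $\alpha > 1$ one uses instead the quasi-convexity / the fact that $\exp((\alpha-1)D_\alpha)$ is jointly convex), then taking $\alpha \downarrow 1$ only at the very end using $\lim_{\alpha\downarrow 1} D_\alpha = D_{\mathrm{KL}}$. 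I would also need the technical point that an optimal coupling achieving the infimum need not exist, but working with an arbitrary $\cC$ and then taking the infimum avoids that entirely. I expect the group-privacy step to be quotable directly from \cite{ZeroDP}, so the real work is just the convexity-plus-limit bookkeeping.
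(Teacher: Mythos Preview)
Your proposal is correct and essentially identical to the paper's proof: the paper applies ``conditioning increases $f$-divergence'' (Theorem~\ref{thm:cond_incr}, which is exactly your joint-convexity step) with $f(x)=x\log x$ to get $\KL{M_1}{M_2}\le \expect_{\cC}[\KL{\mech_d}{\mech_{d'}}]$, then invokes the zCDP group-privacy bound (Theorem~\ref{thm:grp_priv}) at $\alpha=1$ to finish. Your worries about $\alpha\to 1$ limits and the failure of joint convexity of $D_\alpha$ for $\alpha>1$ are unnecessary detours---the paper avoids them entirely by working with KL as an $f$-divergence from the start and citing the group-privacy result directly for $\alpha\ge 1$.
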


Deriving the sharpest upper bound for the KL would require solving the transport problem
\begin{equation}
    \inf_{\cC \in \Pi(\cP_1, \cP_2)} \expect_{(d,d') \sim \cC} [\dham(d, d')^2].
\end{equation}

As a proxy, we will use maximal couplings.

\begin{proposition}
    Let $\cP_1$ and $\cP_2$ be two probability distributions that share the same $\sigma$-algebra. There exists a coupling $c_\infty(\cP_1, \cP_2) \in \Pi(\cP_1, \cP_2)$ called a maximal coupling, such that 
    \begin{equation*}
        \expect_{(X_1, X_2) \sim c_\infty(\cP_1, \cP_2)}\left[\ind{X_1 \neq X_2}\right] = \TV{\cP_1}{\cP_2}
    \end{equation*}
\end{proposition}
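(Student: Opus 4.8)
The plan is to prove the claimed identity by a two-sided argument: a lower bound $\expect[\ind{X_1\neq X_2}]\ge \TV{\cP_1}{\cP_2}$ that holds for \emph{every} coupling in $\Pi(\cP_1,\cP_2)$, and a matching upper bound obtained by exhibiting one explicit coupling $c_\infty$ for which equality holds.

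For the lower bound (the \emph{coupling inequality}), I would first note that for any $\cC\in\Pi(\cP_1,\cP_2)$ with $(X_1,X_2)\sim\cC$ and any measurable set $A$,
\begin{align*}
\cP_1(A)-\cP_2(A) &= \Pr[X_1\in A]-\Pr[X_2\in A]\\
&\le \Pr[X_1\in A,\ X_2\notin A]\ \le\ \Pr[X_1\neq X_2].
\end{align*}
Taking the supremum over $A$ and recalling $\TV{\cP_1}{\cP_2}=\sup_A\big(\cP_1(A)-\cP_2(A)\big)$ gives $\TV{\cP_1}{\cP_2}\le \expect_{\cC}[\ind{X_1\neq X_2}]$. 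In particular, any coupling attaining equality is maximal in the sense that it minimises $\Pr[X_1\neq X_2]$ over all couplings.

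For the upper bound, I would construct $c_\infty$ explicitly. Fix a common $\sigma$-finite dominating measure $\mu$ (for instance $\mu=\cP_1+\cP_2$), write $p_i\defn \frac{\dd \cP_i}{\dd \mu}$, set $p_\wedge\defn\min(p_1,p_2)$ and $\beta\defn\int p_\wedge\,\dd\mu$. Splitting $\int|p_1-p_2|\,\dd\mu$ over $\{p_1\ge p_2\}$ and its complement shows $\beta=1-\TV{\cP_1}{\cP_2}$. The degenerate cases $\beta\in\{0,1\}$ (mutually singular, resp.\ identical measures) are immediate, so assume $0<\beta<1$ and define $c_\infty$ by the following scheme: draw $B\sim\mathrm{Bernoulli}(\beta)$; if $B=1$, draw $Z$ from the density $p_\wedge/\beta$ and output $(X_1,X_2)=(Z,Z)$; if $B=0$, draw $X_1$ from the density $(p_1-p_\wedge)/(1-\beta)$ and independently $X_2$ from the density $(p_2-p_\wedge)/(1-\beta)$. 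I would then verify that the marginal density of $X_1$ is $\beta\cdot\frac{p_\wedge}{\beta}+(1-\beta)\cdot\frac{p_1-p_\wedge}{1-\beta}=p_1$, hence $X_1\sim\cP_1$, and symmetrically $X_2\sim\cP_2$, so $c_\infty\in\Pi(\cP_1,\cP_2)$. Since $B=1$ forces $X_1=X_2$, we get $\Pr[X_1\neq X_2]\le \Pr[B=0]=1-\beta=\TV{\cP_1}{\cP_2}$, which together with the coupling inequality yields $\expect_{c_\infty}[\ind{X_1\neq X_2}]=\TV{\cP_1}{\cP_2}$.

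The only delicate points, and the main thing to be careful about, are measure-theoretic rather than conceptual: one needs $\{p_1\ge p_2\}$ and the diagonal event $\{X_1=X_2\}$ to be measurable, and one must check that the sampling scheme genuinely defines a probability measure on the product $\sigma$-algebra with the asserted marginals. Working with densities against the common dominating measure $\mu$ handles the first issue, and restricting to standard Borel spaces (which covers every instance used later in the paper, where $\cP_1,\cP_2$ are distributions of reward tables in $(\real^K)^T$) makes the diagonal measurable; everything else is the routine density bookkeeping indicated above, so the substantive content is entirely in pairing the universal coupling inequality with the $p_\wedge$ construction.
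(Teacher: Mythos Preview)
Your proof is correct and is exactly the classical construction of a maximal coupling: the universal lower bound via the coupling inequality, and the matching upper bound via the $p_\wedge=\min(p_1,p_2)$ mixture construction. The paper does not actually give a proof of this proposition; it is stated as a known result and then immediately used to derive Theorem~\ref{thm:zcdp_deco}. So there is nothing to compare against, and your argument fills in precisely the standard details one would expect.
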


Using maximal coupling for data-generating distributions that are product distributions yields the following bound.

\begin{theorem}[KL Decomposition for Product Distributions]\label{thm:zcdp_deco}
    Let $\cP_1$ and $\cP_2$ be two product distributions over $\cX^n$, i.e. $\cP_1 = \bigotimes_{i = 1}^n p_{1,i}$ and $\cP_2 = \bigotimes_{i = 1}^n p_{2,i}$, where $p_{\nu,i}$ for $\nu \in \{1, 2\}, i \in [1, n]$ are distributions over $\cX$. Let $t_i \defn \TV{p_{1,i}}{p_{2,i}}$.    
    If $\mech$ is $\rho$-zCDP, then\vspace*{-.5em}
    \begin{equation}
        \KL{M_1}{M_2} \leq \rho \left(\sum_{i =1 }^n t_i \right) ^2 + \rho  \sum_{i = 1}^n t_i (1 - t_i)
    \end{equation}
\end{theorem}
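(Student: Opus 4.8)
The plan is to start from the transport bound in Theorem~\ref{crl:kl_bound}, namely $\KL{M_1}{M_2} \leq \rho \inf_{\cC \in \Pi(\cP_1, \cP_2)} \expect_{(d,d') \sim \cC} [\dham(d, d')^2]$, and to construct an explicit coupling $\cC$ that makes the right-hand side small. Since $\cP_1 = \bigotimes_{i=1}^n p_{1,i}$ and $\cP_2 = \bigotimes_{i=1}^n p_{2,i}$ are product distributions, the natural choice is the \emph{independent product of coordinatewise maximal couplings}: for each coordinate $i$, take the maximal coupling $c_\infty(p_{1,i}, p_{2,i})$ guaranteed by the Proposition, and let $\cC \defn \bigotimes_{i=1}^n c_\infty(p_{1,i}, p_{2,i})$. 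One must first check that this is indeed a coupling of $(\cP_1, \cP_2)$, which is immediate since the marginals of a product of couplings are the products of the respective marginals.

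Next I would compute $\expect_{(d,d') \sim \cC}[\dham(d,d')^2]$ for this coupling. Write $B_i \defn \ind{d_i \neq d'_i}$, so that $\dham(d,d') = \sum_{i=1}^n B_i$. Under the chosen coupling, the $B_i$ are \emph{independent} Bernoulli random variables with $\expect[B_i] = \TV{p_{1,i}}{p_{2,i}} = t_i$ (this is exactly the maximal-coupling property). Then expand
\begin{equation*}
    \expect\Bigl[\Bigl(\sum_{i=1}^n B_i\Bigr)^2\Bigr] = \sum_{i=1}^n \expect[B_i^2] + \sum_{i \neq j} \expect[B_i]\expect[B_j] = \sum_{i=1}^n t_i + \sum_{i \neq j} t_i t_j,
\end{equation*}
using $B_i^2 = B_i$ and independence for the cross terms. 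Now rewrite $\sum_{i \neq j} t_i t_j = \bigl(\sum_i t_i\bigr)^2 - \sum_i t_i^2$, so the whole expression becomes $\bigl(\sum_i t_i\bigr)^2 + \sum_i t_i - \sum_i t_i^2 = \bigl(\sum_i t_i\bigr)^2 + \sum_i t_i(1-t_i)$. Multiplying by $\rho$ and recalling that the infimum over all couplings is at most the value at this particular $\cC$ gives the claimed bound.

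The only genuinely delicate point is the independence of the indicators $B_i$ under $\cC$: although the coordinates of $d$ are coupled to the coordinates of $d'$ \emph{within} each index $i$, the joint law $\cC$ factorises across indices, so the vector $(B_1,\dots,B_n)$ has independent entries, which is what legitimises dropping the covariance terms. Everything else is a routine expansion of a second moment of a sum of independent Bernoullis, so I expect no further obstacle. (One could alternatively phrase this as applying Theorem~\ref{crl:kl_bound} and then bounding the transport cost by the product coupling; the computation is identical.)
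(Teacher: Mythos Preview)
Your proposal is correct and follows essentially the same approach as the paper: apply Theorem~\ref{crl:kl_bound}, choose the coupling $\cC$ to be the product of coordinatewise maximal couplings, observe that under this coupling $\dham(d,d')$ is a sum of independent $\mathrm{Bernoulli}(t_i)$ variables, and compute its second moment. Your write-up is in fact more explicit about the second-moment computation than the paper's, but the argument is identical.
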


This is a centralised $\rho$-zCDP version of the KL-decomposition lemma under local DP~\cite[Theorem 1]{duchi2013local}, and a $\rho$-zCDP version of the Karwa-Vadhan lemma~\cite{KarwaVadhan}. Similar coupling ideas have been developed in~\cite{lalanne2022statistical} to derive $\rho$-zCDP variants of LeCam and Fano inequalities.

\subsection{Lower Bound on Regret for Linear Bandits}

Now, we adapt Theorem~\ref{thm:zcdp_deco} for the bandit marginals. Let $\nu = \{P_a, a \in [\arms] \}$ and $\nu' =  \{P'_a, a \in [\arms] \}$ be two bandit instances. When the policy $\pol$ interacts with the bandit instance $\nu$, it induces a marginal distribution $m_{\nu \pol}$ over the sequence of actions, i.e.
\begin{equation*}
    m_{\nu \pol} (a_1, \dots, a_\horizon)
    \defn \int_{r_1, \dots, r_\horizon} \prod_{t=1}^\horizon \pol_t(a_t \mid \Hist_{t-1} ) p_{a_t} (r_t) \dd r_t .
\end{equation*}

We define $m_{\nu' \pol}$ similarly.
\begin{theorem}[KL Decomposition for $\rho$-Interactive zCDP]\label{lm:kl_decompo_var}
    If $\pol$ is $\rho$-Interactive zCDP, then
    \begin{align*}
         \KL{m_{\nu \pol}}{m_{\nu' \pol}} &\leq \rho \Bigg( \left [ \expect_{\nu \pi} \left (\sum_{t = 1}^\horizon t_{a_t} \right)  \right]^2 \\
         + \expect_{\nu \pi} &\left (\sum_{t = 1}^\horizon t_{a_t} (1 - t_{a_t}) \right) + \Var_{\nu \pi} \left ( \sum_{t = 1}^\horizon t_{a_t} \right)\Bigg),
    \end{align*}
    where $t_{a_t} \defn \TV{P_{a_t}}{P'_{a_t}}$ and $\expect_{\nu \pi}$ and $\Var_{\nu \pi}$ are the expectation and variance under $m_{\nu \pol}$ respectively.
\end{theorem}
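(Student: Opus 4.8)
The plan is to reduce this statement about bandit marginals to the product-distribution bound in Theorem~\ref{thm:zcdp_deco} by exhibiting the right ``per-step'' structure. The key observation is that the interaction of a $\rho$-Interactive zCDP policy $\pol$ with a bandit instance can be viewed as the policy-induced mechanism $\mech^\pol$ acting on a table of rewards whose rows are drawn independently. Concretely, fix the sequence of actions as the ``query'' side and consider the table $d \defn (x_1, \dots, x_\horizon)$ where, conditionally on the realised actions, $x_{t, a_t} \sim P_{a_t}$ under $\nu$ and $x_{t, a_t} \sim P'_{a_t}$ under $\nu'$ in the $a_t$-th coordinate (the other coordinates are irrelevant since only $x_{t, a_t}$ is ever read). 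By Proposition~\ref{prop:int_table}(a), $\rho$-Interactive zCDP implies $\rho$-Table zCDP, so $\mech^\pol$ is a $\rho$-zCDP mechanism on tables of rewards with the row-wise Hamming distance. The bandit marginals $m_{\nu\pol}$ and $m_{\nu'\pol}$ are exactly the output marginals $M_1, M_2$ of Equation~\eqref{eq:marginal} for this mechanism, with data-generating distributions $\cP_1, \cP_2$ that are products over the rows $t = 1, \dots, \horizon$.

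First I would make the data-generating distributions precise. The subtlety is that the row distribution at step $t$ depends on the action $a_t$, which itself is generated by the policy from the past rewards --- so $\cP_1$ and $\cP_2$ are not literally product distributions over a fixed index set. The clean way around this is to condition on the full action sequence $(a_1, \dots, a_\horizon)$: given the actions, the rewards $r_1, \dots, r_\horizon$ are independent across $t$ with $r_t \sim P_{a_t}$ (resp.\ $P'_{a_t}$), so the \emph{conditional} reward laws are product distributions with per-coordinate total variation $t_{a_t} = \TV{P_{a_t}}{P'_{a_t}}$. I would then apply Theorem~\ref{thm:zcdp_deco} with $n = \horizon$ and $t_i = t_{a_i}$, conditionally on the actions, and take the appropriate expectation over the action randomness to remove the conditioning. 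Applying Theorem~\ref{thm:zcdp_deco} conditionally gives, for each fixed action sequence, a bound of $\rho\big(\sum_{t} t_{a_t}\big)^2 + \rho\sum_t t_{a_t}(1 - t_{a_t})$ on the conditional KL; the unconditional KL is then at most the expectation (under $m_{\nu\pol}$) of this quantity by convexity of KL / the chain rule for relative entropy, plus a correction term capturing the variability of $\sum_t t_{a_t}$ across action sequences. This is where the $\Var_{\nu\pi}(\sum_t t_{a_t})$ term comes from: since $\expect[(\sum t_{a_t})^2] = (\expect \sum t_{a_t})^2 + \Var(\sum t_{a_t})$, pushing the expectation inside the square produces exactly the three terms in the statement.

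More carefully, the cleanest route is: let $S \defn \sum_{t=1}^\horizon t_{a_t}$ (a random variable under the action randomness). By the chain rule for KL divergence applied to the joint law of (actions, mechanism output) versus itself under the two environments --- being careful that the actions are themselves part of the mechanism's output, so one should instead invoke the conditional version of Theorem~\ref{thm:zcdp_deco} applied to the sub-mechanism that, given the action sequence as ``public context,'' outputs the view --- one obtains $\KL{m_{\nu\pol}}{m_{\nu'\pol}} \leq \expect_{\nu\pi}\!\big[\rho S^2 + \rho\sum_t t_{a_t}(1 - t_{a_t})\big]$. Then substituting $\expect_{\nu\pi}[S^2] = (\expect_{\nu\pi} S)^2 + \Var_{\nu\pi}(S)$ yields precisely
\begin{equation*}
\KL{m_{\nu\pol}}{m_{\nu'\pol}} \leq \rho\big(\expect_{\nu\pi} S\big)^2 + \rho\,\Var_{\nu\pi}(S) + \rho\,\expect_{\nu\pi}\!\Big[\textstyle\sum_t t_{a_t}(1 - t_{a_t})\Big],
\end{equation*}
which is the claimed inequality. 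I expect the main obstacle to be handling the adaptivity rigorously: the actions are not exogenous, so one cannot naively call Theorem~\ref{thm:zcdp_deco} for ``product distributions'' --- one must either (i) reprove a conditional/adaptive version of the maximal-coupling argument behind Theorem~\ref{thm:zcdp_deco} in which the coupling at step $t$ is built sequentially given the history, or (ii) argue that conditioning on the action sequence legitimately reduces to the product case and that the KL chain rule correctly aggregates the conditional bounds. Getting this decoupling right --- likely via the constant-adversary trick already used for Theorem~\ref{thm:sum_kl}, which lets one treat the action sequence as fixed ``queries'' and the rewards as the genuinely private product-distributed input --- is the crux; everything else is the algebraic identity $\expect[S^2] = (\expect S)^2 + \Var(S)$ and bookkeeping.
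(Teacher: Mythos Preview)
Your primary path rests on a false step: the claim that, conditional on the full action sequence $(a_1,\dots,a_\horizon)$, the rewards are independent with $r_t\sim P_{a_t}$ is not correct. Because $a_{t+1},\dots,a_\horizon$ are functions of $r_1,\dots,r_t$ through the policy, conditioning on future actions leaks information about past rewards and destroys both the marginals and the independence. There is therefore no legitimate ``conditional'' application of Theorem~\ref{thm:zcdp_deco}, and the chain-rule aggregation you sketch (treating the actions as public context for a sub-mechanism whose output is again the actions) does not go through. You correctly flag adaptivity as the obstacle, but your option~(ii)---conditioning on the action sequence to recover a product structure---cannot be made to work for exactly this reason.

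The paper follows your option~(i): it builds the sequential maximal coupling directly inside the interaction, without ever invoking Theorem~\ref{thm:zcdp_deco}. Concretely, it defines a coupled environment in which at each step $(R_t,R'_t)\sim c_{A_t}$, the maximal coupling of $P_{A_t}$ and $P'_{A_t}$, while the policy reads only $R_t$. This yields joint laws $p_{\gamma\pi}$ and $q_{\gamma\pi}$ on $(\mathbf{A},\mathbf{R},\mathbf{R}')$ whose action-marginals are $m_{\nu\pi}$ and $m_{\nu'\pi}$, so data processing gives $\KL{m_{\nu\pi}}{m_{\nu'\pi}}\le\KL{p_{\gamma\pi}}{q_{\gamma\pi}}$. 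The coupling densities cancel in the ratio $p/q$, and the chain rule reduces $\KL{p}{q}$ to $\sum_t \expect\bigl[\KL{\pi_t(\cdot\mid H_{t-1})}{\pi_t(\cdot\mid H'_{t-1})}\bigr]$; Theorem~\ref{thm:sum_kl} (the group-privacy bound, itself proved via the constant adversary) then bounds this sum pointwise by $\rho\,\dham(\mathbf{R},\mathbf{R}')^2$. Only at the end does one take expectations and expand $\expect[\dham^2]$ into $(\expect S)^2+\Var(S)+\expect\bigl[\sum_t t_{a_t}(1-t_{a_t})\bigr]$. So your algebraic endgame is right and your option~(i) is the correct route; what is missing from your proposal is the explicit coupled-environment construction and the use of Theorem~\ref{thm:sum_kl}, rather than Theorem~\ref{thm:zcdp_deco}, as the workhorse.
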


The proof of Theorem~\ref{lm:kl_decompo_var} combines the $\rho$-Interactive DP group privacy property (Theorem~\ref{thm:sum_kl}) and the maximal coupling ideas developed in Theorem~\ref{thm:zcdp_deco}.

Leveraging this decomposition, we derive the \textit{minimax} regret lower bound, i.e. the best regret achievable by a policy on the corresponding worst-case environment. 

\begin{definition}[Minimax Regret]
    The minimax regret lower bound is defined as
    \begin{equation*}
        \reg^{\text{minimax}}_{\horizon, \rho}(\mathcal{A}, \Theta) \defn \inf_{\pi \in \Pi^\rho_{Int}} \sup _{\theta \in \Theta}~\reg_{\horizon}(\pol, \mathcal{A}, \theta).
    \end{equation*}
\end{definition}

\begin{theorem}[Minimax Lower Bounds for Linear Bandits]\label{thm:lin_band_lb} Let $\mathcal{A}=[-1,1]^{d}$ and $\Theta=\real^{d}$. Then, for any $\rho$-Interactive zCDP policy, we have that
\begin{align*}
\reg^{\text{minimax}}_{\horizon, \rho}(\mathcal{A}, \Theta) \geq  \max \left\lbrace\underset{\text{without $\rho$-zCDP}}{\underbrace{\frac{e^{-2}}{8}  d\sqrt{\horizon}}},~\underset{\text{with $\rho$-zCDP}}{\underbrace{\frac{e^{-2.25}}{4} \frac{d}{\sqrt{\rho}}}}\right\rbrace
\end{align*}

\end{theorem}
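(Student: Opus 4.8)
\emph{Plan.} The right-hand side is the maximum of a privacy-oblivious term and a privacy term. I would establish each as a separate lower bound on $\sup_{\theta\in\Theta}\reg_\horizon(\pi,\mathcal A,\theta)$ that holds for \emph{every} $\rho$-Interactive zCDP policy $\pi$, and then take both the maximum over the two bounds and the infimum over $\pi\in\Pi^\rho_{Int}$.

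\emph{The $\tfrac{e^{-2}}{8}\,d\sqrt\horizon$ term.} This needs no privacy argument: a $\rho$-Interactive zCDP policy is in particular a policy, so $\Pi^\rho_{Int}$ is contained in the class of all policies, and therefore $\reg^{\text{minimax}}_{\horizon,\rho}(\mathcal A,\Theta)$ is at least the unconstrained minimax regret for linear bandits with $\mathcal A=[-1,1]^d$, $\Theta=\real^d$, which is $\ge \tfrac{e^{-2}}{8}d\sqrt\horizon$ by the classical linear-bandit minimax lower bound (Chapter~24 of~\cite{lattimore2018bandit}), obtained from the hypercube family $\theta_v=\Delta v$, $v\in\{-1,+1\}^d$, with Gaussian noise and $\Delta\asymp 1/\sqrt\horizon$.

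\emph{The $\tfrac{e^{-2.25}}{4}\,d/\sqrt\rho$ term.} I would reuse the hypercube family $\{\theta_v=\Delta v : v\in\{-1,+1\}^d\}\subset\Theta$ with reward law $P^v_a=\mathcal N(\langle\theta_v,a\rangle,1)$ (which is $1$-subgaussian), but with a much smaller gap $\Delta$ to be tuned. On $\mathcal A=[-1,1]^d$ the optimal action for $\theta_v$ is $v$, so the regret decomposes over coordinates,
\[
\reg_\horizon(\pi,\mathcal A,\theta_v)=\Delta\sum_{i=1}^{d}\expect_{\theta_v}\Big[\sum_{t=1}^{\horizon}\big(1-v_i A_{t,i}\big)\Big],
\]
an expectation depending only on the marginal $m_{\theta_v\pi}$ over action sequences. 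Fixing a coordinate $i$, pairing $\theta_v$ with the sign-flipped instance $\theta_v^{(i)}$, using the test statistic $g_i\defn\ind{\sum_t A_{t,i}<0}$ (a function of the actions only) together with the Bretagnolle--Huber inequality, and averaging over $v$ gives
\[
\frac{1}{2^d}\sum_{v}\reg_\horizon(\pi,\mathcal A,\theta_v)\;\ge\;\frac{d\,\Delta\,\horizon}{4}\,\exp\!\Big(-\max_{v,i}\KL{m_{\theta_v\pi}}{m_{\theta_v^{(i)}\pi}}\Big).
\]
It remains to bound this KL, and here $\rho$-Interactive zCDP enters through Theorem~\ref{lm:kl_decompo_var}: $P^v_a$ and $P^{v^{(i)}}_a$ are unit-variance Gaussians whose means differ by $|2\Delta\,a_i|\le 2\Delta$, so $t_{A_t}=\TV{P^v_{A_t}}{P^{v^{(i)}}_{A_t}}\le\Delta$ surely; hence $\sum_t t_{A_t}\le\Delta\horizon$ deterministically and Theorem~\ref{lm:kl_decompo_var} gives $\KL{m_{\theta_v\pi}}{m_{\theta_v^{(i)}\pi}}\le\rho\big(2(\Delta\horizon)^2+\Delta\horizon\big)$, dominated by $\rho(\Delta\horizon)^2$. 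Choosing $\Delta\asymp 1/(\sqrt\rho\,\horizon)$ makes this KL a constant, so the previous display yields $\sup_{v}\reg_\horizon(\pi,\mathcal A,\theta_v)\ge c\,d\,\Delta\,\horizon\asymp d/\sqrt\rho$; optimising the constant in $\Delta$ produces the factor $\tfrac{e^{-2.25}}{4}$. The degenerate regime where this $\Delta$ would exceed $1$ (i.e.\ $\rho\horizon^2=O(1)$) is already covered by the $d\sqrt\horizon$ bound, which there exceeds $d/\sqrt\rho$.

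\emph{Main obstacle.} The delicate point, absent from non-private lower bounds, is that Theorem~\ref{lm:kl_decompo_var} produces a \emph{squared} leading term $\rho\,(\expect_{\theta_v\pi}\sum_t t_{A_t})^2$ plus a variance correction $\rho\,\Var_{\theta_v\pi}(\sum_t t_{A_t})$ and a lower-order term, so I must check that with $\Delta\asymp 1/(\sqrt\rho\,\horizon)$ the squared term dominates and the variance correction does not spoil the constant --- which it does not, precisely because $\sum_t t_{A_t}\le\Delta\horizon$ holds deterministically, forcing the variance to be itself $O((\Delta\horizon)^2)$. A secondary structural point, which is why the argument must be phrased through a statistic of the actions only, is that Theorem~\ref{lm:kl_decompo_var} controls the KL between the action-sequence marginals $m_{\theta_v\pi},m_{\theta_v^{(i)}\pi}$ rather than between full history distributions.
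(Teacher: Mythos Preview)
Your proposal is correct and follows essentially the same route as the paper: both use the hypercube family $\theta_v=\Delta v$ with $v\in\{-1,+1\}^d$, decompose the regret coordinate-wise, pair each $\theta_v$ with its $i$-th sign-flip, apply Bretagnolle--Huber on the action-marginal, invoke Theorem~\ref{lm:kl_decompo_var} for the KL, bound $t_{A_t}\le\Delta$ deterministically via Pinsker, and choose $\Delta=1/(\sqrt\rho\,\horizon)$; the non-private term is imported from~\cite{lattimore2018bandit} in both. One small point: your crude bound $\Var(\sum_t t_{A_t})\le(\Delta\horizon)^2$ yields a KL bound of $2+\sqrt\rho\le3$ rather than the paper's $9/4$; to recover the exact constant $e^{-2.25}/4$ you need the sharper $\Var\le(\Delta\horizon)^2/4$ (variance of a $[0,\Delta\horizon]$-valued variable), which is what the paper uses.
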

In order to prove the lower bounds, we deploy the KL upper bound of Theorem~\ref{thm:zcdp_deco} in the classic proof scheme of regret lower bounds~\cite{lattimore2018bandit}. The high-level idea of proving bandit lower bounds is selecting two \textit{hard} environments, which are hard to statistically distinguish but are conflicting, i.e. actions that may be optimal in one are sub-optimal in other.
The KL upper bound of Theorem~\ref{lm:kl_decompo_var} allows us to quantify the extra-hardness to statistically distinguish environments due to the additional ``blurriness'' created by the $\rho$-zCDP constraint.

The minimax regret lower bound suggests the existence of two hardness regimes depending on $\rho$, $\horizon$ and $d$. When $\rho < 4e^{- 0.5}/\horizon$, i.e. the \textbf{high-privacy regime}, the lower bound becomes $\textcolor{blue}{\Omega \left (d/\sqrt{\rho} \right)}$, and $\rho$-Interactive zCDP bandits incur more regret than non-private ones. When $\rho \geq 4e^{- 0.5}/\horizon$, i.e. in the \textbf{low-privacy regime}, the lower bound retrieves the non-private lower bound, i.e. $\Omega( d \sqrt{ \horizon})$, and privacy can be for free.

\section{Experimental Analysis}\label{sec:exp}
\begin{figure*}[t!]
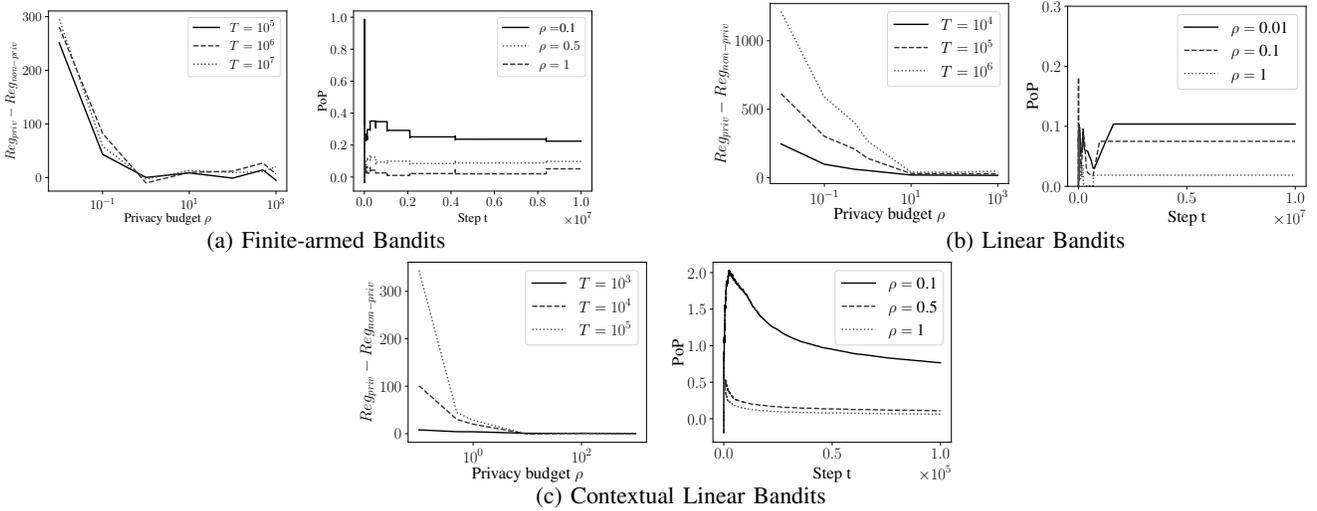

\centering
    \begin{subfigure}{.48\textwidth}
        \begin{minipage}{.25\textwidth}
            \centering
            \scalebox{0.35}{\input{figures/list_mu2_fig3.pgf}}\label{stoch_diff}
        \end{minipage}
        \hspace{4.8em}
        \begin{minipage}{.25\textwidth}
            \centering
            \scalebox{0.35}{\input{figures/list_mu2_fig2.pgf}}\label{stoch_quot}
        \end{minipage}\vspace*{-.7em}
    \subcaption{Finite-armed Bandits} 
    \end{subfigure}
    \hfill
    \begin{subfigure}{.48\textwidth}
        \begin{minipage}{.25\textwidth}
            \centering
            \scalebox{0.35}{\input{figures/optimal_design_10_3_fig3_third.pgf}}\label{lin_diff}
        \end{minipage}
        \hspace{4.8em}
        \begin{minipage}{.25\textwidth}
            \centering
            \scalebox{0.35}{\input{figures/optimal_design_10_3_fig2_third.pgf}}\label{lin_quot}
        \end{minipage}\vspace*{-.7em}
    \subcaption{Linear Bandits}
    \end{subfigure}\\
    \begin{subfigure}{.48\textwidth}
        \begin{minipage}{.25\textwidth}
            \centering
            \scalebox{0.35}{\input{figures/adar_oful_10_3_fig3_third.pgf}}\label{cont_diff}
        \end{minipage}
        \hspace{4.8em}
        \begin{minipage}{.25\textwidth}
            \centering
            \scalebox{0.35}{\input{figures/adar_oful_gauss_10_3_fig2_third.pgf}}\label{cont_quot}
        \end{minipage}\vspace*{-.7em}
    \subcaption{Contextual Linear Bandits}
    \end{subfigure}\vspace*{-.5em}
    \caption{For each bandit setting, the left figure represents the evolution of the difference between the private and non-private regret w.r.t. the privacy budget $\rho$. The right figure represents the evolution of the price of privacy (PoP) w.r.t. the time step.}\label{fig:experiments}\vspace*{-1.5em}
\end{figure*}
We empirically verify whether \adarpucb{}, \adargope{} and \adaroful{} can achieve privacy for free. 

\subsection{Experimental Setup} 
\textbf{For finite-armed bandits}, we test \adarpucb{} with $\beta = 1$ and compare it to its non-private counterpart, i.e. a UCB algorithm with adaptive episodes and forgetting. We test the algorithms for Bernoulli bandits with $5$-arms and means $\{0.75,0.625,0.5,0.375,0.25\}$ (as in~\cite{dpseOrSheffet}). 

\textbf{For linear bandits with finitely many arms}, we implement \adargope{} and compare it to GOPE. We set the failure probability to $\delta = 0.001$ and the noise to be $\rho_t = \mathcal{N}(0,1)$. We use the Frank-Wolfe algorithm to solve the G-optimal design problem~\cite{lattimore2018bandit}. We chose $\arms = 10$ actions randomly on the unit tri-dimensional sphere ($d=3$). The true parameter $\theta^\star$ is also chosen randomly on the tri-dimensional sphere.

\textbf{For linear contextual bandits}, we implement \adaroful{} and compare it to RS-OFUL. We set $C=1$, the regularisation constant $\lambda = 0.1$, the failure probability to $\delta = 0.001$ and the noise $\rho_t = \mathcal{N}(0,1)$. We set $\arms =10$  and $d=3$. To generate the contexts, at each time step, we sample from a new set of actions $\mathcal{A}_t$ which is $10$ dimensional multivariate Gaussian $\mathcal{N}\left( \left(\frac{1}{\sqrt{d}}, \dots,  \frac{1}{\sqrt{d}}\right), \frac{1}{10} \mathbf{I}_d \right)$. This way, we sample the contexts near the unit sphere, while having a sub-Gaussian generation process corresponding to the context-generation Assumption~\ref{ass:cont_stoch}. The true parameter $\theta^\star$ is chosen randomly on the tri-dimensional sphere. 

For the three settings, we run the private and non-private algorithms $100$ times for a horizon $\horizon = 10^7$, and compare their average regrets (Figure~\ref{fig:experiments}).

\vspace*{-.5em}\subsection{Results and Analysis} 
From the experimental results illustrated in Figure~\ref{fig:experiments}, we reach to two conclusions for all three settings.

1. \textit{Free-privacy in low-privacy regime.} For a fixed horizon $\horizon$, the difference between the private and non-private regret, $Reg_{priv} - Reg_{non-priv}$, converges to zero as the privacy budget $\rho \rightarrow \infty$. Thus, our algorithms achieve the same regret as their non-private counterparts in the low-privacy regime.

2. \textit{Asymptotic no price of privacy.} For a fixed privacy budget $\rho$, the Price of Privacy (PoP), i.e. $\mathrm{PoP} \triangleq \frac{\mathrm{Reg}_{priv} - \mathrm{Reg}_{non-priv}}{\mathrm{Reg}_{non-priv}}$ converges to zero as the horizon $\horizon$ increases. This observation resonates with both the theoretical regret upper bounds of the algorithms and the hardness suggested by the lower bounds, where cost due to privacy appears as lower-order terms.




\section{Conclusion and Future Works}\label{sec:conclusion}
We study bandits with $\rho$-zCDP and a centralised decision-maker for three settings: stochastic, linear and contextual bandits. First, we compare different ways of adapting DP to bandits. We adhere to the $\rho$-Interactive zCDP as the DP framework, as it encapsulates the other definitions. Then, for each bandit setting, we design a $\rho$-Interactive zCDP policy and show that the additional cost in the regret due to $\rho$-Interactive zCDP is negligible in comparison to the regret incurred oblivious to privacy. The three algorithms share similar algorithmic blueprint. They add calibrated \textit{Gaussian noise} and they run in \textit{adaptive episodes}. These ingredients allow devising a generic and simple algorithmic approach to make index-based bandit algorithms achieving privacy with minimal cost. We derive minimax regret lower bounds for finite-armed and linear bandits, showing the existence of two hardness regimes and privacy can be achieved for free in low-privacy regime.

One future direction for the linear contextual bandit is to lift the assumptions that the contexts are public and stochastic. For example, in personalised recommender systems, the context may contain sensitive information of individuals.
\textit{Designing and analysing an algorithm that does not rely on these assumptions, and achieves $\rho$-Interactive zCDP almost for free in linear contextual bandits, is an interesting open question}.

Another future direction is to derive regret lower bounds for bandits with $(\epsilon, \delta)$-DP. Both pure $\epsilon$-DP and $\rho$-zCDP enjoy a (`tight') group privacy property that gives meaningful lower bounds for bandits when applied with coupling arguments. These arguments fail to adapt to $(\epsilon, \delta)$-DP. An interesting technical challenge would be to adapt, for bandits, the fingerprinting lemma, which is a technique used for proving $(\epsilon, \delta)$-DP lower bounds~\cite{bun2014fingerprinting, kamath2022new}. For the algorithm design, it would be also interesting to see how to close the multiplicative gaps.

\section*{Acknowledgment}
This work is supported by the AI\_PhD@Lille grant. We acknowledge the support of the Métropole Européenne de Lille (MEL), ANR, Inria, Université de Lille, through the AI chair Apprenf number R-PILOTE-19-004-APPRENF. D. Basu acknowledges the Inria-Kyoto University Associate Team ``RELIANT'' for supporting the project, and the ANR JCJC for the REPUBLIC project (ANR-22-CE23-0003-01). We also thank Philippe Preux for his support.

\bibliographystyle{IEEEtran}
\bibliography{IEEEabrv,references}
\clearpage
\appendices

\section{Outline}
The appendices are organised as follows:
\begin{itemize}
    \item The relations between privacy definitions are detailed in Appendix~\ref{app:priv_def}.
    \item \adarpucb{} is proposed and analysed in Appendix~\ref{app:proof_stoc}.
    \item TABLE~\ref{tab:reg_comp} compares the regret upper bounds of our algorithms compared to the "converted" regret upper bounds from the pure DP bandit literature.
    \item A generic privacy proof and its specification for \adarpucb{}, \adargope{} and \adaroful{} is presented in Appendix~\ref{app:privacy_proof}.
    \item The regret analysis of \adargope{} alongside the concentration inequalities under optimal design are presented in Appendix~\ref{app:proof_lin}.
    \item The regret analysis of \adaroful{} alongside the concentration inequalities for private least square estimator are presented in Appendix~\ref{app:cont_lin_proofs}.
    \item A new proof to generate lower bounds for $\rho$-zCDP is developed in Appendix~\ref{app:lower_bounds} and adapted to bandits.
    \item Extended experiments are presented in Appendix~\ref{app:ext_exp}.
    \item Existing technical results and definitions are summarised in Appendix~\ref{app:tech_lem}
\end{itemize}

\begin{table*}
\caption{Comparaison between our regret upper bounds, and converted upper bounds from the pure-DP litterature}\label{tab:reg_comp}
\resizebox{\textwidth}{!}{
\begin{tabular}{  c  c  c } 
\hline
  \textbf{Bandit Setting} &  \textbf{Our Regret Upper Bound} &  \textbf{``Converted" Regret Upper Bounds} \\
  \hline\\
  \multirow{4}{*}{Finite-armed bandits} & $ \sum_{a: \Delta_a > 0} \left ( \frac{8 \beta}{\Delta_a} \log(\horizon) + \textcolor{blue}{8 \sqrt{\frac{\beta}{\rho}} \sqrt{\log(\horizon)}} \right )$ (Thm~\ref{thm:stoch_band_upper_dep}) & $ \sum_{a: \Delta_a > 0} \left ( \frac{8 \beta}{\Delta_a} \log(\horizon) + \textcolor{blue}{8 \sqrt{\frac{\beta}{\rho}} \log(\horizon)} \right )$ (Thm 7 in~\cite{azize2022privacy}) \\~\\
  & $ \bigO\left(\sqrt{\arms \horizon \log(\horizon)}\right) + \textcolor{blue}{\bigO \left( \frac{\arms}{\sqrt{\rho}} \sqrt{ \log(\horizon)} \right)}$ (Thm~\ref{thm:stoch_band_upper}) & $\bigO\left(\sqrt{\arms \horizon \log(\horizon)}\right) + \textcolor{blue}{\bigO \left( \frac{\arms}{\sqrt{\rho}} \log(\horizon) \right)}$ (Thm 12 in~\cite{azize2022privacy}) \\~\\
  Linear bandits & $\bigO \left ( \sqrt{d \horizon \log(\arms\horizon)} \right) + \textcolor{blue}{ \bigO \left (\frac{d}{\sqrt{\rho}}\log^{\frac{3}{2}}(\arms\horizon) \right)}$ (Thm~\ref{thm:lin_band_upper})   &  $\bigO \left ( \sqrt{d \horizon \log(\arms\horizon)} \right) + \textcolor{blue}{ \bigO \left (\frac{d^2}{\sqrt{\rho}}\log^{2}(\arms\horizon) \right)}$ (Eq.17 in~\cite{hanna2022differentially})\\
  Linear Contextual bandits & $\bigO\left( d \log(\horizon) \sqrt{\horizon} \right) + \textcolor{blue}{\bigO \left ( \frac{d^2}{\sqrt{\rho}}  \log(\horizon)^2 \right)}$ (Thm~\ref{thm:cont_band_upper})   &  -\\~\\
  \hline
\end{tabular}}
\end{table*}

\section{Privacy definitions for bandits}\label{app:priv_def}

In this section, we present the missing proofs of Section~\ref{sec:priv_def} and discuss privacy definitions for the contextual bandit setting.

\subsection{Proof of Proposition~\ref{prop:tab_view}}

\begin{repproposition}{prop:tab_view}[Relation between Table DP and View DP]
    For any policy $\pi$, we have that
    \begin{itemize}
        \item[(a)] $\mech^\pol$ is $\epsilon$-DP $\Leftrightarrow$ $\mathcal{V}^\pol$ is $\epsilon$-DP.
        \item[(b)] $\mech^\pol$ is $(\epsilon, \delta)$-DP $\Rightarrow$ $\mathcal{V}^\pol$ is $(\epsilon, \delta)$-DP.
        \item[(c)] $\mech^\pol$ is $\rho$-zCDP $\Rightarrow$ $\mathcal{V}^\pol$ $\rho$-zCDP.
        \item[(d)] $\mathcal{V}^\pol$ is $(\epsilon, \delta)$-DP $\Rightarrow$ $\mech^\pol$ is $(\epsilon, K^\horizon \delta)$-DP.
        \item[(e)] $\Pi_{\text{Table}}^{(\epsilon, \delta)} \subsetneq \Pi_{\text{View}}^{(\epsilon, \delta)}$
    \end{itemize}
where $\Pi_{\text{Table}}^{(\epsilon, \delta)}$ and $\Pi_{\text{View}}^{(\epsilon, \delta)}$ are the class of all policies verifying $(\epsilon, \delta)$-Table DP and $(\epsilon, \delta)$-View  DP respectively.
\end{repproposition}

Before proving the proposition, we define two handy reductions, to go from list to table of rewards and vice-versa.

\begin{reduction}[From list to table of rewards]\label{red:list_to_table}
     For every $r \in \real^T$ a list of rewards, we define $d(r)$ to be the table such that $d(r)_{t, i} = r_t$ for all $i \in [\arms]$ and all $t \in [\horizon]$. 
     
     In other words, $d(r)$ is the table of rewards where $r$ is concatenated colon-wise $ \arms$ times.

     This transformation has two interesting consequences:
     \begin{itemize}
         \item for every $E \in \mathcal{P}([K]^T)$,  $\mathcal{V}^\pol_r(E)  = \mech^\pol_{d(r)}(E)$
         \item for every $\alpha > 1$, $D_\alpha(\mathcal{V}^\pol_{r}\|\mathcal{V}^\pol_{r'}) = D_\alpha(\mathcal{M}^\pol_{d(r)}\|\mathcal{M}^\pol_{d(r')})$
         \item If $r \sim r'$ are neighbouring list of rewards, then $d(r) \sim d(r')$ are neighbouring table of rewards
     \end{itemize}
\end{reduction}

\begin{reduction}[From table of rewards to lists]\label{red:table_to_list}
     For every atomic event $a^\horizon \defn (a_1, \dots, a_\horizon)$ and a table of reward $d \in (\real^\arms)^\horizon$, we define $r(d, a^\horizon) \in \real^T$ to be the list of rewards such that $r(d, a^\horizon)_{t} = d_{t, a_t}$.
     
     In other words, $r(d, a^\horizon)$ is the list of rewards corresponding to the trajectory of $a^\horizon$ in $d$.

     This transformation has two interesting consequences:
     \begin{itemize}
         \item for every $a^\horizon$, $\mech^\pol_{d}(a^\horizon) = \mathcal{V}^\pol_{r(d, a^\horizon)}(a^\horizon)$

         \item If $d \sim d'$ are neighbouring table of rewards, then for every $a^\horizon$,  $r(d, a^\horizon) \sim r(d', a^\horizon)$ are neighbouring list of rewards.
     \end{itemize}
\end{reduction}

\begin{proof}[Proof (Proposition~\ref{prop:tab_view})]\,

    (b): Suppose that $\mech^\pol$ is $(\epsilon, \delta)$-DP.

    Let $r \sim r'$ two neighbouring lists of rewards. For every event $E \in \mathcal{P}([K]^T) $, we have that
    \begin{align*}
            \mathcal{V}^\pol_{\mathbf{r}}(E)-e^\epsilon\mathcal{V}^\pol_{r'}(E) = \mathcal{M}^\pol_{d(r)}(E)-e^\epsilon\mathcal{M}^\pol_{d(r')}(E) \leq \delta
    \end{align*}
    where the last inequality is because $\mech^\pol$ is $(\epsilon, \delta)$-DP and $d(r) \sim d(r')$.

    We conclude that $\mathcal{V}^\pol$ is $(\epsilon, \delta)$-DP.

    (c): Suppose that $\mech^\pol$ is $\rho$-zCDP.

    Let $r \sim r'$ two neighbouring lists of rewards. For every $\alpha >1 $, we have that
    \begin{align*}
            D_\alpha(\mathcal{V}^\pol_{r}\|\mathcal{V}^\pol_{r'}) = D_\alpha(\mathcal{M}^\pol_{d(r)}\|\mathcal{M}^\pol_{d(r')}) \leq \rho \alpha
    \end{align*}
    where the last inequality is because $\mech^\pol$ is $\rho$-zCDP and $d(r) \sim d(r')$.

    We conclude that $\mathcal{V}^\pol$ is $\rho$-zCDP.

    (a) $\Rightarrow)$ Is a direct consequence of (b) for $\delta =0$.
    
    $\Leftarrow)$ Suppose that $\mathcal{V}^\pol$ is $\epsilon$-DP.

    Let $d \sim d'$ be two tables of rewards in $(\real^\arms)^\horizon$.

    For $\epsilon$-DP, it is enough to consider atomic events $a^\horizon \defn (a_1, \dots, a_\horizon)$.

    For any atomic event $a^\horizon$, we have that
    \begin{align*}
        \mech^\pi_d(a^\horizon) =  \mathcal{V}^\pol_{r(d, a^\horizon)}(a^\horizon) \leq  e^\epsilon \mathcal{V}^\pol_{r(d', a^\horizon)}(a^\horizon) = e^\epsilon \mech^\pi_{d'}(a^\horizon)
    \end{align*}
     where the first inequality is because $\mathcal{V}^\pol$ is $\epsilon$-DP and $r(d, a^\horizon) \sim r(d', a^\horizon)$.

     We conclude that $\mathcal{M}^\pol$ is $\epsilon$-DP.

     (d) Suppose that $\mathcal{V}^\pol$ is $(\epsilon, \delta)$-DP.
     
     Let $d \sim d'$ be two tables of rewards in $(\real^\arms)^\horizon$.

     Let $E \in \mathcal{P}([\arms]^\horizon)$ be an event, i.e. a set of sequences. We have that
    \begin{align*}
        \mech^\pi_d (E) = \sum_{a^\horizon \in E} \mech^\pi_d (a^\horizon) &= \sum_{a^\horizon \in E} \mathcal{V}^\pol_{r(d, a^\horizon)}(a^\horizon) \\
        &\underset{(a)}{\leq} \sum_{a^\horizon \in E} (e^\epsilon \mathcal{V}^\pol_{r(d', a^\horizon)}(a^\horizon) + \delta) \\
        &\underset{(b)}{\leq} e^\epsilon \mech^\pi_{d'} (E) + \arms^\horizon \delta ,
    \end{align*}
where (a) holds true because $\mathcal{V}^\pol$ is $(\epsilon,\delta)$-DP, and
(b) is true because $\mathrm{card}(E) \leq \arms^\horizon$.

We conclude that $\mech^\pol$ is $(\epsilon, K^\horizon \delta)$-DP.

(e) To prove the strict inclusion, we build a policy $\pi$ for $\horizon = 3$, $\arms = 2$ with action $0$ and action $1$, and rewards in $\{0,1\}$.

A policy here is a sequence of three decision rules 
\begin{align*}
    \pi = \{ \pi_1, \pi_2, \pi_3 \},
\end{align*}
where each decision rule is a function from the history. Since the possible histories at each step are finite, specifying a decision rule is just specifying the probability weights of choosing action $0$ and action $1$ for every possible history.

We consider the following decision rules
\begin{align*}
    &\pol_1 = \begin{bmatrix}
2/3 & 1/3 
\end{bmatrix} \\
    &\pol_2 = \begin{bmatrix}
1/2 & 1/2 \\ 
1/3 & 2/3 \\ 
1/4 & 3/4\\ 
1/3 & 2/3 
\end{bmatrix} \\
    &\pol_3 = \begin{bmatrix}
1/2 & 1/2\\ 
1/3 & 2/3\\ 
1/4 & 3/4\\ 
1/5 & 4/5\\ 
1/2 & 1/2\\ 
2/3 & 1/3\\ 
1/4 & 3/4\\ 
0 & 1\\ 
1/3 & 2/3\\ 
1/7 & 6/7\\ 
3/4 & 1/4\\ 
2/5 & 3/5\\ 
1/2 & 1/2\\ 
1 & 0\\ 
1/4 & 3/4\\ 
2/3 & 1/3
\end{bmatrix}
\end{align*}
The history is first represented as a binary string, and then converted to decimals. Finally, the index in the decision rule corresponding to this decimal value is chosen. We elaborate this procedure in the two examples below.


\textit{Example 1.} If the policy observed the history $\{1, 0\}$, i.e. action 1 was played in the first round and the reward 0 was observed, this leads to index $2$ in $\pi_2$, so the policy plays arm 0 with probability $1/4$ and arm 1 with probability $3/4$.

\textit{Example 2.} If the policy observed the history $\{0, 1, 1, 1\}$, i.e. action 0 was played in the first round, the reward 1 was observed, then action 1 was played in the second round and the reward 1 was observed. This corresponds to index 7 in $\pi_3$. Thus, the policy plays arm 0 with probability $0$ and arm 1 with probability $1$.

Since the events and the neighbouring datasets are finite (and have a small number), it is easy to build the following two sets:
\begin{align*}
    A &= \left \{ \left( \mathcal{V}^\pol_{\mathbf{r}}(E) , \mathcal{V}^\pol_{\mathbf{r'}}(E)\right), \forall E \in \mathcal{P}([2]^3), ~\text{and}~ \forall \mathbf{r}\sim \mathbf{r'} \right\}\\
    B &=  \{ ( \mathcal{M}^\pol_{d}(E) , \mathcal{M}^\pol_{d'}(E)), \forall E \in \mathcal{P}([2]^3), ~\text{and}~ \forall d\sim d' \}
\end{align*}

$A$ and $B$ represent all the probability tuples $(p,q)$ computed on all neighbouring lists and tables of rewards, respectively, for all possible events on the sequence of actions.

Then, by checking over all the elements of $A$ and $B$, it is possible to show that $\pi$ is $(\epsilon_1, \delta_1)$-View DP but never $(\epsilon_1, \delta_1)$-Table DP for $\epsilon_1 = 0.95$ and $\delta_1 = 0.17$. Specifically, we mean that for $\epsilon_1 = 0.95$ and $\delta_1 = 0.17$, we obtain that $\forall (p,q) \in A, ~ p \leq e^{\epsilon_1} q + \delta_1$, while $\exists (p', q') \in B, ~ p' > e^{\epsilon_1} q' + \delta_1$. In fact, we can show that the smallest $\epsilon_0$, for which $\pi$ is $(\epsilon_0, \delta_1)$-Table DP, is $\epsilon_0 = 0.98$.

Thus, we conclude our proof with this construction.
\end{proof}

\subsection{Proof of Proposition~\ref{prop:int_table}}

\begin{remark}
    We recall that to check the interactive DP condition, it is enough to only consider deterministic adversaries (Lemma 2.2 in~\cite{vadhan2021concurrent}).
\end{remark}

\begin{repproposition}{prop:int_table}
    For any policy $\pi$, we have that
    \begin{itemize}
        \item[(a)] $\pi$ is $\rho$-Interactive zCDP $\Rightarrow$ $\pi$ is $\rho$-Table zCDP
        \item[(b)] $\pi$ is $\rho$-Interactive ADP if and only if, for every deterministic adversary $B = \{B_t\}_{t = 1}^\horizon$, $\pol^B$ is $\rho$-Table zCDP. Here, $\pol^B \defn \{ \pol^B_t \}_{t= 1}^\horizon$ is a post-processing of the policy $\pi$ induced by the adversary $B$ such that
    \end{itemize}
    \begin{align*}
        &\pol^B_t(a \mid a_1, r_1, .. , a_{t -1}, r_{t -1} ) \defn\\
        &\pol_t\Bigl(a \mid B_1(a_1), r_1, B_2(a_1, a_2), r_2, .. , B_{t - 1}(a_1, .. , a_{t -1}) , r_{t -1} \Bigl).
    \end{align*}
\end{repproposition}

\begin{proof}
(a) is direct by taking the identity-adversary $B^\text{id}$ defined by $B^\text{id}_t (o_1, \dots, o_t) = o_t.$    

(b) is direct by observing that for every deterministic adversary $B$, the view of adversary $B$ reduces to $\View ( B \leftrightarrow \mathcal{M}(d) ) = \mathcal{M}^{\pol^B}$.
\end{proof}

\subsection{Proof of Theorem~\ref{thm:sum_kl}}

\begin{reptheorem}{thm:sum_kl}[Group privacy for $\rho$-Interactive DP]
    If $\pi$ is a $\rho$-Interactive zCDP policy then, for any sequence of actions $(a_1, \dots, a_\horizon)$ and any two sequence of rewards $\textbf{r} \defn \{r_1, \dots, r_\horizon \}$ and $\textbf{r'} \defn \{r'_1, \dots, r'_\horizon \}$, we have that
    \begin{equation*}
        \sum_{t = 1}^\horizon \KL{\pi_t(. \mid \Hist_{t - 1})}{\pi_t(. \mid \Hist'_{t - 1})} \leq \rho \dham(\textbf{r},\textbf{r'})^2
    \end{equation*}
    where $\Hist_t \defn (a_1, r_1, \dots, a_t, r_t)$, $\Hist'_t \defn (a_1, r'_1, \dots, a_t, r'_t) $ and $ \dham(\textbf{r},\textbf{r'}) = \sum_{t = 1}^\horizon \ind{r_t \neq r'_t}$.
\end{reptheorem}
\begin{proof} 
Let $\textbf{a} \defn (a_1, \dots, a_\horizon)$ be a fixed sequence of actions. Let $\textbf{r} \defn \{r_1, \dots, r_\horizon \}$ and $\textbf{r'} \defn \{r'_1, \dots, r'_\horizon \}$ be two sequences of rewards.

\textbf{Step 1: The constant adversary.}    
We consider the constant adversary $B_\textbf{a}$ defined as
     \begin{align*}
         B_\textbf{a}(o_1, \dots, o_t) \defn a_t
     \end{align*}
     i.e. $B_\textbf{a}$ is the adversary that always queries at step $t$ the action $a_t$, independently of the actions recommended by the policy. Let $\pi_\textbf{a} \defn \pi^{B_\textbf{a}}$ be the policy corresponding to the post-processing $B_a$.

     Since $\pi$ is $\rho$-Interactive zCDP, using Proposition~\ref{prop:int_table}, (b), then $\cM^{\pi_\textbf{a}}$ is $\rho$-zCDP. And Proposition~\ref{prop:tab_view}, (c) gives that $\cV^{\pi_\textbf{a}}$ is $\rho$-zCDP.

\textbf{Step 2: Group privacy of zCDP.} Using the group privacy property of $\rho$-zCDP i.e. Theorem~\ref{thm:grp_priv} with $\alpha = 1$, we get that
     \begin{equation}~\label{eq:grp_priv_kl}
         \KL{\cV^{\pi_\textbf{a}}_{\textbf{r}}}{\cV^{\pi_\textbf{a}}_{\textbf{r'}})} \leq \rho~ \dham(\textbf{r},\textbf{r'})^2.
     \end{equation}

\textbf{Step 3: Decomposing the view of the constant adversary.}   On the other hand, we have that 
     \begin{align*}
         \cV^{\pi_\textbf{a}}_{\textbf{r}}(o_1, \dots, o_\horizon) = \prod_{t= 1}^\horizon  \pi_t(o_t \mid a_1, r_1, \dots, a_{t-1}, r_{t - 1}).
     \end{align*}
     
     In other words $\cV^{\pi_\textbf{a}}_{\textbf{r}} = \bigotimes_{t=1}^\horizon \pi_t(. \mid a_1, r_1, \dots, a_{t-1}, r_{t - 1})$. 
     
     Similarly,  $\cV^{\pi_\textbf{a}}_{\textbf{r'}} = \bigotimes_{t=1}^\horizon \pi_t(. \mid a_1, r'_1, \dots, a_{t-1}, r'_{t - 1})$. 

     Hence, we get
     \begin{align}~\label{eq:kl_sum}
         \KL{\cV^{\pi_\textbf{a}}_{\textbf{r}}}{\cV^{\pi_\textbf{a}}_{\textbf{r'}})} =  \sum_{t = 1}^\horizon \KL{\pi_t(. \mid \Hist_{t - 1})}{\pi_t(. \mid \Hist'_{t - 1})}
     \end{align}

     Plugging Equaion~\eqref{eq:kl_sum} in Inequality~\eqref{eq:grp_priv_kl} concludes the proof. 
\end{proof}

\subsection{Privacy definitions for contextual bandits and Joint DP.}\label{app:priv_cont}
Joint DP is a definition of privacy proposed by~\cite{shariff2018differentially} for linear contextual bandits when both contexts and rewards contain sensitive information. First, we recall their definition adapted to our notations and terminology.


\begin{definition}[Joint ``View" DP~\cite{shariff2018differentially}]\label{def:joint-dp}
    We say two sequences $S \defn \{ (\mathcal{A}_1, r_1), (\mathcal{A}_2, r_2), \dotsc, (\mathcal{A}_n, r_n) \}$ and $S' \defn \{ (\mathcal{A}'_1, r'_1), \dotsc, (\mathcal{A}'_n, r'_n) \}$ are \emph{$t$-neighbors} if for all $s \neq t$ it holds that $(\mathcal{A}_{s},r_{s}) = (\mathcal{A}'_{s}, r'_{s})$.

    A randomised algorithm $\pi$ for the contextual bandit problem is
    \emph{$(\epsilon,\delta)$-Jointly ``View" Differentially Private} (View JDP) if for any $t$ and any pair of $t$-neighbouring sequences $S$ and $S'$, and any subset ${E}_{>t} \subset \mathcal{A}_{t+1} \times \mathcal{A}_{t+2} \times \dotsb \times \mathcal{A}_{n}$ of sequence of actions ranging from step $t+1$ to the end of the sequence, it holds that
    \begin{align*}
      \Pr\{\pi(S)\in E_{>t}\} \leq e^\epsilon \Pr\{\pi(S') \in E_{>t}\} +\delta.
    \end{align*}
\end{definition}

Joint DP requires that changing the context at step $t$ does not affect \textit{only the future rounds $(>t)$}. In contrast, the standard notion of DP would require that the change does not have any effect on the full sequence of actions, including the one chosen at step $t$. \cite{shariff2018differentially} show that the standard notion of DP for linear contextual bandits, where both the reward and contexts are private, always leads to linear regret.

In light of the discussion on the difference between Table DP and View DP, the Joint DP as expressed in~\cite{shariff2018differentially} is similar to the View DP definition. This is because the input considered is a sequence of context and \textit{observed} rewards. To define a Table DP counterpart of it, we consider a joint table of contexts and rewards, i.e. $S \defn \{ (\mathcal{A}_1, x_1), (\mathcal{A}_2, x_2), \dotsc, (\mathcal{A}_n, x_n) \}$ and $S' \defn \{ (\mathcal{A}'_1, x'_1), \dotsc, (\mathcal{A}'_n, x'_n) \}$ as input. Here, $x_t$ is the row of potential rewards of user $u_t$. Hence, the modified Table JDP definition protects the user by protecting all the potential responses rather than only the observed ones.

\begin{definition}[Joint ``Table" DP]\label{def:joint--table-dp}
    We say two sequences $S \defn \{ (\mathcal{A}_1, x_1), (\mathcal{A}_2, x_2), \dotsc, (\mathcal{A}_T, x_T) \}$ and $S' \defn \{ (\mathcal{A}'_1, x'_1), \dotsc, (\mathcal{A}'_T, x'_T) \}$ are \emph{$t$-neighbours} if for all $s \neq t$ and $s \in \{1, \ldots, T\}$, $(\mathcal{A}_{s},x_{s}) = (\mathcal{A}'_{s}, x'_{s})$.

    A randomised algorithm $\pi$ for the contextual bandit problem is
    \emph{$(\epsilon,\delta)$-Jointly ``Table" Differentially Private} (Table JDP) if for any $t \in \{1, \ldots, T\}$ and any pair of $t$-neighbouring sequences $S$ and $S'$, and any subset ${E}_{>t} \subset \mathcal{A}_{t+1} \times \mathcal{A}_{t+2} \times \dotsb \times \mathcal{A}_{n}$ of sequence of actions ranging from step $t+1$ to the end of the sequence, it holds that
    \begin{align*}
      \Pr\{\pi(S)\in E_{>t}\} \leq e^\epsilon \Pr\{\pi(S') \in E_{>t}\} +\delta.
    \end{align*}
\end{definition}

In Section~\ref{sec:lin_cont_bandits}, we only consider the rewards to be private, while the contexts are supposed to be public. Thus, we do not need to adhere to Table JDP, and the definitions of Section~\ref{sec:priv_def} can readily be applied. This assumption can make sense in applications where the context does not contain users' private information. For example, in clinical trials, one can take the context to be some set of patient's public features. In this case, the only private information to be protected is the reaction of the patient to the medicine, which is the reward.

When contexts contain sensitive users' information, \adaroful{} and its analysis do not hold anymore. In this case, a private bandit algorithm should verify the stronger Joint Table DP constraint. In paragraph ``2. Adapting AdaC-OFUL for private context" of Section~\ref{sec:cont_lin_bd}, we explain how to derive a Table JDP version of \adaroful. However, the present regret analysis does not hold anymore. It would be an interesting future work to address this open question.

\section{Finite-armed bandits with zCDP}\label{app:proof_stoc}

In this section, we first specify the setting of finite-armed bandits with $\rho$-Interactive zCDP. Then, we present \adarpucb{} and analyse its regret to quantify the cost of $\rho$-Interactive zCDP.


\subsection{Setting}
Let $\model = (P_a : a \in [\arms])$ be a bandit instance with $\arms$ arms and means $(\mu_a)_{a \in [\arms]}$. The goal is to design a $\rho$-Interactive zCDP policy 
 $\pi$ that maximises the cumulative reward, or minimises regret over a horizon $\horizon$:
\begin{align}\label{eq:regret}
\reg_{\horizon}(\pi, \nu) &\defn \horizon \mu^\star -\expect\left[\sum_{t=1}^{\horizon} r_{t}\right] = \sum_{a = 1}^\arms \Delta_{a} \mathbb{E}\left[\pulls_{a}(\horizon)\right]
\end{align}
Here, $\mu^\star \defn \max_{a \in [\arms]} \mu_a$ is the mean of the optimal arm $a^\star$, $\Delta_a \defn \mu^\star - \mu_a$ is the sub-optimality gap of the arm $a$ and $\pulls_{a}(\horizon)\defn\sum_{t=1}^{\horizon} \ind{a_{t}=a} $ is the number of times the arm $a$ is played till $\horizon$, where the expectation is taken both on the randomness of the environment $\nu$ and the policy $\pol$.

\subsection{Algorithm}
\adarpucb{} is an extension of the generic algorithmic wrapper proposed by~\cite{azize2022privacy} for bandits with $\rho$-Interactive zCDP. Following~\cite{azize2022privacy}, \adarpucb{} relies on three ingredients: \textit{arm-dependent doubling}, \textit{forgetting}, and \textit{adding calibrated Gaussian noise}. First, the algorithm runs in episodes. The \textit{same arm} is played for a whole episode, and \textit{double} the number of times it was last played. Second, at the beginning of a new episode, the index of arm $a$, as defined in Eq.~\eqref{eq:index_def_rdp}, is computed only using samples from the last episode, where arm $a$ was played, while forgetting all the other samples. In a given episode, the arm with the highest index is played for all the steps. Due to these two ingredients, namely \textit{doubling} and \textit{forgetting}, each empirical mean computed in the index of Eq.~\eqref{eq:index_def_rdp} only needs to be $\rho$-zCDP for the algorithm to be $\rho$-Interactive zCDP, avoiding the need of composition theorems.

\begin{algorithm}
\caption{\adarpucb{}}\label{alg:adap_ucb}
\begin{algorithmic}[1]
\STATE {\bfseries Input:} Privacy budget \textcolor{blue}{$\rho$}, an environment $\nu$ with $\arms$ arms, optimism parameter $\beta>3$
\STATE {\bfseries Output:} Actions satisfying $\rho$-Interactive zCDP
\STATE {\bfseries Initialisation:} Choose each arm once and let $t=\arms$
\FOR{$\episode = 1, 2, \dots$} 
\STATE Let $t_{\episode} = t + 1$
\STATE Compute $a_{\episode} = \operatorname{argmax}_{a} \color{blue}{\operatorname{I}^{\rho}_{a}(t_{\episode} - 1, \beta)}$ (Eq.~\eqref{eq:index_def_rdp})
\STATE Choose arm $a_{\episode}$ until round t such that $N_{a_\episode}(t) = 2N_{a_\episode}(t_{\episode} - 1)$
\ENDFOR
\end{algorithmic}
\end{algorithm}

For \adarpucb{}, we use the private index to select the arms (Line 6 of Algorithm~\ref{alg:adap_ucb}) as
\begin{equation}\label{eq:index_def_rdp}
\operatorname{I}^{\rho}_{a}(t_{\episode} - 1, \beta) \triangleq \hat{\mu}_a^{\episode} +
\textcolor{blue}{\mathcal{N}\left(0,\,\sigma_{a,\ell}^2 \right)} + B_a(t_{\episode} - 1, \beta).
\end{equation}
Here, $\hat{\mu}_a^{\episode}$ is the empirical mean of rewards collected in the last episode in which arm $a$ was played, the variance of the Gaussian noise is $$\sigma_{a,\ell}^2 \defn \frac{1}{2 \rho \times \left( \frac{1}{2} N_a(t_\episode - 1) \right)^2}$$ and the exploration bonus $B_a(t_{\episode} - 1, \beta)$ is defined as $$\sqrt{ \left(\frac{1}{ 2 \times \frac{1}{2} N_a(t_\episode - 1) } + \color{blue}{\frac{1}{\rho \times \left( \frac{1}{2} N_a(t_\episode - 1) \right)^2}} \right ) \beta \log(t_{\episode}) }.$$ The term in {\color{blue}blue} rectifies the non-private confidence bound of UCB for the added Gaussian noise.

\subsection{Concentration inequalities}

\begin{lemma}\label{lem:ucb_concentration}
Assume that $(X_i)_{1 \leq i \leq n}$ are iid random variables in $[0,1]$, with $\mathbb{E}(X_i) = \mu$.
Then, for any $\delta \geq 0$, 
\begin{equation}\label{eq:low_bound_ucb}
\mathbb{P} \left ( \hat{\mu}_n + Z_n - \sqrt{ \left (\frac{1}{2n} + \frac{1}{\rho n^2} \right ) \log\left ( \frac{1}{\delta} \right )} \geq \mu  \right ) \leq \delta,
\end{equation}
and
\begin{equation}\label{eq:up_bound_ucb}
\mathbb{P} \left ( \hat{\mu}_n + Z_n + \sqrt{ \left (\frac{1}{2n} + \frac{1}{\rho n^2} \right ) \log\left ( \frac{1}{\delta} \right )} \leq \mu  \right ) \leq \delta ,
\end{equation}
where $\hat{\mu}_n = \frac{1}{n} \sum_{t=1}^n X_t$ and $Z_n \sim \mathcal{N}\left (0,\,\frac{1}{2 \rho n^2 } \right).$
\end{lemma}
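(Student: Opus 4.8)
The plan is to prove Lemma~\ref{lem:ucb_concentration} by decomposing the deviation $\hat\mu_n + Z_n - \mu$ into the non-private empirical error $\hat\mu_n - \mu$ and the injected Gaussian noise $Z_n$, and then bounding the sum of these two independent mean-zero quantities by a single sub-Gaussian tail bound. First I would observe that $\hat\mu_n - \mu = \frac1n\sum_{t=1}^n (X_t - \mu)$ is a mean of $n$ i.i.d.\ bounded random variables, hence by Hoeffding's lemma each term $X_t - \mu$ is $\tfrac14$-sub-Gaussian (since $X_t \in [0,1]$), so $\hat\mu_n - \mu$ is sub-Gaussian with variance proxy $\tfrac{1}{4n}$. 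Independently, $Z_n \sim \mathcal N(0, \tfrac{1}{2\rho n^2})$ is exactly sub-Gaussian with variance proxy $\tfrac{1}{2\rho n^2}$. Since $Z_n$ is independent of the $X_i$'s, the sum $\hat\mu_n + Z_n - \mu$ is sub-Gaussian with variance proxy $\sigma^2 \triangleq \tfrac{1}{4n} + \tfrac{1}{2\rho n^2}$.

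The key step is then the standard sub-Gaussian tail inequality: for any mean-zero $\sigma^2$-sub-Gaussian random variable $Y$ and any $u \geq 0$, $\Pr(Y \geq u) \leq \exp\!\left(-\tfrac{u^2}{2\sigma^2}\right)$. Applying this with $Y = \hat\mu_n + Z_n - \mu$ and $u = \sqrt{2\sigma^2 \log(1/\delta)} = \sqrt{\left(\tfrac{1}{2n} + \tfrac{1}{\rho n^2}\right)\log(1/\delta)}$ gives exactly
\[
\Pr\!\left(\hat\mu_n + Z_n - \mu \geq \sqrt{\left(\tfrac{1}{2n} + \tfrac{1}{\rho n^2}\right)\log(1/\delta)}\right) \leq \delta,
\]
which is a rearrangement of~\eqref{eq:low_bound_ucb}. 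The second inequality~\eqref{eq:up_bound_ucb} follows symmetrically by applying the same bound to $-(\hat\mu_n + Z_n - \mu)$, which is also $\sigma^2$-sub-Gaussian. If one prefers to avoid invoking a named sub-Gaussian-sum lemma, I would instead bound the moment generating function directly: $\mathbb E\big[e^{\lambda(\hat\mu_n + Z_n - \mu)}\big] = \mathbb E\big[e^{\lambda(\hat\mu_n - \mu)}\big]\,\mathbb E\big[e^{\lambda Z_n}\big] \leq \exp\!\left(\tfrac{\lambda^2}{8n}\right)\exp\!\left(\tfrac{\lambda^2}{4\rho n^2}\right) = \exp\!\left(\tfrac{\lambda^2 \sigma^2}{2}\right)$, then optimise the Chernoff bound $\Pr(Y \geq u) \leq e^{-\lambda u + \lambda^2\sigma^2/2}$ over $\lambda$ by taking $\lambda = u/\sigma^2$.

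I do not expect any genuine obstacle here; the statement is a textbook concentration result and the proof is a two-line assembly once the sub-Gaussianity of each piece is noted. The only point requiring a modicum of care is tracking the constants correctly: the bounded-variable contribution has variance proxy $\tfrac{1}{4n}$ (giving the $\tfrac{1}{2n}$ inside the square root after multiplying by $2\log(1/\delta)$), while the Gaussian contribution has variance proxy $\tfrac{1}{2\rho n^2}$ (giving the $\tfrac{1}{\rho n^2}$ term), and these add because of independence. I would also state explicitly that independence of $Z_n$ from $(X_i)_i$ is what licenses factoring the MGF — this is the one hypothesis that must be flagged, since the Gaussian noise in \adarpucb{} is drawn fresh and independent of the observed rewards.
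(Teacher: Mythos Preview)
Your proposal is correct and matches the paper's proof essentially line for line: the paper also sets $Y = \hat\mu_n + Z_n - \mu$, invokes the bounded-variable and independent-sum properties of sub-Gaussian variables to obtain variance proxy $\tfrac{1}{4n} + \tfrac{1}{2\rho n^2}$, and concludes via the standard sub-Gaussian tail bound. There is nothing to add.
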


\begin{proof}
Let $Y = (\hat{\mu}_n + Z_n - \mu)$.\\
Using Properties 2 and 3 of Lemma~\ref{lem:prop_subg}, we get that $Y$
is $\sqrt{\frac{1}{4n} + \frac{1}{2 \rho n^2}}$-subgaussian.\\
We conclude using the concentration on subgaussian random variables, i.e. Lemma~\ref{lem:conc_subg}.
\end{proof}

\subsection{Regret analysis}
\begin{theorem}[Part a: Problem-dependent regret]\label{thm:stoch_band_upper_dep}
For rewards in $[0,1]$ and $\beta>3$, \adarpucb{} yields a regret upper bound of 
$$
  \sum_{a: \Delta_a > 0} \left ( \frac{8 \beta}{\Delta_a} \log(\horizon) + 8 \sqrt{\frac{\beta}{\rho}} \sqrt{\log(\horizon)} +  \frac{2\beta}{\beta - 3} \right ).
$$
\end{theorem}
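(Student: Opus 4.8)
��The plan is to follow the standard optimism-based regret analysis for UCB-type algorithms, but adapted to the episodic ``doubling and forgetting'' structure of \adarpucb{} and to the inflated confidence widths introduced by the Gaussian mechanism. First I would fix a sub-optimal arm $a$ (with $\Delta_a > 0$) and bound the expected number of episodes in which $a$ is played. The key observation is that because the algorithm doubles the play-count of an arm each time it is selected, the total number of pulls $N_a(\horizon)$ is at most roughly twice the number of pulls at the start of the last episode where $a$ was played; hence it suffices to control the number of pulls $n$ for which $a$ could still be selected over $a^\star$. Concretely, if at the start of episode $\ell$ the arm $a$ has been played $n = \frac{1}{2}N_a(t_\ell - 1)$ times in its last episode, then selecting $a$ requires $\operatorname{I}^\rho_a(t_\ell - 1, \beta) \geq \operatorname{I}^\rho_{a^\star}(t_\ell - 1, \beta)$.

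Second, I would set up the ``good event'' on which all the private indices concentrate. Using the concentration inequality of Lemma~\ref{lem:ucb_concentration} (Eq.~\eqref{eq:low_bound_ucb} and Eq.~\eqref{eq:up_bound_ucb}) with $\delta = t_\ell^{-\beta}$ and noting that $\beta > 3$ makes $\sum_{t} t^{-\beta+1}$ summable, I would argue that with high probability: (i) the optimal arm's index never underestimates $\mu^\star$, i.e. $\operatorname{I}^\rho_{a^\star} \geq \mu^\star$; and (ii) arm $a$'s index satisfies $\operatorname{I}^\rho_a(t_\ell - 1, \beta) \leq \mu_a + 2 B_a(t_\ell - 1, \beta)$ where $B_a(t_\ell - 1, \beta) = \sqrt{\left(\frac{1}{2n} + \frac{1}{\rho n^2}\right)\beta \log(t_\ell)}$ with $n = \frac{1}{2}N_a(t_\ell-1)$. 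Combining (i) and (ii), if arm $a$ is selected in episode $\ell$ then $\mu^\star \leq \mu_a + 2 B_a(t_\ell - 1, \beta)$, i.e. $\Delta_a \leq 2\sqrt{\left(\frac{1}{2n} + \frac{1}{\rho n^2}\right)\beta \log(\horizon)}$. Solving this inequality for $n$ via the elementary fact that $\Delta_a^2 \leq 4\left(\frac{1}{2n} + \frac{1}{\rho n^2}\right)\beta \log(\horizon)$ implies $n = O\!\left(\frac{\beta \log \horizon}{\Delta_a^2} + \sqrt{\frac{\beta \log\horizon}{\rho}}\cdot\frac{1}{\Delta_a}\right)$ (handle the two summands separately: either $\frac{1}{2n}$ dominates, giving $n \lesssim \beta\log\horizon/\Delta_a^2$, or $\frac{1}{\rho n^2}$ dominates, giving $n \lesssim \sqrt{\beta\log\horizon/\rho}/\Delta_a$). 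Multiplying by the doubling factor $2$ gives a bound on $N_a(\horizon)$, and then $\Delta_a N_a(\horizon) \leq \frac{8\beta}{\Delta_a}\log\horizon + 8\sqrt{\frac{\beta}{\rho}}\sqrt{\log\horizon}$ on the good event.

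Third, I would account for the contribution of the ``bad event'' where concentration fails. A union bound over episodes and arms, using $\beta > 3$ so that $\sum_{\ell} \ell^{-(\beta-1)}$ or equivalently $\sum_t t^{-(\beta-2)}$ converges, contributes an additive $O\!\left(\frac{\beta}{\beta-3}\right)$ term per sub-optimal arm (this is where the exact constant $\frac{2\beta}{\beta-3}$ comes from, after carefully bounding the tail series). Summing over all $a$ with $\Delta_a > 0$ yields the claimed bound $\sum_{a:\Delta_a>0}\left(\frac{8\beta}{\Delta_a}\log\horizon + 8\sqrt{\frac{\beta}{\rho}}\sqrt{\log\horizon} + \frac{2\beta}{\beta-3}\right)$.

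The main obstacle I anticipate is the bookkeeping around the episodic/forgetting structure: because each index is computed only from the most recent episode's samples (not all historical samples), the effective sample size $n$ in each selection test is $\frac{1}{2}N_a(t_\ell-1)$ rather than the cumulative count, and one must argue carefully that the \emph{doubling} ensures that the number of episodes is logarithmic and that the final pull count is at most twice the ``stopping'' $n$. Getting the constants exactly right — in particular isolating the $\frac{1}{2n}$ and $\frac{1}{\rho n^2}$ regimes cleanly to land on $8\beta/\Delta_a$ and $8\sqrt{\beta/\rho}$ rather than looser constants — and tracking the $\frac{2\beta}{\beta-3}$ tail term through the union bound will require the most care. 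The rest (optimism, concentration) is standard once the private confidence width $B_a$ and the Gaussian-noise correction from Lemma~\ref{lem:prop_subg} are in place. Full details are deferred to the end of this section.
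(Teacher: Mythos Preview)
Your proposal is correct and follows essentially the same logic as the paper's proof. The only difference is packaging: the paper invokes a ready-made generic decomposition (Theorem~11 of~\cite{azize2022privacy}), which already yields
\[
\mathbb{E}[N_a(\horizon)] \leq 2^{\ell+1} + \mathbb{P}(G_{a,\ell,\horizon}^c)\,\horizon + \frac{\beta}{\beta-3}
\]
for the single event $G_{a,\ell,\horizon} = \{\hat{\mu}_{a,2^\ell} + Z_\ell + b_{\ell,\horizon} < \mu_1\}$, and then simply chooses $\ell$ so that $\Delta_a > 2b_{\ell,\horizon}$ (exactly your ``solve for $n$'' step) and bounds $\mathbb{P}(G^c) \leq \horizon^{-\beta}$ via Lemma~\ref{lem:ucb_concentration}. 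You instead propose to rebuild that decomposition from scratch --- two-sided good event on both $a^\star$ and $a$, then a union bound over episodes --- which is precisely what Theorem~11 of~\cite{azize2022privacy} does internally. Both routes land on the same threshold $2^\ell < \tfrac{2\beta}{\Delta_a^2}\log\horizon + 2\sqrt{\beta/(\rho\Delta_a^2)}\sqrt{\log\horizon}$ and hence the same constants after the doubling factor; the paper's version is just shorter because the episodic bookkeeping you flag as the main obstacle is already absorbed into the cited lemma.
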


\begin{proof}

By the generic regret decomposition of  Theorem 11 in~\cite{azize2022privacy}, for every sub-optimal arm $a$, we have that
\begin{equation}~\label{eq:reg_deco}
    \mathbb{E} [N_a(\horizon)] \leq 2^{\episode+1} + \mathbb{P}\left( G_{a, \episode, \horizon} ^c \right) \horizon + \frac{\beta}{\beta -3},
\end{equation}

where
$$
G_{a, \episode, \horizon} = \left \{ \hat{\mu}_{a, 2^\episode} + Z_\ell + b_{\ell, \horizon}  < \mu_1  \right \}.
$$
such that $b_{\ell, \horizon} \defn \sqrt{ \left(\frac{1}{2 \times 2^\episode} + \frac{1}{ \rho \times (2^\episode)^2} \right) \beta \log(\horizon)} $ and $ Z_\ell \sim \mathcal{N}\left(0,\, 1/\left(2 \rho \times \left( 2^\episode \right)^2\right)  \right)$.

\underline{\textbf{Step 1: Choosing an $\episode$.}} Now, we observe that
\begin{align*}
    \mathbb{P}(  G_{a, \episode, \horizon}  ^c) &= \mathbb{P} \left ( \hat{\mu}_{a, 2^\episode} + Z_\ell + b_{\ell, \horizon}  \geq \mu_1   \right ) \\
    &= \mathbb{P} \left ( \hat{\mu}_{a, 2^\episode} + Z_\ell - b_{\ell, \horizon} \geq \mu_a + \epsilon \right )
\end{align*}
for $\epsilon = \Delta_a - 2 b_{\ell, \horizon}$.

The idea is to choose $\episode$ big enough so that $\epsilon \geq 0$.

Let us consider the contrary, i.e.
\begin{align}\label{eq:gamma}
    \epsilon < 0 &\Rightarrow 2^\episode < \frac{2 \beta \log(\horizon)}{\Delta_a^2} \left(1 + \Delta_a \sqrt{\frac{1}{\rho \beta \log(\horizon)}} \right ) \notag\\
    &\Rightarrow 2^\episode < \frac{2 \beta}{\Delta_a^2} \log(\horizon) + 2 \sqrt{\frac{\beta}{\rho \Delta_a^2}} \sqrt{\log(\horizon)}
\end{align}

Thus, by choosing 
$$
\episode = \left \lceil \frac{1}{\log(2) } \log \left(\frac{2 \beta}{\Delta_a^2} \log(\horizon) + 2 \sqrt{\frac{\beta}{\rho \Delta_a^2}} \sqrt{\log(\horizon)} \right ) \right \rceil
$$
we ensure $\epsilon >0$. This also implies that

$$
\mathbb{P}(  G_{a, \episode, \horizon}  ^c) \leq \mathbb{P} \left ( \hat{\mu}_{a, 2^\episode} + Z_\ell -b_{\ell, \horizon} \geq \mu_a \right ) \leq \frac{1}{\horizon^\beta}
$$ 

The last inequality is due to Equation~\ref{eq:low_bound_ucb} of Lemma~\ref{lem:ucb_concentration}, with $n = 2^\ell$ and $\delta = \horizon^{- \beta}$.

\underline{\textbf{Step 2: The regret bound.}} Plugging the choice of $\ell$ and the upper bound on $\mathbb{P}(  G_{a, \episode, \horizon}  ^c)$ in Inequality~\ref{eq:reg_deco} gives
\begin{align}\label{eq:upper_bound_expected}
    \mathbb{E}[N_a(\horizon)] &\leq \frac{\beta}{\beta - 3}  + 2^{\episode + 1} + T \times \frac{1}{T^\beta} \notag \\
    &\leq \frac{8 \beta}{\Delta_a^2} \log(\horizon) + 8 \sqrt{\frac{\beta }{\rho \Delta_a^2}} \sqrt{\log(\horizon)} + \frac{2\beta}{\beta - 3}.
\end{align}
Plugging this upper bound back in the definition of problem-dependent regret, we get that the regret $\reg_{\horizon}(\adarpucb{}, \nu)$ is upped bounded by
\begin{align*}
     \sum_{a: \Delta_a > 0} \left ( \frac{8 \beta}{\Delta_a} \log(\horizon) + 8 \sqrt{\frac{\beta }{\rho}} \sqrt{\log(\horizon)} +  \frac{2\beta}{\beta - 3} \right ).
\end{align*}

\end{proof}

\begin{theorem}[Part b: Minimax regret]\label{thm:stoch_band_upper}
For rewards in $[0,1]$ and $\beta>3$, \adarpucb{} yields a regret upper bound of
$$
 \bigO\left(\sqrt{\arms \horizon \log(\horizon)}\right) + \bigO \left( \arms \sqrt{ \frac{ 1 }{\rho}\log(\horizon)} \right).
$$
\end{theorem}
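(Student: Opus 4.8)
The plan is to derive the minimax (problem-independent) regret bound of Theorem~\ref{thm:stoch_band_upper} directly from the problem-dependent bound of Theorem~\ref{thm:stoch_band_upper_dep}, using the classical ``gap-splitting'' argument. Recall that we have shown, for every sub-optimal arm $a$, the per-arm pull bound
\begin{equation*}
    \mathbb{E}[N_a(\horizon)] \leq \frac{8 \beta}{\Delta_a^2} \log(\horizon) + 8 \sqrt{\frac{\beta }{\rho \Delta_a^2}} \sqrt{\log(\horizon)} + \frac{2\beta}{\beta - 3},
\end{equation*}
from Inequality~\eqref{eq:upper_bound_expected}. Thus the contribution of arm $a$ to the regret is $\Delta_a \mathbb{E}[N_a(\horizon)] \leq \frac{8\beta}{\Delta_a}\log(\horizon) + 8\sqrt{\frac{\beta}{\rho}}\sqrt{\log(\horizon)} + \frac{2\beta}{\beta-3}\Delta_a$. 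The first term blows up as $\Delta_a \to 0$, so I would threshold: fix a cutoff $\Delta > 0$ to be chosen, and split the sum over sub-optimal arms into those with $\Delta_a \leq \Delta$ and those with $\Delta_a > \Delta$.

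First I would bound the small-gap arms trivially: $\sum_{a:\, 0 < \Delta_a \leq \Delta} \Delta_a \mathbb{E}[N_a(\horizon)] \leq \Delta \sum_a \mathbb{E}[N_a(\horizon)] \leq \Delta \horizon$, since the total number of pulls is $\horizon$. Next, for the large-gap arms I would use the per-arm regret bound above, bounding $\frac{8\beta}{\Delta_a} \leq \frac{8\beta}{\Delta}$ and $\Delta_a \leq 1$ (rewards in $[0,1]$), giving $\sum_{a:\, \Delta_a > \Delta} \Delta_a \mathbb{E}[N_a(\horizon)] \leq \arms\left(\frac{8\beta}{\Delta}\log(\horizon) + 8\sqrt{\frac{\beta}{\rho}}\sqrt{\log(\horizon)} + \frac{2\beta}{\beta-3}\right)$. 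Adding the two pieces, the total regret is at most $\Delta \horizon + \frac{8\beta \arms}{\Delta}\log(\horizon) + 8\arms\sqrt{\frac{\beta}{\rho}}\sqrt{\log(\horizon)} + \frac{2\beta \arms}{\beta - 3}$. Now I would optimise over $\Delta$: the first two terms are balanced by choosing $\Delta = \sqrt{\frac{8\beta \arms \log(\horizon)}{\horizon}}$, which yields $\Delta \horizon + \frac{8\beta \arms}{\Delta}\log(\horizon) = 2\sqrt{8\beta \arms \horizon \log(\horizon)} = \bigO(\sqrt{\arms \horizon \log(\horizon)})$ (treating $\beta$ as a fixed constant $>3$).

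Collecting everything, the regret is bounded by $\bigO(\sqrt{\arms\horizon\log(\horizon)}) + \bigO(\arms\sqrt{\frac{1}{\rho}\log(\horizon)}) + \bigO(\arms)$, and since the $\bigO(\arms)$ term is dominated by $\bigO(\sqrt{\arms\horizon\log(\horizon)})$ for $\horizon \geq \arms$, this gives exactly the claimed bound $\bigO(\sqrt{\arms\horizon\log(\horizon)}) + \bigO(\arms\sqrt{\frac{1}{\rho}\log(\horizon)})$. There is no serious obstacle here; the only mild subtlety I would be careful about is the one present in all such arguments, namely that the chosen threshold $\Delta$ need not equal any actual gap $\Delta_a$, which is fine because the split is purely a bookkeeping device and the bounds used on each side hold for all arms in the respective ranges. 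I would also note explicitly that the privacy-dependent term $8\arms\sqrt{\beta/\rho}\sqrt{\log \horizon}$ is unaffected by the gap-splitting since it is already gap-free in the per-arm bound, so it simply carries through with the factor $\arms$.
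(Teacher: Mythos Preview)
Your proposal is correct and follows essentially the same approach as the paper: split the regret sum at a threshold $\Delta$, bound the small-gap arms by $\Delta \horizon$, use the per-arm pull bound~\eqref{eq:upper_bound_expected} for the large-gap arms, and optimise with $\Delta = \sqrt{8\beta \arms \log(\horizon)/\horizon}$. The only cosmetic difference is how the residual $\bigO(\arms)$ constant term is written, which is immaterial.
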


\begin{proof}
Let $\Delta$ be a value to be tuned later.

We observe that
\begin{align*}
    &\reg_{\horizon}(\adapucb{}, \nu) = \sum_a \Delta_a \mathbb{E}[N_a(\horizon)]\\
    &= \sum_{a: \Delta_a \leq \Delta} \Delta_a \mathbb{E}[N_a(\horizon) + \sum_{a: \Delta_a > \Delta} \Delta_a \mathbb{E}[N_a(\horizon)]\\
    &\leq \horizon \Delta + \sum_{a: \Delta_a > \Delta} \Delta_a \left ( \frac{8 \beta}{\Delta_a^2} \log(\horizon) + 8 \sqrt{\frac{\beta \log(\horizon) }{\rho \Delta_a^2}} + \frac{2\beta}{\beta - 3} \right )\\
    &\leq \horizon \Delta + \frac{8 \beta \arms \log(\horizon)}{\Delta} + 8 \arms \sqrt{\frac{\beta \log(\horizon)}{\rho}} + \frac{3\beta
    }{\beta - 3} \sum_a \Delta_a\\
    &\leq 4 \sqrt{ 2 \beta \arms \horizon \log(\horizon)} + 8 \arms \sqrt{\frac{\beta \log(\horizon)}{\rho}} + \frac{3\beta
    }{\beta - 3} \sum_a \Delta_a
\end{align*}
Here, the last step is tuning $\Delta = \sqrt{\frac{8 \beta \arms \log(T)}{T}}$.
\end{proof}

\begin{theorem}[Privacy of \adarpucb{}]\label{thm:priv_adarpucb}
    For rewards in $[0,1]$, \adarpucb{} satisfies $\rho$-Interactive zCDP.
\end{theorem}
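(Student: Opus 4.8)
The plan is to apply the generic privacy blueprint established earlier in the paper, essentially the same strategy used for \adargope{} and \adaroful{} in Theorem~\ref{thm:priv_3}, and specialise it to the doubling-with-forgetting structure of \adarpucb{}. The starting observation is that the only way \adarpucb{} touches the private rewards is through the empirical means $\hat{\mu}_a^{\episode}$ computed in the index of Eq.~\eqref{eq:index_def_rdp}; once each noisy mean $\hat{\mu}_a^{\episode} + \mathcal{N}(0, \sigma_{a,\ell}^2)$ is produced, everything downstream (the exploration bonus $B_a$, the $\argmax$, the doubling schedule) is a deterministic post-processing of these noisy statistics and the already-revealed action sequence. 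So by the post-processing lemma (Lemma~\ref{lem:post_proc}) it suffices to show that the collection of all noisy empirical means released over the run of the algorithm is $\rho$-zCDP, and then conclude $\rho$-Interactive zCDP via the reduction in Proposition~\ref{prop:int_table}(b), exactly as in Appendix~\ref{app:privacy_proof}.

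First I would verify the per-release Gaussian mechanism calibration. Fix an arm $a$ and an episode $\episode$ in which $a$ is played. By the forgetting rule, $\hat{\mu}_a^{\episode}$ is the average of exactly $n \defn \tfrac{1}{2} N_a(t_\episode - 1)$ fresh rewards collected in the previous episode where $a$ was played. Since rewards lie in $[0,1]$, changing one reward changes this average by at most $1/n$, so the $\ell_2$-sensitivity of $\hat{\mu}_a^{\episode}$ as a function of that block of rewards is $1/n$. Adding $\mathcal{N}(0, \sigma_{a,\ell}^2)$ with $\sigma_{a,\ell}^2 = \tfrac{1}{2\rho n^2}$ matches the noise scale $\sigma^2 = \Delta_2^2/(2\rho)$ required by the Gaussian Mechanism (Theorem~\ref{thm:gaussian_mech}), hence each individual noisy mean is $\rho$-zCDP with respect to its own block of rewards.

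Next I would assemble these per-release guarantees into a single $\rho$-zCDP bound for the whole output using the Parallel Composition lemma (Lemma~\ref{lem:privacy}(b)). The key structural fact is that the blocks of rewards feeding the successive noisy means are \emph{disjoint}: each reward $r_t$ is collected during exactly one episode, and because of forgetting it is used in the computation of exactly one $\hat{\mu}_a^{\episode}$ (the one opening the next episode in which arm $a$ is chosen). Thus the map from the reward stream to the tuple of noisy means is an instance of the mechanism $\mathcal{G}$ of Eq.~\eqref{eq:priv_mean}, applied to a partition of $\{r_1,\dots,r_\horizon\}$, with $\mathcal{M}$ the ``compute-the-mean-and-add-Gaussian'' mechanism. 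Lemma~\ref{lem:privacy}(b) then gives that the full tuple of noisy means is $\rho$-zCDP without invoking any composition theorem. The one subtlety, as flagged in the discussion after Theorem~\ref{thm:priv_3}, is that the episode boundaries — and hence which rewards land in which block — are themselves data-dependent (arm-dependent doubling driven by past noisy indices). This is the main obstacle, and I would handle it exactly as the generic proof in Appendix~\ref{app:privacy_proof} does: condition on the realised (noisy) index values that determine the schedule, observe that conditionally the partition is fixed, apply parallel composition in that conditional world, and note the bound is uniform over the conditioning so it passes to the unconditional R\'enyi divergence. Finally, post-processing (Lemma~\ref{lem:post_proc}) upgrades the $\rho$-zCDP guarantee on the noisy means to the action sequence, and Proposition~\ref{prop:int_table}(b) lifts it from Table to Interactive zCDP, completing the proof.
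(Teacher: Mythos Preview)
Your proposal is correct and follows essentially the same approach as the paper: instantiate the generic privacy proof of Appendix~\ref{app:privacy_proof} with the per-episode Gaussian-noised empirical mean (sensitivity $1/n$, noise variance $1/(2\rho n^2)$), invoke Lemma~\ref{lem:privacy}(b) for the disjoint reward blocks created by forgetting, and finish by post-processing. One small caveat: your paraphrase of how the generic proof handles adaptive episodes (``condition on the realised noisy index values'') is not quite what Appendix~\ref{app:privacy_proof} actually does---the paper instead fixes neighbouring $d\sim d'$ differing at step $j$, decomposes the adversary's view into before/after $j$, and uses that the episode boundaries up to $j$ depend only on $d_{<j}=d'_{<j}$ (hence coincide); conditioning on noisy indices that themselves change between $d$ and $d'$ would be delicate, so it is safer to point directly to that argument rather than rephrase it.
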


The privacy proof is provided in Appendix~\ref{app:privacy_proof}.

\subsection{Extensions to $(\epsilon, \delta)$-Interactive DP and $(\alpha, \epsilon)$-Interactive RDP}

The difference comes from the different calibrations of the Gaussian Mechanism (Thm~\ref{thm:gaussian_mech}). Adapting the analysis from $\rho$-zCDP reduces to changing the  $\frac{1}{2 \rho}$ factor to $\frac{2}{\epsilon^2} \log(\frac{1.25}{\delta})$  for $(\epsilon,\delta)$-DP and to $\frac{\alpha}{2 \epsilon}$ for $(\alpha, \epsilon)$-RDP, i.e. varying the constant $b$ in Theorem~\ref{thm:gaussian_mech}.

\section{Privacy proofs}\label{app:privacy_proof}

In this section, we give complete proof of the privacy of \adarpucb{}, \adargope{} and \adaroful{}. The three algorithms share the same blueprint. The intuition behind the blueprint is formalised in Lemma~\ref{lem:privacy}, then a generic proof of privacy and specification for each algorithm are given after.

\subsection{The privacy lemma of non-overlapping sequences}
\begin{remark}
     The Privacy Lemma shows that when the mechanism $\mathcal{M}$ is applied to non-overlapping subsets of the input dataset, there is no need to use the composition theorems. Plus, there is no additional cost in the privacy budget.
\end{remark}

\begin{replemma}{lem:privacy}[Privacy Lemma]
Let $\mathcal M$ be a mechanism that takes a \textbf{set} as input.
Let $\ell < \horizon$ and $t_1, \ldots t_{\ell}, t_{\ell + 1}$ be in $[1, \horizon]$ such that $1 = t_1  < \cdots < t_\ell < t_{\ell+1}  - 1 = T$.\\
Let's define the following mechanism
\begin{align*}
    \mathcal G : \{ x_1, \dots, x_\horizon \} \rightarrow \bigotimes_{i=1}^\ell \mathcal{M}_{ \{x_{t_i}, \ldots, x_{t_{i + 1} - 1} \}}
\end{align*}

$\mathcal{G}$ is the mechanism we get by applying $\mathcal{M}$ to the partition of the input dataset $\{x_1, \dots, x_\horizon\}$ according to $t_1  < \cdots < t_\ell < t_{\ell+1}$, i.e.

\begin{align*}
    \begin{pmatrix}
x_1\\ 
x_2\\ 
\vdots \\
x_T
\end{pmatrix} \overset{\mathcal{G}}{\rightarrow}  \begin{pmatrix}
o_1\\ 
\vdots \\
o_\ell
\end{pmatrix}
\end{align*}
where $o_i \sim \mathcal{M}_{ \{x_{t_i}, \ldots, x_{t_{i + 1} - 1} \}}$.

We have that
\begin{itemize}
    \item[(a)] If $\mathcal{M}$ is $(\epsilon, \delta)$-DP then $\mathcal G$ is $(\epsilon, \delta)$-DP
    \item[(b)] If $\mathcal{M}$ is $\rho$-zCDP then $\mathcal G$ is $\rho$-zCDP
\end{itemize}
\end{replemma}

\begin{proof}
 Let $x \defn \{x_1, \ldots, x_\horizon \}$ and $x' \defn \{x'_1, \ldots, x'_\horizon \}$ be two neighboring datasets.
This implies that $\exists j \in [1, \horizon]$ such that $x_j \neq x_j'$ and $\forall t \neq j$, $x_t = x_t'$. 

Let $\episode'$ be such that $t_{\episode'} \leq j \leq t_{\episode'+1}-1$.

We denote $\{x\}_{t_i}^{t_{i + 1}} \defn \{x_{t_i}, \ldots, x_{t_{i + 1} - 1} \}$ the records in $x$ corresponding to the episode from $t_{i}$ until $t_{i+1}-1$.

\textbf{(a)} Suppose that $\mathcal{M}$ is $(\epsilon, \delta)$-DP.

For every output event $E = E_1 \times \dots \times E_\ell$, we have that

\begin{align*}
    \mathcal{G}_{x}(E) &= \prod_{i = 1}^\ell \mathcal{M}_{ \{x\}_{t_i}^{t_{i + 1}}}(E_i)\\
    &= \mathcal{M}_{ \{x\}_{t_{\ell'}}^{t_{\ell' + 1}}}(E_{\ell'}) \prod_{i = 1, i \neq \ell'}^\ell \mathcal{M}_{ \{x\}_{t_i}^{t_{i + 1}}}(E_i)  \\
    &\leq  \left ( e^\epsilon\mathcal{M}_{\{x'\}_{t_{\ell'}}^{t_{\ell' + 1}}}(E_{\ell'}) + \delta \right) \prod_{i = 1, i \neq \ell'}^\ell \mathcal{M}_{ \{x\}_{t_i}^{t_{i + 1}}}(E_i) \\
    &= e^\epsilon \mathcal{G}_{x'}(E) + \delta \times \prod_{i = 1, i \neq \ell'}^\ell \mathcal{M}_{ \{x\}_{t_i}^{t_{i + 1}}}(E_i)\\
    &\leq e^\epsilon \mathcal{G}_{x'}(E) + \delta 
\end{align*}
since $\prod_{i = 1, i \neq \ell'}^\ell \mathcal{M}_{ \{x\}_{t_i}^{t_{i + 1}}}(E_i) \leq 1 $

Which gives that $\mathcal G$ is $(\epsilon, \delta)$-DP.

\textbf{(b)} Suppose that $\mathcal{M}$ is $\rho$-zCDP.
Let denote $o^\ell \defn (o_1, \ldots, o_\ell)$
We have that
\begin{align*}
    D_\alpha(\mathcal{G}_{x}\|\mathcal{G}_{x'}) = \frac{1}{\alpha - 1} \log \left( \int_{o^\ell} \mathcal{G}_{x'}(o) \left( \frac{\mathcal{G}_{x}(o)}{\mathcal{G}_{x'}(o)} \right)^\alpha  \right )  
\end{align*}
Since
\begin{equation*}
    \mathcal{G}_{x}(o) = \prod_{i = 1}^\ell \mathcal{M}_{  \{x\}_{t_i}^{t_{i + 1}}\}}(o_i)
\end{equation*}
and 
\begin{equation*}
    \mathcal{G}_{x'}(o) = \prod_{i = 1}^\ell \mathcal{M}_{  \{x'\}_{t_i}^{t_{i + 1}}}(o_i)
\end{equation*}
we get
\begin{equation*}
    \frac{\mathcal{G}_{x}(o)}{\mathcal{G}_{x'}(o)} = \frac{\mathcal{M}_{\{x\}_{t_{\ell'}}^{t_{\ell' + 1}}}(o_i)}{\mathcal{M}_{ \{x'\}_{t_{\ell'}}^{t_{\ell' + 1}}}(o_i)}
\end{equation*}
Thus,
\begin{equation*}
   D_\alpha(\mathcal{G}_{x}\|\mathcal{G}_{x'}) = D_\alpha(\mathcal{M}_{ \{x\}_{t_{\ell'}}^{t_{\ell' + 1}}} \| \mathcal{M}_{\{x'\}_{t_{\ell'}}^{t_{\ell' + 1}}}) \leq \alpha \rho 
\end{equation*}

Which gives that $\mathcal G$ is $\rho$-zCDP.

\end{proof}

For each of the three algorithms proposed, the final actions can be seen as a post-processing of some private quantity of interest (empirical means for \adarpucb{} or the parameter $\hat{\theta}$ for linear and contextual bandits). However,
we cannot directly conclude the privacy of the proposed algorithms using just a post-processing argument and Lemma~\ref{lem:privacy}. This is because the steps corresponding to the start of an episode in the algorithms $t_1  < \cdots < t_\ell < t_{\ell+1}$ are adaptive and depend on the dataset itself, while for Lemma~\ref{lem:privacy}, those have been fixed before.

To deal with the adaptive episode, we propose a generic privacy proof.

\subsection{Generic privacy proof}

In this section, we give one generic proof that works for the two proposed algorithms.

First, we give a summary of the intuition of the proof for dealing with adaptive episodes. By fixing two neighbouring tables of rewards $d$ and $d'$ that only differ at some user $u_j$, and a deterministic adversary $B$, we have that
\begin{itemize}
    \item the view of the adversary $B$ from the beginning of the interaction until step $j$ will be the same
    \item the adaptive episodes generated by the policy in the first $j$ steps will be the same, which means that step $j$ will fall in the same episode in the view of $B$ when interacting with $\pi(d)$ or $\pi(d')$
    \item for these fixed similar episodes, we use the privacy Lemma~\ref{lem:privacy}
    \item the view of $B$ from step $j + 1$ until $\horizon$ will be private by post-processing  
\end{itemize}

    Let $d = \{x_1, \dots, x_\horizon\}$ and $d' = \{x'_1, \dots, x'_\horizon\}$ two neighbouring reward tables in $(\real^\arms)^\horizon$. Let $j \in [1, \horizon]$ such that, for all $t \neq j$, $x_t = x'_t$. 

    Let $B$ be a deterministic adversary.
    
    We want to show that $D_\alpha( \View ( B \leftrightarrow \pol(d) ) \|  \View ( B \leftrightarrow \pol(d')) )\leq \alpha \rho$.

    \noindent \textbf{Step 1.} \textbf{Sequential decomposition of the view of the adversary $B$}

    We observe that due to the sequential nature of the interaction, the view of $B$ can be decomposed to a part that depends on $d_{<j} \defn \{x_1, \dots, x_{j - 1}\}$, which is identical for both $d$ and $d'$ and a second conditional part on the history.

    First, let us denote $\mathcal{P}^{B, \pol}_d \defn \View ( B \leftrightarrow^d \pol)$, $\textbf{o}_{\leq j} \defn (o_1, \dots, o_j)$ and $ \textbf{o}_{> j} \defn (o_{j + 1}, \dots, o_\horizon)$.
    
    We have that, for every sequence of actions $\textbf{o} \defn (o_1, \dots, o_\horizon) \in [\arms]^\horizon$
    \begin{align*}
        &\mathcal{P}^{B, \pol}_d (\textbf{o})\\
        &= \prod_{t = 1}^\horizon \pol_t \left( o_t \mid B(o_1), x_{1, B(o_1)}, \dots,  B(\textbf{o}_{\leq t - 1}), x_{t - 1, B(\textbf{o}_{\leq t - 1})} \right)\\
        &\defn \mathcal{P}^{B, \pol}_{ d_{<j} } (\textbf{o}_{\leq j}) \mathcal{P}^{B, \pol}_{ d }(\textbf{o}_{> j} \mid \textbf{o}_{\leq j}) 
    \end{align*}
    where 

    \begin{align*}
        &\mathcal{P}^{B, \pol}_{ d_{<j} }(\textbf{o}_{\leq j})\\
        &\defn \prod_{t = 1}^j \pol_t\left( o_t \mid B(o_1), x_{1, B(o_1)}, \dots, B(\textbf{o}_{\leq t - 1}), x_{t - 1, B(\textbf{o}_{\leq t - 1})} \right) 
    \end{align*}

    and

    \begin{align*}
        &\mathcal{P}^{B, \pol}_{ d }(\textbf{o}_{> j} \mid \textbf{o}_{\leq j}) \\
        &\defn \prod_{t = j+1}^\horizon \pol_t\left( o_t \mid B(o_1), x_{1, B(o_1)}, \dots, B(\textbf{o}_{\leq t - 1}), x_{t - 1, B(\textbf{o}_{\leq t - 1})} \right) 
    \end{align*}

    Similarly
    \begin{equation*}
        \mathcal{P}^{B, \pol}_{d'} (\textbf{o}) = \mathcal{P}^{B, \pol}_{ d_{<j} } (\textbf{o}_{\leq j}) \mathcal{P}^{B, \pol}_{ d'}(\textbf{o}_{> j} \mid \textbf{o}_{\leq j}) 
    \end{equation*}
    since $d'_{<j} = d_{<j}$.

    \noindent \textbf{Step 2.} \textbf{Decomposing the Rényi divergence.}
    
    We have that 
    \begin{align*}
        &e^{(\alpha - 1)D_\alpha( \mathcal{P}^{B, \pol}_d \| \mathcal{P}^{B, \pol}_{d'}  )} = \sum_{\textbf{o} \in [\arms]^\horizon} \mathcal{P}^{B, \pol}_{d'}(\textbf{o}) \left (\frac{\mathcal{P}^{B, \pol}_d(\textbf{o})}{\mathcal{P}^{B, \pol}_{d'}(\textbf{o})} \right)^\alpha\\
        &= \sum_{\textbf{o} \in [\arms]^\horizon} \mathcal{P}^{B, \pol}_{d'}(\textbf{o}) \left (\frac{\mathcal{P}^{B, \pol}_{ d }(\textbf{o}_{> j} \mid \textbf{o}_{\leq j})}{\mathcal{P}^{B, \pol}_{ d' }(\textbf{o}_{> j} \mid \textbf{o}_{\leq j})} \right)^\alpha\\
        &= \sum_{\textbf{o}_{\leq j} \in [\arms]^j} \mathcal{P}^{B, \pol}_{ d'_{<j} } (\textbf{o}_{\leq j}) \sum_{ \textbf{o}_{> j} \in [\arms]^{\horizon - j}} \mathcal{P}^{B, \pol}_{ d'}(\textbf{o}_{> j} \mid \textbf{o}_{\leq j}) \\
        &\left (\frac{\mathcal{P}^{B, \pol}_{ d }(\textbf{o}_{> j} \mid \textbf{o}_{\leq j})}{\mathcal{P}^{B, \pol}_{ d' }(\textbf{o}_{> j} \mid \textbf{o}_{\leq j})} \right)^\alpha\\
        &= \sum_{\textbf{o}_{\leq j} \in [\arms]^j} \mathcal{P}^{B, \pol}_{ d_{<j} } (\textbf{o}_{\leq j}) e^{( \alpha - 1) D_\alpha( \mathcal{P}^{B, \pol}_{ d }(. \mid \textbf{o}_{\leq j}) \| \mathcal{P}^{B, \pol}_{ d' }(. \mid \textbf{o}_{\leq j}) )}\\
        &= \expect_{ \textbf{o}_{\leq j} \sim \mathcal{P}^{B, \pol}_{ d_{<j} } } \left [ e^{( \alpha - 1) D_\alpha( \mathcal{P}^{B, \pol}_{ d }(. \mid \textbf{o}_{\leq j}) \| \mathcal{P}^{B, \pol}_{ d' }(. \mid \textbf{o}_{\leq j}) )} \right ]
    \end{align*}

    \noindent \textbf{Step 3.} \textbf{The adaptive episodes are the same, before step $j$.}

    Let $\ell$ such that $t_\ell \leq j < t_{\ell + 1}$ in the view of $B$ when interacting with $d$. Let us call it $\psi^\pol_d(j) \defn \ell$. Similarly, let $\ell'$ such that $t_{\ell'} \leq j < t_{\ell' + 1}$ in the view of $B$ when interacting with $d$. Let us call it $\psi^\pol_{d'}(j) \defn \ell'$.

    Since $\psi^\pol_d(j)$ only depends on $d_{<j}$, which is identical for $d$ and $d'$, we have that $\psi^\pol_d(j) = \psi^\pol_{d'}(j)$ with probability $1$.

    We call $\xi_j$ the last \textbf{time-step} of the episode $\psi^\pol_d(j)$, i.e $\xi_j \defn t_{\psi^\pol_d(j) + 1} -1$.


    \noindent \textbf{Step 4.} \textbf{Private sufficient statistics.}

    Fix $\textbf{o}_{\leq j}$.
    
    Let $r_s \defn x_{s, B(o_1, \dots, o_s)}$, for $s \in [1, j]$, be the reward corresponding to the action chosen by $B$ in the table $d$. Similarly, $r'_s \defn x'_{s, B(o_1, \dots, o_s)}$ for $d'$.
    
    Let us define $L_j \defn \mathcal{G}_{\{r_1, \dots, r_{\xi_j} \} }$ and $L'_j \defn \mathcal{G}_{\{r'_1, \dots, r'_{\xi_j} \} }$, where $\mathcal{G}$ is defined as in Eq.~\ref{eq:priv_mean}, using the same episodes for $d$ and $d'$. The underlying mechanism $\mathcal{M}$, used to define $\mathcal{G}$, will be specified for each algorithm in Section~\ref{sec:inst_each_algo}.
    
    In addition, the specified mechanism $\mathcal{M}$ will verify $\rho$-zCDP with respect to its set input.
    
    Using the structure of the policy $\pi$, there exists a randomised mapping $f_{x_{\xi_j + 1 }, \dots, x_\horizon}$ such that $\mathcal{P}^{B, \pol}_{ d }(. \mid \textbf{o}_{\leq j}) = f_{x_{\xi_j + 1 }, \dots, x_\horizon}(L_j) $ and $\mathcal{P}^{B, \pol}_{ d' }(. \mid \textbf{o}_{\leq j}) = f_{x_{\xi_j + 1 }, \dots, x_\horizon}(L_j') $.

    In other words, the view of the adversary $B$ from step $\xi_j + 1$ until $\horizon$ only depends on the sufficient statistics $L_j$ and the new inputs $x_{\xi_j + 1 }, \dots, x_\horizon$, which are the same for $d$ and $d'$.

    For example, the sufficient statistics are the private mean estimate of the active arm in each episode for $\adarpucb{}$ and the noisy parameter estimate $\hat{\theta}$ for $\adargope{}$.

   \noindent \textbf{Step 5.} \textbf{Concluding with Lemma~\ref{lem:privacy} and post-processing.}
   
   Using Lemma~\ref{lem:privacy}, we have that 
   \begin{equation*}
       D_\alpha(L_j, L_j') \leq \alpha \rho
   \end{equation*}
   Using the post-processing property of $D_\alpha$ (Lemma~\ref{ineq:post_proc_zcdp}), we get that
   \begin{align*}
       &D_\alpha( \mathcal{P}^{B, \pol}_{ d }(. \mid \textbf{o}_{\leq j}) \| \mathcal{P}^{B, \pol}_{ d' }(. \mid \textbf{o}_{\leq j}) ) \\
       &= D_\alpha ( f_{x_{\xi_j + 1 }, \dots, x_\horizon}(L_j)\| f_{x_{\xi_j + 1 }, \dots, x_\horizon}(L_j')) \leq D_\alpha(L_j, L_j')\\
       &\leq \alpha \rho 
   \end{align*}

   Finally, we conclude by taking the expectation with respect to $\textbf{o}_{\leq j} \sim \mathcal{P}^{B, \pol}_{ d_{<j} }$

   \begin{align*}
       &e^{(\alpha - 1)D_\alpha( \mathcal{P}^{B, \pol}_d \| \mathcal{P}^{B, \pol}_{d'}  )} \\
       &= \expect_{ \textbf{o}_{\leq j} \sim \mathcal{P}^{B, \pol}_{ d_{<j} } } \left [ e^{( \alpha - 1) D_\alpha( \mathcal{P}^{B, \pol}_{ d }(. \mid \textbf{o}_{\leq j}) \| \mathcal{P}^{B, \pol}_{ d' }(. \mid \textbf{o}_{\leq j}) )} \right ]\\
       & \leq e^{(\alpha - 1)\alpha \rho}
   \end{align*}
Thus, we conclude   
\begin{equation*}       
D_\alpha( \mathcal{P}^{B, \pol}_d \| \mathcal{P}^{B, \pol}_{d'}  ) \leq \alpha \rho
\end{equation*}

\begin{remark}
    The same proof could be adapted to other relaxations of Pure DP.
\end{remark}

\subsection{Instantiating the specifics of privacy proof for each algorithm}\label{sec:inst_each_algo}

In this section, we instantiate Step 4 of the generic proof for each algorithm, by specifying the mechanism $\mathcal{M}$ in the proof and showing that they are $\rho$-zCDP.

\textbullet\ \textbf{For \adarpucb{}}, the mechanism $\mech$ is the private empirical mean statistic, i.e $\mech_{\{r_1, \dots, r_t \}} \defn \frac{1}{t} \sum_{s= 1}^t r_s + \mathcal{N}\left (0, \frac{1}{2 \rho t^2} \right)$. Since rewards are in $[0,1]$, by the Gaussian Mechanism (i.e. Theorem~\ref{thm:gaussian_mech}) $\mech$ is $\rho$-DP. 

\textbullet\ \textbf{For \adargope{}}, the mechanism $\mech$ is a private estimate of the linear parameter $\theta$, i.e $\mech_{\{r_{t_\ell}, \dots, r_{t_{\ell+1} -1 } \}} \defn V_\ell^{-1} \left( \sum_{t=t_{\ell}}^{t_{\ell+1}-1} a_s r_s \right) + V_{\ell}^{-\frac{1}{2}} N_\ell$ where $V_{\ell}=\sum_{a \in \mathcal{S}_\ell} T_{\ell}(a) a a^{\top}$, $N_\ell \sim \mathcal{N}\left (0, \frac{2}{\rho} g_\ell^2 I_d\right )$ and $g_\ell = \max_{b \in \cA_\ell} \|b \|_{V_\ell^{-1}}$.

To show that $\mech$ is $\rho$-zCDP, we rewrite $ \hat{\theta}_\ell = V_\ell^{-1} \left( \sum_{t=t_{\ell}}^{t_{\ell+1}-1} a_s r_s \right) = V_{\ell}^{- \frac{1}{2}} \phi_\ell$ where $\phi_\ell \defn V_{\ell}^{- \frac{1}{2}} \left( \sum_{t=t_{\ell}}^{t_{\ell+1}-1} a_s r_s \right)$.

Let $\{r_s\}_{s=t_\ell}^{t_{\ell +1} -1 }$ and $\{r'_s\}_{s=t_\ell}^{t_{\ell +1} -1 }$ two neighbouring sequence of rewards that differ at only step $j \in [ t_\ell, t_{\ell +1} -1 ]$. We have that

\begin{align*}
    \|\phi_\ell - \phi'_\ell \|_2 &= \| V_{\ell}^{- \frac{1}{2}} \left[ a_j (r_s - r'_s) \right] \|_2\\
    & \leq 2 \| V_{\ell}^{- \frac{1}{2}} a_j \|_2  \leq 2 g_\ell
\end{align*}
since $r_j, r'_j \in [-1, 1]$.

Using the Gaussian Mechanism (i.e. Theorem~\ref{thm:gaussian_mech}), this means that $\phi_\ell  + N_\ell$ is $\rho$-zCDP and $\mech$ is too by post-processing.

\textbullet\ \textbf{For \adaroful{}}, the mechanism $\mech$ is the private estimate of the sum $\sum_{s = t_\ell}^{t_{\ell +1} - 1} a_s r_s$, i.e  $\mech_{\{r_{t_\ell}, \dots, r_{t_{\ell+1} -1 } \}} \defn  \sum_{t=t_{\ell}}^{t_{\ell+1}-1} a_s r_s + \mathcal{N}(0, \frac{2}{\rho} I_d )$. 

Since rewards are in $[-1, 1]$ and $\| a \|_2 \leq 1 $, the L2 sensitivity of $\sum_{t=t_{\ell}}^{t_{\ell+1}-1} a_s r_s$ is 2. By Theorem~\ref{thm:gaussian_mech}, $\mech$ is $\rho$-zCDP.

We need an extra step of cumulatively summing the outputs of $\mathcal{G}$, which is still private by post-processing, i.e

\begin{align*}
    \begin{pmatrix}
x_1\\ 
x_2\\ 
\vdots \\
x_T
\end{pmatrix} \overset{\mathcal{G}}{\rightarrow}  \begin{pmatrix}
o_1\\ 
\vdots \\
o_\ell
\end{pmatrix}
\rightarrow \begin{pmatrix}
o_1\\ 
o_1 + o_2 \\
\vdots \\
o_1 + o_2 + \dots + o_\ell
\end{pmatrix}
\end{align*}

Then, we have that $\left ( \sum_{t=1}^{t_j} a_s r_s + \sum_{m=1}^{j}  Y_m \right)_{j \in [1, \ell]}$ is $\rho$-zCDP, where $Y_m \overset{\text{i.i.d}}{\sim} \mathcal{N}\left(0, \frac{2}{\rho} I_d \right)$

This shows that the price of not forgetting is, for each estimate at the end of an episode $j$, to have to sum all the previous independent noises i.e. $\sum_{m=1}^{j} Y_j  $, compared to just $Y_j$ when forgetting.

\section{Linear Bandits with zCDP}\label{app:proof_lin}


\subsection{Concentration inequalities}\label{sec:conc_ineq_des}
Let $a_1, \dots, a_t$ be deterministically chosen without the knowledge of $r_1, \dots, r_t$. Let $\pi$ be an optimal design for $\mathcal{A}$.

Let $V_t \defn \sum_{s=1}^t a_s a_s^T = \sum_{a \in \mathcal{A}} N_a(t) a a^T $ be the design matrix, $\hat{\theta_t} = V_t^{-1} \sum_{s=1}^t a_s r_s $ be the least square estimate and $\tilde{\theta_t} = \hat{\theta_t} +  V_t^{-\frac{1}{2}} N_t$ where $N_t \sim \mathcal{N}\left(0, \frac{2}{\rho} g_t^2 I_d \right)$, where $g_t \defn \max_{b \in \mathcal{A}} \| b \|_{V_t^{-1}}$.

\begin{theorem}\label{thm:fix_des_conc}
    Let $\delta \in [0,1]$ and $\beta_t \defn g_t \sqrt{2 \log \left ( \frac{4}{\delta} \right )} + g_t^2  \sqrt{\frac{2}{\rho} f(d, \delta) } $, where $f(d, \delta) \defn d + 2 \sqrt{d \log\left( \frac{2}{\delta} \right)} + 2 \log\left( \frac{2}{\delta} \right) $. For every $a \in \mathcal{A}$, we have that
\begin{equation*}
    \mathbb{P} \left ( \left |  \left\langle \tilde{\theta}_t - \theta^\star , a\right \rangle \right | \geq \beta_t \right) \leq \delta.
\end{equation*}

\end{theorem}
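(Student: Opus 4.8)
The plan is to split the deviation $\langle \tilde\theta_t - \theta^\star, a\rangle$ into a \emph{statistical} part, coming from the sub-Gaussian reward noise, and a \emph{privacy} part, coming from the injected Gaussian perturbation, bound each of them on an event of probability at least $1-\delta/2$, and conclude by a union bound. Concretely, writing $r_s = \langle\theta^\star,a_s\rangle + \eta_s$ gives $\hat\theta_t - \theta^\star = V_t^{-1}\sum_{s=1}^t a_s\eta_s$, so for a fixed $a\in\mathcal A$ one has the decomposition
\begin{equation*}
\langle \tilde\theta_t - \theta^\star, a\rangle = a^\top V_t^{-1}\sum_{s=1}^t a_s\eta_s \;+\; (V_t^{-1/2}a)^\top N_t .
\end{equation*}
I would then treat the two terms independently.

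\textbf{Statistical term.} Since $a_1,\dots,a_t$ are chosen deterministically without seeing the rewards, the first term is a sum $\sum_{s=1}^t c_s\eta_s$ with \emph{fixed} weights $c_s := a^\top V_t^{-1}a_s$; as the $\eta_s$ are (conditionally) $1$-sub-Gaussian, this sum is $\sigma$-sub-Gaussian with $\sigma^2 = \sum_s c_s^2 = a^\top V_t^{-1}\big(\sum_s a_s a_s^\top\big)V_t^{-1}a = \|a\|_{V_t^{-1}}^2 \le g_t^2$. A standard sub-Gaussian tail bound (Lemma~\ref{lem:conc_subg}) then yields $\mathbb P\big(|a^\top V_t^{-1}\sum_s a_s\eta_s| \ge g_t\sqrt{2\log(4/\delta)}\big)\le \delta/2$. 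The key point, emphasised in the algorithm's design, is that the fixed (non-adaptive) design makes the $c_s$ deterministic, so plain sub-Gaussian concentration suffices and no self-normalized martingale argument is needed.

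\textbf{Privacy term.} Here I would write $N_t = g_t\sqrt{2/\rho}\,Z$ with $Z\sim\mathcal N(0,I_d)$ and use Cauchy--Schwarz: $|(V_t^{-1/2}a)^\top N_t| \le \|V_t^{-1/2}a\|_2\,\|N_t\|_2 = \|a\|_{V_t^{-1}}\,\|N_t\|_2 \le g_t^2\sqrt{2/\rho}\,\|Z\|_2$. Since $\|Z\|_2^2\sim\chi^2_d$, the Laurent--Massart (i.e. $\chi^2$) tail bound gives $\mathbb P\big(\|Z\|_2^2 \ge d + 2\sqrt{d\log(2/\delta)} + 2\log(2/\delta)\big) = \mathbb P\big(\|Z\|_2^2\ge f(d,\delta)\big)\le \delta/2$, so that $|(V_t^{-1/2}a)^\top N_t|\le g_t^2\sqrt{(2/\rho)f(d,\delta)}$ with probability at least $1-\delta/2$.

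\textbf{Combining.} On the intersection of the two good events, which by a union bound has probability at least $1-\delta$, the triangle inequality gives $|\langle\tilde\theta_t-\theta^\star,a\rangle| \le g_t\sqrt{2\log(4/\delta)} + g_t^2\sqrt{(2/\rho)f(d,\delta)} = \beta_t$. I do not expect a real obstacle here: the only points that need care are the variance-proxy identity $\sum_s c_s^2 = \|a\|_{V_t^{-1}}^2$, checking that $V_t$ is invertible (which holds since the $G$-optimal design spans $\mathbb R^d$ via the Kiefer--Wolfowitz theorem), and invoking the $\chi^2$ bound in precisely the form that produces $f(d,\delta)$; the calibration $N_t\sim\mathcal N(0,\tfrac{2}{\rho}g_t^2 I_d)$ is exactly what makes the privacy term scale like $g_t^2/\sqrt\rho$.
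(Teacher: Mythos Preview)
Your proposal is correct and follows essentially the same route as the paper's proof: the same decomposition into a statistical part and a privacy part, the sub-Gaussian tail bound for the first (the paper simply cites Eq.~(20.2) of \cite{lattimore2018bandit}, which is exactly your variance-proxy computation $\sum_s c_s^2 = \|a\|_{V_t^{-1}}^2$), and Cauchy--Schwarz plus the $\chi^2$ tail for the second, combined by a $\delta/2+\delta/2$ union bound.
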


\begin{proof}

For every $a \in \mathcal{A}$

\begin{align*}
    \left \langle \tilde{\theta}_t - \theta^\star, a \right \rangle &=  \left \langle \hat{\theta}_t - \theta^\star, a \right \rangle + a^T V_t^{- \frac{1}{2} } N_t \\
    &= \left \langle \hat{\theta}_t - \theta^\star, a \right \rangle + Z_t
\end{align*}
where $Z_t \defn a^T V_t^{- \frac{1}{2} } N_t$.

\underline{\textbf{Step 1: Concentration of the least square estimate.}}
Using Eq.(20.2) from Chapter 20 of~\cite{lattimore2018bandit}, we have that
\begin{equation*}
    \mathbb{P} \left ( \left | \left \langle \hat{\theta}_t - \theta^\star, a \right \rangle  \right | \geq g_t \sqrt{2 \log \left ( \frac{4}{\delta} \right )} \right ) \leq \frac{\delta}{2}
\end{equation*}

\underline{\textbf{Step 2: Concentration of the injected Gaussian noise.}}
On the other hand, using Cauchy-Schwartz, we have that
\begin{align*}
    \left | Z_t \right | = \left | a^T V_t^{- \frac{1}{2} } N_t  \right | \leq \|V_t^{-\frac{1}{2}} a \| . \|  N_t \| \leq g_t \|  N_t \|
\end{align*}
using that $ \|V_t^{-\frac{1}{2}} a \|  =  \| a \|_{V_t^{-1}}\leq g_t $.

Here, $N_t = \sqrt{\frac{2}{\rho}} g_t  \mathcal{N}(0, I_d)$. 
Thus, using Lemma~\ref{lem:conc_chi}, we get
\begin{equation*}
    \mathbb{P} \left ( \left | Z_t  \right | \geq g_t^2  \sqrt{\frac{2}{\rho} f(d,\delta)} \right) \leq \frac{\delta}{2}
\end{equation*}
Steps 1 and 2 together conclude the proof.
\end{proof}

\begin{corollary}\label{crl:fix_des_upp}
    Let $\beta$ be a confidence level. If each action $a \in \mathcal{A}$ is chosen for $N_a(t) \defn \left \lceil c_t \pi(a) \right \rceil$ where  
\begin{equation*}
    c_t \defn  \frac{8 d }{\beta^{2}} \log \left(\frac{4}{\delta}\right) + \frac{2 d}{\beta}\sqrt{\frac{2}{\rho} f(d, \delta)  }
\end{equation*}
and $f(d, \delta) \defn d + 2 \sqrt{d \log\left( \frac{2}{\delta} \right)} + 2 \log\left( \frac{2}{\delta} \right)$.

then, for $t = \sum_{a \in \operatorname{Supp}\left(\pi\right) } N_a(t)$, we get that
\begin{equation*}
    \mathbb{P} \left ( \left |  \left\langle \tilde{\theta}_t - \theta^\star , a\right \rangle \right | \geq \beta \right) \leq \delta \: .
\end{equation*}
\end{corollary}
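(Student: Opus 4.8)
The plan is to derive Corollary~\ref{crl:fix_des_upp} directly from Theorem~\ref{thm:fix_des_conc} by checking that the prescribed sampling schedule shrinks the confidence width $\beta_t$ of that theorem below $\beta$. The first step is to lower bound the design matrix in the Loewner order. Since each action is played $N_a(t) = \lceil c_t \pi(a)\rceil \geq c_t \pi(a)$ times, we have
\[
V_t = \sum_{a \in \operatorname{Supp}(\pi)} N_a(t)\, a a^\top \;\succeq\; c_t \sum_{a \in \operatorname{Supp}(\pi)} \pi(a)\, a a^\top = c_t\, V(\pi),
\]
and because $V(\pi) \succ 0$ (the optimal design has full-rank information matrix when $\operatorname{span}(\mathcal{A}) = \mathbb{R}^d$), inverting reverses the order, so $V_t^{-1} \preceq c_t^{-1} V(\pi)^{-1}$. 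Hence for every $b$, $\|b\|_{V_t^{-1}}^2 = b^\top V_t^{-1} b \leq c_t^{-1} b^\top V(\pi)^{-1} b = c_t^{-1}\|b\|_{V(\pi)^{-1}}^2$, and taking the maximum over $b \in \mathcal{A}$,
\[
g_t^2 = \max_{b \in \mathcal{A}} \|b\|_{V_t^{-1}}^2 \leq \frac{1}{c_t} \max_{b \in \mathcal{A}} \|b\|_{V(\pi)^{-1}}^2 .
\]

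Next I would invoke the Kiefer--Wolfowitz equivalence theorem: since $\pi$ is an optimal (G-optimal) design for $\mathcal{A}$, $\max_{b \in \mathcal{A}} \|b\|_{V(\pi)^{-1}}^2 = d$. This gives $g_t^2 \leq d/c_t$ and $g_t \leq \sqrt{d/c_t}$. Substituting into the definition of $\beta_t$ from Theorem~\ref{thm:fix_des_conc},
\[
\beta_t = g_t \sqrt{2\log\!\left(\tfrac{4}{\delta}\right)} + g_t^2 \sqrt{\tfrac{2}{\rho} f(d,\delta)} \;\leq\; \sqrt{\tfrac{d}{c_t}}\,\sqrt{2\log\!\left(\tfrac{4}{\delta}\right)} + \tfrac{d}{c_t}\,\sqrt{\tfrac{2}{\rho} f(d,\delta)} .
\]

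The third step is the arithmetic that the two summands defining $c_t$ were tailored for: from $c_t \geq \frac{8 d}{\beta^2}\log(4/\delta)$ we get $\sqrt{d/c_t}\,\sqrt{2\log(4/\delta)} \leq \beta/2$, and from $c_t \geq \frac{2 d}{\beta}\sqrt{\frac{2}{\rho} f(d,\delta)}$ we get $\frac{d}{c_t}\sqrt{\frac{2}{\rho} f(d,\delta)} \leq \beta/2$; adding, $\beta_t \leq \beta$. Consequently the event $\{|\langle \tilde\theta_t - \theta^\star, a\rangle| \geq \beta\}$ is contained in $\{|\langle \tilde\theta_t - \theta^\star, a\rangle| \geq \beta_t\}$, and Theorem~\ref{thm:fix_des_conc} yields $\mathbb{P}(|\langle \tilde\theta_t - \theta^\star, a\rangle| \geq \beta) \leq \mathbb{P}(|\langle \tilde\theta_t - \theta^\star, a\rangle| \geq \beta_t) \leq \delta$ for every $a \in \mathcal{A}$, which is the claim.

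I do not expect a genuine obstacle here: the corollary is essentially a calibration of constants on top of Theorem~\ref{thm:fix_des_conc}. The only points needing mild care are (i) handling the ceiling via the trivial bound $\lceil x \rceil \geq x$ so that the Loewner inequality $V_t \succeq c_t V(\pi)$ is immediate, (ii) applying the Kiefer--Wolfowitz theorem with the squared-norm normalization $\max_{b}\|b\|_{V(\pi)^{-1}}^2 = d$, and (iii) noting that once $g_t$, and hence $\beta_t$, are bounded above by the fixed quantity $\beta$, the event-inclusion argument closing the proof is legitimate for the deterministic threshold $\beta$.
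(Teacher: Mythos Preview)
Your proposal is correct and follows essentially the same route as the paper's proof: both establish $V_t \succeq c_t V(\pi)$ from $\lceil c_t \pi(a)\rceil \geq c_t \pi(a)$, invoke Kiefer--Wolfowitz to get $g_t^2 \leq d/c_t$, and then split $c_t$ into its two summands to show each term of $\beta_t$ is at most $\beta/2$ before concluding via Theorem~\ref{thm:fix_des_conc}. The only cosmetic difference is that you spell out the Loewner-order inversion and the event-inclusion step a bit more explicitly than the paper does.
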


\begin{proof}
    We have that 
    \begin{align*}
        V_t = \sum_{a \in \operatorname{Supp}\left(\pi\right) } N_a(t) a a^T \geq c_t V(\pi)
    \end{align*}

    This means that
    \begin{align*}
        g_t^2 = \max_{b \in \mathcal{A}} \| b \|_{V_t^{-1}}^2 \leq \frac{1}{c_t} \max_{b \in \mathcal{A}} \| b \|_{V(\pi)^{-1}}^2 = \frac{g(\pi)}{c_t} = \frac{d}{c_t} \: ,
    \end{align*}
    where the last equality is because $\pi$ is an optimal design for $\mathcal{A}$.

    Recall that $$ \beta_t \defn g_t \sqrt{2 \log \left ( \frac{4}{\delta} \right )} + g_t^2  \sqrt{\frac{2}{\rho} f(d, \delta) } $$
    
    Thus, 
    \begin{align*}
        &\beta_t \leq \sqrt{\frac{d}{c_t}}  \sqrt{2 \log \left ( \frac{4}{\delta} \right )} + \frac{d}{c_t} \sqrt{\frac{2}{\rho} f(d, \delta) } \\
        &\leq \frac{\sqrt{2 d \log \left ( \frac{4}{\delta} \right ) }}{\sqrt{\frac{8 d }{\beta^2} \log \left ( \frac{4}{\delta} \right )}} + \frac{d \sqrt{\frac{2}{\rho} f(d, \delta) }}{\frac{2 d}{\beta}\sqrt{\frac{2}{\rho} f(d, \delta) } } \\
        &= \frac{\beta}{2} + \frac{\beta}{2} = \beta
    \end{align*}
    The final inequality is due to $ c_t \geq \frac{8 d }{\beta^{2}} \log \left(\frac{4}{\delta}\right)$, and $ c_t \geq \frac{2 d}{\beta}\sqrt{\frac{2}{\rho} f(d, \delta)}$.
    
    We conclude the proof using Theorem~\ref{thm:fix_des_conc}.
\end{proof}

\subsection{Regret analysis}

\begin{reptheorem}{thm:lin_band_upper}[Regret analysis of \adargope{}]
Under Assumption~\ref{ass:boundedlin} and for $\delta \in (0,1)$, with probability at least $1 - \delta$, the regret $R_\horizon$ of \adargope{} is upper-bounded by

$$
A \sqrt{  d \horizon \log\left(\frac{\arms \log(\horizon)}{\delta}\right)}  + \frac{B d}{\sqrt{\rho}}  \sqrt{\log\left(\frac{\arms \log(\horizon)}{\delta}\right)} \log(T)
$$
where $A$ and $B$ are universal constants. If $\delta =  \frac{1}{\horizon}$, then $\expect (R_\horizon) \leq \bigO \left ( \sqrt{d \horizon \log(\arms\horizon)} \right) +  \bigO \left ( \sqrt{\frac{1}{\rho}} d (\log(\arms\horizon))^{\frac{3}{2}} \right)$
\end{reptheorem}
\begin{proof}

\underline{\textbf{Step 1: Defining the good event $E$.}} Let 
$$E \defn \bigcap_{\ell = 1}^\infty  \bigcap_{a \in \mathcal{A}_\ell} \left \{   \left | \left \langle  \tilde{\theta}_{\ell}-\theta_{*}, a \right \rangle \right | \leq  \beta_\ell\right \}.$$

Using Corollary~\ref{crl:fix_des_upp}, we get that
\begin{align*}
    \mathbb{P}(\neg E )&\leq \sum_{\ell=1}^\infty \sum_{a \in \mathcal{A}_\ell} \mathbb{P}\left ( \left | \left \langle  \tilde{\theta}_{\ell}-\theta_{*}, a \right \rangle \right | > \beta_{\ell}\right) \\
    &\leq \sum_{\ell=1}^\infty \sum_{a \in \mathcal{A}_\ell} \frac{\delta}{\arms \ell (\ell + 1)} \leq \delta
\end{align*}

\underline{\textbf{Step 2: Good properties under $E$.}} We have that under $E$
\begin{itemize}
    \item The optimal arm $a^\star \in \arg \max_{a \in \mathcal{A}} \left\langle\theta^*, a \right\rangle  $ is never eliminated.
    \begin{proof}
        for every episode $\ell$ and $b \in \mathcal{A}_\ell$, we have that under the good event E,
        \begin{align*}
            \left\langle\tilde{\theta}_{\ell}, b-a^\star\right\rangle &= \left\langle\tilde{\theta}_{\ell} - \theta^\star, b-a^\star\right\rangle + \left\langle\theta^\star, b-a^\star\right\rangle\\
            &\leq \left\langle\tilde{\theta}_{\ell} - \theta^\star, b-a^\star\right\rangle\\
            &\leq \left | \left \langle  \tilde{\theta}_{\ell}-\theta_{*}, a^\star \right \rangle \right | + \left | \left \langle  \tilde{\theta}_{\ell}-\theta_{*}, b \right \rangle \right | \leq 2 \beta_\ell
        \end{align*}
    where the first inequality is because $ \left\langle\theta^\star, b-a^\star\right\rangle \leq 0$ by definition of the optimal arm $a^\star$.

    This means that $a^\star$ is never eliminated.
    \end{proof}
    \item Each sub-optimal arm $a$ will be removed after $\ell_a$ rounds where $\ell_a \defn \min \{\ell: 4 \beta_{\ell}  < \Delta_a \}$.
    \begin{proof}
        We have that under E, 
        \begin{align*}
            \left\langle\tilde{\theta}_{\ell_a}, a^\star - a  \right\rangle &\geq  \left\langle\theta^\star, a^\star  \right\rangle - \beta_{\ell_a} - \left\langle\theta^\star, a  \right\rangle - \beta_{\ell_a}\\
            &= \Delta_a - 2 \beta_{\ell_a} > 2 \beta_{\ell_a}
        \end{align*}
    which means that $a$ get eliminated at the round $\ell_a$.
    \end{proof}
    \item for $a \in \mathcal{A}_{\ell + 1}$, we have that $\Delta_a \leq  4 \beta_{\ell}.$
    \begin{proof}
        If $\Delta_a >  4 \beta_{\ell}$, then by the definition of $\ell_a$, $\ell \geq \ell_a$ and arm $a$ is already eliminated, i.e. $a \notin \mathcal{A}_{\ell + 1}$
    \end{proof}
\end{itemize}

\underline{\textbf{Step 3: Regret decomposition under $E$.}}

Fix $\Delta$ to be optimised later.

Under E, each sub-optimal action $a$ such that $\Delta_a > \Delta$ will only be played for the first $\ell_\Delta$ rounds where 
$$\ell_\Delta \defn \min \{\ell: 4 \beta_{\ell}  < \Delta \} =  \left\lceil \log_2 \left ( \frac{4}{\Delta} \right) \right\rceil $$

We have that
\begin{align*}
    R_\horizon &= \sum_{a \in \mathcal{A}} \Delta_a N_a(\horizon) \\
    &= \sum_{a: \Delta_a > \Delta } \Delta_a N_a(\horizon) + \sum_{a: \Delta_a \leq \Delta } \Delta_a N_a(\horizon)\\
    &= \sum_{\ell = 1}^{\ell_\Delta \wedge \ell(\horizon)} \sum_{a \in \mathcal{A}_\ell} \Delta_a T_\ell(a) + \horizon \Delta\\
    &\leq \sum_{\ell = 1}^{\ell_\Delta \wedge \ell(\horizon)} 4 \beta_{\ell - 1} T_\ell + \horizon \Delta 
\end{align*}
where the last inequality is thanks to the third bullet point in \textbf{Step 2}, i.e. $\Delta_a \leq 4 \beta_{\ell - 1}$ for $a \in \mathcal{A}_{\ell} $.

Also $\ell(\horizon)$ is the total number of episodes played until timestep $\horizon$.

\underline{\textbf{Step 4: Upper-bounding $T_\ell$ and $\ell(\horizon)$ under $E$.}}
Let $\delta_{\arms, \ell} \defn \frac{\delta}{\arms \ell (\ell + 1)}$. We recall that $f(d, \delta) \defn d + 2 \sqrt{d \log\left( \frac{2}{\delta} \right)} + 2 \log\left( \frac{2}{\delta} \right)$.

We have that
\begin{align*}
    T_\ell &= \sum_{a \in S_\ell} T_\ell(a)\\
    &= \sum_{a \in S_\ell} \left\lceil\frac{8 d \pi_{\ell}(a)}{\beta_{\ell}^{2}} \log \left(\frac{4 }{\delta_{\arms, \ell}}\right) + \frac{2 d \pi_{\ell}(a)}{\beta_\ell}\sqrt{\frac{2}{\rho} f(d, \delta_{\arms, \ell})}  \right\rceil \\
    &\leq \frac{d(d + 1)}{2} + \frac{8 d}{\beta_\ell^2} \log\left(\frac{4}{\delta_{\arms, \ell}}\right) + \frac{2 d}{\beta_\ell}\sqrt{\frac{2}{\rho} f(d, \delta_{\arms, \ell}) }.
\end{align*}

since $\beta_{\ell + 1} = \frac{1}{2} \beta_\ell$ and $\sum_{\ell=1}^{\ell(\horizon)} T_\ell= T$, there exists a constant $C$ such that $\ell(\horizon) \leq C \log(\horizon)$. In other words, the length of the episodes is at least doubling so their number is logarithmic.

Which means that, for $\ell \leq \ell(\horizon)$, there exists a constant $C'$ such that
\begin{equation*}
    \log\left(\frac{4}{\delta_{\arms, \ell}}\right) = \log\left(\frac{4 \arms \ell (\ell + 1)}{\delta}\right) \leq C' \log\left(\frac{\arms \log(\horizon)}{\delta}\right).
\end{equation*}

Define $\alpha_\horizon \defn \log\left(\frac{\arms \log(\horizon)}{\delta}\right) $
\begin{align*}
    T_\ell \leq \frac{d(d + 1)}{2} + \frac{8 d}{\beta_\ell^2} C' \alpha_\horizon + \frac{4 d }{\beta_\ell}\sqrt{\frac{1}{\rho} C' \alpha_\horizon}
\end{align*}

\underline{\textbf{Step 5: Upper-bounding regret under $E$.}}

Under E
\begin{align*}
    &\sum_{\ell = 1}^{\ell_\Delta \wedge \ell(\horizon)} 4 \beta_{\ell - 1} T_\ell \\
    &\leq \sum_{\ell = 1}^{\ell_\Delta \wedge \ell(\horizon)} 8 \beta_{\ell} \left(\frac{d(d + 1)}{2} + \frac{8 d}{\beta_\ell^2} C' \alpha_\horizon + \frac{4 d }{\beta_\ell}\sqrt{\frac{1}{\rho}  C' \alpha_\horizon}  \right)\\
    &\leq 4 d (d+1) + 64 d C' \alpha_\horizon \left( \sum_{\ell = 1}^{\ell_\Delta} 2^\ell \right) + 32 d\sqrt{\frac{1}{\rho} C' \alpha_\horizon} \ell(T)\\
    &\leq 4 d (d+1) + 16 d C' \alpha_\horizon \left( \frac{16}{\Delta} \right) + 32 d \sqrt{\frac{1}{\rho} C' \alpha_\horizon} \ell(T)\\
    &\leq 4 d (d+1) + C_1 d \alpha_\horizon \frac{1}{\Delta} + C_2 d \sqrt{\frac{1}{\rho} \alpha_\horizon} \log(T)
\end{align*}

All in all, we have that
\begin{align*}
    R_\horizon \leq 4 d ( &d + 1) + C_2 d \sqrt{\frac{ 1}{\rho}\alpha_\horizon} \log(T)  + C_1 d \alpha_\horizon \frac{1}{\Delta}   + \horizon \Delta
\end{align*}

\underline{\textbf{Step 6: Optimizing for $\Delta$.}}
We take  $$\Delta = \sqrt{ \frac{C_1 d}{\horizon} \alpha_\horizon}.$$ 

We get an upper bound on $R_\horizon$ of

\begin{equation*}
     A \sqrt{  d \horizon \log\left(\frac{k \log(\horizon)}{\delta}\right)}  + B d \sqrt{\frac{ 1}{\rho}\log\left(\frac{k \log(\horizon)}{\delta}\right)} \log(T)
\end{equation*}

\underline{\textbf{Step 7: Upper-bounding the expected regret.}}
For $\delta= \frac{1}{\horizon}$, we get that

\begin{align*}
    \expect(R_\horizon) &\leq (1 - \delta) R_\horizon(\delta) + \delta \horizon \\
    &\leq R_\horizon(\delta) + 1\\
    &\leq C'_1 \sqrt{d \horizon \log(k\horizon)}+ C'_2 \sqrt{\frac{1}{\rho}} d \log(k\horizon)^{\frac{3}{2}}
\end{align*}

\end{proof}

\subsection{Adding noise at different steps of GOPE}\label{sec:add_noise_gope}

In order to make the GOPE algorithm differentially private, the main task is to derive a private estimate of the linear parameter $\theta$ at each phase $\ell$, i.e. $\hat{\theta}_\ell$.
If the estimate is private with respect to the samples used to compute it, i.e. $\hat{\theta}_\ell = V_\ell^{-1} \left( \sum_{t=t_{\ell}}^{t_{\ell+1}-1} a_s r_s \right)$ w.r.t $\{ r_s \}_{s = t_\ell}^{t_{\ell + 1} - 1}$, then due to forgetting and post-processing, the algorithm turns private too.

We discuss three different ways to make the empirical estimate $\hat{\theta}_\ell$ private.

\subsubsection{Adding noise in the end} A first attempt would be to analyse the $L_2$ sensitivity of $\hat{\theta}_\ell$ directly, and adding Gaussian noise calibrated by the $L_2$ sensitivity of $\hat{\theta}_\ell$. 

Let $\{r_s\}_{s=t_\ell}^{t_{\ell +1} -1 }$ and $\{r'_s\}_{s=t_\ell}^{t_{\ell +1} -1 }$ two neighbouring sequence of rewards that differ at only step $j \in [ t_\ell, t_{\ell +1} -1 ]$. Then, we have that
\begin{align*}
    \|\hat{\theta}_\ell - \hat{\theta'}_\ell \|_2 &= \| V_{\ell}^{- 1} \left[ a_j (r_s - r'_s) \right] \|_2\\
    &\leq 2 \| V_{\ell}^{- 1} a_j \|_2 
\end{align*}
since $r_j, r'_j \in [-1, 1]$.

However, it is hard to control the quantity $\| V_{\ell}^{- 1} a_j \|_2 $ without additional assumptions. The G-optimal design permits only to control another related quantity, i.e. $\|a_j \|_{V_\ell^{-1}} = \| V_{\ell}^{- \frac{1}{2}} a_j \|_2 $. Thus, it is better to add noise at a step before if one does not want to add further assumption.

\subsubsection{Adding noise in the beginning} Since $\hat{\theta}_\ell = V_\ell^{-1} \left( \sum_{t=t_{\ell}}^{t_{\ell+1}-1} a_s r_s \right)$, another way to make $\hat{\theta}_\ell$ private is by adding noise directly to the sum of observed rewards. 

Specifically, one can rewrite the sum 
\begin{align*}
    \sum\limits_{t=t_{\ell}}^{t_{\ell+1}-1} a_s r_s = \sum_{a \in S_\ell} a \sum_{a_t = a, t \in [t_{\ell}, {t_{\ell+1}-1}]} r_t \: .
\end{align*} 
Since rewards are in $[-1, 1]$, the $L_2$ sensitivity of $\sum_{a_t = a, t \in [t_{\ell}, {t_{\ell+1}-1}]} r_t$ is 2. 

Thus, by Theorem~\ref{thm:gaussian_mech}, this means that the noisy sum of rewards $ \sum_{a_t = a, t \in [t_{\ell}, {t_{\ell+1}-1}]} r_t + \mathcal{N}\left (0, \frac{2}{\rho} \right) $ is $\rho$-zCDP. Hence, by post-processing lemma, the corresponding noisy estimate $\hat{\theta}_\ell + V_\ell^{-1} \left(\sum_{a \in S_\ell} a \mathcal{N}\left (0, \frac{2}{\rho} \right) \right)$ is a $\rho$-zCDP estimate of $\hat{\theta}_\ell$. 

This is exactly how both~\cite{hanna2022differentially} and~\cite{li2022differentially} derive a private version of GOPE for different privacy definitions, i.e. pure $\epsilon$-DP for~\cite{hanna2022differentially} and $(\epsilon, \delta)$-DP for~\cite{li2022differentially}, respectively. The drawback of this approach is that the variance of the noise depends on the size of the support $S_\ell$ of the G-optimal design. 

To deal with this, both~\cite{hanna2022differentially} and~\cite{li2022differentially} solve a variant of the G-optimal design to get a solution where $|S_\ell| \leq 4 d \log \log d + 16$ rather than the full $d (d + 1)/2$ support of \adargope{}'s optimal design. And still, the dependence on $d$ in the private part of the regret achieved by both these algorithms are $d^2$ in~\cite[Eq (18)]{hanna2022differentially},  and $d^{\frac{3}{2}}$ in~\cite[Eq (56)]{li2022differentially}, respectively. Thus, both of these existing algorithms do not achieve to the linear dependence on $d$ in the regret term due to privacy, as suggested by the minimax lower bound.

\subsubsection{Adding noise at an intermediate level} In contrast, \adargope{} adds noise to the statistic 

$$\phi_\ell = V_\ell^{-\frac{1}{2}} \left( \sum_{t=t_{\ell}}^{t_{\ell+1}-1} a_s r_s \right) .$$ 

$\phi_\ell$ is an intermediate quantity between the sum of rewards $\sum_{t=t_{\ell}}^{t_{\ell+1}-1} a_s r_s$, and the parameter $\hat{\theta}_\ell$, whose $L_2$ sensitivity can be controlled directly using the G-optimal Design. Due to this subtle observation, the private estimation $\tilde{\theta}_\ell$ of \adargope{} is independent of the size of the support $S_\ell$. Hence, the regret term of \adargope{} due to privacy enjoys a linear dependence on $d$, as suggested by the minimax lower bound.

\subsubsection{Conclusion} In brief, to achieve the same DP guarantee with the same budget, one may arrive at it by adding noise at different steps, and the resulting algorithms may have different utilities. In general, adding noise at an intermediate level of computation (not directly to the input, i.e. local and not output perturbation) generally gives the best results.

\begin{remark}
    We also compare the empirical performance of \adargope{} with a variant where the noise is added to the sum statistic i.e. $\tilde{\theta}_\ell \defn \hat{\theta}_\ell + V_\ell^{-1} \left(\sum_{a \in S_\ell} a \mathcal{N}\left (0, \frac{2}{\rho} \right) \right)$. 
    The results are presented in Appendix~\ref{app:ext_exp} validating that \adargope{} yields the lowest regret with respect to the other noise perturbation strategy.
\end{remark}
\section{Linear Contextual Bandits with zCDP}\label{app:cont_lin_proofs}






\subsection{Confidence Bound for the Private Least Square Estimator}
\begin{theorem}
    Let $\delta \in (0,1)$. Then, with probability $1 - \bigO(\delta)$, it holds that, for all $t\in [1, \horizon]$,
    \begin{align*}
        \|\tilde{\theta}_t - \theta^\star\|_{V_t} &\leq \tilde{\beta}_t
    \end{align*}
    where 
    \begin{align*}
        \tilde{\beta}_t = \beta_t + \frac{\gamma_t}{\sqrt{t}}
    \end{align*}
    such that
    \begin{align*}
        \beta_t = \bigO \left( \sqrt{d \log(t)}\right) \text{ and } \gamma_t = \bigO \left (\sqrt{\frac{1}{\rho}} d \log(t) \right)
    \end{align*}
    and $\beta_t$ and $\gamma_t$ are increasing in $t$.
\end{theorem}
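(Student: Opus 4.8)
The plan is to split the private estimator into its non-private (ridge) part and the injected-noise part, bound each separately and uniformly in $t$ on a high-probability event, and finish with a union bound. Write $\hat{\theta}_t \defn V_t^{-1}\sum_{s=1}^t a_s r_s$ for the ridge estimator (with $V_t = \lambda I_d + \sum_{s=1}^t a_s a_s^\top$) and $G_t \defn \sum_{m=1}^{\ell(t)} Y_m$ for the accumulated Gaussian noise, where $\ell(t)$ is the number of episodes opened by step $t$ and $Y_m \sim \mathcal N(0,\tfrac{2}{\rho} I_d)$ i.i.d. Then $\tilde{\theta}_t - \theta^\star = (\hat{\theta}_t - \theta^\star) + V_t^{-1} G_t$, so by the triangle inequality for $\|\cdot\|_{V_t}$ and the identity $\|V_t^{-1}u\|_{V_t} = \|u\|_{V_t^{-1}}$,
\[
\|\tilde{\theta}_t - \theta^\star\|_{V_t} \;\le\; \|\hat{\theta}_t - \theta^\star\|_{V_t} \;+\; \|G_t\|_{V_t^{-1}} .
\]
The first term will give $\beta_t$, the second $\gamma_t/\sqrt{t}$.

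For the non-private term I would invoke the self-normalised bound for vector-valued martingales of~\cite{abbasi2011improved}: since $\eta_s$ is conditionally $1$-subgaussian given the past and $\|\theta^\star\|_2 \le 1$ by Assumption~\ref{ass:boundedlin}, with probability at least $1-\delta$ it holds for \emph{all} $t\ge 1$ simultaneously that $\|\hat{\theta}_t - \theta^\star\|_{V_t} \le \sqrt{2\log(1/\delta) + d\log\bigl(1 + t/(d\lambda)\bigr)} + \sqrt{\lambda} =: \beta_t$, which is increasing in $t$ and of order $\bigO(\sqrt{d\log t})$. This is the standard OFUL confidence width and needs no new argument.

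The private term is where Assumption~\ref{ass:cont_stoch} enters. Since the $Y_m$ are i.i.d., $G_t \sim \mathcal N(0,\tfrac{2\ell(t)}{\rho} I_d)$, and is independent of the realised design matrix $V_t$ (the noise is drawn independently of contexts and rewards); hence $\|G_t\|_{V_t^{-1}} \le \lambda_{\min}(V_t)^{-1/2}\,\|G_t\|_2$ with $\|G_t\|_2^2 = \tfrac{2\ell(t)}{\rho}\|Z\|_2^2$, $Z\sim\mathcal N(0,I_d)$. I would bound $\|Z\|_2^2\sim\chi^2_d$ via Lemma~\ref{lem:conc_chi}, giving $\|Z\|_2^2 \le f(d,\delta')$ with $f(d,\delta') = d + 2\sqrt{d\log(2/\delta')} + 2\log(2/\delta')$, made uniform in $t$ by a union bound over the $\bigO(d\log T)$ possible values of $\ell(t)$; the episode count is itself bounded by the determinant-doubling stopping rule (Line~6 of Algorithm~\ref{alg:rare_lin_ucb}) together with $\det(V_t)\le(\lambda + t/d)^d$ and $\det(V_0)=\lambda^d$, so $\ell(t) = \bigO\!\bigl(d\log(1+t/(d\lambda))\bigr)$. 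For the eigenvalue I would apply Assumption~\ref{ass:cont_stoch} through Lemma~\ref{lem:lambda_min}: with probability at least $1-\delta$, $\lambda_{\min}(V_t)\ge\tfrac{\lambda_0}{2}t$ once $t$ exceeds a burn-in threshold $t_0 = \bigO(\lambda_0^{-2}\log(d/\delta))$, and $\lambda_{\min}(V_t)\ge\lambda$ always. Combining on the intersection event yields, for $t\ge t_0$, $\|G_t\|_{V_t^{-1}} \le \sqrt{4\,\ell(t)\,f(d,\delta')/(\rho\lambda_0\,t)} = \gamma_t/\sqrt t$ with $\gamma_t \defn \sqrt{4\,\ell(t)\,f(d,\delta')/(\rho\lambda_0)} = \bigO(\sqrt{1/\rho}\,d\log t)$, increasing in $t$; for $t<t_0$ the bound $\gamma_t/\sqrt t$ is at worst a constant (using $\lambda_{\min}\ge\lambda$ and $\ell(t)\le\ell(t_0)$), which we absorb into $\gamma_t$. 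A final union bound over the three events gives the statement with failure probability $\bigO(\delta)$.

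The main obstacle is the uniform-in-$t$ lower bound on $\lambda_{\min}(V_t)$ under adaptively chosen, only conditionally-i.i.d.\ contexts: Lemma~\ref{lem:lambda_min} is a matrix-martingale/matrix-Chernoff statement whose variance proxy is precisely the quantity bounded in item~3 of Assumption~\ref{ass:cont_stoch}, and the sub-linear burn-in regime must be treated by hand. A secondary point is the careful separation of the two sources of randomness so that the $\chi^2$ tail may be applied conditionally on $V_t$; the remainder is bookkeeping (bounding $\ell(t)$, checking monotonicity of $\beta_t$ and $\gamma_t$, and the union bound).
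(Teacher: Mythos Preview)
Your proposal is correct and mirrors the paper's proof almost step for step: the same decomposition $\tilde{\theta}_t - \theta^\star = V_t^{-1}(S_t + N_t - \lambda\theta^\star)$, the self-normalised martingale bound (Lemma~\ref{lem:marting_conc}) for the $S_t$ part plus the $\sqrt{\lambda}\|\theta^\star\|$ correction giving $\beta_t$, the $\chi^2$ concentration (Lemma~\ref{lem:conc_chi}) for $\|N_t\|_2$ combined with the eigenvalue lower bound from Lemma~\ref{lem:lambda_min} for $\|N_t\|_{V_t^{-1}}$ giving $\gamma_t/\sqrt t$, and the determinant-doubling argument for $\ell(t)=\bigO(d\log t)$. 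Two small remarks: your independence claim for $G_t$ and $V_t$ is imprecise (since $\ell(t)$ is adaptive), but your union bound over the $\bigO(d\log T)$ partial sums repairs this and is in fact slightly tighter than the paper's union bound over all $T$ steps; and your explicit treatment of the burn-in regime $t<t_0$ is a detail the paper's presentation leaves implicit in the definition of $\gamma_t$.
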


\begin{proof}

\underline{\textbf{Step 1: Decomposing $\tilde{\theta}_t - \theta^\star$.}} We have that
\begin{align*}
    &\tilde{\theta}_t - \theta^\star = V_t^{-1} \left ( \sum_{s = 1}^{t} A_s R_s + \sum_{m = 1}^{\ell(t)} Y_m  \right ) - \theta^\star\\
    &= V_t^{-1} \left ( \sum_{s = 1}^{t} A_s (A_s^T \theta^\star + \eta_s) + \sum_{m = 1}^{\ell(t)} Y_m  \right ) - \theta^\star\\
    &= V_t^{-1} \left ( (V_t - \lambda I_d) \theta^\star + \sum_{s = 1}^{t} A_s \eta_s + \sum_{m = 1}^{\ell(t)} Y_m  \right ) - \theta^\star\\
    &= V_t^{-1} \left ( S_t + N_t - \lambda \theta^\star  \right )
\end{align*}
where $S_t \defn \sum_{s = 1}^{t} A_s \eta_s$, $N_t = \sum_{m = 1}^{\ell(t)} Y_m \sim \mathcal{N}\left (0, \frac{2 \ell(t)}{\rho} I_d \right )$ and $\ell(t)$ is the number of episodes until time-step $t$ (number of updates of $\tilde{\theta}$.

Which gives that 
$$
\|\tilde{\theta}_t - \theta^\star\|_{V_t} = \| S_t + N_t - \lambda \theta^\star \|_{V_t^{-1}}
$$

\underline{\textbf{Step 2: Defining the Good Event $E$.}} We call $E_1$, $E_2$ and $E_3$ respectively the events  

\begin{align*}
     &\left \{ \forall t \in [\horizon]: \| S_t \|_{V_t^{-1}} \leq \sqrt{2 \log\left (\frac{1}{\delta}\right) + \log \left ( \frac{\det(V_t)}{\lambda^d} \right )} \right \},\\
    &\left \{ \forall t \in [\horizon]: \lambda_\text{min}(G_t) \geq g(t, \lambda_0, \delta, d) \right \},\\
    &\left \{ \forall t \in [\horizon]: \| N_t \| \leq \sqrt{\frac{2 \ell(t)}{\rho} f\left(d, \frac{\delta}{\horizon}\right)} \right \}
\end{align*}
where $G_t \defn \sum_{s = 1}^{t} A_s A_s^T$, $g(t, \lambda_0, \delta, d)  \defn \frac{\lambda_0 t}{4} - 8 \log\left (\frac{t + 3}{\delta/d}\right) - 2 \sqrt{t \log \left ( \frac{t + 3}{\delta/d} \right )} $ and $f(d, \delta) \defn d + 2 \sqrt{d \log\left( \frac{1}{\delta} \right)} + 2 \log\left( \frac{1}{\delta} \right)$.

Let 
\begin{align}\label{eq:E}
    E = E_1 \cap E_2 \cap E_3
\end{align}

\underline{\textbf{Step 3: Showing that $E$ Happens with High Probability.}}

\noindent\underline{For event $E_1$:}

By a direct application of Lemma~\ref{lem:marting_conc}, we get that
\begin{equation*}
    \mathbb{P}(\neg E_1) \leq \delta.
\end{equation*}

\noindent\underline{For event $E_2$:}

By a direct application of Lemma~\ref{lem:lambda_min}, we get that
\begin{equation*}
    \mathbb{P}(\neg E_2) \leq \delta.
\end{equation*}

\noindent\underline{For event $E_3$:}

Since $N_t \sim \mathcal{N}\left (0, \frac{2 \ell(t)}{\rho} I_d \right )$, a direct application of Lemma~\ref{lem:conc_chi} gives that
\begin{equation*}
    \mathbb{P}(\neg E_3) \leq \delta.
\end{equation*}

All in all, we get that $\mathbb{P}(E) \geq 1 - 3\delta$.

\underline{\textbf{Step 4: Upper-bounding $\|\tilde{\theta}_t - \theta^\star\|_{V_t}$ under $E$.}} We have that, 
\begin{align*}
    \|\tilde{\theta}_t - \theta^\star\|_{V_t} &\leq  \| S_t  \|_{V_t^{-1}} + \| N_t\|_{V_t^{-1}}  + \| \lambda \theta^\star \|_{V_t^{-1}}
\end{align*}

Under $E$, $V_t \geq (\lambda + \lambda_\text{min} (G_t) ) I_d \geq \lambda I_d$.

Which gives that, under E,
\begin{align*}
    \| N_t\|_{V_t^{-1}} &\leq \frac{1}{\sqrt{ \lambda + \lambda_\text{min} (G_t) }} \| N_t\| \\
    &\leq \sqrt{\frac{\frac{2 \ell(t)}{\rho} \left( d + 2 \sqrt{d \log\left( \frac{1}{\delta} \right)} + 2 \log\left( \frac{\horizon}{\delta} \right) \right)}{ \lambda + \frac{\lambda_0 t}{4} - 8 \log\left (\frac{t + 3}{\delta/d}\right) - 2 \sqrt{t \log \left ( \frac{t + 3}{\delta/d} \right )}}} \\
    &\defn \frac{\gamma_{t}}{\sqrt{t}}
\end{align*}
and 
\begin{align*}
    &\| S_t \|_{V_t^{-1}} + \| \lambda \theta^\star \|_{V_t^{-1}} \\
    &\leq \sqrt{2 \log\left (\frac{1}{\delta}\right) + \log \left ( \frac{\det(V_t)}{\lambda^d} \right )} + \frac{\lambda}{\sqrt{ \lambda}} \| \theta^\star \|\\
    &= \sqrt{2 \log\left (\frac{1}{\delta}\right) + \log \left ( \frac{\det(V_t)}{\lambda^d} \right )} + \sqrt{\lambda} \| \theta^\star \| \defn \beta_t
\end{align*}

So, under E, we have that
\begin{align*}
    \|\tilde{\theta}_t - \theta^\star\|_{V_t} &\leq \tilde{\beta}_t
\end{align*}
where 
\begin{align*}
    \tilde{\beta}_t = \beta_t + \frac{\gamma_t}{\sqrt{t}}
\end{align*}

\underline{\textbf{Step 5: Upper-bounding $\det(V_t)$ and $\ell(t)$.}}

Under E, using the determinant trace inequality, we have that
\begin{align*}
    \det(V_t) \leq \left(\frac{1}{d} \text{trace}(V_t) \right )^d \leq \left(\frac{d \lambda + t}{d}  \right )^d
\end{align*}
which gives that
\begin{align*}
    \beta_t = \sqrt{2 \log\left (\frac{1}{\delta}\right) + d \log \left ( 1 + \frac{t}{\lambda d}  \right )} + \sqrt{\lambda} \| \theta^\star \|
\end{align*}
We can say that $\beta_t = \bigO(\sqrt{d \log(t)})$. 

On the other hand, after each episode, the $\det(V_t)$ is, at least, increased multiplicatively by $(1+C)$, which means that under E, we have that
\begin{align*}
    \left ( 1 + C \right )^{\ell(t)} \det(V_0) \leq \det(V_t) \leq \left(\lambda + \frac{t}{d}  \right )^d
\end{align*}
which gives that 
\begin{align*}
    \ell(t) \leq \frac{d}{\log(1 + C)} \log\left (1 + \frac{t}{\lambda d} \right)
\end{align*}
so $\ell(t) = \bigO(d \log(t))$ and $\gamma_t = \bigO \left (\sqrt{\frac{1}{\rho}} d \log(t) \right)$

\underline{\textbf{Step 6: Final Touch.}}

Under event $E$, we have that $\|\tilde{\theta}_t - \theta^\star\|_{V_t} \leq \tilde{\beta}_t$ where $\tilde{\beta}_t = \beta_t + \frac{\gamma_t}{\sqrt{t}}$, $\beta_t = \bigO(\sqrt{d \log(t)})$ and $\gamma_t  = \bigO \left (\sqrt{\frac{1}{\rho}} d \log(t) \right) $ such that $\beta_t$ and $\gamma_t$ are increasing.

\end{proof}

\subsection{Regret Analysis}
\begin{reptheorem}{thm:cont_band_upper}
Under Assumptions~\ref{ass:boundedlin} and~\ref{ass:cont_stoch}, and for $\delta\in(0,1]$, with probability at least $1 - \delta$, the regret $R_\horizon$ of \adaroful{} (Algorithm~\ref{alg:rare_lin_ucb}) is upper bounded by
$$
 R_\horizon \leq \bigO\left( d \log(\horizon) \sqrt{\horizon} \right) + \bigO \left ( \sqrt{\frac{1}{\rho}} d^2 \log(\horizon)^2 \right)
$$
\end{reptheorem}

\begin{proof}

Let $E$ be the event defined in equation~\ref{eq:E}.

\underline{\textbf{Step 1: Regret Decomposition.}}

Let $A_t^\star = \arg\max_{a \in \mathcal{A}_t} \left\langle \theta^\star, a \right\rangle $.

We have that
\begin{equation*}
    R_\horizon = \sum_{t =1}^\horizon r_t, ~~ \text{where} ~~ r_t = \left\langle \theta^\star, A_t^\star - A_t \right\rangle 
\end{equation*}

\underline{\textbf{Step 2: Upper-bounding Instantaneous Regret under $E$.}}

At step $t$, let $\tau_t$ be the last step where $\tilde{\theta}$ was updated.

Let $ \mathcal{C}_t = \{ \theta \in \real^d: \|\theta - \tilde{\theta}_{t - 1} \|_{V_{t - 1}}  \leq \tilde{\beta}_{t -1} \}$ and $\operatorname{UCB}_t (a) = \max_{\theta \in \mathcal{C}_t} \left\langle \theta, a \right\rangle $.

Also, define $\breve{\theta}_{\tau_t} = \arg\max_{\theta \in \mathcal{C}_{\tau_t}} \left\langle \theta, A_t \right\rangle$ so that $\operatorname{UCB}_{\tau_t}(A_t) = \left\langle \breve{\theta}_{\tau_t}, A_t \right\rangle$. 

Finally, Line 11 of Algorithm~\ref{alg:rare_lin_ucb} could be re-written as $A_t = \arg\max_{a \in \mathcal{A}_t} \operatorname{UCB}_{\tau_t}(a)$.

Under E, we have that

\begin{align*}
    r_t &= \left\langle \theta^\star, A_t^\star - A_t \right\rangle \\
    & \overset{(a)}{\leq}  \left\langle \breve{\theta}_{\tau_t} - \theta^\star, A_t \right\rangle \\
    &\overset{(b}{\leq}  \|\breve{\theta}_{\tau_t} - \theta^\star \|_{V_{t-1}} \|A_t \|_{V_{t-1}^{-1}}\\
    &\overset{(c)}{\leq} \sqrt{\frac{\det(V_{t-1})}{\det(V_{\tau_t})}} \|\breve{\theta}_{\tau_t} - \theta^\star \|_{V_{\tau_t}} \|A_t \|_{V_{t-1}^{-1}} \\
    &\overset{(d)}{\leq} \sqrt{1 + C} (2 \tilde{\beta}_{\tau_t}) \|A_t \|_{V_{t-1}^{-1}} \\
\end{align*}

where:

(a) Under E, $\theta^\star \in \mathcal{C}_{\tau_t}$ and $\left\langle \theta^\star, A_t^\star \right\rangle \leq \max_{\theta \in \mathcal{C}_{\tau_t}} \left\langle \theta, A_t^\star \right\rangle = \operatorname{UCB}_{\tau_t}(A_t^\star) \leq \operatorname{UCB}_{\tau_t}(A_t) = \left\langle \breve{\theta}_{\tau_t}, A_t \right\rangle.$

(b) By the Cauchy-Schwartz inequality.

(c) By Lemma~\ref{lem:det_over_det}.

(d) By definition of $\tau_t$ and Line 6 of Algorithm~\ref{alg:rare_lin_ucb} , we have that $\det(V_{t-1}) \leq (1 + C) \det(V_{\tau_t}) $ and under $E$, $\theta^\star \in \mathcal{C}_{\tau_t}$, so $\|\breve{\theta}_{\tau_t} - \theta^\star \|_{V_{\tau_t}} \leq 2 \tilde{\beta}_{\tau_t}$.



We also have that $r_t \leq 2$ and $\tilde{\beta}_{\tau_t} \leq \beta_\horizon + \frac{\gamma_\horizon}{\sqrt{\tau_t}}$, which gives
\begin{align*}
    r_t \leq 2 \sqrt{1 + C} \beta_{\horizon} \left ( 1 \wedge \|A_t \|_{V_{t - 1}^{-1}} \right )  + 2 \sqrt{1 + C}\\ \frac{\gamma_{\horizon}}{\sqrt{\tau_t}} \left ( 1 \wedge \|A_t \|_{V_{t - 1}^{-1}} \right )
\end{align*}

\underline{\textbf{Step 3: Upper-bounding Regret under $E$.}}\\
Under $E$, we have that
\begin{align}\label{eq:reg_dec_upper}
    R_\horizon &= \sum_{t=1}^\horizon r_t \notag\\
    &\leq 2 \sqrt{1 + C} \beta_{\horizon} \sum_{t=1}^\horizon \left ( 1 \wedge \|A_t \|_{V_{t - 1}^{-1}} \right ) \notag\\
    &\qquad+ 2 \sqrt{1 + C} \gamma_{\horizon}  \sum_{t=1}^\horizon \frac{1}{\sqrt{\tau_t}} \left ( 1 \wedge \|A_t \|_{V_{t - 1}^{-1}} \right ) \notag\\
    &\leq 2 \sqrt{1 + C} \beta_\horizon \sqrt{\horizon \sum_{t=1}^\horizon 1 \wedge \|A_t \|^2_{V_{t - 1}^{-1}}} \notag\\
    &\qquad+ 2 \sqrt{1 + C} \gamma_\horizon \sqrt{\left( \sum_{t = 1}^\horizon \frac{1}{\tau_t}  \right) \left ( \sum_{t=1}^\horizon 1 \wedge \|A_t \|^2_{V_{t - 1}^{-1}} \right) }
\end{align}
where the last inequality is due to the Cauchy-Schwartz inequality.

\underline{\textbf{Step 4: The Elliptical Potential Lemma.}}

We use that $1\wedge x \leq \log(1 + x)$ and $\det(V_t) = \det(V_{t - 1}) \left( 1 + \|A_t \|^2_{G_{t - 1}(\lambda)^{-1}}\right)$ to have that
\begin{align}\label{eq:ell_pot}
    \sum_{t=1}^\horizon \left (1 \wedge \|A_t \|^2_{V_{t - 1}^{-1}}\right ) &\leq 2 \sum_{t=1}^\horizon \log\left ( 1 +  \|A_t \|^2_{V_{t - 1}^{-1}} \right ) \notag\\
    &= 2 \log \left ( \frac{\det(V_\horizon)}{\det(V_0)} \right ) \notag\\
    &\leq 2d \log \left ( 1 + \frac{\horizon}{\lambda d} \right )
\end{align}
often known as the elliptical potential lemma (Lemma 19.4,~\cite{lattimore2018bandit}.

\underline{\textbf{Step 5: Upper-bounding the Length of Every Episode .}}

Episode $\ell$ starts at $t_{\ell}$ and ends at $t_{\ell + 1} - 1$, so we have that 
\begin{equation}\label{eq:det_up}
    \frac{\det(V_{t_{\ell + 1} - 1})}{\det(V_{t_{\ell}})} \leq 1 + C 
\end{equation}


On the other hand,
\begin{align}\label{eq:det_det}
    \frac{\det(V_{t_{\ell + 1} - 1})}{\det(V_{t_{\ell}})} = \prod_{t = t_\ell + 1 }^{ t_{\ell+1} - 1} \left( 1 + \|A_t \|^2_{V_{t - 1}^{-1}}\right) 
\end{align}

Under $E$, we use that
\begin{align*}
    V_{t - 1}  \leq \left(\lambda + \lambda_{\text{max}}\left ( G_{t - 1} \right ) \right) I_d \leq \left(\lambda + t - 1 \right) I_d
\end{align*}
since $\lambda_{\text{max}}\left ( G_{t - 1} \right ) \leq \operatorname{trace}(G_{t -1 }) \leq t - 1$.

which gives that 
\begin{equation*}
    \|A_t \|^2_{V_{t - 1}^{-1}} \geq \frac{1}{\lambda + t -1 }
\end{equation*}

Plugging in Equation~\ref{eq:det_det}, we get that
\begin{align*}
    \frac{\det(V_{t_{\ell + 1} - 1})}{\det(V_{t_{\ell}})} &\geq \prod_{t = t_\ell + 1 }^{ t_{\ell+1} - 1} \left( 1 + \frac{1}{\lambda + t -1 } \right)\\
    &=  \prod_{t = t_\ell + 1 }^{ t_{\ell+1} - 1} \left( \frac{\lambda + t}{\lambda + t -1 } \right) = \frac{\lambda + t_{\ell+1} - 1}{\lambda + t_{\ell} }\\
    &\geq   \frac{1}{\lambda + 1} \frac{t_{\ell+1}}{t_{\ell}}
\end{align*} 
where the last inequality uses that $t_\ell \geq 1$ and $\lambda \geq 1$.

Finally using the upper bound of Equation~\ref{eq:det_up}, we get that
\begin{equation*}
    \frac{t_{\ell+1}}{t_{\ell}} \leq  (1 + C)(1 + \lambda)
\end{equation*}

Which gives that
\begin{align}\label{eq:sum_tau_upper}
    \sum_{t = 1}^\horizon \frac{1}{\tau_t} &= \sum_{\ell = 1}^{\ell(\horizon)} \sum_{t = t_\ell}^{t_{\ell+1} - 1} \frac{1} {t_\ell} \notag\\
    &= \sum_{\ell = 1}^{\ell(\horizon)} \frac{t_{\ell+1} - t_\ell}{t_\ell} \leq  (1 + C)(1 + \lambda)\ell(\horizon)
\end{align}

\underline{\textbf{Step 6: Final Touch.}}\\
Plugging the upper bounds of Equation~\ref{eq:ell_pot} and \ref{eq:sum_tau_upper} in the regret upper bound of Equation~\ref{eq:reg_dec_upper}, we get that
\begin{align*}
    R_\horizon \leq 2 \sqrt{1 + C} \sqrt{2d \log \left ( 1 + \frac{\horizon}{\lambda d} \right )} \Bigg( \beta_\horizon \sqrt{\horizon} \\
    + \gamma_\horizon \sqrt{(1 + C)(1 + \lambda)\ell(\horizon)} \Bigg )
\end{align*}

We finalise by using that
\begin{align*}
    \beta_\horizon = \bigO &\left( \sqrt{d \log(\horizon)}\right), \gamma_\horizon = \bigO \left (\sqrt{\frac{1}{\rho}} d \log(\horizon) \right) \\
    &\text{and} ~~ \ell(\horizon) =  \bigO \left ( d \log(\horizon) \right)
\end{align*}

We get that

\begin{align*}
    R_\horizon \leq \bigO\left( d \log(\horizon) \sqrt{\horizon} \right) + \bigO \left ( \sqrt{\frac{1}{\rho}} d^2 \log(\horizon)^2 \right)
\end{align*}

\end{proof}

\subsection{Rectifying LinPriv Regret Analysis}\label{sec:neil_roth}

\cite{neel2018mitigating} propose ``LinPriv:  Reward-Private Linear UCB", an $\epsilon$-global DP linear contextual bandit algorithm. The context is assumed to be public but adversely chosen. The algorithm is an $\epsilon$-global DP extension of OFUL, where the reward statistics are estimated, at each time-step and for every arm, using a tree-based mechanism~\cite{dpContinualObs, treeMechanism2}.

Theorem 5 in \cite{neel2018mitigating} claims that the regret of LinPriv is of order $$\tilde{\bigO}\left ( d\sqrt{\horizon} + \frac{1}{\epsilon} K d \log \horizon \right ).$$

We believe there is a mistake in their regret analysis. In the proof of Theorem 5, page 25, they say that 
\begin{center}
    "The crux of their analysis is actually the bound $\sum_{t = 1}^n \|x_{i, t} \|_{V_{i,t}^{-1}} \leq 2d \log\left( 1 + \frac{n}{\lambda d} \right)$. "
\end{center}

However, we believe that the result they are citing from~\cite{abbasi2011improved} is erroneous. The correct one is 
$$
\sum_{t = 1}^n \|x_{i, t} \|_{V_{i,t}^{-1}}^2 \leq 2d \log\left( 1 + \frac{n}{\lambda d} \right),
$$
which is known as the elliptical potential lemma ( Eq.~\eqref{eq:ell_pot}). 

To get the sum, a Cauchy-Schwartz inequality is generally used which leads to
$$
\sum_{t = 1}^n \|x_{i, t} \|_{V_{i,t}^{-1}} \leq \sqrt{n \sum_{t = 1}^n \|x_{i, t} \|_{V_{i,t}^{-1}}^2 } \leq \sqrt{2 n d \log\left( 1 + \frac{n}{\lambda d} \right)}
$$

After $n$ is replaced by $\frac{\horizon}{\arms}$, an additional multiplicative $\sqrt{\horizon}$ should appear in the private regret.

Thus, the rectified regret should be  $\tilde{\bigO}\left ( d\sqrt{\horizon} + \frac{1}{\epsilon} K d \sqrt{\horizon} \right )$.

\begin{remark}
    In the proof of Theorem 5~\cite{neel2018mitigating}, to bound the sum $\sum w_{i, t} \leq \bigO(\sqrt{\log \horizon}) \sum_{t = 1}^n \|x_{i, t} \|_{V_{i,t}^{-1}} $, the correct bound has been used on the sum $\sum_{t = 1}^n \|x_{i, t} \|_{V_{i,t}^{-1}}$ with the $\sqrt{\horizon}$ appearing. However, it is misused for the private part.
\end{remark}
\section{Regret lower bounds for zCDP}\label{app:lower_bounds}
In this section, we provide missing proofs from Section~\ref{sec:lower_bound}.

\subsection{KL decomposition}

We recall the definition of an $f$-divergence.

\begin{definition}[$f$-divergence]
     Let $f:(0, \infty) \rightarrow \mathbb{R}$ be a convex function with $f(1)=0$. Let $P$ and $Q$ be two probability distributions on a measurable space $(\mathcal{X}, \mathcal{F})$. If $P \ll Q$, i.e. $P$ is absolutely continuous with respect to $Q$ then the $f$-divergence is defined as
$$
D_f(P \| Q) \triangleq \mathbb{E}_Q\left[f\left(\frac{d P}{d Q}\right)\right]
$$
where $\frac{d P}{d Q}$ is a Radon-Nikodym derivative and $f(0) \triangleq f(0+)$.
\end{definition}

Let $\cP_1$ and $\cP_2$ two distributions over $\CX^n$. Define $\cC$ as a coupling of $(\cP_1, \cP_2)$, i.e. the marginals of $\cC$ are $\cP_1$ and $\cP_2$. We denote by $\Pi(\cP_1, \cP_2)$ the set of all the couplings between  $\cP_1$ and $\cP_2$. Let $M_1$ and $M_2$ be defined as in (Eq.~\ref{eq:marginal}). We recall the definition of an $f$-divergence.

\begin{theorem}\label{thm:kl_to_tr}
    We have that
     \begin{equation}\label{eq:kl_to_tr}
         D_f(M_1 \| M_2) \leq \inf_{\cC \in \Pi(\cP_1, \cP_2)} \expect_{(d,d') \sim \cC} [D_f(\mech_d \| \mech_{d'})].
     \end{equation}
\end{theorem}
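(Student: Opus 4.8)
The statement is essentially a data-processing / joint-convexity inequality for $f$-divergences, applied along a coupling. The plan is as follows. First, fix an arbitrary coupling $\cC \in \Pi(\cP_1, \cP_2)$; it suffices to prove the bound
\[
D_f(M_1 \| M_2) \le \expect_{(d,d') \sim \cC} [D_f(\mech_d \| \mech_{d'})]
\]
and then take the infimum over $\cC$ at the very end. The key observation is that, because $\cC$ has marginals $\cP_1$ and $\cP_2$, both marginals $M_1$ and $M_2$ over the output space $(\X,\cF)$ can be written as mixtures indexed by the \emph{same} pair variable $(d,d')\sim\cC$: namely $M_1(A) = \int \mech_d(A)\,\dd\cC(d,d')$ and $M_2(A) = \int \mech_{d'}(A)\,\dd\cC(d,d')$. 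So $M_1$ and $M_2$ are the ``$\cC$-averages'' of the channels $(d,d')\mapsto\mech_d$ and $(d,d')\mapsto\mech_{d'}$ respectively.

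Next I would invoke joint convexity of the $f$-divergence: for a probability measure $\mu$ (here $\mu=\cC$) and two families of distributions $\{P_\omega\}$, $\{Q_\omega\}$ on the same space, one has
\[
D_f\!\left(\textstyle\int P_\omega\,\dd\mu(\omega)\,\middle\|\,\int Q_\omega\,\dd\mu(\omega)\right) \le \int D_f(P_\omega \| Q_\omega)\,\dd\mu(\omega).
\]
Applying this with $\omega = (d,d')$, $P_\omega = \mech_d$, $Q_\omega = \mech_{d'}$, and $\mu = \cC$ immediately yields $D_f(M_1\|M_2) \le \expect_{(d,d')\sim\cC}[D_f(\mech_d\|\mech_{d'})]$. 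Since $f$ is convex with $f(1)=0$, the joint convexity of $D_f$ is a standard fact; I would either cite it from the technical-lemmas appendix or give the one-line argument via the perspective function $g(x,y) = y f(x/y)$, which is jointly convex, combined with Jensen's inequality applied to the Radon--Nikodym densities (writing the densities of $M_1, M_2$ w.r.t.\ a common dominating measure as integrals against $\cC$ of the densities of $\mech_d,\mech_{d'}$). Taking the infimum over all $\cC \in \Pi(\cP_1,\cP_2)$ on the right-hand side gives Theorem~\ref{thm:kl_to_tr}.

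The main technical obstacle is the measure-theoretic bookkeeping: one must ensure absolute continuity holds where needed (the inequality is vacuous/infinite otherwise, so WLOG $D_f(\mech_d\|\mech_{d'})$ is $\cC$-a.s.\ finite and the relevant Radon--Nikodym derivatives exist), choose a common dominating measure on $(\X,\cF)$, and justify interchanging integration (over $\cC$) with the $f$-divergence integral (over $\X$) via Tonelli/Fubini since all integrands can be taken nonnegative after subtracting the affine part of $f$. None of this is deep, but it is where care is required; the convexity step itself is routine once the mixture representation of $M_1, M_2$ over the shared index $(d,d')$ is in place. I would present the mixture identity explicitly, state joint convexity of $D_f$ as a cited lemma, and conclude in two lines.
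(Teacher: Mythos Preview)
Your proposal is correct and follows essentially the same approach as the paper: fix a coupling $\cC$, write $M_1$ and $M_2$ as $\cC$-mixtures of $\mech_d$ and $\mech_{d'}$ respectively, apply the ``conditioning increases $f$-divergence'' inequality (which the paper states as Theorem~\ref{thm:cond_incr} and you phrase equivalently as joint convexity of $D_f$), and then take the infimum over couplings. The only difference is cosmetic---the paper packages the key step as a channel picture with $X=(d,d')\sim\cC$ and two conditional kernels $P_{Y\mid X}=\mech_d$, $Q_{Y\mid X}=\mech_{d'}$, while you phrase it directly as the mixture form of joint convexity---but the underlying lemma and its application are identical.
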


\begin{proof}
    Let $\cC$ be a coupling of $\cP_1$ and $\cP_2$. We provide a visual proof of the theorem. 
    
    First, we recall Theorem~\ref{thm:cond_incr}.

    \begin{figure}[h!]
    \centering
      \includegraphics[width=\columnwidth]{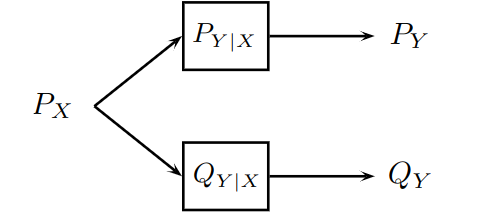}
      \label{fig:boat1}
    \end{figure}
    
    If $P_X \stackrel{P_{Y \mid X}}{\longrightarrow} P_Y$ and $P_X \stackrel{Q_{Y \mid X}}{\longrightarrow} Q_Y$, then
    $$
    D_f\left(P_Y \| Q_Y\right) \leq \mathbb{E}_{X \sim P_X}\left[D_f\left(P_{Y \mid X} \| Q_{Y \mid X}\right)\right] .
    $$

    The idea is to use Theorem~\ref{thm:cond_incr}, where the input is a pair of datasets $(d, d')$ sampled from the coupling $\cC$, the first channel applies the private mechanism to the first dataset, the second channel applies the mechanism to the second dataset. In other words,
    \begin{itemize}
        \item $X= (d,d')$ a pair of datasets in $\cX^n$
        \item the input distribution is $P_X = \cC$ the coupling distribution.
        \item the first channel is the mechanism applied to the first dataset $P_{Y \mid X} = \mech(Y \mid d)$.
        \item the second channel is the mechanism applied to the second dataset $Q_{Y \mid X} = \mech(Y \mid d')$.
        \item Y is the output of the mechanism
    \end{itemize}
    Using this notation, we have that
    \begin{itemize}
        \item $P_Y = M_1$ 
        \item $Q_Y = M_2$
        \item $D_f\left(P_{Y \mid X} \| Q_{Y \mid X}\right) = D_f(\mech_d \| \mech_{d'})$.
    \end{itemize}
    Using Theorem~\ref{thm:cond_incr}, we have that 
    $$
    D_f(M_1 \| M_2) \leq  \expect_{(d,d') \sim \cC} [D_f(\mech_d \| \mech_{d'})].
    $$
    which is true for every coupling $\cC$. Taking the infimum over the couplings concludes the proof.
\end{proof}

We will use the group privacy property of $\rho$-zCDP to upper bound the RHS of Equation~\ref{eq:kl_to_tr}.
\begin{theorem}[Group Privacy for $\rho$-zCDP, Proposition 27, \cite{ZeroDP}]\label{thm:grp_priv}
    If $\mech$ is $\rho$-CDP, then
    \begin{equation*}
        \forall d, d' \in \cX^n, ~ \forall \alpha \geq 1, ~ D_{\alpha}(\mech_d \| \mech_{d'}) \leq \rho \dham(d, d')^2 \alpha.
    \end{equation*}
\end{theorem}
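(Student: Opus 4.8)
The plan is to treat this as a composition/amplification bound for R\'enyi divergences and reduce the general case, with $k \defn \dham(d,d')$, to $k$ successive applications of the $k=1$ case — which is exactly the $\rho$-zCDP hypothesis of Definition~\ref{Def_DP} — glued together by a ``weak triangle inequality'' for R\'enyi divergences. (One may alternatively invoke Proposition~27 of~\cite{ZeroDP} as a black box; the following is the self-contained route.)

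First I would prove a weak triangle inequality: for distributions $P,Q,R$ on a common space, any $\alpha>1$, and conjugate exponents $p,q>1$ with $\tfrac1p+\tfrac1q=1$,
\begin{equation*}
D_\alpha(P\|R) \;\le\; \frac{\alpha p-1}{p(\alpha-1)}\,D_{\alpha p}(P\|Q) \;+\; D_{(\alpha-1)q+1}(Q\|R),
\end{equation*}
which follows by writing $e^{(\alpha-1)D_\alpha(P\|R)}=\mathbb{E}_{Q}\big[(\mathrm dP/\mathrm dQ)^{\alpha}(\mathrm dQ/\mathrm dR)^{\alpha-1}\big]$ after the change of measure $R\to Q$, then applying H\"older with exponents $p,q$; the two factors that appear are precisely $\big(e^{(\alpha p-1)D_{\alpha p}(P\|Q)}\big)^{1/p}$ and $\big(e^{(\alpha-1)q\,D_{(\alpha-1)q+1}(Q\|R)}\big)^{1/q}$ (absolute-continuity issues are absorbed by the usual $+\infty$ convention for $D_\alpha$, which only helps). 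From this I would extract a ``square-root'' composition rule: if $D_\beta(P\|Q)\le\rho_1\beta$ and $D_\beta(Q\|R)\le\rho_2\beta$ for all $\beta>1$, then $D_\alpha(P\|R)\le(\sqrt{\rho_1}+\sqrt{\rho_2})^2\alpha$ for all $\alpha>1$; indeed, substituting $p=1+u$, $q=1+1/u$ and bounding each R\'enyi term on the right by its zCDP value turns the right side into $(\rho_1+\rho_2)\alpha+\tfrac{\rho_1\alpha^2}{\alpha-1}u+\tfrac{\rho_2(\alpha-1)}{u}$, whose minimum over $u>0$ is $(\sqrt{\rho_1}+\sqrt{\rho_2})^2\alpha$ by AM--GM.

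Then I would run the induction. Fix $d,d'$ with $\dham(d,d')=k$ and a path $d=d^{(0)}\sim d^{(1)}\sim\cdots\sim d^{(k)}=d'$ changing one coordinate per step; $\rho$-zCDP gives $D_\beta(\mech_{d^{(i)}}\|\mech_{d^{(i+1)}})\le\rho\beta$ for all $\beta>1$ and all $i$. Composing along the path via the square-root rule — at stage $i$, ``$(i-1)^2\rho$ between $d$ and $d^{(i-1)}$'' together with ``$\rho$ between $d^{(i-1)}$ and $d^{(i)}$'' yields ``$i^2\rho$ between $d$ and $d^{(i)}$'', since $((i-1)\sqrt\rho+\sqrt\rho)^2=i^2\rho$ — gives $D_\alpha(\mech_d\|\mech_{d'})\le k^2\rho\,\alpha=\rho\,\dham(d,d')^2\,\alpha$ for all $\alpha>1$. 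The boundary case $\alpha=1$ follows by letting $\alpha\downarrow1$, using $D_1(\mech_d\|\mech_{d'})=\KL{\mech_d}{\mech_{d'}}=\lim_{\alpha\to1^+}D_\alpha(\mech_d\|\mech_{d'})$ and the monotonicity of $\alpha\mapsto D_\alpha$.

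The real obstacle is Step~1 together with the exponent choice in Step~2: the H\"older bookkeeping must be carried out carefully, and the exponents must be tuned so that the composition constant is \emph{exactly} $(\sqrt{\rho_1}+\sqrt{\rho_2})^2$ — this is precisely what makes the induction telescope to $k^2\rho$ rather than to a lossy bound such as $4^k\rho$ or $k2^k\rho$. The induction itself and the $\alpha=1$ limit are then routine.
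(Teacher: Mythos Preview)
Your proposal is correct. The paper itself does not prove this statement; it cites it as Proposition~27 of~\cite{ZeroDP} and uses it as a black box. Your self-contained argument via the H\"older-based weak triangle inequality for R\'enyi divergences, the optimisation of the conjugate exponents to get the square-root composition rule $(\sqrt{\rho_1}+\sqrt{\rho_2})^2$, and the telescoping induction along a Hamming path is precisely the argument of Bun--Steinke's Proposition~27, so you have correctly reconstructed the cited proof.
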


Combining the last two theorems gives the proof of Theorem~\ref{crl:kl_bound} as a corollary.
\begin{reptheorem}{crl:kl_bound}[KL upper bound as a transport problem]
    If $\mech$ is $\rho$-CDP, then
    \begin{equation*}
        \KL{M_1}{M_2} \leq \rho \inf_{\cC \in \Pi(\cP_1, \cP_2)} \expect_{(d,d') \sim \cC} [\dham(d, d')^2].
    \end{equation*} 
\end{reptheorem}

\begin{proof}
    Let $\mech$ be $\rho$-CDP. 
    Applying Theorem~\ref{thm:kl_to_tr}, with $f(x) = x \log(x)$ gives that
    \begin{equation*}
        \KL{M_1}{M_2} \leq \rho \inf_{\cC \in \Pi(\cP_1, \cP_2)} \expect_{(d,d') \sim \cC} [\KL{\mech_d}{\mech_{d'}}].
    \end{equation*}

    Applying Theorem~\ref{thm:grp_priv} with $\alpha = 1$ gives that
    \begin{equation*}
        \KL{\mech_d}{\mech_{d'}} \leq \rho \dham(d, d')^2
    \end{equation*}

    Combining both inequalities gives the final bound.
\end{proof}

Using maximal coupling for data-generating distributions that are product distributions yields the proof of Theorem~\ref{thm:zcdp_deco}.

\begin{reptheorem}{thm:zcdp_deco}[KL decomposition for $\rho$-zCDP]
    Let $\cP_1$ and $\cP_2$ be two product distributions over $\cX^n$, i.e. $\cP_1 = \bigotimes_{i = 1}^n p_{1,i}$ and $\cP_2 = \bigotimes_{i = 1}^n p_{2,i}$, where $p_{\nu,i}$ for $\nu \in \{1, 2\}, i \in [1, n]$ are distributions over $\cX$. Let $t_i \defn \TV{p_{1,i}}{p_{2,i}}$.    
    If $\mech$ is $\rho$-zCDP, then
    \begin{equation*}
        \KL{M_1}{M_2} \leq \rho \left(\sum_{i =1 }^n t_i \right) ^2 + \rho  \sum_{i = 1}^n t_i (1 - t_i)
    \end{equation*}
\end{reptheorem}
   
\begin{proof}
    Let $c_\infty^i$ be a maximal coupling between $p_{1,i}$ and $p_{2,i}$ for all $i \in [1,n]$. We define the coupling $\cC_\infty \defn \bigotimes_{i = 1}^n c_\infty^i$. Then $\cC_\infty$ is a coupling of $\cP_1$ and $\cP_2$.

    Since $\dham(d, d') = \sum_{i=1}^n \ind{d_{i} \neq d'_{i} }$ we get that, for $(d, d') \sim \cC_\infty$,
    $$ \dham(d, d') \sim \sum_{i = 1}^n \text{Bernoulli}( t_i ),$$
    where $t_i \defn \TV{p_{1,i}}{p_{2,i}}$.
    
    This further yields
    
     $$\expect_{(d,d') \sim \cC_\infty} [\dham(d, d')^2 ] =  \left(\sum_{i =1 }^n t_i \right) ^2 + \sum_{i = 1}^n t_i (1 - t_i).$$

    Corollary~\ref{crl:kl_bound} concludes the proof.
\end{proof}

\subsection{Lower bounds on regret for bandits}

\begin{reptheorem}{lm:kl_decompo_var}[KL decompostion for $\rho$-Interactive zCDP]
    If $\pol$ is $\rho$-Interactive zCDP, then
    \begin{align*}
         &\KL{m_{\nu \pol}}{m_{\nu' \pol}} \leq \rho \left [ \expect_{\nu \pi} \left (\sum_{t = 1}^\horizon t_{a_t} \right)  \right]^2 \\
        &+ \rho \expect_{\nu \pi} \left (\sum_{t = 1}^\horizon t_{a_t} (1 - t_{a_t}) \right) + \rho \Var_{\nu \pi} \left ( \sum_{t = 1}^\horizon t_{a_t} \right)
    \end{align*}
    where $t_{a_t} \defn \TV{P_{a_t}}{P'_{a_t}}$ and $\expect_{\nu \pi}$ and $\Var_{\nu \pi}$ are the expectation and variance under $m_{\nu \pol}$ respectively.
\end{reptheorem}

\begin{proof}
    
We adapt the proofs of the first section to the bandit case, by creating a coupled bandit instance.

Let $\nu = \{P_a: a \in [K] \}$ and $\nu' = \{P'_a: a \in [K] \}$ be two bandit instances. Define $c_a$ as the maximal coupling between $P_a$ and $P'_a$. Let $\pol = \{\pol_t \}_{t = 1}^\horizon$ be a $\rho$-Interactive zCDP policy. 

Here, we build a coupled environment $\gamma$ of $\nu$ and $\nu'$.  The policy $\pol$ interacts with the coupled environment $\gamma$ up to a given time horizon $\horizon$ to produce a history $\lbrace (A_t, R_t, R'_t)\rbrace_{t=1}^{\horizon}$. The iterative steps of this interaction process are:
\begin{enumerate}
\item[1.] the probability of choosing an action $A_t = a$ at time $t$ is dictated only by the policy $\pol_t$ and $A_1, R_1, A_2, R_2, \dots, A_{t-1}, R_{t-1}$, i.e. ignores $\{R'_s\}_{s = 1}^{t - 1}$.
\item[2.] the distribution of rewards $(R_t, R'_t)$ is $c_{A_t}$ and is conditionally independent of the previous observed history $\lbrace (A_s, R_s, R'_s)\rbrace_{t=1}^{t - 1}$.
\end{enumerate}

This interaction is similar to the interaction process of policy $\pol$ with the first bandit instance $\nu$, with the addition of sampling an extra $R'_t$ from the coupling of $P_{a_t}$ and $P'_{a_t}$.

The distribution of the history induced by the interaction of $\pol$ and the coupled environment can be defined as
\begin{align*}
     &p_{\gamma \pol}(a_1 , r_1 , r'_1 \dots , a_\horizon , r_\horizon, r'_\horizon )\\
     &\defn \prod_{t=1}^\horizon \pol_t(a_t \mid a_1 , r_1 , \dots , a_{t-1} , r_{t-1} ) c_{a_t} (r_t, r'_t)
\end{align*}

To simplify the notation, let $\textbf{a} \defn (a_1, \dots, a_\horizon)$, $\textbf{r} \defn (r_1, \dots, r_\horizon)$ and $\textbf{r'} \defn (r'_1, \dots, r'_\horizon)$. Also, let $c_{\textbf{a}}(\textbf{r}, \textbf{r'}) \defn \prod_{t = 1}^\horizon c_{a_t}(r_t, r'_t)$ and $\pi(\textbf{a} \mid \textbf{r}) \defn \prod_{t=1}^\horizon \pol_t(a_t \mid a_1 , r_1 , \dots , a_{t-1} , r_{t-1} )$. We put $\textbf{h} \defn (\textbf{a}, \textbf{r} , \textbf{r'} )$.
With the new notation 

\begin{equation*}
     p_{\gamma \pol}(\textbf{a}, \textbf{r} , \textbf{r'} ) \defn \pi(\textbf{a} \mid \textbf{r}) c_{\textbf{a}}(\textbf{r}, \textbf{r'})
\end{equation*}

Similarly, we define
\begin{equation*}
     q_{\gamma \pol}(\textbf{a}, \textbf{r} , \textbf{r'} ) \defn \pi(\textbf{a} \mid \textbf{r'}) c_{\textbf{a}}(\textbf{r}, \textbf{r'})
\end{equation*}

It follows that $m_{\nu, \pol}$ is the marginal of $p_{\gamma \pol}$ when integrated over $(\textbf{r}, \textbf{r'})$, and $m_{\nu', \pol}$ is the marginal of $q_{\gamma \pol}$ when integrated over $(\textbf{r}, \textbf{r'})$, i.e.
\begin{equation*}
    m_{\nu, \pol}( \textbf{a} ) = \int_{ \textbf{r}, \textbf{r'} } p_{\gamma \pol}(\textbf{a}, \textbf{r} , \textbf{r'} ) \dd \textbf{r} \dd \textbf{r'} 
\end{equation*}
and
\begin{equation*}
     m_{\nu', \pol}(\textbf{a}) = \int_{ \textbf{r}, \textbf{r'} } q_{\gamma \pol}(\textbf{a}, \textbf{r} , \textbf{r'} ) \dd \textbf{r} \dd \textbf{r'}.
\end{equation*}

By the data-processing inequality, we get that

\begin{equation}\label{ineq:kl_kl}
    \KL{m_{\nu, \pol}}{m_{\nu', \pol}} \leq \KL{p_{\gamma \pol}}{q_{\gamma \pol}}
\end{equation}

In the following, upper case variables refer to random variables. 
We have that
\begin{align*}
    &~~~~\KL{p_{\gamma \pol}}{q_{\gamma \pol}}\\ & \stackrel{(a)}{=} \expect_{\textbf{H} \defn (\textbf{A}, \textbf{R} , \textbf{R'}) \sim p_{\gamma \pol} } \left [ \log \left( \frac{\pi(\textbf{A} \mid \textbf{R}) c_{\textbf{A}}(\textbf{R}, \textbf{R'})}{\pi(\textbf{A} \mid \textbf{R'}) c_{\textbf{A}}(\textbf{R}, \textbf{R'})} \right) \right]\\
    &\stackrel{(b)}{=} \sum_{t = 1}^\horizon \expect_{\textbf{H} \sim p_{\gamma \pol} } \left [ \log \left( \frac{\pol_t(A_t \mid \Hist_{t-1} )}{\pol_t(A_t \mid \Hist'_{t - 1})} \right) \right]\\
    &\stackrel{(c)}{=} \sum_{t = 1}^\horizon \expect_{\textbf{H} \sim p_{\gamma \pol} } \left [ \expect_{\textbf{H} \sim p_{\gamma \pol} } \left[  \log \left( \frac{\pol_t(A_t \mid \Hist_{t-1})}{\pol_t(A_t \mid \Hist'_{t-1} )} \right) \bigm\vert \Hist_{t-1} \right] \right]\\
    &\stackrel{(d)}{=} \sum_{t = 1}^\horizon \expect_{\textbf{H} \sim p_{\gamma \pol} } \left [ \expect_{A_t \sim \pol_t(. \mid \Hist_{t-1}) } \left[  \log \left( \frac{\pol_t(A_t \mid \Hist_{t-1})}{\pol_t(A_t \mid \Hist'_{t-1})} \right)  \right] \right]\\
    &\stackrel{(e)}{=} \sum_{t = 1}^\horizon \expect_{\textbf{H} \sim p_{\gamma \pol} } \left [ \KL{\pol_t(. \mid \Hist_{t-1})}{\pol_t(. \mid \Hist'_{t-1})} \right] 
\end{align*}
where $\Hist_{t} \defn (A_1, R_1, \dots, A_t, R_t)$ and $\Hist'_{t} \defn (A_1, R'_1, \dots, A_t, R'_t)$, where we obtain

(a): by definition of $p_{\gamma \pol}$, $q_{\gamma \pol} $ and the KL divergence

(b): by definition of $\pi(\textbf{A} \mid \textbf{R})$ and $\pi(\textbf{A} \mid \textbf{R'})$

(c): using the towering property of the expectation

(d): using that, conditioned on the history $\Hist_{t - 1}$, the distribution of $ A_t$ is $\pol_t(. \mid \Hist_{t - 1})$.

(e): by definition of the KL divergence


On the other hand, Theorem~\ref{thm:sum_kl}, we have that
\begin{align*}
    \sum_{t = 1}^\horizon  \KL{\pol_t(. \mid \Hist_{t - 1})}{\pol_t(. \mid \Hist'_{t - 1})} \leq \rho \dham^2(\textbf{R}, \textbf{R}')
\end{align*}

which means that
\begin{align*}
    &\KL{p_{\gamma \pol}}{q_{\gamma \pol}} \\
    &\leq \expect_{\textbf{H} \sim p_{\gamma \pol} } \left [ \rho \dham^2(\textbf{R}, \textbf{R}')\right]\\
    &\stackrel{(a)}{=} \expect_{\textbf{H} \sim p_{\gamma \pol} } \Bigg [ \expect_{\textbf{H} \sim p_{\gamma \pol} } \left[  \rho \dham^2(\textbf{R}, \textbf{R}') \bigm \vert \textbf{A }\right]  \Bigg]\\
    &\stackrel{(b)}{=} \rho  \expect_{\textbf{H} \sim p_{\gamma \pol} } \Bigg [ \expect_{\textbf{H} \sim p_{\gamma \pol} } \left[  \dham(\textbf{R}, \textbf{R}') \bigm \vert \textbf{A }\right]^2\\
    &\qquad + \rho \Var \left[ \dham(\textbf{R}, \textbf{R}') \bigm \vert \textbf{A}  \right]  \Bigg] \\
    &\stackrel{(c)}{=} \rho \expect_{\nu, \pi} \left [  \left (\sum_{t = 1}^\horizon t_{a_t} \right)^2  \right] + \rho \expect_{\nu, \pi} \left (\sum_{t = 1}^\horizon t_{a_t} (1 - t_{a_t}) \right)\\
    &\stackrel{(d)}{=} \rho \left [ \expect_{\nu, \pi}   \left (\sum_{t = 1}^\horizon t_{a_t} \right)  \right]^2 \\
    &\qquad+ \rho \expect_{\nu, \pi} \left (\sum_{t = 1}^\horizon t_{a_t} (1 - t_{a_t}) \right) + \rho \Var_{\nu, \pi} \left [ \sum_{t = 1}^\horizon t_{a_t} \right ] \: ,
\end{align*}
where we obtain

(a): using the towering property of the expectation

(b) and (d): by definition of the variance

(c): using that $\dham(\textbf{R}, \textbf{R}') = \sum_{t= 1}^\horizon \ind{R_t \neq R'_t}$ where $ \ind{R_t \neq R'_t} \mid A_t \sim \text{Bernoulli}(t_{a_t})$ by the definition of the maximal coupling and the sum is i.i.d given $\textbf{A}$.

Finally, plugging the upper bound in Inequality~\eqref{ineq:kl_kl} concludes the proof.
\end{proof}

\begin{remark}[Sharper Bound than pure DP]
Theorem~\ref{lm:kl_decompo_var} is $\rho$-zCDP version of the KL decomposition of Theorem 10 in~\cite{azize2022privacy}. The KL upper bound of Theorem~\ref{lm:kl_decompo_var} has a dependence on the total variation squared, emerging from the $\dham^2$ of the group privacy. In contrast, the KL upper bound of Theorem 10 in~\cite{azize2022privacy} has a linear dependence in the total variation.
\end{remark}

\subsubsection{Finite-armed bandits}

\begin{theorem}[Minimax lower bounds for finite-armed bandits]\label{thm:finite_lb} Let $\Pi^{\rho}$ be the set of $\rho$-zCDP policies. For any $\arms > 1$, $ \horizon \geq  \arms - 1 $, and $0 < \rho \leq 1$, 
\begin{align*}
     &\reg_{\horizon, \rho}^{\text{minimax}} \defn \inf_{\pi \in \Pi^\rho} \sup _{\nu \in \mathcal{E}^{\arms}} \reg_{\horizon}(\pi, \nu)\\
     &\geq \max\bigg\lbrace\frac{1}{27} \underset{\text{without $\rho$ zCDP}}{\underbrace{\sqrt{\horizon(\arms-1)}}},~~\frac{1}{124
}\underset{\text{with $\rho$ zCDP}}{\underbrace{\sqrt{\frac{\arms - 1}{\rho}}}} \bigg\rbrace.
\end{align*} 
\end{theorem}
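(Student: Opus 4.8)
\textbf{Proof proposal for Theorem~\ref{thm:finite_lb}.}

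The plan is to follow the standard two-arms-versus-many reduction used for the minimax bound $\Omega(\sqrt{KT})$ in~\cite{lattimore2018bandit} (Chapter 15), but instrumented with the $\rho$-Interactive zCDP KL decomposition (Theorem~\ref{lm:kl_decompo_var}) in place of the usual divergence decomposition. Concretely, I would fix a ``base'' instance $\nu$ with $K$ arms where arm $1$ has mean $\mu_1 = \Delta$ (to be chosen) and all other arms have mean $0$ (say Gaussian rewards of unit variance, or Bernoulli-type rewards suitably scaled to lie in $[0,1]$; the Gaussian choice matches Chapter~15 of~\cite{lattimore2018bandit} and one transfers to bounded rewards at the cost of constants). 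For each arm $a \neq 1$, define the ``confusing'' instance $\nu^{(a)}$ obtained from $\nu$ by raising the mean of arm $a$ to $2\Delta$, making $a$ optimal there. Let $N_a$ denote the number of pulls of arm $a$ under the interaction of $\pi$ with $\nu$. By the standard regret-to-pulls argument, $\reg_\horizon(\pi,\nu) \geq \Delta\, \E_{\nu\pi}[\sum_{a\neq 1} N_a]$ and $\reg_\horizon(\pi,\nu^{(a)}) \geq \Delta (T - \E_{\nu^{(a)}\pi}[N_a])$, so an averaging over $a$ reduces everything to lower-bounding $\sum_{a\neq 1}\big(T - \E_{\nu^{(a)}\pi}[N_a] + \E_{\nu\pi}[N_a]\big)$, which in turn hinges on controlling, via Pinsker plus Bretagnolle--Huber, the quantity $\KL{m_{\nu\pi}}{m_{\nu^{(a)}\pi}}$ for a well-chosen arm $a$.

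The key new step is to bound this KL using Theorem~\ref{lm:kl_decompo_var}. Since $\nu$ and $\nu^{(a)}$ differ only in arm $a$, we have $t_b = \TV{P_b}{P'_b} = 0$ for $b \neq a$ and $t_a = \TV{P_a}{P'_a} \leq c\Delta$ for some absolute constant $c$ (this is the Gaussian TV bound; in fact $t_a \le \Delta$ works for suitable scaling). Plugging into the decomposition, $\sum_t t_{a_t} = t_a N_a$, so
\begin{align*}
\KL{m_{\nu\pi}}{m_{\nu^{(a)}\pi}} &\leq \rho\Big(t_a^2\,\E_{\nu\pi}[N_a]^2 + t_a(1-t_a)\,\E_{\nu\pi}[N_a] + t_a^2\,\Var_{\nu\pi}(N_a)\Big)\\
&\leq \rho\, t_a^2 \,\E_{\nu\pi}[N_a^2] + \rho\, t_a\,\E_{\nu\pi}[N_a],
\end{align*}
using $\E[N_a]^2 + \Var(N_a) = \E[N_a^2]$. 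Now one must also invoke the \emph{non-private} bound $\KL{m_{\nu\pi}}{m_{\nu^{(a)}\pi}} \le c'\,t_a^2\,\E_{\nu\pi}[N_a]$ (the classical Gaussian divergence decomposition, valid for any policy), and combine the two. The trick, as in~\cite{azize2022privacy}, is to choose the arm $a^\star$ minimising $\E_{\nu\pi}[N_a]$ (so $\E_{\nu\pi}[N_{a^\star}] \le T/(K-1)$), and additionally control $\E_{\nu\pi}[N_{a^\star}^2]$. Here I expect the cleanest route is to note $N_a \le T$ deterministically, hence $\E[N_a^2] \le T\,\E[N_a] \le T^2/(K-1)$ for $a^\star$; this yields $\KL{m_{\nu\pi}}{m_{\nu^{(a^\star)}\pi}} \lesssim \rho t_{a^\star}^2 T^2/(K-1) + \min\{\rho,\,1\}\cdot t_{a^\star}^2 T/(K-1)$ after taking the minimum of the private and non-private bounds over the scaling of $t_{a^\star}$.

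Finally I would optimise $\Delta$. Choosing $\Delta \asymp \min\{\sqrt{(K-1)/T},\ \sqrt{(K-1)/\rho}\,/\,T\}$ balances the regret term $\Delta T /(K-1) \cdot (K-1) = \Delta T$ — wait, more carefully: the regret lower bound produced by Bretagnolle--Huber is $\gtrsim \Delta T \exp(-\KL{\cdot}{\cdot})$, and requiring $\KL{\cdot}{\cdot} \le$ const forces $\Delta \lesssim \min\{\sqrt{(K-1)/T},\ \sqrt{(K-1)/\rho}\cdot T^{-1}\}$; the resulting regret is $\gtrsim \sqrt{T(K-1)}$ in the first regime and $\gtrsim \sqrt{(K-1)/\rho}$ in the second, giving the stated $\max$. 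The explicit constants $1/27$ and $1/124$ would come from carefully tracking the TV-to-mean-gap constant for bounded/Gaussian rewards, the $1/2$ in Pinsker, and the $\exp(-1)$ in Bretagnolle--Huber; I would not grind through these in the plan. \textbf{The main obstacle} I anticipate is the appearance of $\E_{\nu\pi}[N_{a^\star}^2]$ (equivalently $\Var_{\nu\pi}(N_{a^\star})$) in the KL bound, which has no analogue in the pure-DP proof of~\cite{azize2022privacy}: the crude bound $N_{a^\star}\le T$ may lose a factor and weaken the constant, so one may instead need a more delicate argument — e.g. splitting the horizon into blocks, or using that $\sum_a \Var_{\nu\pi}(N_a) \le \sum_a \E_{\nu\pi}[N_a^2]$ is still controlled because $\sum_a N_a = T$ — to keep the dependence on $K$ and $T$ tight enough to recover $\Omega(\sqrt{(K-1)/\rho})$ rather than something weaker like $\Omega((K-1)/\sqrt{\rho T})$.
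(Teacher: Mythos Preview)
Your proposal is essentially the paper's proof: fix the Gaussian instance $(\Delta,0,\dots,0)$, pick the least-pulled arm $a^\star$ so that $\E_{\nu\pi}[N_{a^\star}]\le T/(K-1)$, apply Bretagnolle--Huber, and bound the KL via Theorem~\ref{lm:kl_decompo_var}. Two remarks. First, your anticipated ``main obstacle'' is not one: the paper uses exactly the crude bound you mention, writing $\Var_{\nu\pi}(N_{a^\star})\le \E_{\nu\pi}[N_{a^\star}]\bigl(T-\E_{\nu\pi}[N_{a^\star}]\bigr)\le T^2/(K-1)$ (equivalent to your $\E[N_{a^\star}^2]\le T\,\E[N_{a^\star}]$), and this already gives $\KL{\cdot}{\cdot}\lesssim \rho\Delta^2 T^2/(K-1)+\rho\Delta T/(K-1)$, from which $\Delta\asymp \sqrt{(K-1)/\rho}\,/T$ yields the claimed $\Omega(\sqrt{(K-1)/\rho})$ with no loss. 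Second, the detour through the non-private divergence decomposition and ``taking the minimum'' is unnecessary --- the paper handles the two terms of the $\max$ separately (citing Theorem~15.2 of~\cite{lattimore2018bandit} for $\Omega(\sqrt{T(K-1)})$ and using only the zCDP bound for the private term), so you can drop that step.
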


\begin{proof}
The non-private part of the lower bound is due to Theorem 15.2 in~\cite{lattimore2018bandit}. To prove the private part of the lower bound, we plug our KL decomposition theorem into the proofs of regret lower bounds for bandits. 

\textbf{Step 1: Choosing the `hard-to-distinguish' environments.} First, we fix a $\rho$-zCDP policy $\pol$ . Let $\Delta$ be a constant (to be specified later), and $\model$ be a Gaussian bandit instance with unit variance and mean vector $\mu = (\Delta, 0, 0, . . . , 0)$.  

To choose the second bandit instance, let $a \defn \arg\min_{i \in [2, \arms]} \mathbb{E}_{\model, \pol} [N_{i}(\horizon)]$ be the least played arm in expectation other than the optimal arm 1. The second environment $\model'$ is then chosen to be a Gaussian bandit instance with unit variance and mean vector $\mu' = (\Delta, 0, 0, \dots 0,2\Delta,0 \dots,  0)$, where $\mu'_j = \mu_j$ for every $j$ except for $\mu'_a = 2 \Delta$.

The first arm is optimal in $\model$ and the arm $i$ is optimal in $\model'$.

Since $\horizon = \mathbb{E}_{\nu \pol}\left[N_{1}(\horizon)\right]+\sum_{i>1} \mathbb{E}_{\nu \pol}\left[N_{i}(\horizon)\right] \geq(\arms-1) \mathbb{E}_{\nu \pol}\left[N_{a}(\horizon)\right]$, we observe that
\begin{equation*}
     n_a \defn \mathbb{E}_{\nu \pol}\left[N_{a}(\horizon)\right] \leq \frac{\horizon}{\arms-1}
\end{equation*}

\textbf{Step 2: From lower bounding regret to upper bounding KL-divergence.} Now by the classic regret decomposition and Markov inequality (Lemma~\ref{lem:markov}), we get\footnote{In all regret lower bound proofs, we are under the probability space over the sequence of actions, produced when $\pol$ interacts with $\nu$ for $\horizon$ time-steps. We do this to use the KL-divergence decomposition of $\mathbb{M}_{\nu \pol}$} 
\begin{align*}
    \reg_{\horizon}(\pol, \nu) &= \left( \horizon-\mathbb{E}_{\model \pol}\left[N_{1}(\horizon)\right] \right) \Delta\\
    &\geq \mathbb{M}_{\nu \pol}\left(N_{1}(\horizon) \leq \horizon / 2\right) \frac{\horizon \Delta}{2},
\end{align*}
and
\begin{align*}
    \reg_{\horizon}(\pol, \nu') &= \Delta \mathbb{E}_{\nu' \pol}\left[N_{1}(\horizon)\right]+\sum_{a \notin \{1, i\}} 2 \Delta \mathbb{E}_{\nu' \pol}\left[N_{a}(\horizon)\right]\\
    &\geq \mathbb{M}_{\nu' \pol}\left(N_{1}(\horizon)>\horizon / 2\right) \frac{\horizon \Delta}{2}.
\end{align*}
Let  us define the event $A \defn \{ N_{1}(\horizon) \leq \horizon / 2 \} = \{(a_1, a_2, \dots, a_\horizon) : \textrm{card}(\{j : a_j = 1\}) \leq \horizon/2 \}$.


By applying the Bretagnolle–Huber inequality, we have:
\begin{align*}
    \reg_{\horizon}(\pol, \nu) + \reg_{\horizon}(\pol, \nu') & \geq \frac{\horizon \Delta}{2} (M_{\model \pol} (A) + M_{\model' \pol} (A^c))\\
    & \geq \frac{\horizon \Delta}{4} \exp(-\KL{M_{\model \pol}}{M_{\model' \pol}})
\end{align*}

\textbf{Step 3: KL-divergence decomposition with $\rho$-Interactive zCDP.} Since $\nu$ and $\nu'$ only differ in arm $a$, we get that $\sum t_{a_t} = t_a \sum \ind{a_t = a}$, where $t_a \defn \TV{\nu_a}{\nu'_a}$.

Now, applying Theorem~\ref{lm:kl_decompo_var} gives
\begin{align*}
    &\KL{M_{\model \pol}}{M_{\model' \pol}}\\
    &\leq \rho (n_a^2 t_a^2 + n_a t_a (1 - t_a) + t_a^2 \Var_{\model \pi}(N_a(T)))\\
    &\leq \rho (n_a^2 t_a^2 + n_a t_a + t_a^2 \Var_{\model \pi}(N_a(T))) \: .
\end{align*}
where the last inequality is due to the fact that $1 - t_a \leq 1$.

On the other hand, we have the following upper bounds,
\begin{equation*}
    n_a \leq \frac{\horizon}{\arms - 1}
\end{equation*}
and
\begin{align*}
    \Var_{\model \pi}(N_a(T)) \leq \mathbb{E}_{\nu \pol}\left[N_{a}(\horizon)\right] (\horizon - \mathbb{E}_{\nu \pol}\left[N_{a}(\horizon)\right]) \leq \frac{T^2}{\arms - 1}
\end{align*}

and finally, using Pinsker's Inequality (Lemma~\ref{lem:pinsker})
\begin{equation*}
    t_a = \TV{\nu_a}{\nu'_a} \leq \sqrt{\frac{1}{2} \KL{\mathcal{N}(0,1)}{\mathcal{N}(2 \Delta,1)}} = \Delta
\end{equation*}

\textbf{Step 4: Choosing the worst $\Delta$.} Plugging back in the regret expression, we find
\begin{align*}
    &\reg_{\horizon}(\pol, \nu) + \reg_{\horizon}(\pol, \nu')\\
    &\geq \frac{\horizon \Delta}{4} \exp \left(-\rho \left [ \frac{\horizon^2}{K - 1} \left (1 + \frac{1}{K - 1}\right) \Delta^2 + \frac{T}{\arms - 1} \Delta \right ]\right)
\end{align*}

Let $\alpha \defn \frac{T}{4}$, $\beta \defn \frac{\rho T^2}{\arms - 1}(1 + \frac{1}{K - 1})$ and $\gamma \defn  \frac{\rho T}{\arms - 1}$.

We have then
\begin{align*}
    &\reg_{\horizon}(\pol, \nu) + \reg_{\horizon}(\pol, \nu')\\
    &\geq \alpha \Delta \exp\left( - \beta \Delta^2 - \gamma \Delta \right)\\
    &\geq \alpha \Delta \exp\left( - \beta \left ( \Delta + \frac{\gamma}{2 \beta}  \right)^2  \right)
\end{align*}

By optimising for $\Delta$, we choose $\Delta = \frac{1}{\sqrt{\beta}} - \frac{\gamma}{2 \beta}$.

Putting back in $\Delta$ we have

\begin{align*}
    &\Delta = \frac{1}{\sqrt{\beta}} - \frac{\gamma}{2 \beta}\\
    &= \sqrt{\frac{K - 1}{ \rho \horizon^2 \left (1 + \frac{1}{\arms - 1} \right) }} - \frac{1}{ 2 \horizon  \left (1 + \frac{1}{\arms - 1} \right) }\\
    &\geq \sqrt{\frac{K - 1}{2  \rho \horizon^2 }} - \frac{1}{ 2 \horizon }\\
    &= \frac{\sqrt{\arms - 1}}{\horizon} \left ( \frac{1}{\sqrt{2 \rho}} - \frac{1}{2 \sqrt{\arms - 1}} \right)\\
    &\geq \frac{\sqrt{\arms - 1}}{\horizon} \left ( \frac{1}{\sqrt{2 \rho}} - \frac{1}{2 } \right)\\
    &\geq \frac{\sqrt{\arms - 1}}{\horizon} \left ( \frac{1}{4 \sqrt{2 \rho}}  \right)
\end{align*}
where all the inequalities use that $\arms \geq 2$  and $\rho \leq 1$.

This gives that
\begin{align*}
    \reg_{\horizon}(\pol, \nu) + \reg_{\horizon}(\pol, \nu') &\geq \frac{\sqrt{\arms - 1}}{4 } \left ( \frac{1}{4 \sqrt{2 \rho}}  \right) \exp \left(-1\right)
\end{align*}

We conclude the proof by using $\frac{1}{16 \sqrt{2}}\exp(-1) \geq \frac{1}{62}$, and using $2 \max(a, b) \geq a + b$.
\end{proof}

\subsubsection{Linear bandits}

\begin{reptheorem}{thm:lin_band_lb}[Minimax lower bounds for linear bandits]  Let $\mathcal{A}=[-1,1]^{d}$ and $\Theta=\real^{d}$. Then, for any $\rho$-Interactive zCDP policy, we have that
\begin{align*}
\reg^{\text{minimax}}_{\horizon, \rho}(\mathcal{A}, \Theta) \geq  \max \left\lbrace\underset{\text{without $\rho$-zCDP}}{\underbrace{\frac{e^{-2}}{8}  d\sqrt{\horizon}}},~\underset{\text{with $\rho$-zCDP}}{\underbrace{\frac{e^{-2.25}}{4} \frac{d}{\sqrt{\rho}}}}\right\rbrace
\end{align*}
\end{reptheorem}

\begin{proof}
For the non-private lower bound, Theorem 24.1 of \cite{lattimore2018bandit} gives that,
\begin{equation*}
    \reg^{\text{minimax}}_{\horizon}(\mathcal{A}, \Theta) \geq  \exp (-2) \frac{d}{8}  \sqrt{\horizon}. 
\end{equation*}

Now, we focus on proving the $\rho$-zCDP part of the lower bound.

Let $\Theta=\left\{-\frac{1}{\horizon \sqrt{\rho}}, \frac{1}{\horizon \sqrt{\rho}}\right\}^{d}$. For $\theta, \theta^{\prime} \in \Theta$, let $\nu$ and $\nu'$ be the bandit instances corresponding resp. to $\theta$ and $\theta'$. We denote $\mathbb{M}_{\theta} = \mathbb{M}_{\nu, \pol} $ and $\mathbb{M}_{\theta'} = \mathbb{M}_{\nu', \pol}$. Let $\mathbb{E}_\theta$ and $\mathbb{E}_{\theta'}$ the expectations under $\mathbb{M}_{\theta}$ and $\mathbb{M}_{\theta'}$
respectively.

\begin{figure*}
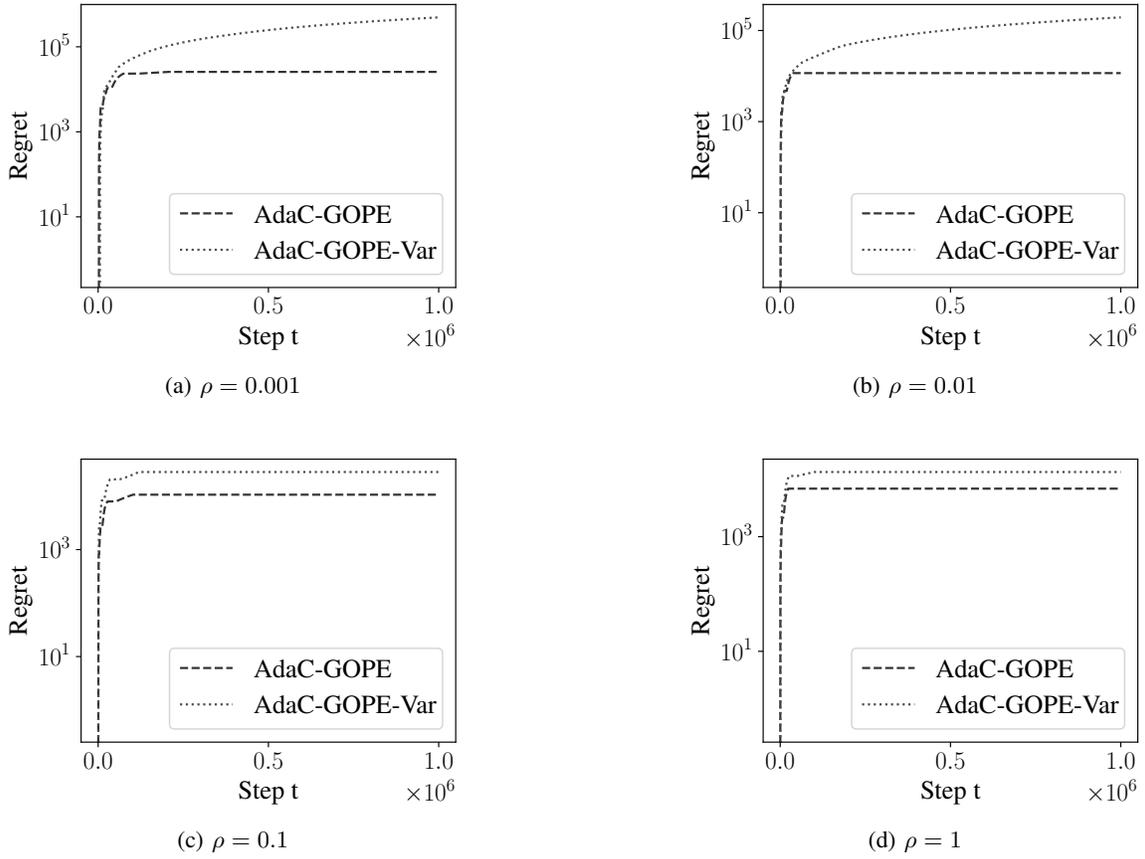

  \begin{subfigure}[b]{0.5\linewidth}
    \centering
    \scalebox{0.55}{\input{figures/adar_gope_vs_var_rho_00001.pgf}}
    \caption{$ \rho = 0.001 $} 
    \label{fig7:a} 
    \vspace{4ex}
  \end{subfigure}
  \begin{subfigure}[b]{0.5\linewidth}
    \centering
    \scalebox{0.55}{\input{figures/adar_gope_vs_var_rho_0001.pgf}}
    \caption{$ \rho = 0.01 $} 
    \label{fig7:b} 
    \vspace{4ex}
  \end{subfigure} 
  \begin{subfigure}[b]{0.5\linewidth}
    \centering
    \scalebox{0.55}{\input{figures/adar_gope_vs_var_rho_001.pgf}} 
    \caption{$ \rho = 0.1 $} 
    \label{fig7:c} 
  \end{subfigure}
  \begin{subfigure}[b]{0.5\linewidth}
    \centering
    \scalebox{0.55}{\input{figures/adar_gope_vs_var_rho_01.pgf}} 
    \caption{$ \rho = 1 $} 
    \label{fig7:d} 
  \end{subfigure} 
  \caption{Evolution of the regret over time for \adargope{} and Adar-GOPE-Var for different values of the privacy budget $\rho$ }
  \label{fig7} 
\end{figure*}

\textbf{Step 1: From lower bounding regret to upper bounding KL-divergence.}
We begin with
\begin{align*}
&\reg_{\horizon}(\mathcal{A}, \theta) \\
&=\mathbb{E}_{\theta}\left[\sum_{t=1}^{\horizon} \sum_{i=1}^{d}\left(\operatorname{sign}\left(\theta_{i}\right)-A_{t i}\right) \theta_{i}\right] \\
& \geq \frac{1}{ \horizon \sqrt{\rho}  } \sum_{i=1}^{d} \mathbb{E}_{\theta}\left[\sum_{t=1}^{\horizon} \mathbb{I}\left\{\operatorname{sign}\left(A_{t i}\right) \neq \operatorname{sign}\left(\theta_{i}\right)\right\}\right] \\
& \geq \frac{1}{\sqrt{\rho}} \sum_{i=1}^{d} \mathbb{M}_{\theta}\left(\sum_{t=1}^{\horizon} \mathbb{I}\left\{\operatorname{sign}\left(A_{t i}\right) \neq \operatorname{sign}\left(\theta_{i}\right)\right\} \geq \horizon / 2\right) 
\end{align*}

In this derivation, the first equality holds because the optimal action satisfies $a_{i}^{*}=\operatorname{sign}\left(\theta_{i}\right)$ for $i \in[d]$. The first inequality follows from an observation that $\left(\operatorname{sign}\left(\theta_{i}\right)-A_{t i}\right) \theta_{i} \geq\left|\theta_{i}\right| \mathbb{I}\left\{\operatorname{sign}\left(A_{t i}\right) \neq \operatorname{sign}\left(\theta_{i}\right)\right\}$. The last inequality is a direct application of Markov's inequality~\ref{lem:markov}.

For $i \in[d]$ and $\theta \in \Theta$, we define

$$
p_{\theta,i}\defn \mathbb{M}_{\theta}\left(\sum_{t=1}^{\horizon} \mathbb{I}\left\{\operatorname{sign}\left(A_{t i}\right) \neq \operatorname{sign}\left(\theta_{i}\right)\right\} \geq \horizon / 2\right) .
$$
Now, let $i \in[d]$ and $\theta \in \Theta$ be fixed. Also, let $\theta_{j}^{\prime}=\theta_{j}$ for $j \neq i$ and $\theta_{i}^{\prime}=-\theta_{i}$. Then, by the Bretagnolle-Huber inequality,

$$
p_{\theta,i}+p_{\theta^{\prime},i} \geq \frac{1}{2} \exp \left(- \KL{\mathbb{M}_{\theta}}{\mathbb{M}_{\theta'}} \right).
$$

\textbf{Step 2: KL-divergence decomposition with $\rho$-Interactive zCDP.}

Define $p_{t} \defn \TV{\mathcal{N}\left(\left\langle A_{t}, \theta\right\rangle, 1\right)}{\mathcal{N}\left(\left\langle A_{t}, \theta^{\prime}\right\rangle, 1\right)} $.

From Lemma~\ref{lm:kl_decompo_var}, we obtain that
\begin{align*}\label{eq:kl_decomp_lin}
&\KL{\mathbb{M}_{\theta}}{\mathbb{M}_{\theta'}} \leq  \rho \left ( \mathbb{E}_{\model \pol} \left[\sum_{t=1}^{\horizon} p_t \right] \right)^2 + \rho  \left ( \mathbb{E}_{\model \pol} \left[\sum_{t=1}^{\horizon} p_t \right] \right) \notag\\
&\qquad+ \rho \Var_{\nu \pi}\left[\sum_{t=1}^{\horizon} p_t \right]
\end{align*}

On the other hand, using Pinsker's inequality (Lemma~\ref{lem:pinsker}), we have that

\begin{align*}
     \sum_{t=1}^{\horizon} p_t   &\leq  \sum_{t=1}^{\horizon} \sqrt{ \frac{1}{2} \KL{\mathcal{N}\left(\left\langle A_{t}, \theta\right\rangle, 1\right)}{\mathcal{N}\left(\left\langle A_{t}, \theta^{\prime}\right\rangle, 1\right)}} \\
     &\leq  \sum_{t=1}^{\horizon} \sqrt{ \frac{1}{4}  \left[\left\langle A_{t}, \theta-\theta^{\prime}\right\rangle^{2}\right] } \\
     &\leq \frac{1}{2}  \left[ \sum_{t=1}^{\horizon}    \left |\left\langle A_{t}, \theta-\theta^{\prime}\right\rangle   \right |  \right]\\
     &\leq \frac{1}{2}  \left[ \sum_{t=1}^{\horizon} \left | A_{t,i} \right | (2\left |  \theta_i \right |) \right] \\
     & \leq \frac{1}{2}  \left[ \horizon \times 2  \frac{1}{\horizon \sqrt{\rho}} \right] = \frac{1}{\sqrt{\rho}}.
\end{align*}

The last inequality holds true because $A_t \in [-1,1]^{d}$ and $ \theta, \theta' \in \left\{-\frac{1}{\horizon \sqrt{\rho}}, \frac{1}{\horizon \sqrt{\rho}}\right\}^{d}$.

This gives that

\begin{equation*}
    \mathbb{E}_{\model \pol}\left[\sum_{t=1}^{\horizon} p_t \right] \leq \frac{1}{\sqrt{\rho}} \quad \text{and} \quad \Var_{\model \pol}\left[\sum_{t=1}^{\horizon} p_t \right] \leq \frac{1}{ 4 \rho}
\end{equation*}

Plugging back in the KL decomposition, we get that,
\begin{align*}
    \KL{\mathbb{M}_{\theta}}{\mathbb{M}_{\theta'}} &\leq  \rho \left ( \frac{1}{\sqrt{\rho}} \right)^2 + \rho  \left ( \frac{1}{\sqrt{\rho}} \right) + \rho \left (\frac{1}{4 \rho} \right) \\
    &= 1 + \sqrt{\rho} + \frac{1}{4} \leq \frac{9}{4}
\end{align*}
where the last inequality is due to $\rho \leq 1$.

\textbf{Step 3: Choosing the `hard-to-distinguish' $\theta$.}
Now, we have that 
$$
p_{\theta,i}+p_{\theta^{\prime},i} \geq \frac{1}{2} \exp \left( - 9/4 \right)
$$

Now, we apply an `averaging hammer' over all $\theta \in \Theta$, such that $|\Theta|=2^{d}$, to obtain
$$
\sum_{\theta \in \Theta} \frac{1}{|\Theta|} \sum_{i=1}^{d} p_{\theta,i}=\frac{1}{|\Theta|} \sum_{i=1}^{d} \sum_{\theta \in \Theta} p_{\theta,i} \geq \frac{d}{4} \exp (-\frac{9}{4}) .
$$

This implies that there exists a $\theta \in \Theta$ such that $\sum_{i=1}^{d} p_{\theta,i} \geq d \exp (-\frac{9}{4}) / 4 .$ 

\textbf{Step 4: Plugging back $\theta$ in the regret decomposition.}
With this choice of $\theta$, we conclude that
\begin{align*}
 \reg_{\horizon}(\mathcal{A}, \theta) &\geq \frac{1}{\sqrt{\rho}} \sum_{i=1}^{d} p_{\theta,i} \\
 &\geq \frac{\exp (-\frac{9}{4})}{4} \frac{d}{\sqrt{\rho}}
\end{align*}
\end{proof}

\begin{remark}[Smaller High-Privacy Regime for $\rho$-zCDP]
    The minimax lower bound of Theorem~\ref{thm:lin_band_lb} suggests that, for bandits with $\rho$-Interactive zCDP, as soon as the privacy budget $\rho = \Omega(T^{-1})$, it is possible to achieve privacy for free. In contrast, for $\epsilon$-Pure DP, the minimax lower bounds of Theorem 5 in~\cite{azize2022privacy} show that it is possible to achieve privacy for free when $\epsilon = \Omega(T^{-1/2})$. We note that the $\epsilon$-DP to $(\frac{1}{2} \epsilon^2)$-zCDP conversion provides a good intuition to justify this phenomenon.
\end{remark}

\section{Extended experimental analysis}\label{app:ext_exp}

In this section, we add an experimental comparison between \adargope{} and a variant of \adargope{} where the way of making the estimate $\hat{\theta}_\ell$ private is different (Section~\ref{sec:add_noise_gope}). In AdaR-GOPE-Var, Step 4 changes to 
\begin{align*}
    \tilde{\theta}_\ell^{\text{AdaR-GOPE-Var}} = \hat{\theta}_\ell + V_\ell^{-1} \left(\sum_{a \in S_\ell} a \mathcal{N}\left (0, \frac{2}{\rho} \right) \right) \: .
\end{align*}  

We compare \adargope{} and AdaR-GOPE-Var in the same experimental setup and instances as in Section~\ref{sec:exp}, for different privacy budgets $\rho$ and report the results in Figure~\ref{fig7}.

\textit{As suggested by the regret analysis, \adargope{} achieves less regret, especially in the high privacy regime where the private part of the regret has more impact.}

\section{Existing technical results and definitions}\label{app:tech_lem}
In this section, we summarise the existing technical results and definitions required to establish our proofs.
\begin{lemma}[Post-processing Lemma (Proposition 2.1,~\cite{dwork2014algorithmic})]\label{lem:post_proc}
If $\mech$ is a mechanism and $f$ is an arbitrary randomised mapping defined on $\mech$'s output, then
\begin{itemize}
    \item If $\mech$ is $(\epsilon, \delta)$-DP, then $f\circ\mech$ is $(\epsilon, \delta)$-DP.
    \item If $\mech$ is $\rho$-zCDP, then $f\circ\mech$ is $\rho$-zCDP.
\end{itemize} 

\end{lemma}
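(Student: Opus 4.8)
The final statement to prove is Lemma~\ref{lem:post_proc} (the Post-processing Lemma), which asserts that applying an arbitrary randomised map $f$ to the output of a $(\epsilon,\delta)$-DP (resp.\ $\rho$-zCDP) mechanism preserves the privacy guarantee. Although the excerpt cites this as a known result, I will sketch a direct proof from the definitions in Definition~\ref{Def_DP}.

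\textbf{Plan for the $(\epsilon,\delta)$-DP case.} The plan is to first reduce to the case where $f$ is deterministic, since a randomised map is a convex combination (mixture) of deterministic ones and the $(\epsilon,\delta)$-DP condition in Eq.~\eqref{eq:DP} is preserved under such mixtures by linearity of the sup-bound over a fixed event. So assume $f$ is deterministic. Fix neighbouring datasets $d \sim d'$ and a measurable event $A$ in the output space of $f\circ\mech$. Write $B \defn f^{-1}(A)$, which is measurable in the output space of $\mech$. Then $(f\circ\mech)_d(A) = \mech_d(B)$ and $(f\circ\mech)_{d'}(A) = \mech_{d'}(B)$. Since $\mech$ is $(\epsilon,\delta)$-DP, $\mech_d(B) - e^\epsilon \mech_{d'}(B) \leq \delta$, hence $(f\circ\mech)_d(A) - e^\epsilon (f\circ\mech)_{d'}(A) \leq \delta$. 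Taking the supremum over $A$ and over $d \sim d'$ gives the claim.

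\textbf{Plan for the $\rho$-zCDP case.} Here I would invoke the data-processing inequality for R\'enyi divergence: for any channel (Markov kernel) $K$ and any two distributions $P,Q$, one has $D_\alpha(K\#P \,\|\, K\#Q) \leq D_\alpha(P\,\|\,Q)$ for all $\alpha > 1$. The randomised map $f$ is precisely such a channel. Applying this with $P = \mech_d$, $Q = \mech_{d'}$, and $K = f$, we get $D_\alpha((f\circ\mech)_d \,\|\, (f\circ\mech)_{d'}) \leq D_\alpha(\mech_d \,\|\, \mech_{d'}) \leq \rho\alpha$ for every $\alpha > 1$, where the last inequality uses that $\mech$ is $\rho$-zCDP. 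Taking the supremum over $d \sim d'$ yields that $f\circ\mech$ is $\rho$-zCDP. (As an alternative self-contained route, the data-processing inequality for R\'enyi divergence itself follows from Theorem~\ref{thm:cond_incr} stated in the excerpt, applied with $f_\alpha(x) = x^\alpha$ or its rescaled convex version, since R\'enyi divergence is a monotone transform of an $f$-divergence; this is essentially the same argument used to prove Theorem~\ref{thm:kl_to_tr}.)

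\textbf{Main obstacle.} The only delicate point is the reduction from randomised to deterministic post-processing in the $(\epsilon,\delta)$ case, and more generally making the measurability bookkeeping rigorous (ensuring $f^{-1}(A)$ is measurable, handling the mixture representation of a randomised $f$ via an auxiliary independent randomness variable). For the zCDP case, the only thing to be careful about is that the data-processing inequality for R\'enyi divergence holds for all $\alpha > 1$ with the correct direction; this is standard but should be cited or derived from the $f$-divergence machinery already in the paper. Neither step involves substantive calculation — the lemma is essentially immediate once the definitions are unwound.
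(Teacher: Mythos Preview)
Your proposal is correct and is the standard argument. The paper does not actually prove Lemma~\ref{lem:post_proc}: it is listed in Appendix~\ref{app:tech_lem} as an existing result cited from \cite{dwork2014algorithmic} and \cite{ZeroDP}, with no proof given. Your $(\epsilon,\delta)$ argument via preimages is the textbook proof, and your $\rho$-zCDP argument via the data-processing inequality for R\'enyi divergence is exactly what the paper itself relies on elsewhere --- indeed the paper states that inequality separately as Lemma~\ref{ineq:post_proc_zcdp} (again cited, not proved), so your sketch is faithful to how the paper would fill this in.
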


\begin{theorem}[The Gaussian Mechanism (~\cite{dwork2014algorithmic},~\cite{RenyiDP},~\cite{ZeroDP})]
\label{thm:gaussian_mech}
Let $f: \mathcal{X} \rightarrow \real^d$ be a mechanism with $L_2$ sensitivity $s(f) \defn \underset{d \sim d'}{\max} \| f(d) - f(d')\|_{2}$. Let $g \defn f + Z$, such that $Z \sim \mathcal{N}(0, b \times s(f)^2 I_{d})$. Here, $\mathcal{N}(\mu, \Sigma)$ denotes the Gaussian distribution with mean $\mu$ and co-variance matrix $\Sigma$, and $\|\cdot\|_{2}$ denotes the $L_2$ norm on $\real^d$.
Then, for $b = \frac{2}{\epsilon^2} \log(\frac{1.25}{\delta}) , \frac{\alpha}{2 \epsilon}, \frac{1}{2 \rho}$, $g$ satisfies $(\epsilon,\delta)$-DP, $(\alpha, \epsilon)$-RDP and $\rho$-zCDP respectively. 
\end{theorem}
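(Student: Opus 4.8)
The plan is to reduce all three guarantees to a one-dimensional Gaussian comparison, then to settle the Rényi-type statements (RDP and zCDP) by a single divergence computation and the approximate-DP statement by a tail bound on the privacy loss. Throughout, I fix an arbitrary neighbouring pair $d \sim d'$ and write $\mu_1 \defn f(d)$, $\mu_2 \defn f(d')$, so that $g_d = \mathcal{N}(\mu_1, \sigma^2 I_d)$ and $g_{d'} = \mathcal{N}(\mu_2, \sigma^2 I_d)$ with $\sigma^2 = b\, s(f)^2$ and $\|\mu_1 - \mu_2\|_2 \le s(f)$. Since every divergence and indistinguishability condition in play is invariant under an orthogonal change of coordinates (the covariance $\sigma^2 I_d$ is rotation-invariant), I would rotate so that $\mu_1 - \mu_2$ aligns with the first coordinate axis. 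The two Gaussians then differ only in the first coordinate, and the remaining $d-1$ coordinates are identically distributed and contribute nothing to any divergence. This reduces everything to comparing two univariate Gaussians with common variance $\sigma^2$ and mean gap $\Delta \defn \|\mu_1-\mu_2\|_2 \le s(f)$, the worst case being $\Delta = s(f)$.

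For the RDP and zCDP claims I would invoke the closed form for the Rényi divergence of two equal-covariance Gaussians,
\begin{equation*}
 D_\alpha\!\left(\mathcal{N}(\mu_1, \sigma^2 I_d)\,\big\|\,\mathcal{N}(\mu_2, \sigma^2 I_d)\right) = \frac{\alpha \,\|\mu_1-\mu_2\|_2^2}{2\sigma^2} \le \frac{\alpha\, s(f)^2}{2\sigma^2},
\end{equation*}
which follows directly from the one-dimensional reduction and the elementary Gaussian integral $\E_{x\sim\mathcal{N}(\mu_2,\sigma^2)}[(p_{g_d}/p_{g_{d'}})^\alpha]$. Substituting $\sigma^2 = \tfrac{\alpha}{2\epsilon}s(f)^2$ gives $D_\alpha \le \epsilon$ uniformly over neighbours, i.e. $(\alpha,\epsilon)$-RDP; substituting $\sigma^2 = \tfrac{1}{2\rho}s(f)^2$ gives $D_\alpha \le \rho\alpha$ for all $\alpha > 1$, i.e. $\rho$-zCDP in the sense of Definition~\ref{Def_DP}. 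These two parts are immediate once the divergence identity is in hand.

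The approximate-DP claim requires the direct privacy-loss argument, since it carries the $\log(1.25/\delta)$ constant. I would introduce the privacy-loss variable $L(x) \defn \ln\!\big(p_{g_d}(x)/p_{g_{d'}}(x)\big)$ and use the standard splitting inequality
\begin{equation*}
 \prob[g_d\in A] - e^\epsilon\,\prob[g_{d'}\in A] \;\le\; \prob_{x\sim g_d}[\,L(x) > \epsilon\,],
\end{equation*}
obtained by decomposing $A$ according to whether $L(x)\le\epsilon$ (where $p_{g_d}(x)\le e^\epsilon p_{g_{d'}}(x)$ pointwise) or $L(x) > \epsilon$. In the one-dimensional reduction a short computation shows that $L$ is itself Gaussian, $L \sim \mathcal{N}\!\big(\tfrac{\Delta^2}{2\sigma^2},\, \tfrac{\Delta^2}{\sigma^2}\big)$, so that bounding the excess probability becomes a Gaussian tail estimate. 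It then remains to verify that with $\sigma = s(f)\sqrt{2\log(1.25/\delta)}/\epsilon$ we obtain $\prob_{x\sim g_d}[L(x)>\epsilon]\le\delta$, which yields $(\epsilon,\delta)$-DP by the inequality above applied to every measurable $A$.

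The main obstacle is precisely this last tail estimate: establishing $\prob[L>\epsilon]\le\delta$ with the sharp constant requires a careful Gaussian tail bound (for instance $\prob[\mathcal{N}(0,1)>t]\le e^{-t^2/2}$ together with the substitution $t = (\epsilon-\tfrac{\Delta^2}{2\sigma^2})/(\Delta/\sigma)$ and some algebra), and the argument only produces the stated constant in the regime $\epsilon \in (0,1)$, a caveat I would record explicitly. By contrast, the RDP and zCDP parts are clean consequences of the single divergence identity; a useful consistency check is that the two stated variances, $\tfrac{\alpha}{2\epsilon}s(f)^2$ and $\tfrac{1}{2\rho}s(f)^2$, match under the known correspondence between $\rho$-zCDP and $(\alpha,\rho\alpha)$-RDP.
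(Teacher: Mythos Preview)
The paper does not prove this theorem: it is stated in Appendix~\ref{app:tech_lem} (``Existing technical results and definitions'') as a known result imported from the cited references, with no proof given. Your proposal is therefore not comparable to any proof in the paper, but it is the standard argument and is correct: the rotation reduction to one dimension, the closed-form R\'enyi divergence between equal-covariance Gaussians for the RDP/zCDP parts, and the privacy-loss tail bound for the $(\epsilon,\delta)$ part are exactly how these results are established in the works the paper cites. Your caveat that the constant $1.25$ requires $\epsilon\in(0,1)$ is also the standard restriction in Dwork--Roth.
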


\begin{lemma}[Post-processing property of Renyi Divergence, Lemma 2.2~\cite{ZeroDP}]\label{ineq:post_proc_zcdp}
    Let $P$ and $Q$ be distributions on $\Omega$ and let $f: \Omega \rightarrow \Theta$ be a function. Let $f(P)$ and $f(Q)$ denote the distributions on $\Theta$ induced by applying $f$ to $P$ and $Q$ respectively. Then $\mathrm{D}_\alpha(f(P) \| f(Q)) \leq \mathrm{D}_\alpha(P \| Q)$.
\end{lemma}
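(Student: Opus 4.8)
The plan is to prove Lemma~\ref{ineq:post_proc_zcdp} by recognising the R\'enyi divergence as an order-preserving transform of a Hellinger-type integral that is itself an $f$-divergence, and then invoking the data-processing inequality for $f$-divergences under the (deterministic) pushforward by $f$. First I would dispose of the degenerate case: if $\alpha>1$ and $P\not\ll Q$ then $D_\alpha(P\|Q)=+\infty$ and there is nothing to prove, so assume $P\ll Q$ (for $\alpha\in(0,1)$ the divergence is finite regardless and the same argument applies with one inequality flipped). Write $L\triangleq \frac{\mathrm{d}P}{\mathrm{d}Q}$ and set the Hellinger integral $H_\alpha(P\|Q)\triangleq \mathbb{E}_{Q}[L^\alpha]=\int_\Omega L^\alpha\,\mathrm{d}Q$, so that $D_\alpha(P\|Q)=\frac{1}{\alpha-1}\log H_\alpha(P\|Q)$; note that $H_\alpha$ is the $f$-divergence associated with $g(t)=t^\alpha$, which is convex on $[0,\infty)$ when $\alpha>1$ and concave when $\alpha\in(0,1)$ (convexity, not the normalisation $g(1)=0$, is all that the data-processing argument uses).

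The core step is the data-processing inequality for $H_\alpha$ under $f$. Let $P'\triangleq f(P)$ and $Q'\triangleq f(Q)$ be the pushforwards. By the chain rule for Radon--Nikodym derivatives along the sub-$\sigma$-algebra $f^{-1}(\mathcal{B}_\Theta)$, the likelihood ratio of the images is the conditional expectation of the original one: for $Q'$-a.e.\ $y\in\Theta$,
\[
\frac{\mathrm{d}P'}{\mathrm{d}Q'}(y)\;=\;\mathbb{E}_{X\sim Q}\!\left[L(X)\;\middle|\;f(X)=y\right].
\]
Applying conditional Jensen's inequality to the convex function $g(t)=t^\alpha$ (for $\alpha>1$) gives $\big(\tfrac{\mathrm{d}P'}{\mathrm{d}Q'}(y)\big)^\alpha \le \mathbb{E}_{X\sim Q}[\,L(X)^\alpha \mid f(X)=y\,]$ pointwise, and integrating over $y$ with respect to $Q'$ (tower property) yields $H_\alpha(P'\|Q')\le H_\alpha(P\|Q)$. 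For $\alpha\in(0,1)$, $g$ is concave and the same computation gives the reverse inequality $H_\alpha(P'\|Q')\ge H_\alpha(P\|Q)$.

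Finally I would compose with the monotonicity of $t\mapsto \tfrac{1}{\alpha-1}\log t$. For $\alpha>1$ this map is increasing, so $H_\alpha(P'\|Q')\le H_\alpha(P\|Q)$ transfers to $D_\alpha(P'\|Q')\le D_\alpha(P\|Q)$; for $\alpha\in(0,1)$ it is decreasing, so $H_\alpha(P'\|Q')\ge H_\alpha(P\|Q)$ again transfers to $D_\alpha(P'\|Q')\le D_\alpha(P\|Q)$; and the boundary case $\alpha=1$ is the classical data-processing inequality for $\mathrm{KL}$, recoverable as the limit $\alpha\to1$ or directly from the log-sum inequality. The extension to a randomised $f$ — the form actually used in the paper through Lemma~\ref{lem:post_proc} — follows by treating the channel's internal randomness as an extra coordinate and applying the deterministic case, or from the joint convexity of $(p,q)\mapsto q\,g(p/q)$. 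The only real obstacle is the measure-theoretic bookkeeping in the core step: justifying the conditional-expectation representation of the pushforward likelihood ratio and conditional Jensen on general (non-discrete) spaces, and checking finiteness so the manipulations are legitimate; once those are in place the argument is a single application of Jensen's inequality.
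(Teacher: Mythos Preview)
Your proof is correct and follows the standard route: recognise $H_\alpha$ as (a shift of) an $f$-divergence, apply conditional Jensen to the pushforward likelihood ratio $\tfrac{\mathrm{d}P'}{\mathrm{d}Q'}=\mathbb{E}_Q[L\mid f]$, and then compose with the monotone transform $t\mapsto\tfrac{1}{\alpha-1}\log t$. The case split on $\alpha\gtrless 1$ is handled cleanly.

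There is nothing to compare against, however: the paper does not prove this lemma. It appears in the appendix of ``existing technical results and definitions'' and is quoted verbatim as Lemma~2.2 of~\cite{ZeroDP} without argument. Your proposal is essentially the proof Bun and Steinke give there, so had the paper reproduced a proof it would have looked like yours.
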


\begin{lemma}[Markov's Inequality]\label{lem:markov}
For any random variable $X$ and $\varepsilon>0$, 
\[
 \mathbb{P}(|X| \geq \varepsilon) \leq \frac{\mathbb{E}[|X|]}{\varepsilon}.
\]
\end{lemma}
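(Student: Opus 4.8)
The plan is to prove Markov's inequality by the classical ``indicator truncation'' argument, reducing the probability statement to a comparison of expectations. The whole proof rests on a single pointwise inequality between random variables, after which monotonicity and linearity of expectation do the rest.

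First I would introduce the indicator random variable $\ind{|X| \geq \varepsilon}$ and establish the deterministic (pointwise, i.e.\ for every outcome $\omega$) bound
\[
\varepsilon\, \ind{|X| \geq \varepsilon} \leq |X|.
\]
To see this, I would split into two cases: on the event $\{|X| \geq \varepsilon\}$ the left-hand side equals $\varepsilon$, which is at most $|X|$ by the very definition of that event; on the complementary event the left-hand side is $0 \leq |X|$ since $|X|$ is nonnegative. As the inequality holds on both events, it holds everywhere.

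Next I would take expectations of both sides. By monotonicity of expectation, $\mathbb{E}[\varepsilon\, \ind{|X| \geq \varepsilon}] \leq \mathbb{E}[|X|]$, and by linearity the left-hand side equals $\varepsilon\, \mathbb{E}[\ind{|X| \geq \varepsilon}]$. Using the elementary identity $\mathbb{E}[\ind{A}] = \mathbb{P}(A)$ valid for any event $A$, this becomes $\varepsilon\, \mathbb{P}(|X| \geq \varepsilon) \leq \mathbb{E}[|X|]$. Dividing through by $\varepsilon > 0$, which is legitimate precisely because $\varepsilon$ is strictly positive, yields the claimed bound.

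The argument is entirely routine and there is no genuine obstacle; the only point requiring a word of care is the well-definedness of the expectations. If $\mathbb{E}[|X|] = +\infty$ the inequality is vacuously true, so I would note that one may assume $\mathbb{E}[|X|] < \infty$, in which case every quantity above is finite and the manipulations are fully justified. I would also remark that no integrability or moment assumptions beyond the existence of $\mathbb{E}[|X|]$ in $[0, +\infty]$ are needed, since $|X| \geq 0$ guarantees that this expectation is always well-defined.
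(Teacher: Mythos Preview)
Your proof is correct and is the standard indicator-truncation argument for Markov's inequality. The paper does not actually prove this lemma: it is listed in the appendix of ``existing technical results and definitions'' and stated without proof, as it is a classical textbook fact.
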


\begin{definition}[Consistent Policies]\label{def:const_pol}
A policy $\pi$ is called consistent over a class of bandits $\mathcal{E}$ if for all $\nu \in \mathcal{E}$ and $p>0$, it holds that
\[
\lim _{\horizon \rightarrow \infty} \frac{\reg_{\horizon}(\pi, \nu)}{\horizon^{p}}=0.
\]
The class of consistent policies over $\mathcal{E}$ is denoted by $\Pi_{\text {cons }}(\mathcal{E})$.
\end{definition}


\begin{lemma}[Bretagnolle-Huber inequality]\label{lem:bret_hub}
Let $\mathbb P$ and $\mathbb Q$ be probability measures on the same measurable space $(\Omega, \mathcal{F})$, and let $A \in \mathcal{F}$ be an arbitrary event. Then,
\[
\mathbb P(A)+ \mathbb Q\left(A^{c}\right) \geq \frac{1}{2} \exp (-\mathrm{D}(\mathbb P, \mathbb Q)),
\]
where $A^{c}=\Omega \backslash A$ is the complement of $A$.
\end{lemma}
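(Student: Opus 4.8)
The plan is to prove the Bretagnolle--Huber inequality through the classical chain that passes from the error sum to the \emph{affinity} $\int \min(p,q)$ and then to the \emph{Bhattacharyya coefficient} $\int \sqrt{pq}$, which is the quantity that connects naturally to the KL divergence $\mathrm{D}(\mathbb{P},\mathbb{Q})$. First I would fix a common dominating measure, say $\mu \defn \mathbb{P} + \mathbb{Q}$, and write $p \defn \mathrm{d}\mathbb{P}/\mathrm{d}\mu$ and $q \defn \mathrm{d}\mathbb{Q}/\mathrm{d}\mu$ for the corresponding densities (both exist since $\mathbb{P},\mathbb{Q}\ll\mu$). If $\mathbb{P}$ is not absolutely continuous with respect to $\mathbb{Q}$, then $\mathrm{D}(\mathbb{P},\mathbb{Q})=+\infty$, the right-hand side is $0$, and the claim is trivial; so I may assume the divergence is finite and hence $\mathbb{P}\ll\mathbb{Q}$.

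The first step lower-bounds the left-hand side by the affinity, uniformly over the event $A$:
\[
\mathbb{P}(A) + \mathbb{Q}(A^c) = \int_A p \, \mathrm{d}\mu + \int_{A^c} q\, \mathrm{d}\mu \geq \int_A \min(p,q)\,\mathrm{d}\mu + \int_{A^c}\min(p,q)\,\mathrm{d}\mu = \int_\Omega \min(p,q)\,\mathrm{d}\mu,
\]
which removes the dependence on $A$. The second step is the key inequality $\int \min(p,q)\,\mathrm{d}\mu \geq \tfrac12\bigl(\int\sqrt{pq}\,\mathrm{d}\mu\bigr)^2$, which I would obtain from Cauchy--Schwarz applied to the factorisation $\sqrt{pq}=\sqrt{\min(p,q)}\,\sqrt{\max(p,q)}$:
\[
\Bigl(\int \sqrt{pq}\,\mathrm{d}\mu\Bigr)^2 \leq \Bigl(\int \min(p,q)\,\mathrm{d}\mu\Bigr)\Bigl(\int \max(p,q)\,\mathrm{d}\mu\Bigr),
\]
together with the bound $\int \max(p,q)\,\mathrm{d}\mu = \int (p+q-\min(p,q))\,\mathrm{d}\mu = 2 - \int\min(p,q)\,\mathrm{d}\mu \leq 2$. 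The final step relates the Bhattacharyya coefficient to the KL divergence by Jensen's inequality applied to the convex exponential, the expectation being taken under $\mathbb{P}$ (so the set $\{p=0\}$, which has $\mathbb{P}$-measure zero and contributes nothing to $\int\sqrt{pq}$, is harmless):
\[
\int \sqrt{pq}\,\mathrm{d}\mu = \mathbb{E}_{\mathbb{P}}\Bigl[\exp\bigl(\tfrac12 \log \tfrac{q}{p}\bigr)\Bigr] \geq \exp\Bigl(\tfrac12 \mathbb{E}_{\mathbb{P}}\bigl[\log\tfrac{q}{p}\bigr]\Bigr) = \exp\bigl(-\tfrac12 \mathrm{D}(\mathbb{P}, \mathbb{Q})\bigr).
\]
Chaining the three displays yields $\mathbb{P}(A)+\mathbb{Q}(A^c) \geq \tfrac12\exp(-\mathrm{D}(\mathbb{P},\mathbb{Q}))$, as claimed.

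This result is standard, so there is no deep obstacle; the step requiring the most care is the second one, where the Cauchy--Schwarz bound must be paired with the inequality $\int\max(p,q)\,\mathrm{d}\mu\le 2$ to convert the affinity into the squared Bhattacharyya coefficient with the correct constant $\tfrac12$. The only other points to verify are routine measure-theoretic bookkeeping: the existence of the common dominating measure, measurability of $\min(p,q)$ and $\max(p,q)$, and ensuring that the direction of Jensen's inequality and the sign of $\mathbb{E}_{\mathbb{P}}[\log(q/p)] = -\mathrm{D}(\mathbb{P},\mathbb{Q})$ are handled correctly so that the factor $-\tfrac12$ appears in the exponent.
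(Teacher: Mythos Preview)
Your proof is correct and follows the standard affinity--Bhattacharyya--Jensen chain. The paper does not actually prove this lemma: it is listed in the appendix of existing technical results and simply stated without proof, so there is no paper argument to compare against.
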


\begin{lemma}[Pinsker's Inequality]\label{lem:pinsker}
For two probability measures $\mathbb P$ and $\mathbb Q$ on the same probability space $(\Omega, \mathcal{F})$, we have
\[
\KL{\mathbb P}{\mathbb Q} \geq 2 (\TV{\mathbb P}{\mathbb Q})^2.
\]
\end{lemma}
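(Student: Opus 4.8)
The plan is to prove this classical Pinsker inequality by the standard two-step route: first reduce the general comparison to a two-point (Bernoulli) comparison via the data-processing inequality for the KL divergence, then dispose of the resulting scalar inequality by a short calculus argument.

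\emph{Step 1 (reduction to the binary case).} If $\mathbb P \not\ll \mathbb Q$ then $\KL{\mathbb P}{\mathbb Q} = +\infty$ and there is nothing to prove, so I would assume $\mathbb P \ll \mathbb Q$ and put $f \defn \dd\mathbb P/\dd\mathbb Q$. Take $A \defn \{f \ge 1\}$; since $\TV{\mathbb P}{\mathbb Q} = \sup_B|\mathbb P(B) - \mathbb Q(B)|$ is attained at $A$, we have $\TV{\mathbb P}{\mathbb Q} = p - q$ with $p \defn \mathbb P(A)$, $q \defn \mathbb Q(A)$. The indicator map $\omega \mapsto \ind{\omega \in A}$ pushes $\mathbb P$ forward to $\mathrm{Ber}(p)$ and $\mathbb Q$ to $\mathrm{Ber}(q)$, and monotonicity of $f$-divergences under such a coarsening (the $\alpha \to 1$ limit of Lemma~\ref{ineq:post_proc_zcdp}, equivalently the log-sum inequality) gives $\KL{\mathbb P}{\mathbb Q} \ge p\log\tfrac{p}{q} + (1-p)\log\tfrac{1-p}{1-q}$.

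\emph{Step 2 (the scalar inequality).} It remains to show $d(p,q) \defn p\log\tfrac{p}{q} + (1-p)\log\tfrac{1-p}{1-q} \ge 2(p-q)^2$ for $p,q \in [0,1]$ (trivial whenever $d(p,q) = +\infty$). Fixing $p$ and setting $g(q) \defn d(p,q) - 2(p-q)^2$ on $(0,1)$, one computes $g(p) = 0$ and $g'(q) = \tfrac{q-p}{q(1-q)} + 4(p-q) = (q-p)\bigl(\tfrac{1}{q(1-q)} - 4\bigr)$; since $q(1-q) \le \tfrac14$ the second factor is nonnegative, so $g' \le 0$ on $(0,p)$ and $g' \ge 0$ on $(p,1)$, whence $g$ attains its minimum $g(p) = 0$ at $q = p$ and $g \ge 0$ throughout, the endpoints $q \in \{0,1\}$ forcing $d(p,q) = +\infty$ unless $p = q$. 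Chaining the two steps yields $\KL{\mathbb P}{\mathbb Q} \ge 2(p-q)^2 = 2\,\TV{\mathbb P}{\mathbb Q}^2$.

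\emph{Main obstacle and an alternative.} There is no serious obstacle; the only genuine computation is the monotonicity analysis of $g$ in Step 2, and the only subtlety is the $+\infty$ bookkeeping when absolute continuity fails or $q \in \{0,1\}$. If one wants to avoid the binary reduction, a one-shot argument also works: the pointwise bound $x\log x - x + 1 \ge \tfrac{3(x-1)^2}{2x+4}$ for $x \ge 0$ together with Cauchy--Schwarz against the weight $\tfrac{2f+4}{3}$ (which has $\mathbb Q$-mean $2$ because $\expect_{\mathbb Q}[f] = 1$) gives $\bigl(\expect_{\mathbb Q}|f-1|\bigr)^2 \le 2\,\KL{\mathbb P}{\mathbb Q}$, and then $\TV{\mathbb P}{\mathbb Q} = \tfrac12\,\expect_{\mathbb Q}|f-1|$ delivers exactly the constant $2$.
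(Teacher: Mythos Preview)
Your proof is correct and is in fact the standard textbook argument for Pinsker's inequality. However, the paper does not actually prove this lemma: it appears in Appendix~\ref{app:tech_lem}, ``Existing technical results and definitions'', which merely collects known facts used elsewhere in the paper without proof. So there is nothing to compare against; you have supplied a valid proof where the paper simply cites the result.
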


\begin{definition}[Sub-Gaussianity]\label{lem:def_subg}
    A random variable $X$ is $\sigma$-subgaussian if for all $\lambda \in \real$, it holds that 
    \begin{equation*}
        \expect[\exp(\lambda X)] \leq \exp(\lambda^2 \sigma^2 / 2)
    \end{equation*}
\end{definition}

\begin{lemma}[Concentration of Sub-Gaussian random variables]\label{lem:conc_subg}
    If $X$ is $\sigma$-sub-Gaussian, then for any $\epsilon \geq 0$,
    \begin{equation*}
        \mathbb{P}\left ( X \geq \epsilon \right ) \leq \exp\left ( - \frac{\epsilon^2}{2 \sigma^2} \right )
    \end{equation*}
\end{lemma}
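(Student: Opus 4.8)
The plan is to run the standard Chernoff (exponential Markov) argument. First I would dispose of the trivial case $\epsilon = 0$, where the claimed bound reads $\mathbb{P}(X \geq 0) \leq \exp(0) = 1$, which holds automatically. So I assume $\epsilon > 0$ from here on and fix an arbitrary auxiliary parameter $\lambda > 0$ to be optimised at the end.

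The key step is to exponentiate and apply Markov's inequality (Lemma~\ref{lem:markov}) to the nonnegative random variable $e^{\lambda X}$. Since $\lambda > 0$, the event $\{X \geq \epsilon\}$ coincides with $\{e^{\lambda X} \geq e^{\lambda \epsilon}\}$, so
\begin{equation*}
    \mathbb{P}(X \geq \epsilon) = \mathbb{P}\!\left(e^{\lambda X} \geq e^{\lambda \epsilon}\right) \leq e^{-\lambda \epsilon}\, \mathbb{E}\!\left[e^{\lambda X}\right].
\end{equation*}
Now I would invoke the definition of $\sigma$-sub-Gaussianity (Definition~\ref{lem:def_subg}), which gives $\mathbb{E}[e^{\lambda X}] \leq \exp(\lambda^2 \sigma^2 / 2)$ for every $\lambda \in \mathbb{R}$, and in particular for our chosen $\lambda > 0$. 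Substituting yields
\begin{equation*}
    \mathbb{P}(X \geq \epsilon) \leq \exp\!\left(\frac{\lambda^2 \sigma^2}{2} - \lambda \epsilon\right).
\end{equation*}

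The final step is to minimise the exponent over $\lambda > 0$. The quadratic $\lambda \mapsto \tfrac{\lambda^2 \sigma^2}{2} - \lambda \epsilon$ is minimised at $\lambda^\star = \epsilon / \sigma^2 > 0$ (which lies in the admissible range, so the optimiser is valid), and plugging this back gives exponent value $-\epsilon^2 / (2\sigma^2)$. This produces exactly $\mathbb{P}(X \geq \epsilon) \leq \exp(-\epsilon^2 / (2\sigma^2))$, completing the proof.

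There is no genuine obstacle here: the argument is the textbook Chernoff bound, and every ingredient (Markov's inequality, the sub-Gaussian moment-generating-function bound, and the one-variable optimisation) is either assumed earlier in the excerpt or elementary. The only points requiring minor care are confirming the optimiser $\lambda^\star$ is strictly positive (so that the Markov step remains valid) and handling the degenerate $\epsilon = 0$ endpoint separately, both of which are immediate.
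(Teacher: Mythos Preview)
Your proposal is correct and is exactly the standard Chernoff argument one would expect. The paper itself does not provide a proof for this lemma: it is listed in the appendix of existing technical results as a background fact without proof, so there is nothing to compare against.
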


\begin{lemma}[Properties of Sub-Gaussian Random Variables]\label{lem:prop_subg}
    Suppose that $X_1$ and $X_2$ are independent and $\sigma_1$ and $\sigma_2$-sub-Gaussian, respectively, then
    \begin{enumerate}
        \item $cX$ is $\left | c \right | \sigma$-sub-Gaussian for all $c \in \real$.
        \item $X_1 + X_2$ is $\sqrt{\sigma_1^2 + \sigma_2^2}$-sub-Gaussian.
        \item If $X$ has  mean  zero  and $X \in [a,b]$  almost  surely,  then $X$ is  $\frac{b - a}{2}$-sub-Gaussian.
    \end{enumerate}
\end{lemma}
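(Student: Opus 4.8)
The plan is to verify each of the three claimed properties directly against Definition~\ref{lem:def_subg}, i.e. by controlling the moment generating function $\expect[\exp(\lambda X)]$ for every $\lambda \in \real$. The first two properties are immediate computations, while the third is Hoeffding's lemma and will require the only genuine argument.

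For property (1), I would fix $\lambda \in \real$ and simply rewrite $\expect[\exp(\lambda c X)] = \expect[\exp((\lambda c) X)]$. Since $X$ is $\sigma$-sub-Gaussian, applying the definition at the real number $\lambda c$ gives $\expect[\exp((\lambda c) X)] \leq \exp((\lambda c)^2 \sigma^2 / 2) = \exp(\lambda^2 (\lvert c \rvert \sigma)^2 / 2)$, which is exactly the sub-Gaussian bound with parameter $\lvert c \rvert \sigma$. For property (2), I would use independence to factorise the joint MGF: for every $\lambda$,
\begin{equation*}
\expect[\exp(\lambda (X_1 + X_2))] = \expect[\exp(\lambda X_1)]\,\expect[\exp(\lambda X_2)] \leq \exp\!\left(\tfrac{\lambda^2 \sigma_1^2}{2}\right)\exp\!\left(\tfrac{\lambda^2 \sigma_2^2}{2}\right) = \exp\!\left(\tfrac{\lambda^2 (\sigma_1^2 + \sigma_2^2)}{2}\right),
\end{equation*}
which is the desired bound with parameter $\sqrt{\sigma_1^2 + \sigma_2^2}$.

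For property (3), I would prove the classical Hoeffding lemma. Define the cumulant generating function $\psi(\lambda) \defn \log \expect[\exp(\lambda X)]$, which is finite since $X$ is bounded. I would first compute $\psi(0) = 0$ and $\psi'(0) = \expect[X] = 0$ using the mean-zero hypothesis. The key step is to bound the second derivative: differentiating twice shows that $\psi''(\lambda)$ equals the variance of $X$ under the tilted probability measure with density proportional to $\exp(\lambda X)$; since under this tilted measure $X$ still takes values in $[a,b]$ almost surely, its variance is at most $((b-a)/2)^2$ by the standard fact that a random variable supported in an interval of length $b-a$ has variance bounded by $(b-a)^2/4$. A second-order Taylor expansion with remainder then yields $\psi(\lambda) = \psi(0) + \lambda \psi'(0) + \tfrac{\lambda^2}{2}\psi''(\xi) \leq \tfrac{\lambda^2}{2}\cdot\tfrac{(b-a)^2}{4}$ for some $\xi$ between $0$ and $\lambda$, i.e. $\expect[\exp(\lambda X)] \leq \exp(\lambda^2 (b-a)^2 / 8) = \exp(\lambda^2 ((b-a)/2)^2 / 2)$, the sub-Gaussian bound with parameter $(b-a)/2$.

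The main obstacle is property (3): the tilted-measure variance bound requires justifying that differentiation under the expectation is valid (guaranteed here by boundedness of $X$) and invoking the interval-variance inequality. Properties (1) and (2) are routine and follow in a single line each from the definition, so the entire technical weight of the lemma rests on the Hoeffding argument.
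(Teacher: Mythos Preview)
Your proof is correct and follows the standard textbook route (direct MGF manipulation for (1)--(2), and the tilted-measure proof of Hoeffding's lemma for (3)). Note that the paper itself does not prove this lemma: it is stated without proof in the appendix of existing technical results, so there is no paper argument to compare against.
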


    

\begin{lemma}[Concentration of the $\chi^2$-Distribution, Claim 17 of \cite{shariff2018differentially}]\label{lem:conc_chi}
    If $X \sim \mathcal{N}(0, I_d)$ and $\delta \in (0,1)$, then
    \begin{equation*}
    \mathbb{P} \left (  \| X \|^2 \geq d + 2 \sqrt{d \log\left( \frac{1}{\delta} \right)} + 2 \log\left( \frac{1}{\delta} \right)  \right ) \leq \delta
    \end{equation*}
\end{lemma}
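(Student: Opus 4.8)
The statement to prove is the $\chi^2$-concentration inequality: for $X \sim \mathcal{N}(0, I_d)$ and $\delta \in (0,1)$,
\[
\mathbb{P}\left( \|X\|^2 \geq d + 2\sqrt{d \log(1/\delta)} + 2\log(1/\delta) \right) \leq \delta.
\]
The plan is to use the standard Chernoff/exponential-moment method for the sum of squared Gaussians, since $\|X\|^2 = \sum_{i=1}^d X_i^2$ is a $\chi^2$ random variable with $d$ degrees of freedom. First I would set $Y \defn \|X\|^2 = \sum_{i=1}^d X_i^2$, where the $X_i$ are i.i.d.\ standard Gaussians, so that the $X_i^2$ are i.i.d.\ $\chi^2_1$ variables. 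The moment generating function of a single $X_i^2$ is $\expect[\exp(\lambda X_i^2)] = (1-2\lambda)^{-1/2}$ for $\lambda < 1/2$, and by independence $\expect[\exp(\lambda Y)] = (1-2\lambda)^{-d/2}$.

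Next I would apply the Chernoff bound: for any $\lambda \in (0, 1/2)$ and any threshold $t$,
\[
\mathbb{P}(Y \geq t) \leq \exp(-\lambda t)\, (1-2\lambda)^{-d/2}.
\]
Taking logarithms, the exponent is $-\lambda t - \tfrac{d}{2}\log(1-2\lambda)$. The cleanest route is to invoke the well-known Laurent–Massart bound, which states precisely that for $x > 0$,
\[
\mathbb{P}\left( Y \geq d + 2\sqrt{dx} + 2x \right) \leq e^{-x}.
\]
Substituting $x = \log(1/\delta)$ gives exactly the threshold $d + 2\sqrt{d\log(1/\delta)} + 2\log(1/\delta)$ on the left and $e^{-\log(1/\delta)} = \delta$ on the right, which is the claim. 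If I instead prove the Laurent–Massart bound from scratch, I would bound $-\tfrac{d}{2}\log(1-2\lambda) \leq \tfrac{d}{2}\cdot \tfrac{2\lambda}{1-2\lambda}$ using $-\log(1-u) \le \tfrac{u}{1-u}$, optimise the resulting expression over $\lambda$, and verify that the optimal choice yields the stated deviation term; this is a routine but slightly delicate calculus exercise.

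The main obstacle is purely the bookkeeping in the final optimisation step if one proves Laurent–Massart directly: one must carefully choose $\lambda$ as a function of the desired deviation so that the combined exponent simplifies to $-x$ at the threshold $d + 2\sqrt{dx} + 2x$, and verify the inequality $-\log(1-u)\le u/(1-u)$ is applied with the right sign throughout. Since the result is a standard lemma (Laurent–Massart, and already cited in the excerpt via Claim 17 of~\cite{shariff2018differentially}), the cleanest and most defensible presentation simply states the MGF of the $\chi^2_d$ distribution, applies the Chernoff bound, and then either cites Laurent–Massart directly or reproduces its one-line optimisation. I expect the proof to be short and to require no new machinery beyond the MGF of a Gaussian square and a single optimisation over the Chernoff parameter.
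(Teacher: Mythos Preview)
Your proposal is correct and in fact gives more than the paper does: the paper lists this lemma under ``Existing technical results and definitions'' and simply cites it as Claim~17 of~\cite{shariff2018differentially} without proof. Your Chernoff/Laurent--Massart derivation is the standard route and is entirely adequate.
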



\begin{lemma}[Theorem 20.4 of~\cite{lattimore2018bandit}]\label{lem:marting_conc}
    Let the noise $\rho_t$ be conditionally 1-subgaussian (conditioned on $A_1, X_1, \dots, A_{t-1}, X_{t-1}, A_t$), $S_t = \sum_{s= 1}^t A_s \rho_s$ and $V_t(\lambda) = \lambda I_d + \sum_{s=1}^t A_s A_s^T$. Then, for all $\lambda >0$ and $\delta \in (0,1)$,
    \begin{align*}
        &\mathbb{P} \left ( \exists t \in \mathbb{N}:  \| S_t \|_{V_t (\lambda)^{-1}}^2 \geq 2 \log\left (\frac{1}{\delta}\right) + \log \left ( \frac{\det(V_t (\lambda))}{\lambda^d} \right )  \right)\\
        &\leq \delta
\end{align*}
    
\end{lemma}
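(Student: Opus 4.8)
\noindent\textit{Proof plan.} Lemma~\ref{lem:marting_conc} is the self-normalised tail bound for vector-valued martingales of~\cite{abbasi2011improved}, and the plan is to reprove it by the \emph{method of mixtures} (pseudo-maximisation). Write $\eta_s$ for the noise (denoted $\rho_s$ in the statement) and let $\mathcal{F}_{s-1}$ be the $\sigma$-algebra generated by $A_1, X_1, \dots, A_{s-1}, X_{s-1}, A_s$, so that $A_s$ is $\mathcal{F}_{s-1}$-measurable while $\eta_s$ is conditionally $1$-subgaussian given $\mathcal{F}_{s-1}$. First I would fix $x \in \real^d$ and define
\begin{equation*}
    M_t^x \defn \exp\left( \sum_{s = 1}^t \left( \langle x, A_s\rangle \eta_s - \tfrac12 \langle x, A_s \rangle^2 \right) \right),
\end{equation*}
and check that $(M_t^x)_{t \ge 0}$ is a nonnegative supermartingale with $\mathbb{E}[M_0^x] = 1$: since $A_s$ is predictable, conditional $1$-subgaussianity of $\eta_s$ gives $\mathbb{E}\big[\exp(\langle x, A_s\rangle \eta_s - \tfrac12 \langle x, A_s\rangle^2) \mid \mathcal{F}_{s-1}\big] \le 1$, hence $\mathbb{E}[M_t^x \mid \mathcal{F}_{t-1}] \le M_{t-1}^x$.

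Next I would mix over $x$ against the $\mathcal{N}(0,\lambda^{-1} I_d)$ density $h$ and set $\bar M_t \defn \int_{\real^d} M_t^x\, h(x)\, \dd x$; by Tonelli's theorem and nonnegativity this is again a nonnegative supermartingale with $\mathbb{E}[\bar M_0] = 1$. A Gaussian integral (completing the square in the exponent $\langle x, S_t\rangle - \tfrac12 x^\top V_t(\lambda)\, x$, with $V_t(\lambda) = \lambda I_d + \sum_{s \le t} A_s A_s^\top$) gives the closed form
\begin{equation*}
    \bar M_t = \left( \frac{\lambda^d}{\det V_t(\lambda)} \right)^{1/2} \exp\left( \tfrac12 \|S_t\|_{V_t(\lambda)^{-1}}^2 \right).
\end{equation*}
Then Ville's maximal inequality for nonnegative supermartingales yields $\mathbb{P}\big( \sup_{t\ge 0} \bar M_t \ge 1/\delta \big) \le \delta\,\mathbb{E}[\bar M_0] = \delta$, and on the complementary event $\bar M_t < 1/\delta$ for every $t$; substituting the closed form and taking logarithms rearranges to $\|S_t\|_{V_t(\lambda)^{-1}}^2 < 2\log(1/\delta) + \log(\det V_t(\lambda)/\lambda^d)$, which is the claim.

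The main obstacle I anticipate is the measure-theoretic bookkeeping in the first two steps: using predictability of $A_s$ together with conditional subgaussianity to obtain the supermartingale property, justifying the interchange of expectation and integration over $x$ (Tonelli plus almost-sure convergence of the nonnegative supermartingale, so that $\sup_t \bar M_t$ is well defined), and invoking Ville's inequality in the uniform-in-$t$ form (alternatively via optional stopping at a finite stopping time followed by Markov's inequality). Since the paper uses this result only as a black box, one may instead simply cite Theorem~20.4 of~\cite{lattimore2018bandit}.
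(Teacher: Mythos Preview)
Your proof plan is correct and is exactly the standard method-of-mixtures argument from~\cite{abbasi2011improved} that underlies Theorem~20.4 of~\cite{lattimore2018bandit}. The paper, however, does not give its own proof of this lemma at all: it appears in Appendix~\ref{app:tech_lem} (``Existing technical results and definitions'') and is simply quoted as an external result, used as a black box in the analysis of \adaroful{}. Your final remark already anticipates this, so there is nothing to compare; your write-up would serve as a self-contained justification that the paper omits.
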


\begin{lemma}[Lemma 2, Equation (6) of~\cite{gentile2014online}]\label{lem:lambda_min}
    Let, at each round, $\mathcal{A}_t = \{a_1^t, \dots, a_{k_t}^t \}$ be generated i.i.d (conditioned on $k_t$ and the history $H_t$) from a random process $A$ such that
    \begin{itemize}
        \item $\| A \| = 1$
        \item $\expect[AA^T]$ is full rank, with minimum eigenvalue $\lambda_0 > 0$
        \item $\forall z \in \real^d$, $\| z \| = 1$, the random variable $(z^T A)^2$ is conditionally subgaussian, with variance
            \begin{equation*}
                \nu_t^2 = \mathbb{V}\left[(z^T A)^2 ~ | ~ k_t, H_t \right]   \leq \frac{\lambda_0^2}{8 \log(4 k_t)}
            \end{equation*}
    \end{itemize}
    Then
    \begin{align*}
        &\mathbb{P} \Bigg ( \exists t \in \mathbb{N}:  \lambda_\text{min}\left(\sum_{s=1}^t A_s A_s^T \right) \leq \frac{\lambda_0 t}{4} \\
        &\qquad - 8 \log\left (\frac{t + 3}{\delta/d}\right) - 2 \sqrt{t \log \left ( \frac{t + 3}{\delta/d} \right )} \Bigg) \leq \delta
    \end{align*}
\end{lemma}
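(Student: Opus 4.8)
The plan is to control $\lambda_{\min}(G_t)$, where $G_t \defn \sum_{s=1}^t A_s A_s^T$, by writing it as a deterministic mean part plus a matrix martingale and bounding the latter with a matrix concentration inequality. First I would isolate the mean: conditioned on $k_s$ and $H_s$, the action $A_s$ is drawn from the process $A$, so $M_s \defn \expect[A_s A_s^T \mid k_s, H_s] = \expect[AA^T]$ is a fixed matrix with $\lambda_{\min}(M_s) \ge \lambda_0$. Setting $X_s \defn A_s A_s^T - M_s$, the sequence $(X_s)_s$ is a martingale difference sequence of symmetric matrices adapted to the history filtration, and $G_t = t\,\expect[AA^T] + Z_t$ with $Z_t \defn \sum_{s=1}^t X_s$. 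By Weyl's inequality, $\lambda_{\min}(G_t) \ge \lambda_0 t - \lambda_{\max}(-Z_t)$, so it suffices to upper bound $\lambda_{\max}(-Z_t)$ with high probability, uniformly over $t$.

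Next I would apply a matrix Freedman/matrix Bernstein inequality (Tropp) to $-Z_t$. Two ingredients are needed: an almost-sure bound on the increments and a bound on the predictable quadratic variation $W_t \defn \sum_{s=1}^t \expect[X_s^2 \mid k_s, H_s]$. Since $\|A_s\| = 1$, we have $\|A_s A_s^T\|_{\mathrm{op}} = 1$ and hence $\|X_s\|_{\mathrm{op}} \le 1$, giving a range bound $R \le 1$. Using $(A_s A_s^T)^2 = A_s A_s^T$ one computes $\expect[X_s^2\mid\cdot] = M_s - M_s^2 = M_s(I-M_s) \succeq 0$; this is where the directional variance hypothesis enters, because $z^\top \expect[X_s^2\mid\cdot]\,z = \mathbb{V}[(z^\top A_s)^2 \mid \cdot] \le \nu_s^2 \le \frac{\lambda_0^2}{8\log(4 k_s)}$, which calibrates $\lambda_{\max}(W_t)$ to a controlled multiple of $\lambda_0^2 t$. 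Matrix Freedman then yields a tail of Bernstein type that carries the dimension factor $d$, so that setting the failure probability equal to $\delta_t$ produces a deviation of order $\sqrt{\lambda_{\max}(W_t)\,\log(d/\delta_t)} + R\log(d/\delta_t)$.

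To obtain the ``$\exists t$'' (uniform-in-time) statement I would union bound over $t \in \mathbb{N}$ with $\delta_t \propto \delta/((t+1)(t+2))$, so that $\sum_t \delta_t \le \delta$; since $\log(1/\delta_t)$ equals $\log(d(t+3)^2/\delta)$ up to constants, this reproduces both the $\log\!\big(\tfrac{t+3}{\delta/d}\big)$ dependence and the dimension factor hidden inside $\delta/d$. Inverting the Bernstein tail and substituting the variance bound, the $\sqrt{t\log}$ and $\log$ deviations subtract from $\lambda_0 t$; absorbing the portion of the square-root term that scales with $\lambda_0 t$ into a constant fraction of the mean leaves exactly the advertised form $\lambda_{\min}(G_t) \ge \tfrac{\lambda_0 t}{4} - 8\log(\tfrac{t+3}{\delta/d}) - 2\sqrt{t\log(\tfrac{t+3}{\delta/d})}$, the constants $1/4$, $8$, $2$ being artifacts of the inversion.

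The main obstacle I anticipate is twofold. First, achieving the clean dimension dependence $d$ (i.e. $\delta/d$) rather than the exponential cardinality of an $\epsilon$-net over the sphere $S^{d-1}$ — this is precisely why the matrix-martingale route is preferable to a direction-by-direction scalar Freedman argument followed by a covering bound. Second, correctly threading the subgaussian variance hypothesis into the predictable quadratic variation so that, after inversion, the leading term survives as a constant fraction of $\lambda_0 t$ rather than being swamped by the fluctuation. Once the variance-aware tail bound is in hand, the peeling over time and the algebraic inversion of the Bernstein inequality are routine.
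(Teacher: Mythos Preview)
The paper does not prove this lemma; it is quoted as Lemma~2 of \cite{gentile2014online} in the appendix of existing technical results, so there is no in-paper argument to compare against. Evaluating your proposal on its own merits, there is a genuine gap at the very first step. You assert that, conditioned on $k_s$ and $H_s$, the played action $A_s$ is distributed as $A$, so that $M_s=\mathbb{E}[A_sA_s^\top\mid k_s,H_s]=\mathbb{E}[AA^\top]$. But the hypothesis only says that the \emph{context set} $\mathcal{A}_s=\{a_1^s,\ldots,a_{k_s}^s\}$ is drawn i.i.d.\ from $A$; the action $A_s$ is then \emph{selected} by the policy, adaptively in the realised $\mathcal{A}_s$. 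Consequently $A_s$ is not conditionally distributed as $A$ --- the policy could, for instance, always pick the arm minimising $(z^\top a_j^s)^2$ along some fixed direction --- and $M_s$ need not dominate $\lambda_0 I_d$. Your matrix-martingale decomposition $G_t=t\,\mathbb{E}[AA^\top]+Z_t$ therefore fails.

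This is precisely why the hypothesis carries the otherwise-mysterious factor $\log(4k_t)$ in the variance bound $\nu_t^2\le \lambda_0^2/(8\log(4k_t))$: it is calibrated so that, after a subgaussian tail bound and a union bound over the $k_s$ arms, \emph{every} arm in $\mathcal{A}_s$ satisfies $(z^\top a_j^s)^2\ge \lambda_0/2$ with constant conditional probability, and hence so does $A_s$, whatever the selection rule. Your outline never performs this uniform-over-arms step and so never uses the $k_t$-dependent part of the assumption. A secondary issue: even granting your $M_s$, the identity $z^\top\mathbb{E}[X_s^2\mid\cdot]\,z=\mathbb{E}\bigl[\|(A_sA_s^\top-M_s)z\|^2\bigr]$ is not the same as $\mathbb{V}[(z^\top A_s)^2]=\mathbb{E}[(z^\top X_s z)^2]$; by Cauchy--Schwarz the latter is only a lower bound on the former, so the directional variance hypothesis does not control the matrix predictable quadratic variation in the direction you need.
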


\begin{lemma}[Lemma 12 in~\cite{abbasi2011improved}]\label{lem:det_over_det}
    Let $A$, $B$ and $C$ be positive semi-definite matrices such that $A = B + C$.  
    Then, we have that
    \begin{equation*}
        \sup_{x \neq 0} \frac{x^T A x}{x^T B x} \leq \frac{\det(A)}{\det(B)}
    \end{equation*}
\end{lemma}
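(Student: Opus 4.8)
The plan is to reduce the supremum on the left to the largest generalized eigenvalue of the matrix pencil $(A, B)$ and then compare it with the product of all generalized eigenvalues, which is exactly $\det A / \det B$. Throughout I will assume $B$ is positive definite (so that $B^{-1/2}$ exists and the Rayleigh quotient is well-defined); this is the case of interest in the applications, since the design matrices appearing in \adaroful{} are of the form $\lambda I_d + \sum_s a_s a_s^\top \succ 0$.

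First I would perform the change of variables $y = B^{1/2} x$, which is a bijection of $\real^d \setminus \{0\}$ onto itself. Writing $M \defn B^{-1/2} A B^{-1/2}$, one has
\begin{equation*}
    \sup_{x \neq 0} \frac{x^\top A x}{x^\top B x} = \sup_{y \neq 0} \frac{y^\top M y}{y^\top y} = \lambda_{\max}(M),
\end{equation*}
so the left-hand side is simply the top eigenvalue of the symmetric positive semi-definite matrix $M$. Next, using $A = B + C$, I would factor
\begin{equation*}
    M = B^{-1/2}(B + C) B^{-1/2} = I_d + N, \qquad N \defn B^{-1/2} C B^{-1/2}.
\end{equation*}
Since $C \succeq 0$, the matrix $N$ is symmetric positive semi-definite, so its eigenvalues $\mu_1, \dots, \mu_d$ are all nonnegative, and the eigenvalues of $M$ are exactly $1 + \mu_1, \dots, 1 + \mu_d$. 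In particular $\lambda_{\max}(M) = 1 + \mu_{\max}$, where $\mu_{\max} \defn \max_i \mu_i$.

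The remaining step compares this with the determinant ratio. Because $\det(B^{-1/2}) \det(B^{-1/2}) = 1/\det(B)$, I would write
\begin{equation*}
    \frac{\det A}{\det B} = \det\!\left(B^{-1/2} A B^{-1/2}\right) = \det(M) = \prod_{i=1}^d (1 + \mu_i).
\end{equation*}
Finally, since every $\mu_i \geq 0$, expanding the product gives $\prod_{i} (1 + \mu_i) \geq 1 + \sum_{i} \mu_i \geq 1 + \mu_{\max}$, where the first inequality discards the nonnegative higher-order cross terms and the second uses $\sum_i \mu_i \geq \mu_{\max}$. Chaining the three displays yields $\sup_{x \neq 0} \frac{x^\top A x}{x^\top B x} = 1 + \mu_{\max} \leq \prod_i (1 + \mu_i) = \det A / \det B$, which is the claim. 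I do not expect any serious obstacle here; the only points requiring care are justifying that $B$ is positive definite (rather than merely semi-definite) so that $B^{-1/2}$ and the quotient are meaningful, and confirming the simultaneous-diagonalization facts used to read off the eigenvalues of $M = I_d + N$ — both of which are routine for symmetric positive (semi-)definite matrices.
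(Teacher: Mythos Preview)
The paper does not supply its own proof of this lemma; it merely cites it as Lemma~12 of \cite{abbasi2011improved} in the appendix of existing technical results. Your argument is correct and is in fact the standard one (and essentially the one in the cited reference): reduce to $\lambda_{\max}(B^{-1/2}AB^{-1/2})$, observe that this matrix equals $I_d+N$ with $N\succeq 0$, and bound the top eigenvalue $1+\mu_{\max}$ by the product $\prod_i(1+\mu_i)=\det A/\det B$. Your caveat that $B$ must be positive definite for the quotient and $B^{-1/2}$ to make sense is appropriate and matches how the lemma is used in the paper.
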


\begin{theorem}[Conditioning Increases f-divergence]\label{thm:cond_incr}
Let $P_X \stackrel{P_{Y \mid X}}{\longrightarrow} P_Y$ and $P_X \stackrel{Q_{Y \mid X}}{\longrightarrow} Q_Y$.
Then,

$$
D_f\left(P_Y \| Q_Y\right) \leq \mathbb{E}_{X \sim P_X}\left[D_f\left(P_{Y \mid X} \| Q_{Y \mid X}\right)\right] .
$$
\end{theorem}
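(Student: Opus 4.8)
The plan is to reduce the statement to the joint convexity of the \emph{perspective function} associated with $f$, combined with a single application of Jensen's inequality performed fibrewise in the output variable $y$. Recall that for a convex $f$ with $f(1)=0$, the $f$-divergence admits the representation $D_f(P\|Q) = \int g\big(\tfrac{dP}{d\mu},\tfrac{dQ}{d\mu}\big)\, d\mu$, valid for any common dominating measure $\mu$, where $g(a,b) \defn b\, f(a/b)$ for $b>0$ is the perspective of $f$, extended to the boundary by the conventions $g(a,0)\defn a\lim_{u\to\infty} f(u)/u$ and $g(0,0)\defn 0$. The first thing I would establish (or cite from convex analysis) is that $g$ is jointly convex on $\{(a,b):a\ge 0,\,b\ge 0\}$; this is the classical fact that the perspective of a convex function is convex.

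Next I would fix densities. Let $p_X$ be the density of $P_X$, and let $p(\cdot\mid x)$ and $q(\cdot\mid x)$ be the conditional densities of the two channels $P_{Y\mid X}$ and $Q_{Y\mid X}$. The crucial structural feature is that both output marginals are mixtures over the \emph{same} input, namely $p_Y(y)=\int p(y\mid x)\,p_X(x)\,dx$ and $q_Y(y)=\int q(y\mid x)\,p_X(x)\,dx$, i.e. $p_Y(y)=\mathbb{E}_{X\sim P_X}[p(y\mid X)]$ and $q_Y(y)=\mathbb{E}_{X\sim P_X}[q(y\mid X)]$. For each fixed $y$, applying Jensen's inequality to the jointly convex $g$, with $p_X(x)\,dx$ as the averaging measure and $(p(y\mid X),q(y\mid X))$ as the random vector, gives the pointwise bound
\[
g\big(p_Y(y),q_Y(y)\big) = g\Big(\mathbb{E}_{X\sim P_X}[p(y\mid X)],\,\mathbb{E}_{X\sim P_X}[q(y\mid X)]\Big) \le \mathbb{E}_{X\sim P_X}\big[g(p(y\mid X),q(y\mid X))\big].
\]
Integrating this inequality over $y$ and swapping the order of integration by Tonelli's theorem then yields
\[
D_f(P_Y\|Q_Y) = \int g\big(p_Y(y),q_Y(y)\big)\,dy \le \int \mathbb{E}_{X\sim P_X}\big[g(p(y\mid X),q(y\mid X))\big]\,dy = \mathbb{E}_{X\sim P_X}\big[D_f(P_{Y\mid X}\|Q_{Y\mid X})\big],
\]
which is exactly the claimed inequality, since for each $x$ the inner integral is $D_f(P_{Y\mid X=x}\|Q_{Y\mid X=x})$.

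The main obstacle is making the convex-analytic and measure-theoretic steps rigorous rather than the inequality chain itself. Concretely, I expect the delicate points to be: establishing joint convexity of the perspective $g$ with the correct recession-cone conventions on the boundary $b=0$ (so that absolutely-continuity failures and the value $f(0)\defn\lim_{u\to 0^+}f(u)$ are handled), the existence of a common dominating measure for the mixtures, and the sign bookkeeping needed to justify the Tonelli interchange. A clean way to eliminate the sign issue is to replace $f$ by $\tilde f(u)=f(u)-c(u-1)$ for a subgradient $c\in\partial f(1)$; then $\tilde f$ is convex, nonnegative, and vanishes at $1$, the added affine term integrates to zero against both $P$ and $Q$ (so $D_{\tilde f}=D_f$), and all integrands become nonnegative, making the interchange of integrals and the fibrewise Jensen step unconditional.
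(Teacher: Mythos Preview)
Your argument is the standard and correct proof of this classical inequality: joint convexity of the perspective $g(a,b)=b\,f(a/b)$, a fibrewise Jensen step in $y$ against the common mixing law $P_X$, and a Tonelli interchange (made clean by the affine shift $\tilde f(u)=f(u)-c(u-1)$). Note that the paper itself does not prove this theorem at all; it appears in the appendix of ``existing technical results and definitions'' and is simply quoted as a known fact, so there is no paper proof to compare against. Your write-up is more than adequate as a self-contained justification.
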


\end{document}